\documentclass{article}

\usepackage[preprint]{neurips_2026}

\usepackage[utf8]{inputenc} % allow utf-8 input
\usepackage[T1]{fontenc}    % use 8-bit T1 fonts
\usepackage{url}            % simple URL typesetting
\usepackage{booktabs}       % professional-quality tables
\usepackage{amsfonts}       % blackboard math symbols
\usepackage{nicefrac}       % compact symbols for 1/2, etc.
\usepackage{microtype}      % microtypography
\usepackage{xcolor}         % colors
\definecolor{mydarkred}{rgb}{0.6,0,0}
\definecolor{mydarkgreen}{rgb}{0,0.6,0}
\usepackage[colorlinks,
            linkcolor=mydarkred,
            citecolor=mydarkgreen]{hyperref}
\usepackage{graphicx}
\usepackage{subcaption}
\usepackage{multirow}

\usepackage{algorithm}
\usepackage{algorithmic}
\usepackage{amsmath}
\usepackage{amssymb}
\usepackage{mathtools}
\usepackage{amsthm}
\usepackage{wrapfig}
\usepackage{bm}
\usepackage{tikz}
\usepackage{multicol}
\usepackage{float}
\usepackage{minitoc}
\usepackage{natbib}
\usepackage{nicefrac}       % compact symbols for 1/2, etc.
\usepackage{adjustbox}
\usepackage{colortbl}

\usepackage[capitalize,noabbrev]{cleveref}

\theoremstyle{plain}
\newtheorem{theorem}{Theorem}[section]
\newtheorem{proposition}[theorem]{Proposition}

\theoremstyle{definition}
\newtheorem{definition}[theorem]{Definition}
\newtheorem{assumption}[theorem]{Assumption}
\theoremstyle{remark}

\usepackage[textsize=tiny]{todonotes}

\title{Occupancy-based Quantile Risk Control}

\author{
Zihao Shi\textsuperscript{1}\thanks{Work done while working at SUSTech as a visiting scholar.},\enspace
Huajun Xi\textsuperscript{2},\enspace
Bingyi Jing\textsuperscript{3,4},\enspace
Hongxin Wei\textsuperscript{1}\thanks{Correspondence to: Hongxin Wei <\texttt{weihx@sustech.edu.cn}>}\\
\textsuperscript{1}Southern University of Science and Technology \\
\textsuperscript{2}Mohamed bin Zayed University of Artificial Intelligence \\
\textsuperscript{3}The Chinese University of Hong Kong, Shenzhen
\textsuperscript{4}Shenzhen Loop Area Institute
}

\begin{document}

\maketitle

\begin{abstract}
  Conformal risk control is an emerging framework for the safe deployment of machine learning models with finite-sample guarantees. 
  To accommodate a broader class of risk notions, quantile risk control extends this framework to quantile-based risk measures. 
  However, existing methods either suffer from excessive conservatism or lack rigorous finite-sample guarantees. 
  To address these limitations, we introduce Occupancy-based Quantile Risk Control (OQRC), a novel method that provides tight risk control bounds with finite-sample validity. 
  Our key idea is to formulate risk control as a finite-occupancy problem by partitioning the loss space with the ordered calibration losses. 
  Specifically, we estimate the distribution of test losses across the resulting bins and upper-bound the risk by the maximum loss attained within each bin. 
  We then select the parameter $\lambda$ such that this upper bound does not exceed a predefined threshold $\alpha$ with high probability $1-\delta$. 
  Theoretically, we establish a finite-sample guarantee showing that OQRC yields tight risk control bounds that converge to the optimal bounds at a provable rate of $\mathcal{O}_p(n^{-1/2})$. 
  Extensive experiments demonstrate the effectiveness of our method, reducing the risk gap by up to 78.64\% on common benchmarks. 

  % Our key idea is to reformulate risk control as a finite-occupancy problem by partitioning the loss space using calibration data. 
  % Theoretically, we establish the tightness of OQRC by demonstrating that the resulting bounds converge to the optimal bounds at a rate of $\mathcal{O}_p(n^{-1/2})$, where $n$ is the size of calibration set. 

  % by controlling the expected value of a monotone loss function below a user-specified threshold. 
  % Recent works develop quantile risk control methods that extend the expected loss control to a more general quantile risk setting. 
  % However, existing algorithms often suffer from excessive conservatism or lack finite-sample guarantees. 

  % Theoretically, we demonstrate that OQRC provides rigorous tight and finite-sample guarantees for quantile risk control. 
  % establishes a tight upper bound on risk to satisfy a predefined threshold $\alpha$ with a high probability $1-\delta$,
  % This method establishes a tight upper bound on risk to satisfy a predefined threshold $\alpha$ with a high probability $1-\delta$, providing rigorous finite-sample guarantees. 
  % Extensive experiments demonstrate that OQRC achieves a tighter risk control bounds compared to alternative methods with finite-sample guarantees. 
  % across multiple datasets and risk measures demonstrate that our approach significantly reduces the discrepancy between empirical and target risks, achieving up to a 50\% reduction compared to alternative methods. 
\end{abstract}

\section{Introduction}

Machine learning (ML) models are increasingly deployed in risk-sensitive domains, such as medical diagnostics~\cite{caruana2015intelligible}, finance~\cite{gu2020empirical}, and autonomous driving~\cite{bojarski2016end}. 
Despite their high accuracy, the black-box nature of ML models makes them untrustworthy for users due to the uncontrollable risk of predictions, e.g., imaging-related errors in radiology~\cite{waymel2019impact} and misdiagnoses in healthcare~\cite{esmaeilzadeh2021patients}. 
To this end, Conformal Risk Control (CRC)~\cite{angelopoulos2022conformal} introduces a model-agnostic framework that guarantees the prediction risk remains below a predefined threshold. 
% user-specified
With a monotone loss function, the framework can be applied to control the false-negative rate in classification tasks~\cite{ding2024leveraging} or to improve the factuality of outputs generated by large language models~\cite{jiang2025conformal}. 
By offering such flexible risk guarantees, it presents a rigorous and feasible solution toward the trustworthy deployment of black-box ML models. 
% errors

% makes it obscures the potential risks of prediction mistakes, such as imaging-related errors in radiology~\cite{waymel2019impact} and misdiagnoses in health care~\cite{esmaeilzadeh2021patients}. crisis of trust
% This framework defines the loss as the penalty for the discrepancy between a single prediction and the true target, and formulates the risk as the expected loss over the underlying data distribution. 
% Notably, this approach is training-free and lightweight, which makes it a powerful technique for the safe deployment of machine learning models. 

While elegant and simple, the framework of conformal risk control is limited to controlling the expected loss. 
To address broader risk notions, Quantile Risk Control (QRC)~\cite{snell2022quantile} extends this framework by providing guarantees on quantile-based risk measures, such as value-at-risk~\cite{longerstaey1996riskmetricstm} and conditional value-at-risk~\cite{rockafellar2000optimization}. 
Despite these advancements, this method tends to overestimate prediction risk, leading to conservative results. 
To alleviate this issue, recent work~\cite{chen2025conformal} leverages L-statistics~\cite{mosteller2006some} to derive tight risk control bounds. 
However, their theoretical results rely on the asymptotic normality of L-statistics and offer no guarantees for finite-sample regimes. 
This motivates us to establish a non-conservative risk control method with finite-sample guarantees.

\begin{figure}[t]
    \centering
    \includegraphics[width=\linewidth]{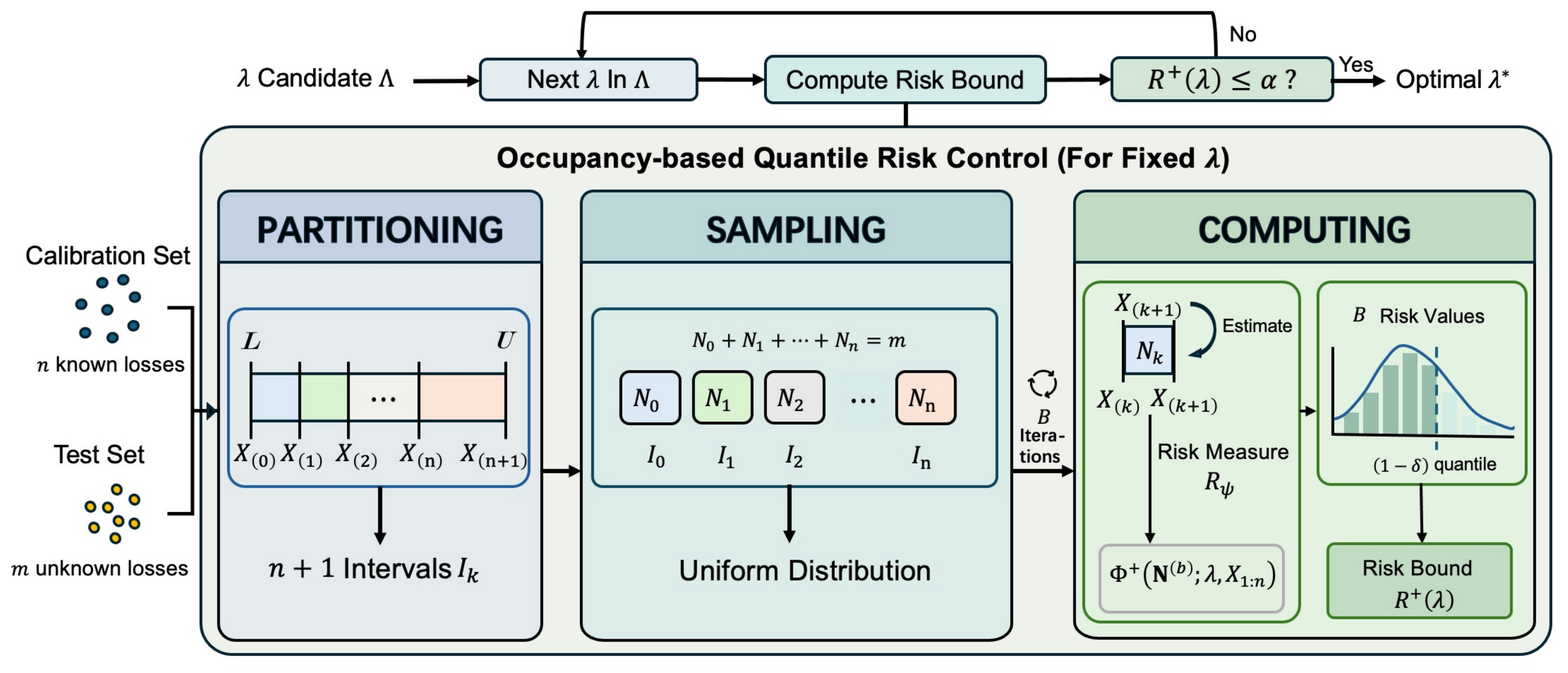}
    \caption{\textbf{Overview of Occupancy-based Quantile Risk Control.} Given a risk level $\alpha$ and confidence $1-\delta$, our OQRC implements a sequential procedure that calibrates a parameter $\lambda^*$ of the ML model such that the quantile risk $R_{\mathrm{test}}$ satisfies $\mathbb{P}(R_{\mathrm{test}}(\lambda^*)\le\alpha)\ge1-\delta$.}
    \label{fig:method}
    \vspace{-5mm}
\end{figure}

In this work, we introduce Occupancy-based Quantile Risk Control (OQRC), a novel risk control method that provides tight risk control bounds with finite-sample guarantees (see Figure~\ref{fig:method}). 
Our key idea is to formulate risk control as a finite-occupancy problem by partitioning the loss space with the ordered calibration losses. 
Specifically, we estimate the distribution of test losses across the resulting intervals and upper-bound the risk by the maximum loss attained within each interval. 
We then select the parameter $\lambda$ such that this upper bound does not exceed a predefined threshold $\alpha$ with high probability $1-\delta$.
Theoretically, we establish a finite-sample guarantee showing that OQRC yields tight risk control bounds that converge to the optimal bounds at a rate of $\mathcal{O}_p(n^{-1/2})$.

To validate the effectiveness of our method, we conduct extensive experiments on seven real-world datasets across three tasks: tumor segmentation~\cite{pogorelov2017kvasir, borgli2020hyperkvasir}, multi-label image classification~\cite{lin2014microsoft}, and text emotion recognition~\cite{demszky2020goemotions}. 
For each task, we encompass three risk measures: VaR, CVaR, and VaR-Interval. 
To assess the conservatism of risk control, we use a new metric -- \textit{RiskGap}, defined as the empirical average of the absolute difference between the empirical and target risks. 
Empirical results show that OQRC significantly reduces the RiskGap compared to prior methods. 
For instance, when using false-negative proportion as the loss function for multi-label image classification, our method reduces the RiskGap by 78.64\% compared to alternative approaches. 
Moreover, OQRC exhibits consistent performance across various hyperparameter configurations and model architectures, highlighting its robustness. 
Finally, we apply OQRC to unconditional control~\cite{angelopoulos2022conformal}, and the results indicate that OQRC successfully bounds the risk below $\alpha$ across different risk measures. 
In summary, these results demonstrate the effectiveness, robustness, and superiority of our method. 
% for quantile risk control. 
% Moreover, through comprehensive analyses, we validate that OQRC is robust to the size of the calibration set, the predefined threshold $\alpha$, the confidence level $1-\delta$ and the model architecture. 

% In particular, a smaller risk gap indicates a tighter risk control. 

% We validate the effectiveness of our method through extensive experiments on tumor segmentation, image multilabel classification, and text emotion recognition. 
% These evaluations encompass three risk measures including VaR, CVaR, and VaR-Interval. 
% We utilize the risk gap, defined as the empirical average of the absolute difference between the empirical risk and the target risk level $\alpha$, to assess the conservatism of risk control methods. 
% A smaller risk gap indicates a tighter control achieved by the method. 
% The results demonstrate that our method achieves a tighter control compared to alternative baselines. 
% For example, in the tumor segmentation task using the false negative proportion as the loss function under the VaR-Interval risk measure, our method reduces the average risk gap by 50\% compared to other approaches. 
% % This property enables the application of our method to more general loss functions. 

We summarize our contributions as follows: 
\begin{itemize}
\item % 方法与优势（两句话）
We introduce Occupancy-based Quantile Risk Control (OQRC), a simple and effective method that reformulates risk control as a finite occupancy problem. 
We demonstrate that OQRC is also applicable to unconditional risk control, highlighting its versatility.
% of our method
% We demonstrate that OQRC is applicable across various model architectures and can accommodate non-monotone loss functions, highlighting the broad applicability of this approach in real-world scenarios. 
% We show that OQRC achieves tight risk control bounds while providing model-agnostic and finite-sample guarantees. 
% a quantile risk control framework that provides finite-sample guarantees while maintaining tight bounds, and we theoretically prove the validity of the framework. 
\item % 理论保证
Theoretically, we establish rigorous guarantees demonstrating that OQRC achieves finite-sample quantile risk control. 
In addition, we prove the tightness of OQRC by analyzing the convergence rate of the corresponding risk control bounds to the optimal bounds. 
% We establish rigorous statistical properties for OQRC, which include finite-sample guarantees of quantile risk control and a comprehensive analysis of the loss distribution. 
% To overcome the computational bottlenecks of exact evaluations within this framework, we subsequently introduce a Monte-Carlo approximation grounded in our theoretical findings. 
% This approach substantially reduces the computational burden, and we establish the validity of the framework as the number of Monte-Carlo iterations approaches infinity. 
\item % 实验（验证实验不算贡献），广泛的实验，验证了方法的优势，分析性实验得出了什么结论，有什么发现
Through extensive empirical validation, we demonstrate that OQRC significantly narrows the risk gap compared to alternative methods. 
Our further evaluations validate the robustness of OQRC across diverse hyperparameter configurations and model architectures. 
% We conduct comprehensive experiments across various datasets and risk measures, demonstrating that OQRC achieves tighter risk control bounds compared to alternative methods. 
% Our further evaluations validate the robustness of OQRC across diverse settings. 
% We empirically validate that OQRC is robustness across various settings. 
% Extensive empirical evaluations demonstrate that the proposed method achieves consistently tighter control than existing baselines. 
\end{itemize}

% 结构问题：第一段最后一句话要落在risk control的重要性上。在第三句话就要提出来risk control。第一句话讲背景，第二句话将challenage。
% risk没说出来是什么东西
% 第一句话逻辑不对，particular写得不对
% 第二句话转折并介绍risk是什么东西（risk没被控制并举个例子）
% 介绍CRC那一句话写得重复了
% 第一段不要写CRC的细节，只需要知道这个的重要性
% 最后一句话与第一句话重复了
% 第一段话重复太多了
% 第一句话引出大背景，第二句话要说risk很重要（转折），第三句话说risk control这个框架，第四句话说这个framework有什么好处，第五句话说risk control很重要

% 第一句话说recent work为了什么目的提出了什么新的目标。第二句话说For example
% 第二段缩写太多了，应该写motivation
% 二三句话稍微介绍QRC
% 第四句话说有什么问题
% 第二段不要超过6句话
% 现在写得太啰嗦了
% 写得时候要清楚每一句话要说什么

% abstract应该是intro每一段对应一两句话

\section{Preliminaries}

\paragraph{Problem Setup.} 
We consider a risk control problem. 
Let $h : \mathcal{Z} \to \widehat{\mathcal{Y}}$ be a black-box predictor that maps an input space $\mathcal{Z}\subset \mathbb{R}^d$ to a prediction space $\widehat{\mathcal{Y}}$. 
We assume a loss function $\ell_\lambda : \widehat{\mathcal{Y}} \times \mathcal{Y} \to [L,U]$ that quantifies the quality of a prediction $\hat{Y}$ with respect to the target output $Y$. 
$\lambda\in\Lambda$ is a tunable parameter within the ML model, used to control the risk level of predictions. 
% The predictions are evaluated using a bounded loss function
% where $\lambda$ is a tunable parameter and $L,U\in\mathbb{R}$. 
Given data $(Z, Y)\in\mathcal{D}$, we define the random variable $X \coloneqq \ell_\lambda(h(Z), Y)$ to be the loss induced by $h$ on $\mathcal{D}$. 
Recall that the cumulative distribution function (CDF) of a random variable $X$ is defined as $F_X(x) \coloneqq \mathbb{P}(X \le x)$. 
The goal of the risk control problem is to choose the parameter $\lambda$ to ensure that 
\begin{equation}\label{eq:risk_control}
    \mathbb{P}\bigl(R(F_X;\lambda) \le \alpha\bigr) \ge 1-\delta. % R(\widehat{F}_m;\lambda
\end{equation}
where $R(\cdot;\lambda)\in\mathcal{R}$ is a risk measure on $F_X$, $\alpha$ is a predefined risk level, and $\delta$ is a confidence level. 

To select a valid $\lambda$, a standard practice is to hold out a calibration set, compute the corresponding losses $X_{1:n}=\{X_1,...,X_n\}$, and subsequently utilize these observed losses to calculate $\lambda$. 
Following the previous work~\cite{angelopoulos2022conformal, Bates2021DistributionFreeRP}, we have the following mild assumptions: 
\begin{assumption}\label{asp:iid}
    The observed losses on the calibration set $X_{1:n}$ and the unknown losses on the test set $X^{\text{test}}_{1:m} = \{X^{\text{test}}_{1}, \dots, X^{\text{test}}_{m}\}$ are independent and identically distributed (i.i.d.). 
\end{assumption}
\begin{assumption}
    Losses in $X_{1:n}\cup X^{\text{test}}_{1:m}$ are non-increasing and left-continuous across $\lambda$. 
\end{assumption}

% We assume the availability of a known calibration set $X_{1:n}=\{X_1,...,X_n\}$ and an unobserved test set , where all loss samples are 

Notably, a strictly minimized $R(F_X;\lambda)$ is not optimal in a risk control problem, since an extremely low risk often incurs a sacrifice of valid information~\cite{angelopoulos2022conformal}. 
Consider a scenario where a model identifies tumor pixels in medical images and the risk represents the proportion of the missed tumor pixels. 
Selecting every pixel in the image minimizes this risk while rendering the output entirely uninformative. 
Therefore, provided the risk control objective is met, a value of $R(F_X;\lambda)$ closer to the target level $\alpha$ is preferred because it leads to predictions containing more useful information. 

\begin{figure}[t]
    \centering
    \begin{minipage}{0.24\linewidth}
        \centering
        \includegraphics[width=\linewidth]{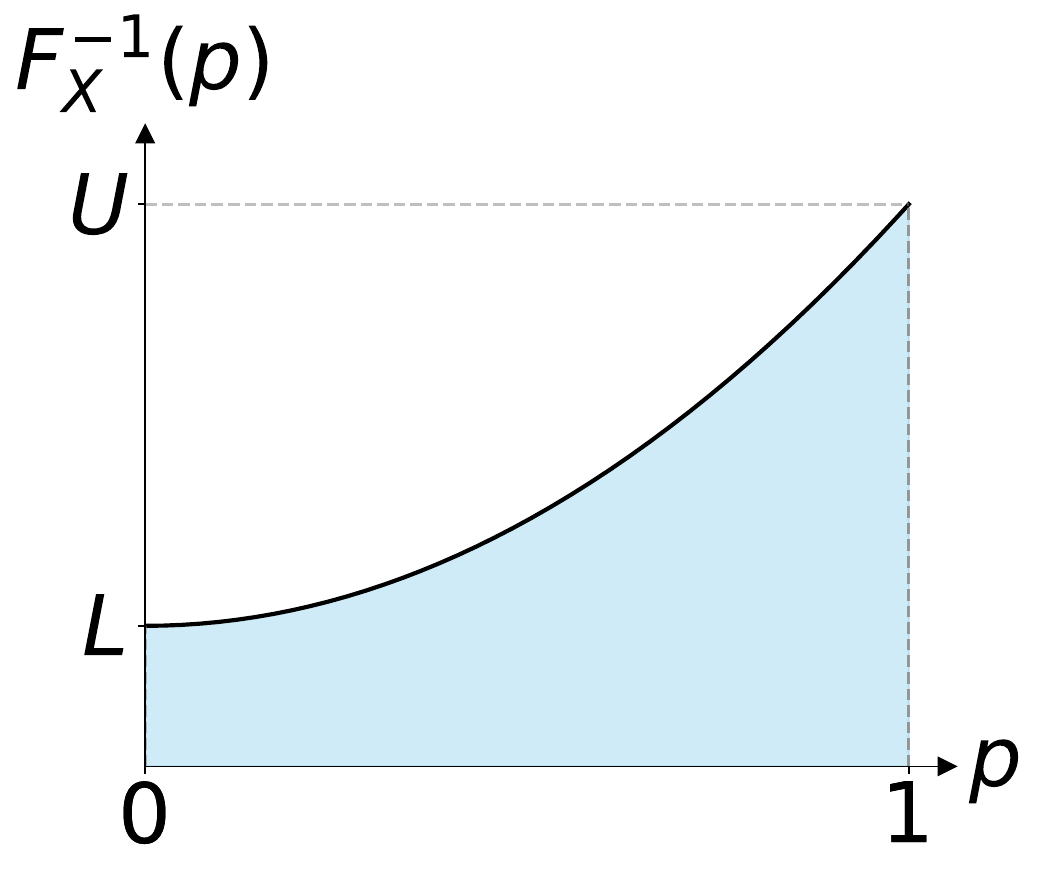}
        \subcaption{Expected loss}
        \label{subfig:Expectation}
    \end{minipage}\hfill
    \begin{minipage}{0.24\linewidth}
        \centering
        \includegraphics[width=\linewidth]{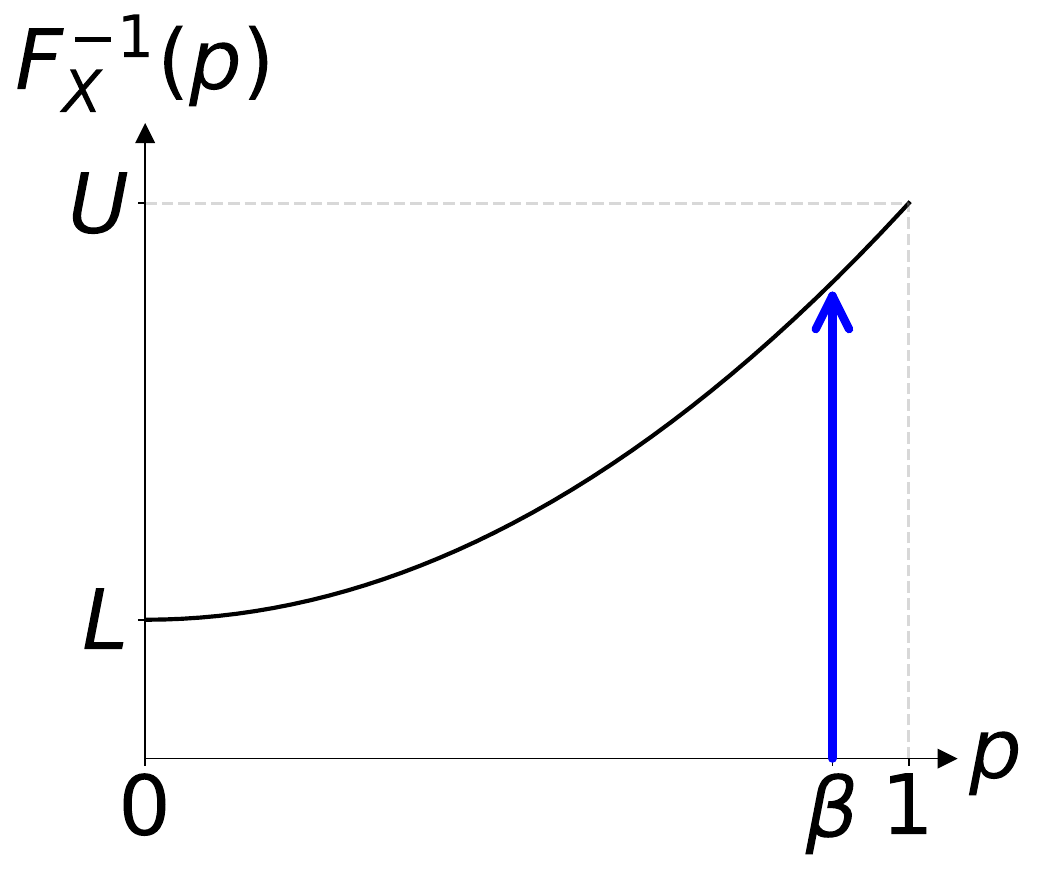}
        \subcaption{$\beta$-VaR}
        \label{subfig:VaR}
    \end{minipage}\hfill
    \begin{minipage}{0.24\linewidth}
        \centering
        \includegraphics[width=\linewidth]{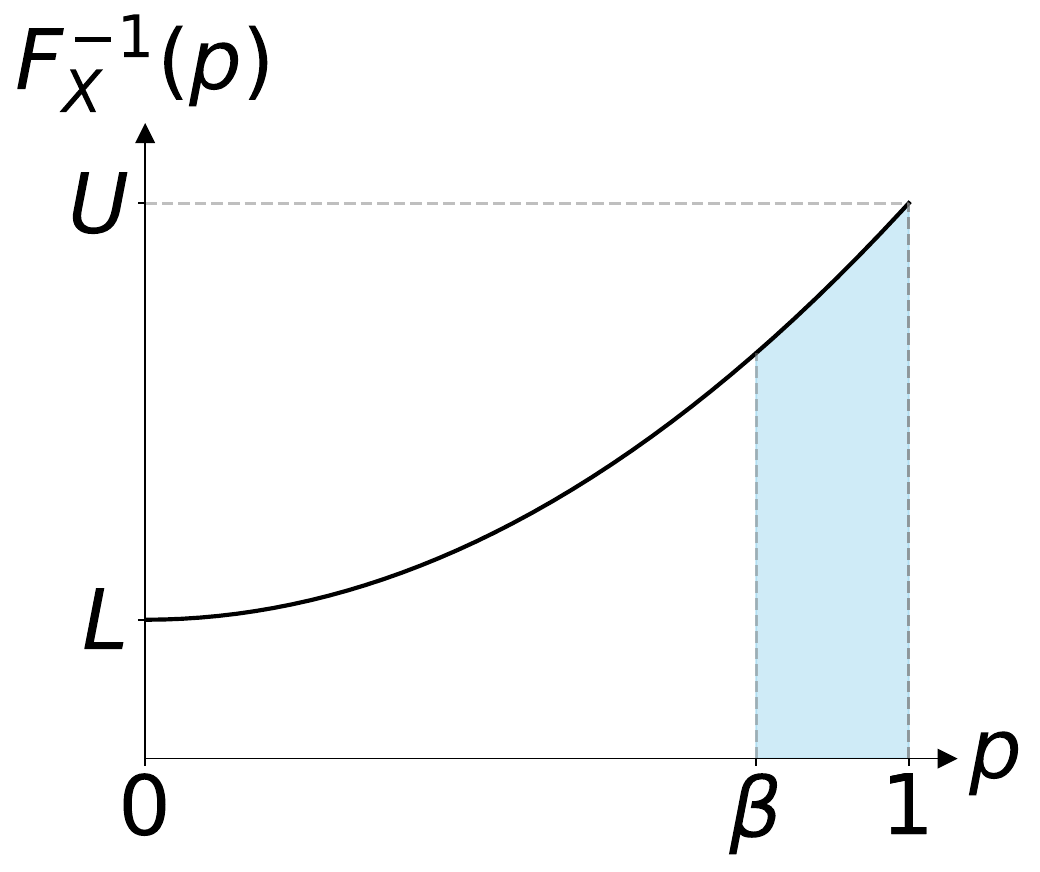}
        \subcaption{$\beta$-CVaR}
        \label{subfig:CVaR}
    \end{minipage}\hfill
    \begin{minipage}{0.24\linewidth}
        \centering
        \includegraphics[width=\linewidth]{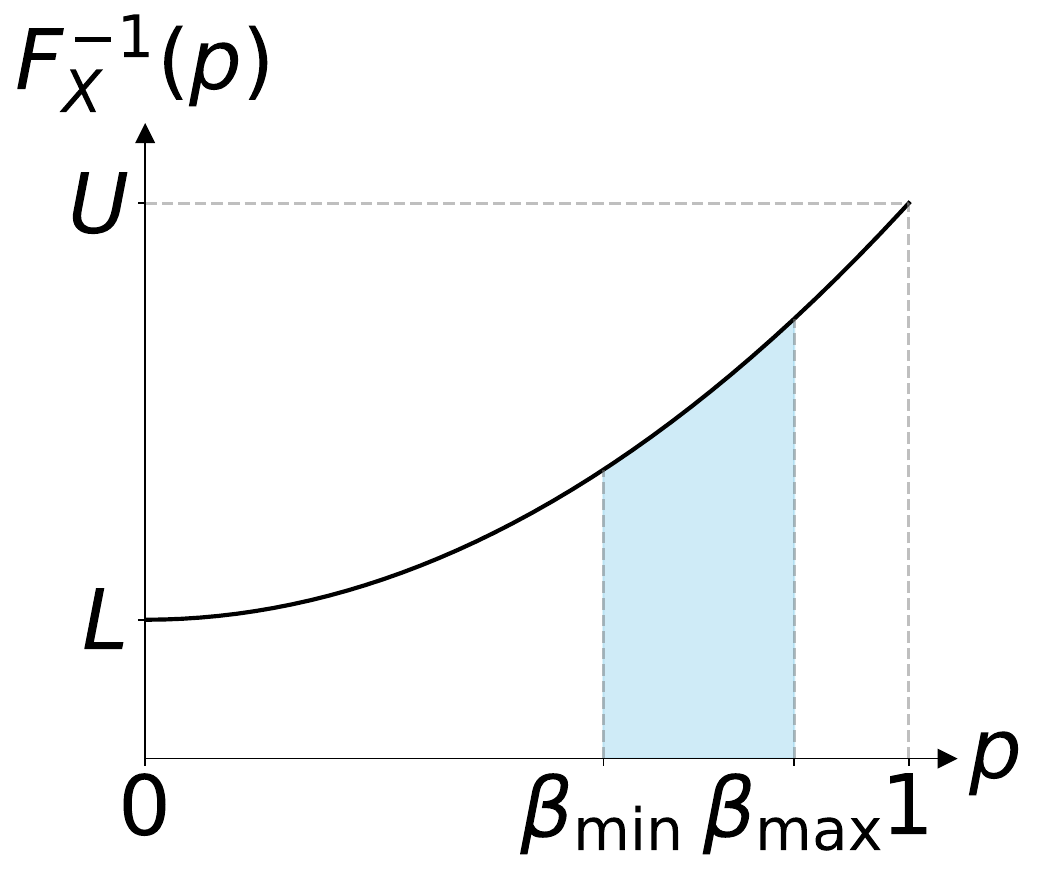}
        \subcaption{VaR-Interval}
        \label{subfig:CVaR-Interval}
    \end{minipage}
    \caption{\textbf{Examples of quantile-based risk measures.} The horizontal axis represents the quantile, and the vertical axis denotes the loss value. Expected loss, $\beta$-CVaR, and $[\beta_{\min}, \beta_{\max}]$-VaR-Interval indicate the average loss within the sky-blue region, while $\beta$-VaR indicates the loss at the $\beta$-quantile.}
    \label{fig:quantile_risk_measure}
\end{figure}

\paragraph{Quantile Risk Control.} 
We focus on a general class of risk measures identified as quantile-based risk measures (QBRMs). The quantile function is defined as
\(
F_X^{-1}(p) \coloneqq \inf\{x : F_X(x) \ge p\}.
\)

\begin{definition}[Quantile-based Risk Measure]
Let $\psi(p)$ denote a weighting function such that
\(\psi(p) \ge 0\) and \(\int_0^1 \psi(p)\,dp = 1\). 
The quantile-based risk measure parameterized by $\psi$ is expressed as
\[
R_{\psi}(F_X;\lambda) \coloneqq \int_0^1 \psi(p) F_X^{-1}(p)\,dp.
\]
\end{definition}

Figure~\ref{fig:quantile_risk_measure} illustrates several representative examples of QBRMs. 
Specifically, the expected loss corresponds to $\psi(p)=1$. 
For the $\beta$-VaR, we define $\psi(p)=\delta_\beta(p)$, where $\delta_\beta$ denotes the Dirac delta function centered at $\beta$. 
For the $\beta$-CVaR, we define $\psi(p)=\mathbf{1}\{p\ge\beta\}\frac{1}{1-\beta}$. 
Finally, the $[\beta_{\min}, \beta_{\max}]$-VaR-Interval is formulated using $\psi(p)=\mathbf{1}\{\beta_{\min}\le p\le\beta_{\max}\}\frac{1}{\beta_{\max}-\beta_{\min}}$. 

% Several representative QBRMs are detailed in Table~\ref{tab:qbrm}. 
% The expected loss captures the overall expectation across the data distribution, whereas the VaR isolates the maximum loss incurred by the majority of the population, thereby excluding a proportion of $(1-\beta)$ as outliers. 
% Conversely, the CVaR focuses on the tail of the distribution that exhibits the highest loss. 
% Finally, the VaR-Interval proves valuable when the precise cutoff for outliers remains unknown. 

% \begin{table}[ht]
% \centering
% \caption{Several quantile-based risk measures and their corresponding weight functions. Here, $\delta_{\beta}$ denotes the Dirac delta function centered at $\beta$. }
% \label{tab:qbrm}
% \begin{tabular}{ll}
% \toprule
% Risk Measure & Weighting Function $\psi(p)$ \\
% \midrule
% Expected loss & $1$ \\[4pt]
% $\beta$-VaR & $\delta_{\beta}(p)$ \\[6pt]
% $\beta$-CVaR &
% $\psi(p)=
% \begin{cases}
% \frac{1}{1-\beta}, & p \ge \beta,\\
% 0, & \text{otherwise}
% \end{cases}$ \\[12pt]
% $[\beta_{\min}, \beta_{\max}]$-VaR-Interval &
% $\psi(p)=
% \begin{cases}
% \frac{1}{\beta_{\max}-\beta_{\min}}, & \beta_{\min} \le p \le \beta_{\max},\\
% 0, & \text{otherwise}
% \end{cases}$ \\
% \bottomrule
% \end{tabular}
% \end{table}

Prior studies have investigated the rigorous, distribution-free control of quantile-based risk measures. 
Snell et al.~\cite{snell2022quantile} achieve quantile risk control by constructing a lower confidence bound $F^-_X$ for $F_X$ through a one-sided test statistic. 
However, this approach often tends to overestimate prediction risk, yielding conservative risk control bounds. 
To alleviate this issue, Chen et al.~\cite{chen2025conformal} control the risk via L-statistics. 
They use L-statistics to estimate the empirical distribution $\widehat{F}_n$ of $F_X$, and then control the risk by adding a correction term to the corresponding risk measures of $\widehat{F}_n$, thereby deriving tighter risk control bounds than before. 
Nevertheless, the theoretical guarantees of this method hold only asymptotically, making the approach potentially fail in finite-sample settings. 
These limitations motivate us to develop a method that provides both tight bounds and finite-sample guarantees. 

% In addition to the post-hoc risk control methods discussed above, some approaches control the risk through model training. 
% We do not compare with these approached, since our method is training-free and lightweight. 
% Because our method is training-free and lightweight, we do not discuss or compare with approaches that control risk via model training, such as conformal risk training~\cite{yeh2025conformal}. % 最该加的地方是往related work里加
% We provide a detailed review of prior quantile risk control methods in Appendix~\ref{sec:othermethod}. 

\section{Method}
\label{sec:method}

We propose a novel method for quantile risk control termed \emph{Occupancy-based Quantile Risk Control} (OQRC). 
The key idea is to reformulate risk control as a finite occupancy problem~\cite{feller1991introduction} by partitioning the loss space using calibration data. 
We use these resulting partitioned bins to construct an upper confidence bound (UCB) $R^+(\lambda)$ such that for every $\lambda$, we have 
% Our method builds upon the framework of risk-controlling prediction sets (RCPS)~\cite{Bates2021DistributionFreeRP}. 
% We need to find an upper confidence bound (UCB) $R^+(\lambda)$ such that for every $\lambda$, we have
\begin{equation}\label{eq:upper_confidence_bound}
    \mathbb{P}(R_\psi(F_X;\lambda)\le R^+(\lambda))\ge 1-\delta.
\end{equation}
Subsequently, we achieve valid risk control (\ref{eq:risk_control}) by identifying $\lambda^*:=\inf\bigl\{\lambda\in\Lambda: R^+(\lambda)\le\alpha\bigr\}$. 

In the following, we present the details of this procedure. 
We define the empirical CDF on test set as $\widehat{F}_m(x)=\frac{1}{m}\sum_{i=1}^{m}\mathbf{1}\{X^{\mathrm{test}}_i \le x\}$. 
The risk on the test set is then given by $R_{\mathrm{test}}(\lambda)=R_{\psi}(\widehat{F}_m;\lambda)$. 

\subsection{Upper Confidence Bound via Partitioned Occupancy}

We first partition the loss space using the calibration losses to construct the bins for the occupancy problem. 
Let $X_{(1)} \le X_{(2)} \le \dots \le X_{(n)}$ denote the order statistics of $X_{1:n}$. 
To simplify the notation in what follows, we let $X_{(0)} = L$ and $X_{(n+1)} = U$. 
Then the order statistics of the loss samples in the calibration set partition the interval $[L, U]$ into $n+1$ bins: 
% \begin{equation}\label{eq:bins}
%     \begin{aligned}
%         &I_j=
%         \begin{cases}
%             \;(X_{(j)}, X_{(j+1)}] & \text{if}\;\;X_{(j)}<X_{(j+1)} \\
%             \;\{X_{(j)}\} & \text{if}\;\;X_{(j)}=X_{(j+1)}
%         \end{cases} \quad \text{for} \;\; j = 1, \dots, n, \\
%         &I_0=
%         \begin{cases}
%             \;[X_{(0)}, X_{(1)}]\quad & \text{if}\;\;X_{(n)}<X_{(n+1)} \\
%             \;\{X_{(n+1)}\} & \text{if}\;\;X_{(n)}=X_{(n+1)}
%         \end{cases}
%     \end{aligned}
% \end{equation}
\begin{equation}\label{eq:bins}
    \begin{cases}
        I_j=\;(X_{(j)}, X_{(j+1)}] \quad\text{for} \;\; j = 1, \dots, n, \\
        I_0=\;[X_{(0)}, X_{(1)}]
    \end{cases}
\end{equation}
Since each test loss sample lies in \([L,U]\), it must fall into exactly one of the \(n+1\) bins in \eqref{eq:bins}. 
For clarity of presentation in the main body of the paper, we assume that $X$ contains NoTies, meaning the probability of any two random variables $X_i$ and $X_j$ taking the same value is zero. 
This assumption is not strictly necessary. 
Appendix~\ref{pf:Distribution} provides the theoretical guarantees without this restriction, which ensures our algorithm remains applicable even when $X$ contains Ties. 

For each bin \(I_j\), we define the occupancy count $N_j = \sum_{t=1}^m \mathbf{1}\!\left\{ X_t^{\mathrm{test}} \in I_j \right\}$, $j=0,1,\dots,n$. 
By construction, \(\sum_{j=0}^n N_j = m.\)
Hence, the vector \(\mathbf{N} = (N_0, N_1, \dots, N_n)\) fully characterizes how the \(m\) test samples are distributed across the bins induced by the \(n\) observed calibration losses. 
We now compute the joint distribution of $\mathbf{N}$. 
A surprising result is that it follows a discrete uniform distribution. 
This theoretical finding serves as the foundation of our method. 
\begin{theorem}\label{thm:Distribution}
    Under the Assumption~\ref{asp:iid}, $\mathbf{N}$ follows a discrete uniform distribution, which is
    $$\mathbb{P}\bigl(N_0=x_0, N_1=x_1, ..., N_n=x_n\bigr)=\frac{m!\cdot n!}{(m+n)!}.$$
    The randomness here arises from the sampling of different calibration and test losses. 
\end{theorem}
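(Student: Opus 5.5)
The plan is to reduce the claim to a statement about uniformly random relative orderings of the pooled sample, using exchangeability and the NoTies assumption. Under Assumption~\ref{asp:iid}, the $n+m$ variables $X_1,\dots,X_n,X^{\mathrm{test}}_1,\dots,X^{\mathrm{test}}_m$ are i.i.d. NoTies means they are almost surely pairwise distinct. Hence their joint ranking is almost surely a well-defined permutation of $\{1,\dots,n+m\}$, and by exchangeability every one of the $(n+m)!$ permutations is equally likely.

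The key observation is that $\mathbf{N}$ depends only on the coarse pattern of the sorted pooled sample: which of the $n+m$ positions are occupied by calibration losses and which by test losses. This is a word in $\{C,T\}$ with $n$ letters $C$ and $m$ letters $T$. The number of $T$'s before the first $C$ is $N_0$. The number of $T$'s between the $j$-th and $(j+1)$-th $C$ is $N_j$, and the number after the last $C$ is $N_n$.

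Next, I will check that the bin definitions in \eqref{eq:bins} match this pattern almost surely. I use $X_{(0)}=L$ and $X_{(n+1)}=U$, and there are no ties, so a test loss equal to some $X_{(j)}$ or to $L$ occurs only with probability zero (or is harmlessly absorbed into $I_0$). Thus, almost surely, a test loss lies in $I_j$ exactly when it sits between the $j$-th and $(j+1)$-th calibration order statistics.

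With this in place, the map from patterns to vectors $(x_0,\dots,x_n)$ with $x_j\ge 0$ and $\sum_j x_j = m$ is a bijection. This is the standard stars-and-bars correspondence: the $C$'s act as separators.

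Each pattern arises from exactly $n!\,m!$ permutations of the pooled sample, since calibration labels may be permuted among the $C$ slots and test labels among the $T$ slots. So every pattern has probability $\frac{n!\,m!}{(n+m)!}=\binom{n+m}{n}^{-1}$. Consequently, for any admissible $(x_0,\dots,x_n)$,
\[
\mathbb{P}(\mathbf{N}=(x_0,\dots,x_n))=\frac{m!\,n!}{(m+n)!}.
\]
As a sanity check, the number of admissible vectors is $\binom{m+n}{n}$, so these probabilities sum to one. The statement implicitly requires $\sum_j x_j=m$ and $x_j\ge 0$ integers; otherwise the probability is $0$.

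The main obstacle is justifying rigorously that all $(n+m)!$ orderings are equally likely. Distinctness alone does not give this; it also needs the joint law to be invariant under permutations, which follows from i.i.d. I would write it as follows. For each permutation $\pi$, $\mathbb{P}(X_{\pi(1)}<\dots<X_{\pi(n+m)})$ is the same by exchangeability. These $(n+m)!$ events partition the sample space up to a null set by NoTies. Hence each has probability $1/(n+m)!$.

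The remaining care point is the measure-zero boundary cases at the bin endpoints $L$ and $U$ and at ties. These I will dismiss explicitly via NoTies, deferring the general tied case to Appendix~\ref{pf:Distribution} as the paper indicates.
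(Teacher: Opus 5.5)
Your proposal is correct and takes essentially the same approach as the paper: the pooled $n+m$ i.i.d.\ losses are in uniformly random order, and a counting argument gives the result. The difference is only bookkeeping. You count the $n!\,m!$ orderings per calibration/test pattern directly, whereas the paper first computes $n!\prod_{k}N_k!/(m+n)!$ for each labeled assignment of test points to bins and then sums over the $m!/\prod_k x_k!$ such assignments. Your explicit exchangeability-plus-NoTies justification that all $(n+m)!$ orderings are equally likely is, if anything, more careful than the paper's one-line assertion.
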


The proofs for all theoretical results are provided in Appendix~\ref{sec:proofs}. 
This is an elegant result, as the joint distribution of $N_0, N_1, \dots, N_n$ is entirely independent of the actual values of these variables. 
Leveraging this occupancy count, we next construct the UCB $R^+(\lambda)$. 
Let
\[
\mathcal{C}_{m,n}
=
\left\{
\mathbf{x}=(x_0,\dots,x_n)\in \mathbb{N}_0^{n+1}
:
\sum_{i=0}^{n} x_i = m
\right\}.
\]
Let $S_{i}(\mathbf{x}) = \sum_{k=0}^{i} x_k$ denote the cumulative sums, where $S_{-1}$ is defined as $0$. 
By Theorem~\ref{thm:Distribution}, the random vector $\mathbf{N}$ is uniformly distributed over $\mathcal{C}_{m,n}$, and \(
|\mathcal{C}_{m,n}|=\binom{m+n}{n}. 
\)
We further define the upper risk functionals associated with $\mathbf{x}$ by 
\begin{equation*}
% \label{eq:func-upper}
\Phi^+(\mathbf{x}; \lambda,X_{1:n})
=
\sum_{i=0}^{n}
\int_{\frac{S_{i-1}(\mathbf{x})}{m}}^{\frac{S_{i}(\mathbf{x})}{m}}
\psi(t)X_{(i+1)}\,dt. 
% =
% \sum_{i=0}^{n}\sum_{j=1}^{x_i}
% \int_{\frac{S_{i-1}(\mathbf{x})+j-1}{m}}^{\frac{S_{i-1}(\mathbf{x})+j}{m}}
% \psi(t)X_{(i+1)}\,dt. 
\end{equation*}
Intuitively, these upper risk functionals are defined to scale each test loss sample falling into $I_j$ to the maximum value within that bin, and then compute the overall risk across the test samples. 
Utilizing this upper risk functional, we define a CDF of the upper risk: 
\[
F^+(r;\lambda,X_{1:n})
=
\frac{1}{|\mathcal{C}_{m,n}|}
\sum_{\mathbf{x}\in\mathcal{C}_{m,n}}
\mathbf{1}\!\left\{\Phi^+(\mathbf{x}; \lambda,X_{1:n})\le r\right\}. 
\]
Thus, defining the UCB as $R^+(\lambda)=(F^+)^{-1}(1-\delta)$, we have 
\begin{theorem}\label{thm:QRC}
    Given a confidence level $1-\delta \in (0, 1)$, $\forall \lambda$, $R^+(\lambda)$ is a valid UCB that satisfies (\ref{eq:upper_confidence_bound}). 
    % \begin{equation}\label{eq:quantile-OQRC}
    %     \mathbb{P}\Bigl(R_{\mathrm{test}}(\lambda)\le R^+(\lambda)\Bigr)\ge1-\delta. 
    % \end{equation}
    % The randomness here arises from the sampling of different calibration and test losses. 
\end{theorem}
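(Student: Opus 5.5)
We will show that, with probability at least $1-\delta$, the test risk $R_{\mathrm{test}}(\lambda)=R_\psi(\widehat{F}_m;\lambda)$ is at most $R^+(\lambda)$. This is the finite-sample form of (\ref{eq:upper_confidence_bound}) that the method targets, with the randomness taken jointly over calibration and test losses. Throughout, $\lambda$ is fixed, so monotonicity in $\lambda$ plays no role. The argument has three steps: a deterministic domination of the test risk by $\Phi^+(\mathbf{N})$, an identification of $\Phi^+(\mathbf{N})$ as a random variable that is stochastically dominated by the "uniform over $\mathcal{C}_{m,n}$" law, and a quantile inequality.

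\textbf{Step 1 (pointwise domination).} Write the empirical quantile function of the test losses as a step function. On $((k-1)/m, k/m]$ it equals the $k$-th smallest test loss $X^{\mathrm{test}}_{(k)}$. Hence
\[
R_\psi(\widehat{F}_m;\lambda)=\sum_{k=1}^m \int_{(k-1)/m}^{k/m}\psi(t)X^{\mathrm{test}}_{(k)}\,dt .
\]
Sorting the test losses, the first $N_0$ lie in $I_0$, the next $N_1$ lie in $I_1$, and so on. Therefore for $S_{i-1}(\mathbf{N})<k\le S_i(\mathbf{N})$ we have $X^{\mathrm{test}}_{(k)}\in I_i$, so $X^{\mathrm{test}}_{(k)}\le X_{(i+1)}$. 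Since $\psi\ge 0$, summing gives $R_{\mathrm{test}}(\lambda)\le \Phi^+(\mathbf{N};\lambda,X_{1:n})$ surely. For a Dirac $\psi$ (VaR) the integral is interpreted as point evaluation and the same inequality holds. The only care needed is the endpoint convention $F^{-1}$ being left-continuous, which matches the intervals $((k-1)/m,k/m]$.

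\textbf{Step 2 (conditioning on calibration losses).} This is the main obstacle. The functional $\Phi^+$ depends on both $\mathbf{N}$ and the calibration order statistics $X_{(1)},\dots,X_{(n)}$, and $\mathbf{N}$ is not independent of $X_{1:n}$ given the values. Theorem~\ref{thm:Distribution} only gives the marginal law of $\mathbf{N}$.

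The plan is to use exchangeability of the pooled sample of $m+n$ i.i.d.\ losses. Condition on the unordered multiset $\mathcal{M}$ of all $m+n$ values. Given $\mathcal{M}$ (no ties), the assignment of which $n$ positions among the sorted pooled values are calibration points is uniform over the $\binom{m+n}{n}$ subsets. These subsets correspond bijectively to $\mathbf{x}\in\mathcal{C}_{m,n}$, which is the content of Theorem~\ref{thm:Distribution}.

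Conditioned on $\mathcal{M}$, however, $X_{(i+1)}$ also varies with $\mathbf{x}$. So instead we compare directly. Define $G(\mathbf{y})=\mathbf{1}\{\Phi^+(\mathbf{N};\lambda,\mathbf{y})\le r\}$, evaluated at $r=R^+(\lambda)$, which itself depends on $X_{1:n}$. We then need
\[
\mathbb{P}\bigl(\Phi^+(\mathbf{N};X_{1:n})\le (F^+)^{-1}(1-\delta;X_{1:n})\bigr)\ge 1-\delta .
\]
I would try two routes. The first is to show that $\mathbf{N}$ is independent of $(X_{(1)},\dots,X_{(n)})$ conditionally on nothing. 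That is false in general, so instead I would use the probability integral transform. Set $U_i=F_X(X_i)$; for continuous $F$ the occupancy vector is the same computed on uniforms. The $X_{(i+1)}$ are monotone functions of uniform order statistics, and the counts given the calibration uniforms are multinomial with spacings as probabilities. Averaging over Dirichlet spacings yields uniform $\mathbf{N}$. If dependence cannot be removed, the fallback is to argue that $\Phi^+(\mathbf{x};X_{1:n})$ and $\mathbf{x}$ combine so that, conditional on $X_{1:n}$, $\mathbf{N}$ is multinomial with random spacing probabilities. Then $\mathbb{E}$ of the quantile event is bounded using a stochastic-ordering or coupling argument against the uniform law $F^+$. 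I expect this coupling to be the delicate part of the proof.

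\textbf{Step 3 (quantile inequality).} Assume Step 2 yields that, conditionally on $X_{1:n}$ (or after appropriate averaging), $\Phi^+(\mathbf{N})$ has CDF at least $F^+(\cdot;\lambda,X_{1:n})$. Then $\mathbb{P}\bigl(\Phi^+(\mathbf{N})\le (F^+)^{-1}(1-\delta)\bigr)\ge F^+\bigl((F^+)^{-1}(1-\delta)\bigr)\ge 1-\delta$, by right-continuity of the step CDF $F^+$. Combining with Step 1 gives the claim for every $\lambda$.
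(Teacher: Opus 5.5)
Your Steps 1 and 3 are correct. Step 2, however, is only a plan, and it cannot be completed. The inequality it must deliver, $\mathbb{P}\bigl(\Phi^+(\mathbf{N};X_{1:n})\le R^+(\lambda)\bigr)\ge 1-\delta$, is false in general, both conditionally on $X_{1:n}$ and after averaging.

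The conditional version fails for the reason you point to. Given $X_{1:n}$, $\mathbf{N}$ is multinomial with cell probabilities equal to the $F_X$-masses of the bins. So an atypical calibration sample (e.g.\ all losses near $L$) pushes $\mathbf{N}$ onto the top bin.

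The averaged version fails too. Take $n=m=2$, $X\sim\mathrm{Unif}[0,1]$, $[L,U]=[0,1]$, $\psi\equiv 1$, $1-\delta=\tfrac12$, and write $a=X_{(1)}<b=X_{(2)}$. Then $\Phi^+(\mathbf{x})=\tfrac12(x_0a+x_1b+x_2)$. The six values for $(2,0,0),(1,1,0),\{(0,2,0),(1,0,1)\},(0,1,1),(0,0,2)$ are ordered as $a<\tfrac{a+b}{2}<\{b,\tfrac{1+a}{2}\}<\tfrac{1+b}{2}<1$, so $R^+=\min\{b,\tfrac{1+a}{2}\}$. The third-ranked configuration is $(0,2,0)$ when $b-a\le 1-b$ and $(1,0,1)$ otherwise. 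In either case it is the one that needs a test loss in the shorter of $I_1,I_2$. Using the conditional multinomial law, the rank of $\Phi^+(\mathbf{N})$ among the six values is $3$ with probability
\[
\mathbb{E}\bigl[(b-a)^2\mathbf{1}\{b-a\le 1-b\}\bigr]+\mathbb{E}\bigl[2a(1-b)\mathbf{1}\{b-a>1-b\}\bigr]=\tfrac{1}{48}+\tfrac{1}{24}=\tfrac{1}{16},
\]
not $\tfrac16$. Ranks $1$ and $2$ always belong to $(2,0,0)$ and $(1,1,0)$, which have total probability $\tfrac13$. Hence $\mathbb{P}\bigl(\Phi^+(\mathbf{N})\le R^+\bigr)=\tfrac13+\tfrac1{16}=\tfrac{19}{48}<\tfrac12$.

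The paper's proof consists of exactly your Steps 1 and 3. It fills Step 2 with the assertion that Theorem~\ref{thm:Distribution} makes the rank of $\Phi^+(\mathbf{N};X_{1:n})$ among the $K$ values uniform on $\{1,\dots,K\}$. That assertion uses only the marginal law of $\mathbf{N}$. But the ordering of the values $\Phi^+(\mathbf{x};X_{1:n})$ is itself a function of $X_{1:n}$, which is dependent on $\mathbf{N}$. The example shows the rank is not uniform, so your doubt about Step 2 was justified. No stochastic-ordering or coupling of $\Phi^+(\mathbf{N})$ against $F^+$ can close either argument.

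The rank argument is sound when the ordering of $\{\Phi^+(\mathbf{x})\}$ does not depend on the data. For $\beta$-VaR, $\Phi^+(\mathbf{x})=X_{(J(\mathbf{x})+1)}$, where $J(\mathbf{x})$ is the bin of the $\lceil m\beta\rceil$-th test loss. Then $R^+=X_{(j^*+1)}$ with $j^*$ a deterministic quantile of $J$, and the marginal law of $\mathbf{N}$ suffices. This does not carry over in general once $\psi$ spreads weight over several order statistics.

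In the example, the theorem's conclusion still holds, $\mathbb{P}(R_{\mathrm{test}}\le R^+)=\tfrac58$. It holds only because of the slack $\Phi^+(\mathbf{N})-R_{\mathrm{test}}$ created in Step 1 by rounding each test loss up to the right endpoint of its bin. The missing idea is an argument that controls $R_{\mathrm{test}}$ directly and exploits that slack. Lower-bounding $\mathbb{P}(R_{\mathrm{test}}\le R^+)$ by $\mathbb{P}(\Phi^+(\mathbf{N})\le R^+)$, as both you and the paper do, discards exactly what is needed.
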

% Suppose $F^m_{\mathrm{test}}$ is the ECDF of the test losses, defined over $m$ data points in the test set. 
% Consequently, the risk measure of this ECDF can be expressed as $R_{\mathrm{test}} := R_{\psi}(F^m_{\mathrm{test}})$. 
% In the following, we provide an estimation for the CDF of $R_{\mathrm{test}}$. 
% For CDFs $F$ and $G$, let $F\succeq G$ denote $F(x)\ge G(x)$ for all $x\in \mathbb{R}$. 
% Let $\mathbf{x} \coloneqq (x_0, \dots, x_n)$ and let $S_{i}(\mathbf{x}) = \sum_{k=0}^{i} x_k$ denote the cumulative sums, where $S_{-1}$ is defined as $0$. 
% By utilizing the joint distribution of $N_0, N_1, \dots, N_n$ from Theorem~\ref{thm:Distribution}, we can derive the upper bounds for the CDF of $R_{\mathrm{test}}$. 
% \begin{theorem}\label{thm:upperbound}
%     Given the calibration losses $X_{1:n}$, we define $R_{\mathrm{test}}^{\mathrm{upper}}$ as:
%     $$\mathbb{P}\Bigl(R_{\mathrm{test}}^{\mathrm{upper}}=\sum_{i=0}^{n}\sum_{j=1}^{x_i}\int_{\frac{S_{i-1}(\mathbf{x})+j-1}{m}}^{\frac{S_{i-1}(\mathbf{x})+j}{m}}\psi(t)X_{(i+1)}dt\Bigr)=\frac{m!\cdot n!}{(m+n)!}.$$
%     Let $F_{\mathrm{test}}$ denote the CDF of $R_{\mathrm{test}}$. We define $F^{\mathrm{upper}}_{\mathrm{test}}$ as the CDF of $R_{\mathrm{test}}^{\mathrm{upper}}$. Then we have:
%     $$F_{\mathrm{test}}\succeq F^{\mathrm{upper}}_{\mathrm{test}}.$$
% \end{theorem}
% Then Theorem~\ref{thm:upperbound} allows us to establish a bound for the QRBMs of the test loss samples. 

% This property allows OQRC to maintain its validity across a broader class of general loss functions. 

However, as $m$ and $n$ increase, calculating the CDF of the upper risk $F^+(r;\lambda,X_{1:n})$ becomes computationally demanding. 
To address this limitation for large values of $m$ and $n$, we introduce a Monte-Carlo approach to approximate $F^+(r;\lambda,X_{1:n})$ in the following subsection. 

\subsection{Monte-Carlo Approach for OQRC}

We utilize a Monte Carlo method to mitigate the computational bottlenecks that arise when $m$ and $n$ are excessively large. 
We first introduce the proposed sampling approach, which exploits a stars-and-bars construction to efficiently sample from $\mathcal{C}_{m,n}$. 

\begin{proposition}
\label{prop:stars-bars}
Let $T_1<\cdots<T_n$ be drawn uniformly from all $n$-subsets of $\{1,\dots,m+n\}$. 
Define $T_0=0$ and $T_{n+1}=m+n+1$, and set
\[
x_i = T_{i+1}-T_i-1,
\qquad i=0,1,\dots,n.
\]
Then $\mathbf{x}=(x_0,\dots,x_n)$ is uniformly distributed on $\mathcal{C}_{m,n}$.
\end{proposition}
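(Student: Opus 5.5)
The plan is to show that the map $\Gamma:\{T_1<\dots<T_n\}\mapsto \mathbf{x}$, from $n$-subsets of $\{1,\dots,m+n\}$ to $\mathcal{C}_{m,n}$, is a bijection. Since the $n$-subset $\{T_1,\dots,T_n\}$ is drawn uniformly from a set of size $\binom{m+n}{n}$, and $|\mathcal{C}_{m,n}|=\binom{m+n}{n}$ as noted after Theorem~\ref{thm:Distribution}, a bijection pushes the uniform law forward to the uniform law on $\mathcal{C}_{m,n}$. Then each $\mathbf{x}\in\mathcal{C}_{m,n}$ is attained with probability $1/\binom{m+n}{n}=\frac{m!\,n!}{(m+n)!}$, which is exactly the law in Theorem~\ref{thm:Distribution}.

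First I check that $\Gamma$ lands in $\mathcal{C}_{m,n}$. Since $0=T_0<T_1<\dots<T_n<T_{n+1}=m+n+1$ are strictly increasing integers, each $x_i=T_{i+1}-T_i-1\ge 0$. The sum telescopes:
\[
\sum_{i=0}^n x_i = T_{n+1}-T_0-(n+1)=m+n+1-(n+1)=m .
\]

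Next I write down the inverse map. Given $\mathbf{x}\in\mathcal{C}_{m,n}$, set $T_i=S_{i-1}(\mathbf{x})+i$ for $i=1,\dots,n$. Then $T_{i+1}-T_i=x_i+1\ge 1$, so the $T_i$ are strictly increasing. The endpoints are in range: $T_1=x_0+1\ge1$, and $T_n=S_{n-1}(\mathbf{x})+n\le m+n$. The extension convention is also consistent, since $T_0=S_{-1}+0=0$ and $T_{n+1}=S_n+n+1=m+n+1$.

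Finally I verify the two compositions. Applying $\Gamma$ to these $T_i$ recovers $x_i=T_{i+1}-T_i-1$. Conversely, starting from $T$, forming $\mathbf{x}$ and then computing $S_{i-1}(\mathbf{x})+i$ telescopes back to $T_i$. Hence $\Gamma$ is a bijection. This is the stars-and-bars picture: positions $T_i$ are bars, the remaining $m$ positions are stars, and $x_i$ counts the stars between consecutive bars.

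No step is a real obstacle. The only care needed is the boundary bookkeeping with the conventions $T_0=0$ and $T_{n+1}=m+n+1$, so that $x_0$ and $x_n$ come out correctly, and the index shift between $S_{i-1}$ and $T_i$. Injectivity plus the equal cardinalities would already suffice, so even the explicit inverse is a convenience rather than a necessity.
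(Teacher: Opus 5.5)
Your proposal is correct and follows essentially the same route as the paper: you show that the stars-and-bars map $\Gamma$ is a bijection onto $\mathcal{C}_{m,n}$ via the explicit inverse $T_i = S_{i-1}(\mathbf{x}) + i$, which is exactly the paper's inverse $t_i = i + \sum_{k=0}^{i-1} x_k$, and then conclude that $\Gamma$ pushes the uniform law forward to the uniform law. Your version is in fact somewhat more complete than the paper's, since you explicitly check that $\Gamma$ lands in $\mathcal{C}_{m,n}$ (nonnegativity and the telescoping sum) and verify both compositions, whereas the paper only checks one of them.
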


Based on the sampler yielded by Proposition~\ref{prop:stars-bars}, we generate \ Monte-Carlo replicates
\[
\mathbf{x}^{(1)},\dots,\mathbf{x}^{(B)}
\overset{\mathrm{i.i.d.}}{\sim}
\mathrm{Unif}(\mathcal{C}_{m,n}),
\]
where $B$ is the number of Monte-Carlo samples. 
For each replicate $\mathbf{x}^{(b)}$, we yield an empirical estimator 
$\widehat{F}^+_{B}(r;\lambda,X_{1:n})
=
\frac{1}{B}\sum_{b=1}^{B}
\mathbf{1}\!\left\{
\Phi^{+}(\mathbf{x}^{(b)}; \lambda,X_{1:n})\le r
\right\}.$
The next result demonstrates that the Monte-Carlo approximation converges uniformly to $F^+(r;\lambda,X_{1:n})$. 

\begin{theorem}
\label{thm:mc-concentration}
For any $\varepsilon>0$,
\[
\mathbb{P}_{\mathrm{MC}}
\left(
\sup_{r\in\mathbb{R}}
\left|
\widehat{F}^+_{B}(r;\lambda,X_{1:n})-F^+(r;\lambda,X_{1:n})
\right|
>\varepsilon
\right)
\le 2e^{-2B\varepsilon^2}. 
\]
\end{theorem}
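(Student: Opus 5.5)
This is the Dvoretzky--Kiefer--Wolfowitz (DKW) inequality with Massart's sharp constant, applied conditionally on the calibration losses. So the plan is to reduce the statement to DKW for an i.i.d. real-valued sample.

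\emph{Step 1: condition and identify the sample.} Condition on $X_{1:n}$ and fix $\lambda$. Then the map $\mathbf{x}\mapsto \Phi^+(\mathbf{x};\lambda,X_{1:n})$ is a deterministic function on the finite set $\mathcal{C}_{m,n}$. By Proposition~\ref{prop:stars-bars}, the replicates $\mathbf{x}^{(1)},\dots,\mathbf{x}^{(B)}$ are i.i.d.\ $\mathrm{Unif}(\mathcal{C}_{m,n})$. Hence the scalars $Y_b:=\Phi^+(\mathbf{x}^{(b)};\lambda,X_{1:n})$ are i.i.d.\ real random variables under $\mathbb{P}_{\mathrm{MC}}$.

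\emph{Step 2: match both CDFs.} Their common CDF is
\[
\mathbb{P}_{\mathrm{MC}}(Y_1\le r)=\frac{1}{|\mathcal{C}_{m,n}|}\sum_{\mathbf{x}\in\mathcal{C}_{m,n}}\mathbf{1}\{\Phi^+(\mathbf{x};\lambda,X_{1:n})\le r\}=F^+(r;\lambda,X_{1:n}).
\]
The estimator $\widehat{F}^+_B$ is exactly the empirical CDF of $Y_1,\dots,Y_B$.

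\emph{Step 3: apply DKW.} Massart's form of the DKW inequality gives
\[
\mathbb{P}\Bigl(\sup_r|\widehat{F}_B(r)-F(r)|>\varepsilon\Bigr)\le 2e^{-2B\varepsilon^2}
\]
for the empirical CDF of $B$ i.i.d.\ draws from an arbitrary distribution $F$ on $\mathbb{R}$. Here $F^+$ is discrete, being supported on at most $\binom{m+n}{n}$ atoms, so it has jumps. Massart's bound is usually proved for continuous $F$; it extends to arbitrary $F$ by the quantile transform $Y_b=F^{-1}(U_b)$ with $U_b$ uniform. Under this transform, $\sup_r|\widehat{F}_B(r)-F(r)|\le\sup_u|\widehat{G}_B(u)-u|$, where $\widehat{G}_B$ is the uniform empirical CDF, so the supremum can only decrease.

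\emph{Step 4: remove the conditioning.} The bound holds for every realization of $X_{1:n}$ and every $\lambda$, and $\mathbb{P}_{\mathrm{MC}}$ refers only to the Monte-Carlo randomness. The conditional statement is therefore the claim itself, and no union over $\lambda$ is needed.

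\emph{Main obstacle.} The only delicate point is justifying DKW--Massart for the discrete, non-continuous $F^+$. I would handle it by the quantile-coupling argument in Step 3, citing Massart (1990) for the uniform case. The supremum is also measurable: it is attained over the finitely many atoms together with their left limits.
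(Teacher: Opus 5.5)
Your proposal is correct and follows essentially the same route as the paper, which simply invokes the DKW inequality with Massart's constant for the i.i.d.\ Monte-Carlo values $\Phi^{+}(\mathbf{x}^{(b)};\lambda,X_{1:n})$, $b=1,\dots,B$. Your quantile-transform argument extending Massart's bound to the discrete $F^+$, and your point that everything is conditional on $X_{1:n}$, fill in details the paper leaves implicit.
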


Theorem~\ref{thm:mc-concentration} is a direct consequence of the DKW inequality~\cite{dvoretzky1956asymptotic, massart1990tight} applied to the i.i.d.\ Monte-Carlo samples $\{\Phi^{+}(\mathbf{x}^{(b)}; \lambda, X_{1:n})\}_{b=1}^{B}$. 
% In the following, we present the algorithmic procedure for the Monte-Carlo version of OQRC (Algorithm~\ref{alg:porc_mc} in Appendix~\ref{sec:algorithm}). 
% Dvoretzky--Kiefer--Wolfowitz
We summarize this procedure to Algorithm~\ref{alg:porc_mc} in Appendix~\ref{sec:algorithm}. 

\subsection{Other theoretical results on OQRC}\label{subsec:tight_theory}
% \label{subsec:choose_lambda}

\paragraph{Calibrate the $\lambda$.} % In this subsection,
We next select a valid $\lambda$ to achieve quantile risk control. 
% In practical applications, we typically control the risk by tuning a parameter $\lambda$. 
Adopting the choice of Bates et al.~\cite{Bates2021DistributionFreeRP}, we define $\lambda^*:=\inf\bigl\{\lambda\in\Lambda: R^+(\lambda)\le\alpha\bigr\}.$
% , \forall \lambda'\ge\lambda
Then it leads to the following theorem. 
\begin{theorem}\label{thm:calibrate_lambda}
    Given a risk level $\alpha$ and a confidence level $1-\delta$, we have
    \begin{equation*}
        \mathbb{P}\Bigl(R_{\mathrm{test}}(\lambda^*)\le \alpha\Bigr)\ge1-\delta. 
    \end{equation*}
\end{theorem}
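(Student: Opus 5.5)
The plan is to reduce the claim to the pointwise guarantee of Theorem~\ref{thm:QRC}, using the RCPS-style monotonicity argument of Bates et al.~\cite{Bates2021DistributionFreeRP}. Throughout, $R_{\mathrm{test}}(\lambda)=R_\psi(\widehat F_m;\lambda)$ is the random quantity whose upper-bounding by $R^+(\lambda)$ holds with probability at least $1-\delta$, where the randomness is over calibration and test losses.

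\textbf{Step 1: monotonicity.} I will first show that $\lambda\mapsto R_{\mathrm{test}}(\lambda)$ is non-increasing and left-continuous.
\begin{itemize}
\item Each test loss $X^{\mathrm{test}}_t$ is non-increasing and left-continuous in $\lambda$ by the second assumption. Hence the sorted test losses, i.e.\ the empirical quantiles $\widehat F_m^{-1}(p)$, are also non-increasing in $\lambda$.
\item Since $\psi\ge 0$, the integral $\int_0^1\psi(p)\widehat F_m^{-1}(p)\,dp$ inherits monotonicity.
\item It is a finite weighted sum of order statistics of the test losses, so it also inherits left-continuity.
\end{itemize}

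\textbf{Step 2: define the critical parameter.} Let
\[
\lambda' := \sup\{\lambda\in\Lambda : R_{\mathrm{test}}(\lambda) > \alpha\}.
\]
This is a deterministic function of the test data; take $\lambda'=\inf\Lambda$ if the set is empty. By monotonicity and left-continuity, $R_{\mathrm{test}}(\lambda')>\alpha$ whenever the set is nonempty. Moreover, every $\lambda>\lambda'$ satisfies $R_{\mathrm{test}}(\lambda)\le\alpha$.

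\textbf{Step 3: event inclusion.} Consider the bad event $\{R_{\mathrm{test}}(\lambda^*)>\alpha\}$. On it we have $\lambda^*\le\lambda'$.
\begin{itemize}
\item Assume $R^+$ is non-increasing in $\lambda$. This holds because $\Phi^+(\mathbf{x};\lambda,X_{1:n})$ is a non-negative combination of calibration order statistics, each non-increasing in $\lambda$. So $F^+(r;\lambda,\cdot)$ is non-decreasing in $\lambda$, and its $(1-\delta)$-quantile $R^+$ is non-increasing.
\item From $\lambda^*\le\lambda'$ and the definition of $\lambda^*$ as an infimum, we get $R^+(\lambda')\le\alpha$ (up to attainment, handled below).
\item Combining, the bad event is contained in $\{R^+(\lambda')\le \alpha < R_{\mathrm{test}}(\lambda')\}$, which is contained in $\{R_{\mathrm{test}}(\lambda')>R^+(\lambda')\}$.
\end{itemize}

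\textbf{Step 4: apply Theorem~\ref{thm:QRC} at $\lambda'$.} This is the main obstacle. Theorem~\ref{thm:QRC} holds for each fixed $\lambda$, but $\lambda'$ depends on the test data. The first approach I would try is to condition on the test losses, which fixes $\lambda'$, and then use independence of calibration and test. However, the coverage of Theorem~\ref{thm:QRC} is joint over both samples, not conditional, so this does not immediately work.

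The alternative is to observe that $\lambda'$ is a function of the test data only, and $R^+$ is a function of the calibration data only. Under Assumption~\ref{asp:iid}, the joint law of $\mathbf N$ (Theorem~\ref{thm:Distribution}) should be invoked at the deterministic threshold level. Concretely, I would argue that for any $\lambda$ the event $\{R_{\mathrm{test}}(\lambda)>R^+(\lambda)\}$ is determined by the occupancy vector together with monotone order statistics. I would then define $\lambda'$ as the largest $\lambda$ at which the population/test quantity crosses $\alpha$, in the style of Bates et al., and verify that the argument there transfers. The cleanest version treats $\lambda'$ as nonrandom, via the population risk $R_\psi(F_X;\lambda)$ in \eqref{eq:risk_control}, and then applies \eqref{eq:upper_confidence_bound} at that fixed $\lambda'$. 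This gives probability at most $\delta$ for the bad event.

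\textbf{Step 5: edge cases.} I will handle non-attainment of the infimum in $\lambda^*$ and the empty-set case by a limiting argument that uses the left-continuity of $R^+$.
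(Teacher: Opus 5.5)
You follow the same route as the paper, whose proof simply defers to Theorem A.1 of Bates et al. In Step 4 you correctly identify why that deferral is not enough. Bates et al.'s risk is deterministic, so their crossing point is a fixed $\lambda$ at which the pointwise bound can be invoked. Here $R_{\mathrm{test}}$ is random, and your $\lambda'$ depends on the test data.

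The fix you settle on does not prove the statement. Defining $\lambda'$ through $R_\psi(F_X;\cdot)$ makes it deterministic, but it destroys the Step 3 inclusion. The event $R_{\mathrm{test}}(\lambda^*)>\alpha$ can occur when $R_\psi(F_X;\lambda^*)\le\alpha$, so the bad event is not contained in $\{R_\psi(F_X;\lambda')>R^+(\lambda')\}$. At best this route gives $\mathbb{P}(R_\psi(F_X;\lambda^*)\le\alpha)\ge1-\delta$, which is a different claim. Even that needs \eqref{eq:upper_confidence_bound} for the population risk, which the argument behind Theorem~\ref{thm:QRC} does not deliver, since it bounds $R_{\mathrm{test}}$. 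It is also false for finite $m$. Take $m=1$ and $\psi\equiv1$, so $R^+=X_{(\lceil(1-\delta)(n+1)\rceil)}$ is the split-conformal quantile. If $X$ is near $0$ with probability $0.95$ and near $1$ otherwise, and $\delta=0.1$, then $R^+\approx0$ with high probability while $\mathbb{E}[X]\approx0.05$.

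Nor can the gap be closed by checking that the argument transfers. Pointwise validity, monotonicity in $\lambda$, and independence of calibration and test data together do not imply the claim. Let $\Lambda=\{\lambda_1<\lambda_2\}$ and $\delta=0.1$. Set $R^+(\lambda_2)=\alpha$, and let $R^+(\lambda_1)\in\{\alpha,U\}$ with probability $1/2$ each. Independently, let $R_{\mathrm{test}}(\lambda_1)=U$ on an event of probability $0.2$ and $R_{\mathrm{test}}(\lambda_2)=U$ on a sub-event of probability $0.1$, with both equal to $L$ otherwise and $L\le\alpha<U$. Everything is monotone and each pointwise bound fails with probability exactly $0.1$, yet $\mathbb{P}(R_{\mathrm{test}}(\lambda^*)>\alpha)=0.5\cdot0.2+0.5\cdot0.1=0.15$. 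A proof therefore has to use OQRC's structure, not just Theorem~\ref{thm:QRC}.

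For $\beta$-VaR this can be done, as follows.
\begin{itemize}
\item $\Phi^+(\mathbf{x})=X_{(J(\mathbf{x})+1)}$, where $J(\mathbf{x})$ is the bin containing the $\lceil\beta m\rceil$-th point, so $R^+(\lambda)=X_{(j^*+1)}(\lambda)$ for a fixed index $j^*$.
\item With right-continuous losses and crossing points $s_i=\inf\{\lambda:X_i(\lambda)\le\alpha\}$, $\lambda^*$ is at least the $(j^*+1)$-th smallest calibration $s_i$.
\item Hence $\{R_{\mathrm{test}}(\lambda^*)\le\alpha\}\supseteq\{J(\mathbf{N}^s)\le j^*\}$, where $\mathbf{N}^s$ is the occupancy vector of the test $s$'s in the bins cut by the calibration $s$'s.
\item Theorem~\ref{thm:Distribution} applied to the i.i.d.\ $s_i$ then gives probability at least $1-\delta$.
\end{itemize}
That is a conformal rank argument, not an RCPS one. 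It breaks for general $\psi$, where the ordering of $\Phi^+$ over $\mathcal{C}_{m,n}$ depends on the calibration values.

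Finally, your boundary steps use continuity on the wrong side. Left-continuity only gives $R_{\mathrm{test}}(\lambda')\ge\alpha$ in Step 2 and $R^+(\lambda^*)\ge\alpha$ at the infimum, so the limiting argument in Step 5 cannot work. With $\Lambda$ an interval the claim genuinely fails. Let all losses be (jittered) $U$ for $\lambda\le\lambda_0$ and $L$ for $\lambda>\lambda_0$; for suitable $m,n,\delta$ this gives $\lambda^*=\lambda_0$ and $R_{\mathrm{test}}(\lambda^*)>\alpha$ surely. You need right-continuity, or a finite grid $\Lambda$ as in Algorithm~\ref{alg:lambda_calibration}.
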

% We take the tumor segmentation task as an example. 
% The model input $Z$ is a $d\times d$ image, and the output $Y$ denotes the probability of each pixel is a tumor, corresponding to an output space of $[0,1]^{d\times d}$. 
% We utilize $\lambda$ to extract a pixel subset $\mathcal{C}_\lambda(Z)=\bigl\{h(Z)_y\ge 1-\lambda\bigr\}$, which represents the tumor region predicted by the model. 
% We define the loss function as the false negative proportion (FNP) of this pixel subset, expressed as
% \begin{equation}\label{eq:false_negative_proportion}
%     l_\lambda^{\mathrm{FNP}}(Z)=1-\frac{|Y \cap \mathcal{C}_\lambda(Z)|}{|Y|}.
% \end{equation}
% Given a target risk level $\alpha$ and a confidence level $1-\delta$, we aim to identify an appropriate parameter $\lambda$ such that
% \begin{equation}\label{eq:risk_control}
%     \mathbb{ P} \Bigl(R_{\mathrm{test}}(\lambda)\le \alpha \Bigr)\ge 1-\delta.
% \end{equation}
% We typically assume that the set $\mathcal{C}_\lambda$ is nested with respect to $\lambda$, meaning
% $$\lambda_1\le \lambda_2 \rightarrow \mathcal{C}_{\lambda_1}\subset \mathcal{C}_{\lambda_2}.$$
% This objective is formulated as
We formalize this selection process for $\lambda$ as Algorithm~\ref{alg:lambda_calibration} in Appendix~\ref{sec:algorithm}. 

\paragraph{Tightness of OQRC.}
We then show that OQRC achieves tight control by proving that the risk control bounds obtained from OQRC converge to the optimal bounds at a rate of $\mathcal{O}_p(n^{-1/2})$. 
For any $\lambda$, we define the optimal $1-\delta$ bound for risk control as $M^*(\lambda) := \inf \{M: \mathbb{P}(R_{\mathrm{test}}(\lambda) \le M) \ge 1-\delta\}$. 
Under this optimal bound, selecting $\lambda' = \inf\{\lambda \in \Lambda: M^*(\lambda) \le \alpha\}$ yields a value that coincides with $\lambda'' = \inf\{\lambda \in \Lambda: \mathbb{P}(R_{\mathrm{test}}(\lambda) \le \alpha)\ge1-\delta\}$, indicating the optimality of $M^*$. 
% , which justifies the optimality of $M^*$ as a bound. 
Let $f_X$ denote the probability density function of $X$. 
% with its support defined as $\mathrm{supp}(f_X) = \overline{\{x : f_X(x) \neq 0\}}$. 
% In what follows, 
We establish the following asymptotic property. 
% that $R^+(\lambda)$ converges to $M^*(\lambda)$. 

\begin{theorem}\label{thm:tightness}
    For a fixed $\lambda\in\Lambda$ and a fixed $m$, assume $0<c\le f_X\le C<\infty$,
    for $x\in[L,U]$ and some constants $c, C>0$. Then we have
    % If $\;\forall \lambda\in\Lambda$, $f_X$ does not vanish on $[L,U],$ then we have 
    % Assume the probability distribution function of $X$ does not vanish on $[L, U]$ across $\lambda$, then $\forall \lambda\in\Lambda$ we have 
    $$|R^+(\lambda) - M^*(\lambda)|=\mathcal{O}_p(n^{-1/2}).$$
\end{theorem}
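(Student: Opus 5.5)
The plan is to compare both quantities with a common deterministic object: the $(1-\delta)$-quantile of a risk functional evaluated at the \emph{population} quantiles. First transform to uniforms. Set $U_i=F_X(X_i)$ and $U^{\mathrm{test}}_t=F_X(X^{\mathrm{test}}_t)$; these are i.i.d.\ $\mathrm{Unif}(0,1)$ because $f_X\ge c>0$ makes $F_X$ continuous and strictly increasing on $[L,U]$.

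The $m$ test uniforms alone determine the law of $R_{\mathrm{test}}(\lambda)$, which is an L-statistic of the form $\sum_{k}w_k F_X^{-1}(U^{\mathrm{test}}_{(k)})$ with weights $w_k=\int_{(k-1)/m}^{k/m}\psi$. So $M^*(\lambda)$ is simply the $(1-\delta)$-quantile of $G:=\mathrm{Law}\bigl(\sum_k w_kF_X^{-1}(V_{(k)})\bigr)$, where $V$ is a uniform sample of size $m$.

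On the other side, with $m$ fixed, $\mathbf{x}\sim\mathrm{Unif}(\mathcal{C}_{m,n})$ can be realized through Proposition~\ref{prop:stars-bars}. The $k$-th test point then falls in bin $i_k$, where $i_k$ is the number of bars preceding the $k$-th star. Hence $\Phi^+(\mathbf{x})=\sum_k w_k X_{(i_k+1)}$.

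The coupling step is the following. Draw auxiliary uniforms $V_1,\dots,V_m$ independent of the calibration data. Let $i_k$ be the number of calibration uniforms $U_j$ lying below $V_{(k)}$. By exchangeability, the resulting occupancy vector is exactly uniform on $\mathcal{C}_{m,n}$; this is the content of Theorem~\ref{thm:Distribution}. Conditional on $X_{1:n}$, therefore,
\[
\Phi^+=\sum_k w_k X_{(i_k+1)},\qquad X_{(i_k+1)}=F_X^{-1}\bigl(U_{(i_k+1)}\bigr),
\]
where $U_{(i_k+1)}$ is the first calibration uniform exceeding $V_{(k)}$. This gives the deterministic bound
\[
0\le X_{(i_k+1)}-F_X^{-1}(V_{(k)})\le \frac{1}{c}\,\bigl(U_{(i_k+1)}-V_{(k)}\bigr)\le \frac{1}{c}\,\Delta_n,
\]
with $\Delta_n=\max_j\bigl(U_{(j+1)}-U_{(j)}\bigr)$ the maximal uniform spacing, and with $U_{(n+1)}$ interpreted as $1$ (the top bin uses $X_{(n+1)}=U$).

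Because $\sum_k w_k=1$ and $w_k\ge0$, it follows that $0\le\Phi^+-\sum_kw_kF_X^{-1}(V_{(k)})\le\Delta_n/c$ pointwise on the coupling. Consequently, conditionally on the calibration sample, $F^+$ is sandwiched between $G$ and $G$ shifted by $\Delta_n/c$. The corresponding quantiles then satisfy
\[
M^*(\lambda)\le R^+(\lambda)\le G^{-1}\bigl((1-\delta)^{\ast}\bigr)+\frac{\Delta_n}{c},
\]
where $(1-\delta)^{\ast}$ denotes the same level. Quantile monotonicity under first-order stochastic ordering and the shift is immediate here, since both use level $1-\delta$. This gives
\[
0\le R^+(\lambda)-M^*(\lambda)\le \frac{\Delta_n}{c}.
\]

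The classical spacing result gives $\Delta_n=O_p(\log n/n)$, which is well within $O_p(n^{-1/2})$, so the claim follows. The constant $C$ is not even needed. It would only be needed if one instead argued through a DKW-type bound $\sup|F_n-F_X|=O_p(n^{-1/2})$ transferred via the inverse, which is likely the authors' route and yields exactly the stated rate.

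The main obstacle is making the coupling rigorous. One must verify that the occupancy vector built from independent uniforms $V$ and the calibration uniforms is uniform on $\mathcal{C}_{m,n}$ and independent of the calibration order statistics. Exchangeability gives that conditional on the order statistics, the ranks of the $V$'s among the combined sample are uniform, but the bin index of each $V_{(k)}$ is not independent of the spacings. So instead of literal independence, I would present the argument conditionally. For fixed $X_{1:n}$, $F^+$ is the law of $\sum_k w_kX_{(i_k+1)}$ under uniform $\mathbf{x}$, while $G$ is the law under a real uniform sample $V$. Comparing them then reduces to bounding the effect of replacing uniform bin allocation by the actual allocation $\mathbf{x}_V$ with probabilities given by the spacings. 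Here the DKW bound on $\max_j|U_{(j)}-j/(n+1)|=O_p(n^{-1/2})$ ensures that the rank-$i$ quantile $X_{(i+1)}$ and the population quantile $F_X^{-1}(i/(n+1))$ differ by $O_p(n^{-1/2})/c$ uniformly in $i$. With this, both $R^+$ and $M^*$ are within $O_p(n^{-1/2})$ of the $(1-\delta)$-quantile of $\sum_kw_kF_X^{-1}\bigl((i_k+1)/(n+1)\bigr)$ under uniform $\mathbf{x}$, and that discretized functional is within $O(1/n)$ of $G$. This is the version I would write out.
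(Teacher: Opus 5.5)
Your main argument, the one giving $0\le R^+(\lambda)-M^*(\lambda)\le\Delta_n/c$, does not work. The problem is one of substance, not rigor. The occupancy vector you build from auxiliary uniforms $V_1,\dots,V_m$ is uniform on $\mathcal{C}_{m,n}$ only after averaging over the calibration sample. Conditionally on $X_{1:n}$, i.e.\ on the spacings $q_0,\dots,q_n$ of the calibration uniforms, it is $\mathrm{Multinomial}(m;q_0,\dots,q_n)$. But $F^+$, and hence $R^+(\lambda)=(F^+)^{-1}(1-\delta)$, is defined with $X_{1:n}$ fixed and $\mathbf{x}$ uniform on $\mathcal{C}_{m,n}$ \emph{independently} of it.

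Your pointwise inequality $0\le\sum_k w_kX_{(i_k+1)}-\sum_k w_kF_X^{-1}(V_{(k)})\le\Delta_n/c$ is correct. However, it controls the oracle quantity $\Phi^+(\mathbf{N};X_{1:n})$ built from the true occupancy, not a variable with law $F^+$. So the sandwich and its conclusion are false. Concretely, take $m=1$, $\psi\equiv1$, $X\sim\mathrm{Unif}[0,1]$. Then $F^+$ is uniform on $\{X_{(1)},\dots,X_{(n)},1\}$, so $R^+=X_{(\lceil(1-\delta)(n+1)\rceil)}$ while $M^*=1-\delta$. Hence $\sqrt{n}\,(R^+-M^*)$ converges in law to $N(0,\delta(1-\delta))$: the difference is negative about half the time and of exact order $n^{-1/2}$. $R^+$ inherits the $n^{-1/2}$ fluctuation of the calibration order statistics around the population quantiles, which bin widths cannot detect. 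The theorem's rate is therefore sharp. Your remarks that the spacing bound beats it, and that $C$ is needed only on a DKW route, should be dropped; neither route, nor the paper's proof, uses $C$.

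The fallback in your last paragraph is the right argument and is essentially the paper's proof. Set $U_{(j)}=F_X(X_{(j)})$ and use $X_{(n+1)}=U=F_X^{-1}(1)$. DKW and the $1/c$-Lipschitz property of $F_X^{-1}$ then give $\max_{j\le n+1}|X_{(j)}-F_X^{-1}(j/(n+1))|=\mathcal{O}_p(n^{-1/2})$. Since $w_k\ge0$ and $\sum_kw_k=1$, this moves $\Phi^+(\mathbf{x})$ by at most that amount for every $\mathbf{x}$, hence moves $R^+$ by at most that amount. This matches the paper's Steps 1--2, which work through $\widehat Q_n^+$ and an inverse-CDF perturbation bound rather than directly through uniform order statistics.

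The step you only assert is that the grid functional $\sum_kw_kF_X^{-1}((i_k+1)/(n+1))$ under uniform $\mathbf{x}$ has $(1-\delta)$-quantile within $O(1/n)$ of $M^*$. This is the paper's Step 3. It should be supplied as a coupling of $\mathbf{x}\sim\mathrm{Unif}(\mathcal{C}_{m,n})$ with the order statistics $U_{(1)}\le\dots\le U_{(m)}$ of $m$ i.i.d.\ uniforms such that $\max_k|(i_k+1)/(n+1)-U_{(k)}|\le A_m/n$ almost surely. The paper asserts this, without proof, as a regular $\mathcal{O}(n^{-1})$ discretization of the ordered simplex. With an almost-sure bound, quantiles move by at most $A_m/(cn)$ directly. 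Mere closeness in distribution would additionally require a lower bound on the density of the law of $R_{\mathrm{test}}$ near $M^*$.
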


% The condition of Theorem~\ref{thm:tightness} is a mild assumption, which we empirically verify in Appendix~\ref{sec:tightness_theory}. 
This theorem demonstrates that as the calibration set size $n$ increases, the risk bounds obtained by OQRC asymptotically converge to the optimal bounds, thereby proving the tightness of our method. 
To the best of our knowledge, we are the first to develop a tight quantile risk control method. % a method that achieves

\begin{table}[t]
\centering
\caption{\textbf{Performance of quantile risk control methods on 7 datasets across 3 tasks for 2 quantile-based risk measures.} Since CDRC-L fails to achieve valid quantile risk control, the values of RiskGap and AvgSize for this method are not reported. The best results are shown in \textbf{bold}.}
\renewcommand\arraystretch{1.1} 
\setlength{\tabcolsep}{5mm}     
\resizebox{1\textwidth}{!}{   
\begin{adjustbox}{max width=\textwidth}
\begin{tabular}{*{6}{c}}
    \toprule
    \multirow{2}*{Dataset} & \multirow{2}*{Risk Measure} & \multirow{2}*{Methods} & \multicolumn{3}{c}{Metrics} \\
    \cmidrule(lr){4-6}
    & &                     & Cov   & RiskGap $(\downarrow)$   & AvgSize $(\downarrow)$   \\ \hline
    \multirow{8}*{Polyp}            & \multirow{4}*{CVaR}
      & OrderStats          & 1.000 & 0.366$\pm$0.018          & 47.021$\pm$2.637         \\
    & & One-sided BJ        & 1.000 & 0.324$\pm$0.034          & 41.907$\pm$3.775         \\
    & & CDRC-L              & 0.844 & -                        & -                        \\
    & & \textbf{OQRC(Ours)} & 0.935 & \textbf{0.120$\pm$0.062} & \textbf{20.061$\pm$3.441}\\
    \cmidrule(lr){2-6}
                                    & \multirow{4}*{VaR-Interval}
      & OrderStats          & 1.000 & 0.362$\pm$0.022          & 41.299$\pm$4.188         \\
    & & One-sided BJ        & 1.000 & 0.332$\pm$0.027          & 34.926$\pm$4.162         \\
    & & Two-sided BJ        & 1.000 & 0.331$\pm$0.026          & 34.729$\pm$4.364         \\
    & & \textbf{OQRC(Ours)} & 0.928 & \textbf{0.168$\pm$0.074} & \textbf{17.216$\pm$3.410}\\
    \midrule
    \multirow{8}*{MS COCO}          & \multirow{4}*{CVaR}
      & OrderStats          & 1.000 & 0.339$\pm$0.012          & 7.612$\pm$0.234          \\
    & & One-sided BJ        & 1.000 & 0.325$\pm$0.024          & 6.845$\pm$0.866          \\
    & & CDRC-L              & 0.841 & -                        & -                        \\
    & & \textbf{OQRC(Ours)} & 0.959 & \textbf{0.082$\pm$0.050} & \textbf{3.345$\pm$0.255} \\
    \cmidrule(lr){2-6}
                                    & \multirow{4}*{VaR-Interval}
      & OrderStats          & 1.000 & 0.390$\pm$0.020          & 7.024$\pm$0.851          \\
    & & One-sided BJ        & 1.000 & 0.310$\pm$0.074          & 5.127$\pm$0.982          \\
    & & Two-sided BJ        & 1.000 & 0.309$\pm$0.075          & 5.090$\pm$0.935          \\
    & & \textbf{OQRC(Ours)} & 0.904 & \textbf{0.066$\pm$0.041} & \textbf{3.115$\pm$0.237} \\
    \midrule
    \multirow{8}*{Go Emotions}      & \multirow{4}*{CVaR}
      & OrderStats          & 1.000 & 0.372$\pm$0.030          & 15.712$\pm$4.184         \\
    & & One-sided BJ        & 1.000 & 0.336$\pm$0.042          & 11.092$\pm$2.586         \\
    & & CDRC-L              & 0.844 & -                        & -                        \\
    & & \textbf{OQRC(Ours)} & 0.925 & \textbf{0.139$\pm$0.074} & \textbf{5.439$\pm$0.838} \\
    \cmidrule(lr){2-6}
                                    & \multirow{4}*{VaR-Interval}
      & OrderStats          & 1.000 & 0.399$\pm$0.001          & 12.035$\pm$2.975         \\
    & & One-sided BJ        & 1.000 & 0.398$\pm$0.002          & 9.397$\pm$2.028          \\
    & & Two-sided BJ        & 1.000 & 0.398$\pm$0.012          & 9.316$\pm$2.001          \\
    & & \textbf{OQRC(Ours)} & 0.937 & \textbf{0.254$\pm$0.107} & \textbf{5.018$\pm$0.743} \\
  \bottomrule
\end{tabular}
\end{adjustbox}
}
\label{tab:main}
\vspace{-3mm}
\end{table}

\section{Experiments}\label{sec:EXP}

\subsection{Experimental setups}

\paragraph{Datasets and setup.} 
We evaluate the performance of OQRC across three tasks: tumor segmentation, image multi-label classification, and text emotion recognition. 
The tumor segmentation evaluation pools data from several open-source gut polyp segmentation datasets, specifically Kvasir~\cite{pogorelov2017kvasir}, Hyper-Kvasir~\cite{borgli2020hyperkvasir}, CVC-ColonDB, CVC-ClinicDB, and ETIS-Larib. 
This extracted collection comprises 798 data points consisting of images and their corresponding tumor masks. 
In the multi-label classification setting, we utilize the MS COCO dataset~\cite{lin2014microsoft}, which includes 80 object classes and allows multiple classes to appear within a single instance. 
We sample 2000 data points from this source. 
The text emotion recognition evaluation employs the Go Emotions dataset~\cite{demszky2020goemotions}, which contains Reddit comments annotated with binary labels for 28 fine-grained emotion categories. 
Each individual comment can possess multiple positive labels. 
This subset includes 5427 examples. 
% The detailed setups for these tasks are provided in Appendix~\ref{subsec:task}. 
Across all evaluations, we uniformly configure the size of the calibration set to 200 and the size of the test set to 500. 
We use the false negative proportion (FNP) as our loss function, with a detailed explanation in Appendix~\ref{subsec:loss}. 
We repeat each experiment 1000 times to report the average OQRC performance. 
% which is non-increasing with respect to $\lambda$. 

\paragraph{Model.} 
We employ PraNet~\cite{fan2020pranet} for tumor segmentation, ResNet50~\cite{he2016deep} for image multi-label classification, and BERT~\cite{devlin2019bert} for text emotion recognition. 
To validate the robustness of our method across various model architectures, we evaluate four different architectures on the image multi-label classification task, which include ResNet~\cite{he2016deep}, EfficientNet~\cite{tan2019efficientnet}, TResNet~\cite{ridnik2021tresnet}, and ConvNeXt~\cite{liu2022convnet}. 
All models are fine-tuned on the corresponding datasets, with training details in Appendix~\ref{subsec:training}. 

\paragraph{Compared methods.} 
We present five methods in total: Order Statistics (\textbf{OrderStats}), truncated Berk-Jones (\textbf{One-sided BJ}, \textbf{Two-sided BJ}), conformal distortion risk control via L-statistics (\textbf{CDRC-L}), and our method. 
The Two-sided BJ is specifically designed for VaR-Interval, while CDRC-L is tailored for CVaR and VaR. 
Therefore, these two methods are only evaluated in experiments with specific risk measures. 
Detailed explanations of these methods are provided in Appendix~\ref{sec:othermethod}. 

% \paragraph{Loss function.} 
% % We use the false negative proportion (FNP) as the loss function for all experimental results 
% We use the false negative proportion (FNP) as our loss function, which is non-increasing with respect to $\lambda$. 
% We provide a detailed explanation of FNP in Appendix~\ref{subsec:loss}. 
% The expression for the FNP is given as follows: 
% \begin{equation*}
%     \ell^{\mathrm{FNP}}_{\lambda}(Z)=1-\frac{|Y \cap \mathcal{C}_\lambda(Z)|}{|Y|}. 
% \end{equation*}
% \mathrm{max}\{|\mathcal{C}_\lambda(Z)|, 1\}
% except for Table 5, where the false discovery proportion (FDP) is employed instead. 
% The expression for FNP is given in Equation~\ref{eq:false_negative_proportion}, while the expression for FDP is defined as follows: 

\paragraph{Evaluation metrics.} 
We evaluate the performance of quantile risk control methods using three specific metrics. 
The first metric is the coverage (\textbf{Cov}), defined as the empirical probability that the test risk remains under $\alpha$ across experimental trials. 
A method is considered valid if the coverage exceeds $1-\delta$, and invalid otherwise. 
The second metric is the average risk gap (\textbf{RiskGap}), defined as the empirical average of the absolute difference between the empirical and target risks. 
This metric assesses the conservatism of the method. 
Assuming that the method is valid, a larger RiskGap indicates a more conservative approach, which implies that a smaller gap corresponds to tighter control. 
Finally, we utilize the average prediction set size (\textbf{AvgSize}), defined as the average length of the prediction sets. 
% to measure the magnitude of the generated prediction sets. 
Similar to the RiskGap, when the method is valid, a smaller AvgSize implies tighter control of the risk. 
We provide a detailed explanation of these metrics in Appendix~\ref{subsec:metrics}. 

\subsection{Main results}

\begin{figure}[t]
    \centering
    \begin{minipage}{0.33\linewidth}
        \centering
        \includegraphics[width=\linewidth]{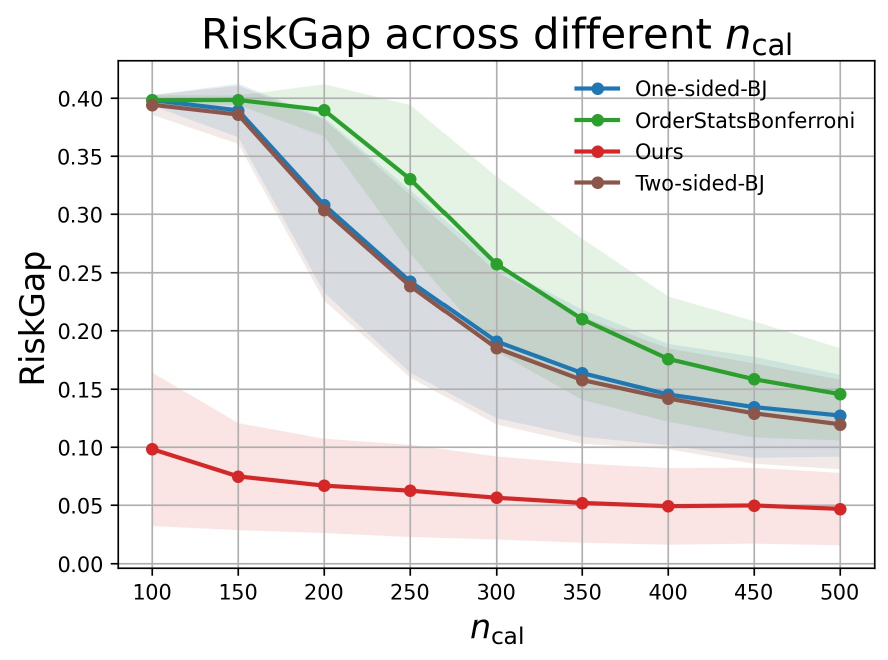}
        % \subcaption{RiskGap across different $n_{\mathrm{val}}$}
        \label{subfig:alpha_riskgap}
    \end{minipage}\hfill
    \begin{minipage}{0.33\linewidth}
        \centering
        \includegraphics[width=\linewidth]{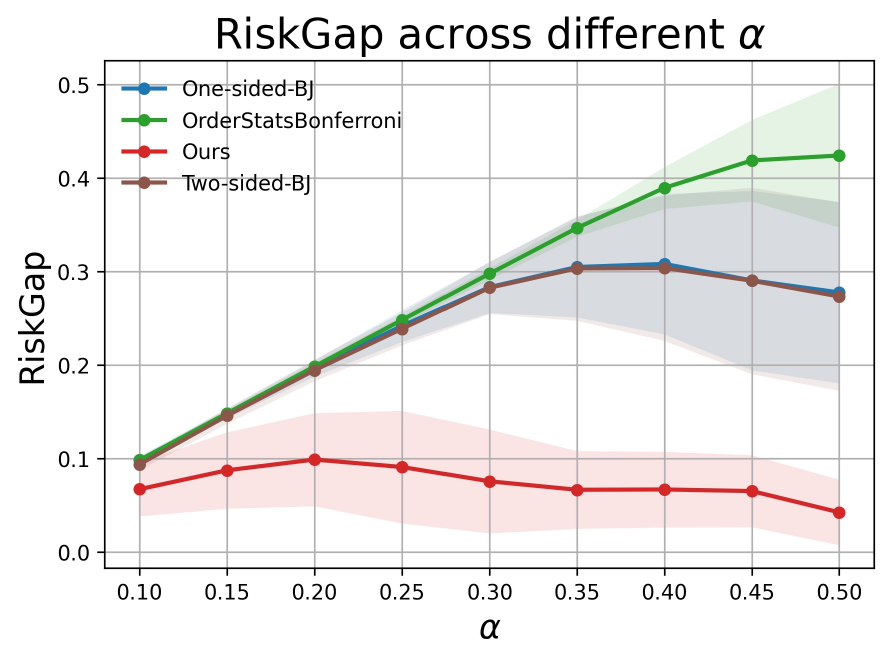}
        % \subcaption{RiskGap across different $\alpha$.}
        \label{subfig:delta_riskgap}
    \end{minipage}\hfill
    \begin{minipage}{0.33\linewidth}
        \centering
        \includegraphics[width=\linewidth]{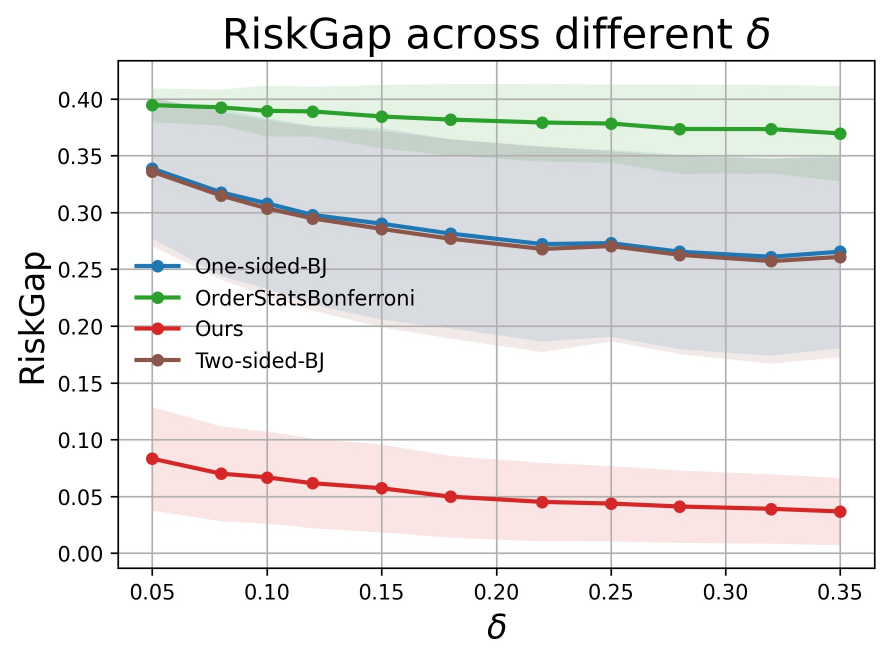}
        % \subcaption{}
        \label{subfig:n_val_riskgap}
    \end{minipage}\\
    \begin{minipage}{0.33\linewidth}
        \centering
        \includegraphics[width=\linewidth]{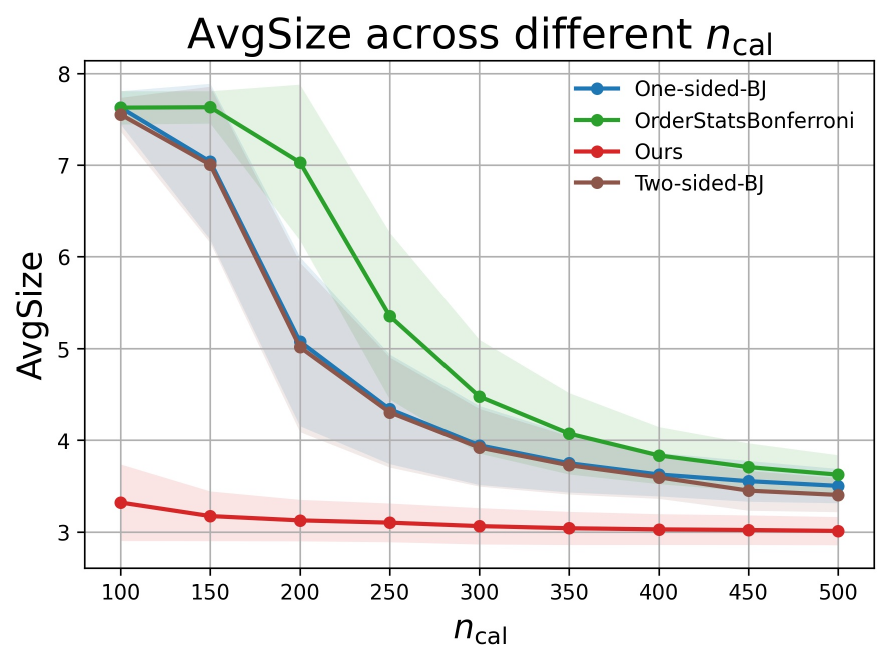}
        % \subcaption{}
        \label{subfig:alpha_setsize}
    \end{minipage}\hfill
    \begin{minipage}{0.33\linewidth}
        \centering
        \includegraphics[width=\linewidth]{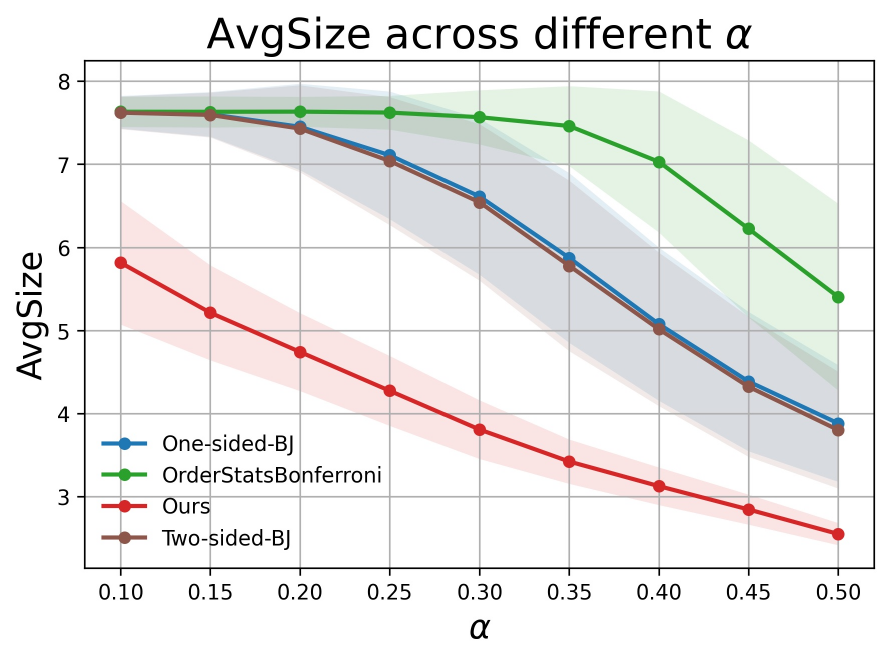}
        % \subcaption{}
        \label{subfig:delta_setsize}
    \end{minipage}\hfill
    \begin{minipage}{0.33\linewidth}
        \centering
        \includegraphics[width=\linewidth]{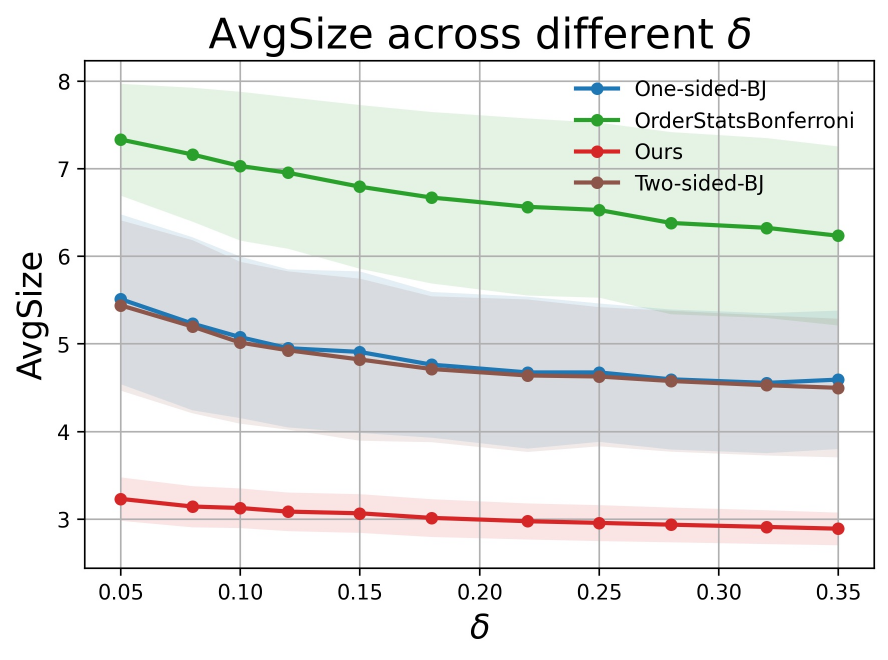}
        % \subcaption{}
        \label{subfig:n_val_setsize}
    \end{minipage}
    \caption{\textbf{Performance comparison across various hyperparameter configurations.} The top three figures compare the RiskGap, while the bottom three figures compare the AvgSize. Due to the highly similar performance of One-sided BJ and Two-sided BJ, their curves nearly overlap in the figures.}
    \label{fig:parameter_analysis}
    \vspace{-3mm}
\end{figure}

\paragraph{OQRC achieves tighter risk control while maintaining valid coverage.} 
Table~\ref{tab:main} compares our proposed method with several baseline methods across coverage, risk gap, and average prediction set size by using the CVaR and the VaR-Interval as risk measures. 
In the experimental configuration, $\beta$ is set to $0.8$ for the CVaR, while $\beta_{\text{min}}$ and $\beta_{\text{max}}$ for the VaR-Interval are $0.85$ and $0.95$, respectively. 
The target risk level $\alpha$ is fixed at $0.4$, and the confidence level $1-\delta$ is $0.9$. 
The results demonstrate that our method yields tighter control than OrderStats, One-sided BJ, and Two-sided BJ, which translates to smaller values for the RiskGap and the AvgSize. 
For instance, on the MS COCO dataset under the VaR-Interval measure, our approach reduces the RiskGap by $78.64\%$ and the AvgSize by $38.80\%$ compared to Two-sided BJ. 
Meanwhile, CDRC-L fails to satisfy the $1-\delta$ coverage requirement in any of the three tasks, suggesting that the method lacks valid risk control when the size of the calibration set is small. 
We also observe that the empirical coverage of OQRC modestly exceeds the predefined confidence level $0.9$, an experimental result that appears inconsistent with the theoretical intuition presented in Subsection~\ref{subsec:tight_theory}. 
Our detailed analysis in Appendix~\ref{sec:tightness_theory} demonstrates that these empirical findings remain consistent with our theoretical results. 
% This discrepancy arises because the theoretical framework of this method relies on asymptotic results that hold only as the size of the sample grows sufficiently large. 
% In contrast, OQRC provides finite-sample guarantees, ensuring robust performance even with limited calibration data. 

\paragraph{OQRC maintains its effectiveness under various hyperparameter configurations.} 
In Figure~\ref{fig:parameter_analysis}, we compare the quantile risk control performance across different sizes of the calibration set, target risk levels $\alpha$, and confidence levels $1-\delta$. 
We conduct experiments on the MS COCO dataset under the [0.85, 0.95]-VaR-Interval risk measure. 
The results show that OQRC achieves tighter risk control than the other baseline methods. 
For instance, compared to alternative approaches, our approach reduces the RiskGap by 82.54\% and the AvgSize by 44.66\% on average across varying values of $\delta$. 
Appendix~\ref{subsec:results_hyperparameter} provides additional experiments that involve other risk measures and datasets. 

\begin{figure}[t]
    \centering
    \includegraphics[width=\linewidth]{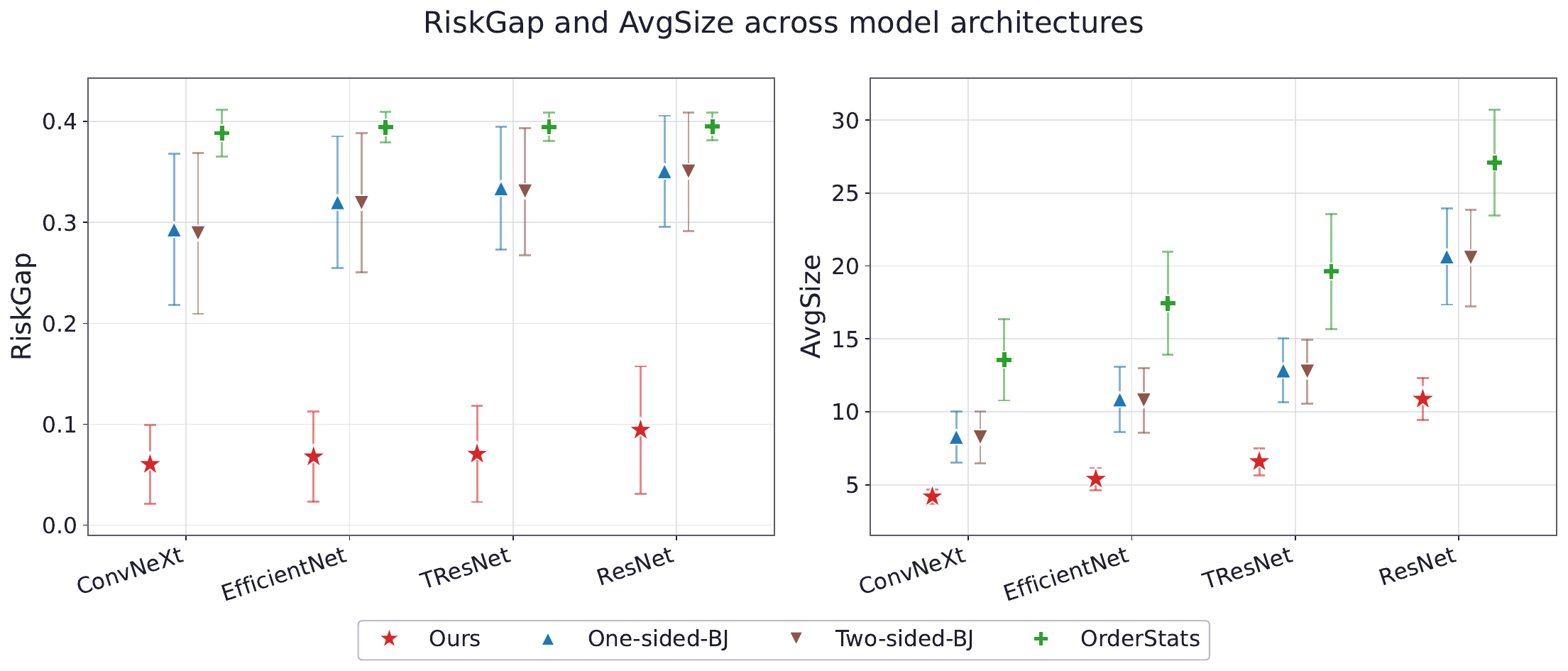}
    \caption{Performance comparison across different model architectures.}
    \label{fig:different_model}
    \vspace{-5mm}
\end{figure}

\paragraph{OQRC performs effectively across different model architectures.} 
Figure~\ref{fig:different_model} compares the performance of our approach against baseline methods under various model architectures. 
We conduct experiments on the MS COCO dataset under the [0.85, 0.95]-VaR-Interval risk measure. 
Across all these models, OQRC decreases the average RiskGap from 0.347 to 0.075 and reduces the average AvgSize from 15.226 to 6.849, consistently outperforming other baseline methods. 
These results highlight the model-agnostic nature of our method. 
Appendix~\ref{subsec:results_model} provides further performance comparisons between our approach and the baseline methods by using alternative risk measures. 

% \paragraph{OQRC is still valid under general loss functions.} 
% Table 3 evaluates our method by using the FDP as the loss function. 

\section{Discussion}\label{sec:discussion}

\paragraph{Unconditional quantile risk control.} 
Achieving unconditional quantile risk control, which requires ensuring that $R_\psi(F_X) \le \alpha$, is a challenging problem~\cite{angelopoulos2022conformal}. 
Yeh et al.~\cite{yeh2025conformal} propose a training-based solution to achieve unconditional control over the optimized certainty equivalent (OCE) risks. 
However, training-free methods that achieve unconditional control over QBRMs are still lacking. 
Our method offers a solution to this problem, leading to the following theorem. 
\begin{theorem}\label{thm:expected_result}
    For any $\lambda$, the following inequality holds: 
    $$\mathbb{E}\bigl[R_{\mathrm{test}}(\lambda)\bigr] \le \mathbb{E}\bigl[\Phi^+(\mathbf{x}; \lambda,X_{1:n})\bigr],$$
    where the expectation is taken over the randomness of calibration and test losses. 
\end{theorem}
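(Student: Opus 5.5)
The plan is to prove the inequality pathwise, by coupling the actual test risk with the upper functional evaluated at the realized occupancy vector, and then to use Theorem~\ref{thm:Distribution} to identify the right-hand side. Throughout, $\mathbf{x}$ denotes a random vector distributed as $\mathrm{Unif}(\mathcal{C}_{m,n})$, drawn independently of the data. By Theorem~\ref{thm:Distribution}, the realized occupancy vector $\mathbf{N}$ has exactly this uniform law. If $\mathbf{N}$ were also independent of the order statistics $X_{(1)},\dots,X_{(n)}$, we would get $\mathbb{E}[\Phi^+(\mathbf{N};\lambda,X_{1:n})]=\mathbb{E}[\Phi^+(\mathbf{x};\lambda,X_{1:n})]$. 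So the argument has two steps:
\begin{itemize}
\item[(i)] $R_{\mathrm{test}}(\lambda)\le \Phi^+(\mathbf{N};\lambda,X_{1:n})$ almost surely;
\item[(ii)] $\mathbf{N}$ is independent of $(X_{(1)},\dots,X_{(n)})$, or at least the two expectations agree.
\end{itemize}

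For step (i), write the test order statistics as $X^{\mathrm{test}}_{(1)}\le\dots\le X^{\mathrm{test}}_{(m)}$. Since $\widehat{F}_m^{-1}(t)=X^{\mathrm{test}}_{(k)}$ for $t\in((k-1)/m,k/m]$, we have
\[
R_{\mathrm{test}}(\lambda)=\sum_{k=1}^m X^{\mathrm{test}}_{(k)}\int_{(k-1)/m}^{k/m}\psi(t)\,dt .
\]
Sorting the test losses, the indices $k\in(S_{i-1}(\mathbf{N}),S_i(\mathbf{N})]$ are exactly those whose test loss lies in $I_i$. Each such loss is at most the right endpoint $X_{(i+1)}$, using $X_{(n+1)}=U$ for the top bin. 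Since $\psi\ge0$, replacing each $X^{\mathrm{test}}_{(k)}$ by $X_{(i+1)}$ and regrouping the integrals over blocks gives exactly $\Phi^+(\mathbf{N};\lambda,X_{1:n})$. Taking expectations yields $\mathbb{E}[R_{\mathrm{test}}]\le\mathbb{E}[\Phi^+(\mathbf{N};\lambda,X_{1:n})]$. For VaR, where $\psi$ is a Dirac delta, I would treat the same bound by reading the integral as point evaluation of the monotone step function; the inequality is preserved because it holds pointwise in $t$.

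Step (ii) is the main obstacle. I would first try to prove exact independence of $\mathbf{N}$ and the calibration order statistics, conditioning on the full pooled sample of $m+n$ values and using exchangeability. Under NoTies and i.i.d.\ sampling, the pooled order statistics are independent of the random labeling that marks which of the $m+n$ ranked positions belong to calibration points. That labeling is uniform over the $\binom{m+n}{n}$ subsets, and $\mathbf{N}$ is a function of the labeling alone.

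This does not directly give independence from the calibration values, however. The values $X_{(j)}$ are the pooled order statistics at the calibration positions, so they depend on the labeling. So I would not claim independence. Instead, I would compare $\mathbb{E}[\Phi^+(\mathbf{N};\lambda,X_{1:n})]$ with $\mathbb{E}[\Phi^+(\mathbf{x};\lambda,X_{1:n})]$ directly, via the probability integral transform. Reduce to uniform losses by writing $X=F_X^{-1}(V)$ with $V$ uniform; monotonicity of $F_X^{-1}$ preserves bin membership and the bound from step (i).

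For uniforms, the pooled sample is equivalent to a uniform spacing vector together with a random labeling. One can then compute $\mathbb{E}[X_{(i+1)}\mid\mathbf{N}]$ explicitly, since the position of the $(i+1)$-th calibration point in the pooled order is $S_i(\mathbf{N})+i+1$. This shows the conditional expectation is increasing in $S_i(\mathbf{N})$. I expect this to give the needed comparison, possibly as an inequality in the right direction rather than an equality.

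If that is too delicate, the fallback is to interpret the right-hand side expectation as being over the joint law in which $\mathbf{x}$ is the realized occupancy vector $\mathbf{N}$. This matches the theorem's phrase that the expectation is taken over calibration and test losses, and under that reading step (i) alone completes the proof.
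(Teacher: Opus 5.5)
Your step (i) is correct, and together with your fallback reading it is exactly the paper's proof. The appendix takes expectations in the pathwise bound $R_{\mathrm{test}}\le\Phi^+(\mathbf{N};\lambda,X_{1:n})$. It then declares $\mathbf{x}$ to be ``the random occupancy vector induced by the test samples'', i.e.\ $\mathbf{x}=\mathbf{N}$. Nothing else is used, and Theorem~\ref{thm:Distribution} plays no role.

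Your intended main route, step (ii), fails and should be abandoned rather than repaired. The comparison $\mathbb{E}[\Phi^+(\mathbf{N})]\le\mathbb{E}[\Phi^+(\mathbf{x})]$, with $\mathbf{x}$ uniform and independent of the data, goes the wrong way already for $m=n=1$, $X\sim\mathrm{Unif}[0,1]$, $[L,U]=[0,1]$. There $\Phi^+(\mathbf{N})=X_1$ if $X^{\mathrm{test}}_1\le X_1$ and $\Phi^+(\mathbf{N})=1$ otherwise, so
\[
\mathbb{E}[\Phi^+(\mathbf{N})]=\mathbb{E}[X_1^2]+\tfrac12=\tfrac56>\tfrac34=\tfrac12\mathbb{E}[X_1]+\tfrac12=\mathbb{E}[\Phi^+(\mathbf{x})].
\]
The dependence you noticed causes this: $\mathbb{E}[X_{(i+1)}\mid\mathbf{N}]$ increasing in $S_i(\mathbf{N})$ means the realized vector weights a bin endpoint exactly when that endpoint tends to be large. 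So it inflates the realized version rather than bounding it.

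More seriously, the independent-$\mathbf{x}$ version of the statement is false in general, not just hard to reach through $\mathbf{N}$. Take $n=1$, $m=2$, $\psi(t)=2\cdot\mathbf{1}\{t\ge 1/2\}$. Then $R_{\mathrm{test}}=\max(X^{\mathrm{test}}_1,X^{\mathrm{test}}_2)$. The configurations $(2,0),(1,1),(0,2)$ give $\Phi^+=X_1,U,U$, so $\mathbb{E}[\Phi^+(\mathbf{x})]=\tfrac13\mathbb{E}[X]+\tfrac23U$. Let $[L,U]=[0,1]$ and let $X$ put mass $p<1/3$ near $0$ and $1-p$ near $1$, e.g.\ a mixture of $\mathrm{Unif}[0,\varepsilon]$ and $\mathrm{Unif}[1-\varepsilon,1]$, so there are no ties. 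Then $\mathbb{E}[R_{\mathrm{test}}]\approx 1-p^2>1-p/3\approx\mathbb{E}[\Phi^+(\mathbf{x})]$.

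For $\psi\equiv 1$ the independent version does hold at fixed $\lambda$, since $\mathbb{E}_{\mathbf{x}}[\Phi^+]=\frac{1}{n+1}\sum_{i=0}^{n}X_{(i+1)}$. The failure is specific to tail-weighted $\psi$.

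So the theorem is true only in the realized-occupancy reading, which is what the paper's proof and your fallback establish. You were right to distinguish this from averaging over an independent uniform $\mathbf{x}$ given $X_{1:n}$. That averaging is what the paper later computes in the CRC reduction and the unconditional selection rule, and this theorem does not justify it.
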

% \begin{theorem}\label{thm:expected_result}
%     For any $\lambda$, the following inequality holds:
%     $$\mathbb{E}_{m,n}\bigl[R_{\mathrm{test}}(\lambda)\bigr] \le \mathbb{E}_{m,n}\bigl[\mathbb{E}_{\mathbf{x}}[\Phi^+(\mathbf{x}; \lambda,X_{1:n})]\bigr],$$
%     where the expectation $\mathbb{E}_{m,n}$ is taken over the randomness of calibration and test losses. 
% \end{theorem}
Consequently, we select $\lambda$ as $\lambda^*:=\mathrm{inf}\{\lambda\in\Lambda:\mathbb{E}\bigl[\Phi^+(\mathbf{x}; \lambda,X_{1:n})]\le\alpha\}$, then we achieve the unconditional quantile risk $\mathbb{E}[R_{\mathrm{test}}(\lambda^*)]\le\alpha$. 
%selecting an appropriate parameter $\lambda$ such that $\mathbb{E}\bigl[\Phi^+(\mathbf{x}; \lambda)\bigr] \le \alpha$ is sufficient to obtain the unconditional control of QBRMs. 
The procedure for selecting $\lambda$ is detailed in Subsection~\ref{subsec:tight_theory}. 
As demonstrated in Table~\ref{tab:unconditional_quantile_risk}, our method successfully bounds $\mathbb{E}\bigl[R_{\mathrm{test}}\bigr]$ strictly below the threshold $\alpha$ across three distinct tasks under various predefined values of $\alpha$. 
% We conduct all experiments using the FNP loss function, with each result averaged over $1000$ independent trials. 
% and two different risk measures

\paragraph{Relation to conformal risk control.} 
We next demonstrate that the conformal risk control method~\cite{angelopoulos2022conformal} is a special case of Theorem~\ref{thm:expected_result}. 
By setting $m = 1$ and selecting the expected loss as the risk measure, the space $\mathcal{C}$ is formed by $(n+1)$-dimensional one-hot vectors. 
Let $\mathbf{x}$ be a one-hot vector where the $r$-th component is 1 and all other components are 0. 
Under this formulation, we obtain $\Phi^+(\mathbf{x}; \lambda,X_{1:n}) = X_{(r+1)}$. 
It follows that $\mathbb{E}\bigl[\Phi^+(\mathbf{x}; \lambda,X_{1:n})\bigr] = \sum_{i=0}^{n}\frac{1}{n+1}X_{(i+1)}$. 
Given that $R_{\mathrm{test}} = X^{\mathrm{test}}$, we substitute these relations into Theorem~\ref{thm:expected_result} and choose $\lambda^* := \inf\bigl\{\lambda \in \Lambda : \sum_{i=0}^{n}\frac{1}{n+1}X_{(i+1)}(\lambda) \le \alpha\bigr\}$. 
Then we recover the CRC procedure: 
$$\mathbb{E}\bigl[X^{\mathrm{test}}(\lambda^*)\bigr] \le \alpha.$$

\begin{table}[t] % !ht
    \centering
    \begin{small}
    \begin{tabular}{llccllcc}
    \toprule
        & Dataset & Risk Level & Empirical Risk &  & Dataset & Risk Level & Empirical Risk \\ 
        \midrule
        \multirow{9}{*}{\rotatebox[origin=c]{90}{CVaR}} & \multirow{3}{*}{Polyp} & $\alpha=0.4$ & 0.379 (+0.021)  & \multirow{9}{*}{\rotatebox[origin=c]{90}{Interval}} & \multirow{3}{*}{Polyp} & $\alpha=0.4$ & 0.382 (+0.018) \\ % ± 0.086 ± 0.141
        ~ & & $\alpha=0.2$ & 0.181 (+0.019) & & & $\alpha=0.2$ & 0.185 (+0.015) \\ % ± 0.053 ± 0.078
        ~ & & $\alpha=0.1$ & 0.077 (+0.023) & & & $\alpha=0.1$ & 0.091 (+0.009) \\ % ± 0.027 ± 0.038
        \cmidrule(lr){2-4}\cmidrule(lr){6-8} 
        ~ & \multirow{3}{*}{MS COCO} & $\alpha=0.4$ & 0.385 (+0.015) & & \multirow{3}{*}{MS COCO} & $\alpha=0.4$ & 0.393 (+0.007) \\ % ± 0.040 ± 0.040
        ~ & & $\alpha=0.2$ & 0.178 (+0.022) & & & $\alpha=0.2$ & 0.189 (+0.011) \\ % ± 0.046 ± 0.060
        ~ & & $\alpha=0.1$ & 0.081 (+0.019) & & & $\alpha=0.1$ & 0.091 (+0.009) \\ % ± 0.028 ± 0.054
        \cmidrule(lr){2-4}\cmidrule(lr){6-8} 
        ~ & \multirow{3}{*}{Go Emotions} & $\alpha=0.4$ & 0.387 (+0.013) & & \multirow{3}{*}{Go Emotions} & $\alpha=0.4$ & 0.369 (+0.031) \\ % ± 0.101 ± 0.186
        ~ & & $\alpha=0.2$ & 0.186 (+0.014) & & & $\alpha=0.2$ & 0.179 (+0.021) \\ % ± 0.072 ± 0.144
        ~ & & $\alpha=0.1$ & 0.088 (+0.012) & & & $\alpha=0.1$ & 0.078 (+0.022) \\ % ± 0.051 ± 0.096
        \bottomrule
    \end{tabular}
    \end{small}
    \vspace{3mm}
    \caption{\textbf{Unconditional quantile risk control performance via OQRC.} The values in $(\cdot)$ denote the difference between Risk Level and Empirical Risk, where a positive value indicates valid control.} % unconditional risk
    % The results demonstrate that OQRC controls risk below $\alpha$ across various $\alpha$ levels.
    % If the empirical risk falls below the risk level, the unconditional control is considered achieved. 
    % We report the experimental results as Empirical Risk. 
    \label{tab:unconditional_quantile_risk}
    \vspace{-3mm}
\end{table}

\paragraph{The limiting case as $m \to \infty$.} 
We further discuss the OQRC method when $m \to \infty$. 
This condition arises when controlling the risk over the entire loss distribution. 
As $m \to \infty$, we obtain unconditional quantile risk control $R_\psi(F_X;\lambda)\le\alpha$. 
A detailed analysis is presented in Appendix~\ref{sec:asymptotic}. 
% A natural and practically important extension of OQRC is to achieve quantile risk control across the entire loss distribution, which corresponds to taking the test size $m \to \infty$. 
% A detailed analysis is presented in Appendix~\ref{sec:asymptotic}. 

\section{Related Work} 

\paragraph{Distribution-free uncertainty quantification.} 
This line of research originates from conformal prediction~\cite{papadopoulos2002inductive, vovk2005algorithmic, shafer2008tutorial}, which generates prediction sets that contain ground-truth labels with a desired coverage guarantee. 
Subsequent research~\cite{angelopoulos2022conformal, lei2018distribution, angelopoulos2025learn} extends conformal prediction to accommodate more general loss functions. 
These methods formalize the expected loss associated with the output of the model as the risk, which quantifies the potential unreliability of the predictions. 
Such a risk control framework is widely applied in domains including large language models~\cite{jazbec2024fast, wang2025sconu, wang2026coin}, computer vision~\cite{belhasin2023principal, teneggi2023trust, mossina2024conformal}, and medicine~\cite{lysenkova2025error, dai2025statistical, boger2025functional}. 
In this work, we propose a novel method for distribution-free uncertainty quantification to control a general notion of risk beyond expected loss. 

\paragraph{Quantile risk control.} 
In some real-world scenarios, it is necessary to control the tail of the loss distribution beyond the expected loss~\cite{krokhmal2002portfolio, dowd2010using, ghasemi2021cvar, farzaneh2024quantile, wang2025voltage, zhong2026proxy}. 
To control this general notion of risk, Snell et al.~\cite{snell2022quantile} propose a quantile risk control framework. 
Their method employs a one-sided test statistic to control the quantile risk, but such a procedure tends to overestimate the prediction risk and thus produce conservative results. 
Chen et al.~\cite{chen2025conformal} achieve tighter risk control via $L$-statistics, yet their guarantees rely on asymptotic properties as the calibration set size grows to infinity and do not provide finite-sample guarantees. 
To the best of our knowledge, we are the first to propose a quantile risk control method that provides both finite-sample guarantees and theoretically tight control. 
% Snell et al. address this limitation by extending the previous risk control framework to incorporate quantile risk, which establishes a foundation for numerous subsequent studies~\cite{}. 
% However, this method constructs an overly conservative upper bound, which potentially overestimates the risk associated with the predictions of the model. 
% The theoretical validity of this work relies on the assumption of an asymptotically infinite calibration set, a condition that renders the guarantees of risk control potentially ineffective when the number of calibration points is limited. 
% Consequently, we propose a framework for quantile-based risk control that ensures both tight control and finite-sample guarantees. 

\paragraph{Other works on risk control.} 
Some works control the risk through model training methods~\cite{yeh2025conformal, lee2020learning, duchi2021learning, leqi2022supervised, pmlr-v259-he25a}. 
These approaches formulate risk as a learning objective, training the model to either minimize the prediction risk or constrain it below a user-specified threshold. 
OQRC is a post-hoc method without model training, which is computationally more efficient than these approaches. 

% \paragraph{Relaxing Assumptions on Loss Functions.} 
% The original risk control framework requires the loss function to be monotonically non-increasing with respect to the parameter $\lambda$~\cite{angelopoulos2022conformal, chen2025conformal, lei2018distribution}. 
% Recent studies~\cite{angelopoulos2025learn, angelopoulos2026conformal, aldirawi2026non} attempt to relax this monotonicity condition. 
% However, these efforts remain restricted to the expected loss as the sole risk measure. 
% Our framework naturally accommodates non-monotonic loss functions and applies to a more generalized risk measure. 

\section{Conclusion}\label{sec:conclusion}
In this work, we propose Occupancy-based Quantile Risk Control (OQRC), a novel training-free method that achieves finite-sample quantile risk control.  
The key idea of OQRC is to formulate risk control as a finite-occupancy problem by partitioning the loss space using ordered calibration losses. 
Theoretically, we establish the tightness of OQRC by proving that the risk control bounds yielded by OQRC converge to the optimal bounds. 
Extensive empirical evaluations validate the effectiveness of OQRC. 
Our approach also resolves the problem posed by Angelopoulos et al.~\cite{angelopoulos2022conformal}, enabling users to control the prediction risk without configuring a $\delta$. 
We hope that our approach provides valuable insights and methodologies for controlling quantile risk across a broader range of scenarios. 
% We theoretically establish the tightness of OQRC by proving that the risk control bounds yielded by OQRC converge to the optimal bounds at a rate of $\mathcal{O}_p(n^{-1/2})$. 
% Extensive empirical evaluations demonstrate that OQRC yields tighter risk control bounds than existing approaches across various hyperparameters and model architectures. 
% Subsequently, by evaluating the joint distribution of the counts of the $m$ test points that fall into the partitioned bins, our method constructs an upper confidence bound for quantile risk. 

\paragraph{Limitation.} 
Our method is established under the i.i.d. assumption, whereas practical applications may encounter distribution shifts. 
We leave the investigation of this extended setting to future work.

\newpage
{\small
\bibliographystyle{unsrt}
\bibliography{bibliography}
}

\newpage
\appendix

\onecolumn

\section{Proofs}\label{sec:proofs}

\subsection{Proof of Theorem~\ref{thm:Distribution}}\label{pf:Distribution}

\begin{proof}
    Our proof draws inspiration from the work of Gazin et al.~\cite{gazin2024transductive} 
    The theoretical novelty of this proof lies in applying the methodology of conformal p-values~\cite{saunders1999transduction, bates2023testing, jin2023selection} to the risk control framework and relaxing the previous `NoTies' assumption, thereby making it more applicable to real-world scenarios. 
    The first two steps are proven under the assumption that $X$ has `NoTies', while the third step relaxes this `NoTies' assumption. 
    
    \textbf{Step 1: Prove that $\mathbb{P}(\cap_{i=1}^{m} \{X_i^{\mathrm{test}}\in I_{j_i}\})=\frac{n!\Pi_{k=0}^{n}N_k!}{(m+n)!}$, where $N_k=\sum_{i=1}^m \mathbf{1}\{j_i=k\}$.} % \mid I_{j_{1:m}}
    
    Let $X_{1:n+m}=X_{1:n}\cup X^{\text{test}}_{1:m}$, where the first $n$ values are loss samples from the calibration set, and the remaining $m$ values are loss samples from the test set. 
    Let $\sigma$ be a permutation such that 
    $$X_{\sigma(1)} < X_{\sigma(2)} < \dots < X_{\sigma(n+m)}.$$ 
    Thus, on the event $\cap_{i=1}^{m} \{X_i^{\mathrm{test}}\in I_{j_i}\}$, the permuted sequence $X_{1:n+m}$ under $\sigma$ can be uniquely represented as a partition of the following form: 
    \[
    \begin{aligned}
    &\underbrace{
     X_{\sigma(1)} < \cdots < X_{\sigma(N_0)}
    }_{N_0 \; \text{test loss}}
    <
    \underbrace{
    X_{\sigma(N_0+1)} 
    }_{\text{calibration loss}}
    <\\
    &\underbrace{
    X_{\sigma(N_0+2)} < \cdots < X_{\sigma(N_0+N_1+1)}
    }_{N_1 \; \text{test loss}}
    <
    \underbrace{
    X_{\sigma(N_0+N_1+2)}
    }_{\text{calibration loss}}
    <
    \cdots
    <\\
    &\underbrace{
    X_{\sigma(\sum_{i=0}^{n-1}N_i+n)} < \cdots < X_{\sigma(\sum_{i=0}^{n}N_i+n)}
    }_{N_{n} \; \text{test loss}}
    <
    U.
    \end{aligned}
    \]
    % The above probability is calculated conditional on knowing which interval each test loss sample falls into. 
    
    Since the permutations of $\{1, 2, \dots, n+m\}$ are uniformly distributed, the probability of the event $\cap_{i=1}^{m} \{X_i^{\mathrm{test}}\in I_{j_i}\}$ follows the classical probability model, yielding: 
    $$\mathbb{P}(\cap_{i=1}^{m} \{X_i^{\mathrm{test}}\in I_{j_i}\})=\frac{n!N_0!\cdot N_1!\cdot...\cdot N_n!}{(m+n)!}=\frac{n!\Pi_{k=0}^{n}N_k!}{(m+n)!}.$$
    
    \textbf{Step2: Prove that $\mathbb{P}\bigl(N_0=x_0, N_1=x_1, .., N_n=x_n\bigr)=\frac{m!\cdot n!}{(m+n)!}.$} 
    
    \begin{align*}
        \mathbb{P}\bigl(N_0=x_0, N_1=x_1, .., N_n=x_n\bigr) &= \sum_{\forall k, \; (\sum_{i=1}^m \mathbf{1}\{j_i=k\})=x_k}\mathbb{P}(\cap_{i=1}^{m} \{X_i^{\mathrm{test}}\in I_{j_i}\}) \\
        &= \sum_{\forall k, \; (\sum_{i=1}^m \mathbf{1}\{j_i=k\})=x_k}\frac{n!\Pi_{k=0}^{n}x_k!}{(m+n)!} \\
        &= \frac{m!}{\Pi_{k=0}^{n}x_k!} \cdot \frac{n!\Pi_{k=0}^{n}x_k!}{(m+n)!} \\
        &= \frac{m!\cdot n!}{(m+n)!}. 
    \end{align*}
    
    The third equality holds because we assign a label from $0$ to $n$ to each test loss. The label $0$ appears $x_0$ times, $\dots$, and the label $n$ appears $x_n$ times. 
    Since there are $m$ test losses in total, there are $m!$ ways to arrange these labels. 
    However, because swapping identical labels does not produce a new sequence, we must divide by $\prod_{k=0}^{n}x_k!$. 
    
    \textbf{Step 3: Relax the `NoTies' Assumption on $X$.} 
    
    % We define an injective function $\phi: \mathbb{R}^d \rightarrow \mathbb{R}^d$. 
    Let $S_i := S(Z_i)=(\ell(h(Z_i), Y_i), Z_i)\in\mathbb{R}^{d+1}$, and $X_i=\ell(h(Z_i), Y_i)$. 
    % Thus, the output of $S$ is a $(d+1)$-dimensional real vector. 
    We define an order relation on the outputs of $S$ such that $S_1 < S_2$ if and only if there exists an integer $q \in [0, d+1)$ where the first $q$ components of $S_1$ and $S_2$ are equal, and the $(q+1)$-th component of $S_1$ is strictly less than that of $S_2$. 
    Furthermore, $S_1 = S_2$ is defined as all components being identical. 
    It is easy to verify that this order relation $<$ is a strict total order. 
    It is straightforward to show that the $S_i$ satisfy the `NoTies' property, even if the $X_i$ may have Ties. 
    % The role of this procedure is to dispense with the necessity of the 'NoTies' property for $X$. 
    Following this, we define $n+1$ intervals as: 
    \begin{equation}\label{eq:bins_noties}
        \begin{cases}
        I_j = (S_{(j)}, S_{(j+1)}] & \text{for } j = 1, \dots, n, \\
        I_0 = [S_{(0)}, S_{(1)}]
        \end{cases}
    \end{equation}
    Here, $S \in (S_{(j)}, S_{(j+1)}]$ is equivalent to $S_{(j)} < S$ and $S \le S_{(j+1)}$. 
    Note the property that $S_1 < S_2$ implies $X_1 \le X_2$. 
    Substituting (\ref{eq:bins_noties}) into the previous conclusion with some minor modifications (replacing $S_{(i)}$ with $X_{(i)}$ when calculating $R^+$), we obtain the OQRC method under the setting where $X$ contains ties. 
    Thus, we completes the proof of Theorem~\ref{thm:Distribution}. 
\end{proof}

\subsection{Proof of Theorem~\ref{thm:QRC}}

\begin{proof}
    Fix an arbitrary $\lambda \in \Lambda$. 
    We suppress the dependence on $\lambda$ in the notation. 
    Let $K:=|\mathcal{C}_{m,n}|=\binom{m+n}{n}$. 
    Let $\Phi^{+,i} = \Phi^+(\mathbf{N}^i; X_{1:n})$ denote the upper risk for the $i$-th occupancy vector, and let $\Phi^{+,(i)}$ represent the corresponding order statistics. 
    We assume without loss of generality that there are no ties among $\Phi^{+,1}, \Phi^{+,2}, \dots, \Phi^{+,K}$. 
    If ties exist, they can be broken using the auxiliary pair $(\Phi^{+,i}, U_i)$, where $U_i \sim \mathrm{Unif}(0,1)$ is drawn independently of $\Phi^{+,i}$. 
    Let $\mathrm{Rank}(\Phi^{+,i})$ be the rank of $\Phi^{+,i}$ in ascending order. 
    Theorem~\ref{thm:Distribution} indicates that $\mathrm{Rank}(\Phi^{+,i}) \sim \mathrm{Unif}\{1,2,\dots,K\}$. 
    For the occupancy vector $\mathbf{N}$ associated with each $R_{\mathrm{test}}$, the inequality $R_{\mathrm{test}} \le \Phi^+(\mathbf{N}; X_{1:n})$ holds. 
    Let $k := \lceil (1-\delta)K \rceil$. 
    This implies the inclusion relation
    $$\{\Phi^+(\mathbf{N}; X_{1:n}) \le \Phi^{+,(k)}\} \subseteq \{R_{\mathrm{test}} \le \Phi^{+,(k)}\}.$$
    Recall that $R^+ = (F^+)^{-1}(1-\delta)$. 
    Then $R^+\ge\Phi^{+,(k)}$. 
    We thus obtain
    $$\mathbb{P}\!\left( R_{\mathrm{test}} \le R^+ \right) \ge \mathbb{P}\!\left( \Phi^+(\mathbf{N}; X_{1:n}) \le R^+ \right) \ge 1 - \delta.$$
    Hence, $R^+$ serves as a valid $(1-\delta)$ upper confidence bound for $R_{\mathrm{test}}$. 
\end{proof}

\subsection{Proof of Proposition~\ref{prop:stars-bars}}

\begin{proof}
    Let
    \[
    \mathcal T_{m,n}:=\Bigl\{(t_1,\ldots,t_n)\in\{1,\ldots,m+n\}^n:\ 1\le t_1<\cdots<t_n\le m+n\Bigr\}.
    \]
    By assumption, $(T_1,\ldots,T_n)$ is uniformly distributed on $\mathcal T_{m,n}$.
    Define the map
    \[
    \Gamma:\mathcal T_{m,n}\to \mathcal{C}_{m,n},
    \qquad
    \Gamma(t_1,\ldots,t_n)=(x_0,\ldots,x_n),
    \]
    where
    \[
    x_i:=t_{i+1}-t_i-1,\qquad i=0,\ldots,n,
    \]
    with the conventions $t_0:=0$ and $t_{n+1}:=m+n+1$. This is exactly the stars-and-bars map.

    The inverse map is explicit. Given $\mathbf{x}=(x_0,\ldots,x_n)\in \mathcal{C}_{m,n}$, define
    \[
    t_i := i + \sum_{k=0}^{i-1} x_k,\qquad i=1,\ldots,n.
    \]
    Then $(t_1,\ldots,t_n)\in \mathcal T_{m,n}$ and $\Gamma(t_1,\ldots,t_n)=\mathbf{x}$. 
    Hence $\Gamma$ is a bijection.
    
    Since $(T_1,\ldots,T_n)$ is uniform on $\mathcal T_{m,n}$, its image
    \[
    x_i = T_{i+1}-T_i-1,\qquad i=0,\ldots,n,
    \]
    is uniform on $\mathcal{C}_{m,n}$. That is,
    \[
    \mathbf{x}=(x_0,\ldots,x_n)\sim \mathrm{Unif}(\mathcal{C}_{m,n}).
    \]
\end{proof}

\subsection{Proof of Theorem~\ref{thm:calibrate_lambda}}

\begin{proof}
    The proof of this theorem is identical to that of Theorem A.1 in Bates et al.~\cite{Bates2021DistributionFreeRP}. 
\end{proof}

\subsection{Proof of Theorem~\ref{thm:tightness}}

\begin{proof}
    We redefine the optimal bound as $M^*=\inf\bigl\{M:\mathbb P\bigl(R_\psi\!(\frac1m\sum_{j=1}^m\delta_{Y_j})\le M\bigr)\ge 1-\delta\bigr\}$. 
    This formulation is equivalent to the definition of $M^*$ in Subsection \ref{subsec:tight_theory}, but it is more convenient for our subsequent proof. 
    We divide our proof into four steps. 
    
    \textbf{Step 1: Lipschitz property of the quantile risk.}
    
    Let $a=(a_1,\ldots,a_m)$ and $b=(b_1,\ldots,b_m)$ be two vectors in $[L,U]^m$.
    Let $a_{(1)}\le\cdots\le a_{(m)}, b_{(1)}\le\cdots\le b_{(m)}$ be their order statistics. Define the empirical quantile functions $Q_a(t)=a_{(j)}$ for $t\in\left(\frac{j-1}{m},\frac jm\right]$, and similarly for $Q_b$. 
    If $\max_{1\le j\le m}|a_{(j)}-b_{(j)}|\le \varepsilon,$ then $\sup_{t\in[0,1]}|Q_a(t)-Q_b(t)|\le \varepsilon.$ 
    Since $\psi$ is a probability measure,
    \[
    \left|
    \int_0^1 Q_a(t)\,\psi(dt)
    -
    \int_0^1 Q_b(t)\,\psi(dt)
    \right|
    \le
    \int_0^1 |Q_a(t)-Q_b(t)|\,\psi(dt)
    \le
    \varepsilon.
    \]
    Therefore, the quantile risk is $1$-Lipschitz with respect to the
    supremum norm of the sorted losses. 
    
    \textbf{Step 2: Uniform convergence of the calibration quantile function.}
    
    Let $\widehat F_n$ be the empirical CDF of $X_1,\ldots,X_n$. By the
    Dvoretzky--Kiefer--Wolfowitz inequality,
    \[
    \sup_{x\in[L,U]}|\widehat F_n(x)-F(x)|
    =
    \mathcal{O}_p(n^{-1/2}).
    \]
    Because $f(x)\ge c>0$ on $[L,U]$, the population quantile function $Q=F^{-1}$
    is $1/c$-Lipschitz:
    \[
    |Q(u)-Q(v)|\le \frac1c |u-v|,
    \qquad u,v\in[0,1].
    \]
    The standard inverse-CDF perturbation bound gives
    \[
    \sup_{p\in[0,1]}
    \left|
    \widehat Q_n^+(p)-Q(p)
    \right|
    \le
    \frac{1}{c}
    \left(
    \sup_{x\in[L,U]}|\widehat F_n(x)-F(x)|
    +
    \frac1n
    \right).
    \]
    where $\widehat Q_n^+(p) = X_{\left(\min\{n+1,\lceil (n+1)p\rceil\}\right)}.$
    Hence
    \[
    D_n
    :=
    \sup_{p\in[0,1]}
    \left|
    \widehat Q_n^+(p)-Q(p)
    \right|
    =
    \mathcal{O}_p(n^{-1/2}).
    \]
    
    \textbf{Step 3: Coupling the occupancy grid with the test order statistics.}
    
    For $\mathbf{x}\in \mathcal{C}_{m,n}$, form the nondecreasing index sequence $0\le J_1(\mathbf{x})\le\cdots\le J_m(\mathbf{x})\le n$ by listing the index $i$ exactly $x_i$ times. 
    Thus, $x_i$ is the number of indices $j$ for which $J_j(\mathbf{x})=i$. 
    As $\mathbf{x}$ is uniformly distributed on $\mathcal \mathcal{C}_{m,n}$, then $V_j=\frac{J_j(x)+1}{n+1}$ is uniformly distributed on the ordered grid
    \[
    \mathcal G_{m,n}
    =
    \left\{
    (v_1,\ldots,v_m):
    v_j=\frac{k_j+1}{n+1},
    \ 0\le k_1\le\cdots\le k_m\le n
    \right\}.
    \]
    
    Let $0\le U_{(1)}\le\cdots\le U_{(m)}\le 1$ be the order statistics of $m$ i.i.d. $\mathrm{Unif}(0,1)$ random variables. 
    The ordered-grid distribution on $\mathcal G_{m,n}$ is a regular
    $\mathcal{O}(n^{-1})$ discretization of the uniform distribution on the simplex $\Delta_m = \{0\le u_1\le\cdots\le u_m\le 1\}.$ 
    Therefore, for fixed $m$, there exists a coupling of $(V_1,\ldots,V_m)$ and $(U_{(1)},\ldots,U_{(m)})$ such that 
    \[
    \max_{1\le j\le m}|V_j-U_{(j)}|
    \le
    \frac{A_m}{n}
    \]
    for a constant $A_m<\infty$ depending only on $m$. 
    Under this coupling, since $Q$ is $1/c$-Lipschitz, then
    $$\max_{1\le j\le m}
    |Q(V_j)-Q(U_{(j)})|
    \le
    \frac{A_m}{c\,n}.$$
    
    \textbf{Step 4: Comparing the exact OQRC risk with the optimal test risk.}
    
    For $x\in\mathcal \mathcal{C}_{m,n}$, the vector used by the exact occupancy upper
    bound is $(X_{(J_1(x)+1)},\ldots,X_{(J_m(x)+1)}).$
    Equivalently,
    $X_{(J_j(x)+1)}
    =
    \widehat Q_n^+(V_j).$
    Therefore,
    \[
    \max_{1\le j\le m}
    \left|
    X_{(J_j(x)+1)}-Q(U_{(j)})
    \right|
    \le
    \max_{1\le j\le m}
    \left|
    \widehat Q_n^+(V_j)-Q(V_j)
    \right|
    +
    \max_{1\le j\le m}
    \left|
    Q(V_j)-Q(U_{(j)})
    \right|.
    \]
    Using the bounds from Steps 2 and 3, 
    \[
    \max_{1\le j\le m}
    \left|
    X_{(J_j(x)+1)}-Q(U_{(j)})
    \right|
    \le
    D_n+\frac{A_m}{c\,n}.
    \]
    By the Lipschitz property from Step 1, 
    \[
    \left|
    \Phi_n^+(x)
    -
    R_\psi\!\left(
    \frac1m\sum_{j=1}^m \delta_{Q(U_{(j)})}
    \right)
    \right|
    \le
    D_n+\frac{A_m}{c\,n}.
    \]
    Since $Q(U_{(1)}),\ldots,Q(U_{(m)})$ are the order statistics of $m$ i.i.d. samples from $X$, the random variable 
    $R_\psi\!(
    \frac1m\sum_{j=1}^m \delta_{Q(U_{(j)})}
    )$
    has the same distribution as
    $R_\psi\!(
    \frac1m\sum_{j=1}^m \delta_{Y_j}
    ).$
    Hence its $(1-\delta)$ quantile is exactly $M^*$. 
    Then, if two random variables $A$ and $B$ can be coupled so that
    $|A-B|\le \varepsilon$ almost surely, their $(1-\delta)$ quantiles differ
    by at most $\varepsilon$. 
    Applying this fact with $A=\Phi_n^+(x)$, $B=R_\psi\!(\frac1m\sum_{j=1}^m \delta_{Y_j})$, $\varepsilon=D_n+\frac{A_m}{c\,n}$, we obtain
    \[
    \left|R^+-M^*\right|
    \le
    D_n+\frac{A_m}{c\,n}. 
    \]
    Since $D_n=\mathcal{O}_p(n^{-1/2})$, it follows that
    \[
    \left|R^+-M^*\right|
    =
    \mathcal{O}_p(n^{-1/2}).
    \]
    This proves the theorem. 
\end{proof}

\subsection{Proof of Theorem~\ref{thm:expected_result}}

\begin{proof}
    Taking expectation in $R_{\mathrm{test}}\le \Phi^+(\mathbf{N};X_{1:n})$ over the joint randomness of the calibration and test losses, we have
    \[
    \mathbb E\!\left[R_{\mathrm{test}}\right]
    \le
    \mathbb E\!\left[\Phi^+(\mathbf{N};X_{1:n})\right].
    \]
    Equivalently, writing $\mathbf{x}$ for the random occupancy vector induced by the
    test samples, we have
    \[
    \mathbb E\!\left[R_{\mathrm{test}}(\lambda)\right]
    \le
    \mathbb E\!\left[\Phi^+(\mathbf{x};\lambda, X_{1:n})\right].
    \]
    This proves the Theorem~\ref{thm:expected_result}. Consequently, if $\lambda^\ast$ is chosen such that
    \[
    \lambda^* = \mathrm{min}\bigl\{\lambda\in\Lambda:\mathbb E\!\left[\Phi^+(x;\lambda,X_{1:n})\right]\le \alpha\bigr\},
    \]
    then the above inequality immediately implies
    \[
    \mathbb E\!\left[R_{\mathrm{test}}(\lambda^\ast)\right]
    \le
    \mathbb E\!\left[\Phi^+(x;\lambda^\ast,X_{1:n})\right]
    \le \alpha .
    \]
    Thus $\lambda^\ast$ achieves unconditional quantile risk control. 
\end{proof}

\section{Other methods}\label{sec:othermethod}

\subsection{Methods in Quantile Risk Control} 

We first introduce the general framework of quantile risk control~\cite{snell2022quantile}. 
Given a calibration set $X_{1:n}=\{X_1, X_2, \ldots, X_n\}$, we denote its ascending order statistics as $X_{(1)}\le \ldots \le X_{(n)}$. 
Let $U_1, U_2, \ldots, U_n \overset{iid}{\sim}\mathrm{Unif}(0,1)$, where $U_{(i)}$ denotes their corresponding order statistics. 
The procedure begins by selecting a one-sided test statistic of the form $S \triangleq \min_{1 \le i \le n} s_i(U_{(i)})$. 
Next, we compute the critical value $s_\delta=\mathrm{inf}\bigl\{r:\mathbb{P}(S\ge r)\ge 1-\delta\bigr\}$. 
We then construct an empirical bounding function $\hat{F}_n$ defined as
$$\hat{F}n(x) \triangleq 
\begin{cases}
    0, &\text{for } x < X{(1)} \\
    s_1^{-1}(s_\delta), &\text{for } X_{(1)} \le x < X_{(2)} \\
    \vdots \\
    s_n^{-1}(s_\delta), &\text{for } X_{(n)} \le x < x^+ \\
    1, &\text{for } x^+ \le x, 
\end{cases}$$
where $x^+ \in \mathbb{R} \cup \{\infty\}$ is an upper bound such that $F(x^+) = 1$. 
This construction guarantees the probabilistic bound
$$\mathbb{P}(R_\psi(F) \le R_\psi(\hat{F}_n)) \ge 1-\delta.$$
Within this framework, different choices of the one-sided test statistic yield various risk control methods. 
The one-sided Kolmogorov-Smirnov (\textbf{KS}) statistic~\cite{massey1951kolmogorov} sets $s_i(u)=u-\frac{i}{n}$. 
The Berk-Jones (\textbf{BJ}) statistic~\cite{10.1214/16-EJS1172} employs $s_i(u)=I_u(i, n-i+1)$, where $I_u(i, n-i+1)$ denotes the cumulative distribution function of the $\mathrm{Beta}(i, n-i+1)$ distribution evaluated at $u$. 
Furthermore, the order statistics (\textbf{OrderStats}) approach defines $s_i(u)=\mathbf{1}\{i=k\}\,I_u(k,n-k+1)+\mathbf{1}\{i\neq k\}$. 
Extensions include the one-sided truncated Berk-Jones (\textbf{One-sided BJ}) statistic, which corresponds to $s_i(u)=\mathbf{1}\{i\ge k\}\,I_u(i,n-i+1)+\mathbf{1}\{i<k\}$, and the two-sided truncated Berk-Jones (\textbf{Two-sided BJ}) statistic, which specifies $s_i(u)=\mathbf{1}\{k\le i\le l\}\,I_u(i, n-i+1)+\mathbf{1}\{i<k\ \text{or}\ l<i\}$. 

The selection of $k$ and $l$ in the OrderStats, One-sided BJ, and Two-sided BJ methods depends on the $\beta$ parameter in $\beta$-CVaR and the $\beta_{\min}$ and $\beta_{\max}$ parameters in the VaR-Interval measure. 
We define the statistics $M^+_{n,k} \triangleq \min_{k \le i \le n} I_{U_{(i)}}(i, n - i + 1)$ and $M^+_{n,k,l} \triangleq \min_{k \le i \le l} I_{U_{(i)}}(i, n - i + 1)$. 
Denoting $s^k_\delta$ as the critical value of $M^+_{n,k}$, we compute $k^*(\beta_{\min}) \triangleq \min\{k : s_k^{-1}(s^k_\delta) \ge \beta_{\min}\}$. 
Letting $s^{k^*, l}_\delta$ be the critical value of $M^+_{n,k^*,l}$, we then calculate $l^* \triangleq \min\{l : s_l^{-1}(s^{k^*, l}_\delta) \ge \beta_{\max}\}$. 
For the $\beta$-CVaR setting, the index $k$ is determined by substituting $\beta_{\min}$ with $\beta$ in this procedure. 
This establishes the complete parameter selection mechanism for these methods. 

% The selection of $k$ and $l$ is determined by the $\beta$ parameter in $\beta$-CVaR and the $\beta_{\min}$ and $\beta_{\max}$ parameters within the VaR-Interval framework. 
% In the context of $\beta$-CVaR, the index $k$ for the \textbf{OrderStats} and \textbf{One-sided BJ} methods is defined as $k = \min\{j : s_j^{-1}(s_\delta^j) \ge \beta\}$. 
% Similarly, for the VaR-Interval setting, the indices $k$ and $l$ for the \textbf{OrderStats} and \textbf{Two-sided BJ} methods are given by $k = \min\{j : s_j^{-1}(s_\delta^j) \ge \beta_{\min}\}$ and $l = \min\{j : s_j^{-1}(s_\delta^j) \ge \beta_{\max}\}$, respectively. 

\subsection{Conformal Distortion Risk Control via L-statistics} 

We next introduce conformal distortion risk control via L-statistics (\textbf{CDRC-L})~\cite{chen2025conformal}. 
The distortion risk measure serves as an alternative terminology for the quantile-based risk measure. 
Given a calibration set $X_{1:n}=\{X_1, \ldots, X_n\}$, we arrange the samples into order statistics $X_{(1)}\le \ldots \le X_{(n)}$. 
Subsequently, we construct the empirical risk $\hat{R}_\psi=\sum_{i=1}^n\bigl[\psi(\frac{i}{n})-\psi(\frac{i-1}{n})\bigr]X_{(i)}$. 
The principal conclusion by Chen et al.~\cite{chen2025conformal} demonstrates that, by characterizing the sampling uncertainty of the empirical risk as $\hat{\sigma}^2=\frac{1}{n^2}\sum_{i,j=1}^{n}\psi'\!\left(\frac{i}{n}\right)\psi'\!\left(\frac{j}{n}\right)\left(\frac{i\wedge j}{n}-\frac{ij}{n^2}\right)$, the following holds
$$\frac{\sqrt{n}(\hat{R}_\psi-R_\psi)}{ \hat{\sigma}^2}\overset{d}{\to}N(0,1).$$
We then define the upper bound $\hat{R}^+_\psi=\hat{R}_\psi+z_{1-\delta}\frac{\hat{\sigma}}{\sqrt{n}}$, which guarantees the asymptotic result
$$\liminf_{n\to\infty}P_D\!\bigl(R_{\psi} \le \hat{R}^+_\psi \bigr)\ge 1-\delta.$$
Here, $z_{1-\delta}$ denotes the $(1-\delta)$ quantile of the standard normal distribution $N(0,1)$. 
Note that this theoretical property fundamentally requires the size of the calibration set to approach infinity. 
Consequently, the risk control behavior remains indeterminate for finite sample sizes, implying that the guarantee might fail when the number of samples is insufficiently large. 

\section{Algorithms of OQRC}\label{sec:algorithm}

In this section, we present two algorithms: one for deriving the risk bounds of OQRC via Monte-Carlo sampling (Algorithm~\ref{alg:porc_mc}), and another for selecting $\lambda$ to achieve QRC (Algorithm~\ref{alg:lambda_calibration}). 

\begin{algorithm}[htbp]
\caption{Monte-Carlo Occupancy-based Quantile Risk Control (OQRC)}
\label{alg:porc_mc}
\begin{algorithmic}[1]
\REQUIRE Calibration loss samples $X_{1:n}$, test set size $m$, confidence level $\delta$, bounds $U$, weighting function $\psi$, Monte-Carlo replicates $B$.
\ENSURE Upper bound $r^*$ for the test risk measure at confidence level $1-\delta$.
\STATE $(X_{(1)}, X_{(2)}, \dots, X_{(n)}) \leftarrow \operatorname{Sort}(X_{1:n})$
\STATE $X_{(n+1)} \leftarrow U$
\FOR{$b = 1$ \TO $B$}
    \STATE $\mathcal{T} \leftarrow \operatorname{SampleWithoutReplacement}(\{1, 2, \dots, m+n\}, n)$
    \STATE $(T_1, T_2, \dots, T_n) \leftarrow \operatorname{Sort}(\mathcal{T})$ 
    \STATE $T_0 \leftarrow 0, \quad T_{n+1} \leftarrow m+n+1$
    \STATE $S_{-1} \leftarrow 0$
    \FOR{$i = 0$ \TO $n$}
        \STATE $N_i^{(b)} \leftarrow T_{i+1} - T_i - 1$ 
        \STATE $S_i \leftarrow S_{i-1} + N_i^{(b)}$ 
    \ENDFOR
    \STATE $\Phi^{+}(\mathbf{x}^{(b)}; \lambda) \leftarrow \sum_{i=0}^{n} \int_{\frac{S_{i-1}(\mathbf{x}^{(b)})}{m}}^{\frac{S_{i}(\mathbf{x}^{(b)})}{m}} \psi(t)X_{(i+1)} \, dt$
\ENDFOR
\STATE $r^* \leftarrow \operatorname{EmpiricalQuantile}\!\left( \left\{ \Phi^{+}(\mathbf{x}^{(b)}; \lambda) \right\}_{b=1}^B, 1-\delta \right)$
\RETURN $r^*$
\end{algorithmic}
\end{algorithm} 

\begin{algorithm}[htbp]
\caption{Calibrating $\lambda$ for Guaranteed Risk Control via OQRC}
\label{alg:lambda_calibration}
\begin{algorithmic}[1]
\REQUIRE Calibration samples $X_{1:n}$, test size $m$, target risk level $\alpha$, confidence level $\delta$, bounds $U$, weighting function $\psi$, candidate set $\Lambda=\{\lambda_1,\dots,\lambda_K\}$, Monte-Carlo replicates $B$
\ENSURE Calibrated parameter $\lambda^*$ satisfying the OQRC risk guarantee.
\STATE $\Lambda \gets \operatorname{sort}_{\uparrow}(\Lambda)$
\FOR{$k=1$ \TO $K$}
    \STATE $X^{\lambda_k}_{1:n} \leftarrow \{X_1^{\lambda_k},\dots,X_n^{\lambda_k}\}$
    \STATE $r_k \leftarrow \textsc{OQRC-MC}(X^{\lambda_k}_{1:n},\, m,\, \delta,\, U,\, \psi,\, B)$ \COMMENT{using Algorithm~\ref{alg:porc_mc}}
    \IF{$r_k \le \alpha$}
        \STATE $\lambda^* \leftarrow \lambda_k$
        \RETURN $\lambda^*$
    \ENDIF
\ENDFOR
\RETURN FAIL
\end{algorithmic}
\end{algorithm}

\section{Tightness of OQRC}\label{sec:tightness_theory}

In this appendix, we discuss why the coverage in Table~\ref{tab:main} deviates from $1-\delta$. 
This deviation occurs because FNP is not a continuous loss function and fails to satisfy the condition that $f_X$ is bounded from below. 
Our tightness theory relies on the assumption that the loss function possesses such ideal properties. 
To empirically validate our theory, we construct a synthetic dataset containing 5000 points. 
The loss value for each point is sampled randomly between 0 and 1 when $\lambda=0$. 
It then follows a linear function with respect to $\lambda$ and decreases to zero at $\lambda=1$. 
Figure~\ref{fig:tight_analysis} demonstrates that the coverage converges to $1-\delta$ under both risk measures as the size of the calibration set increases. 
We leave the establishment of the OQRC tightness theory for more general loss functions to future work. 

\begin{figure}[htbp]
    \centering
    \begin{minipage}{0.33\linewidth}
        \centering
        \includegraphics[width=\linewidth]{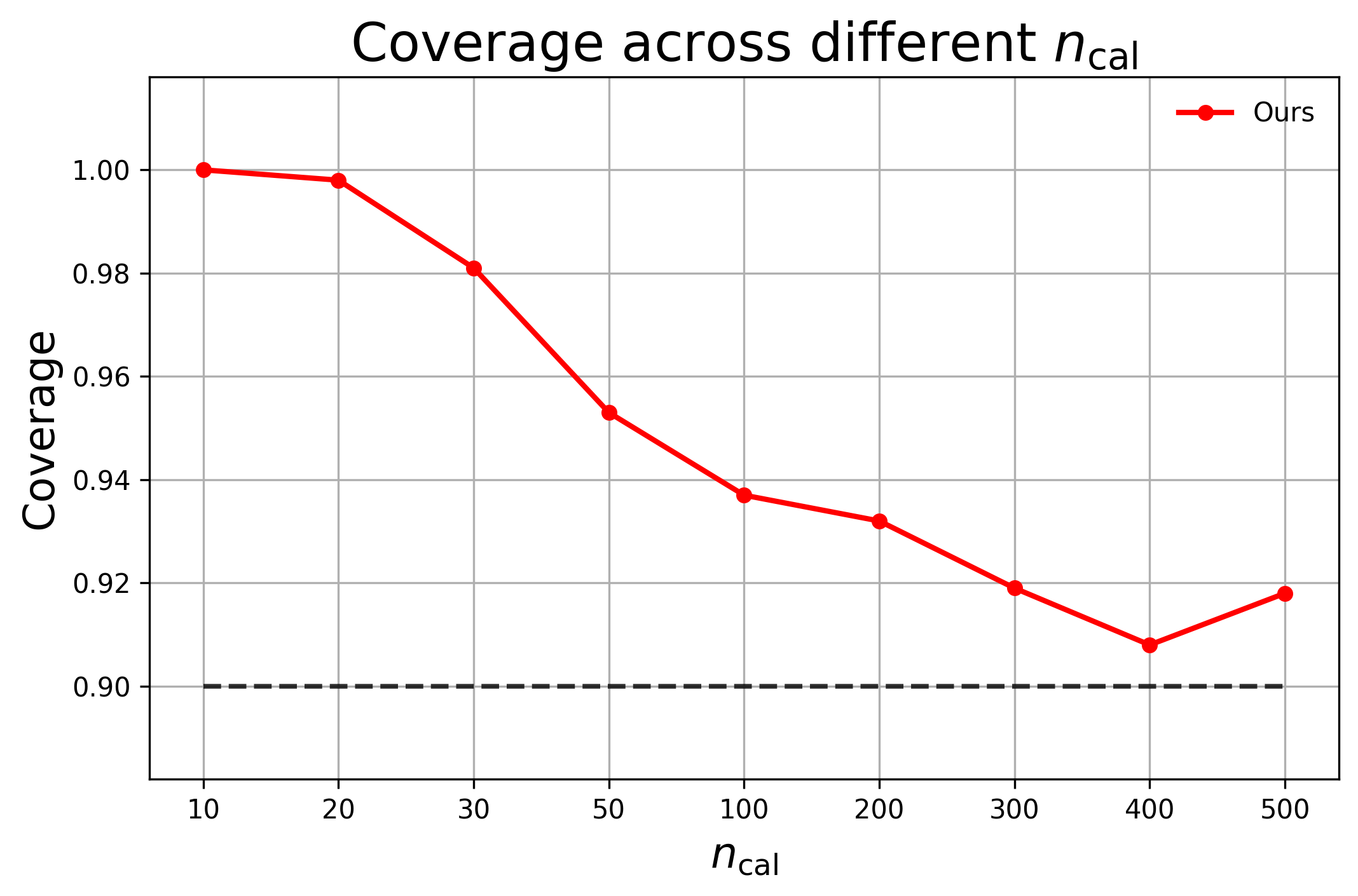}
        \subcaption{$[0.85,0.95]$-VaR-Interval}
    \end{minipage}\hfill
    \begin{minipage}{0.33\linewidth}
        \centering
        \includegraphics[width=\linewidth]{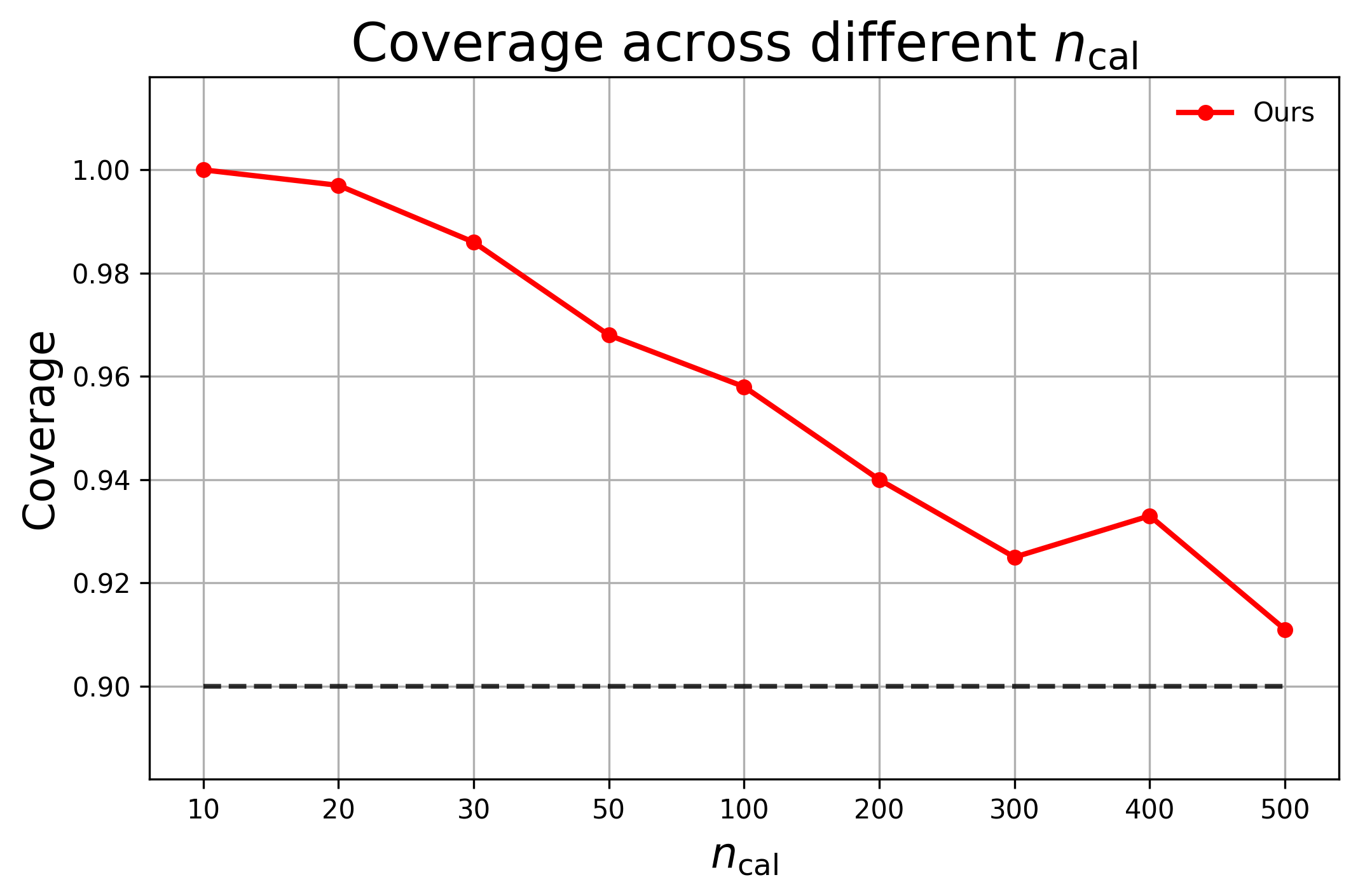}
        \subcaption{$0.9$-CVaR}
    \end{minipage}\hfill
    \begin{minipage}{0.33\linewidth}
        \centering
        \includegraphics[width=\linewidth]{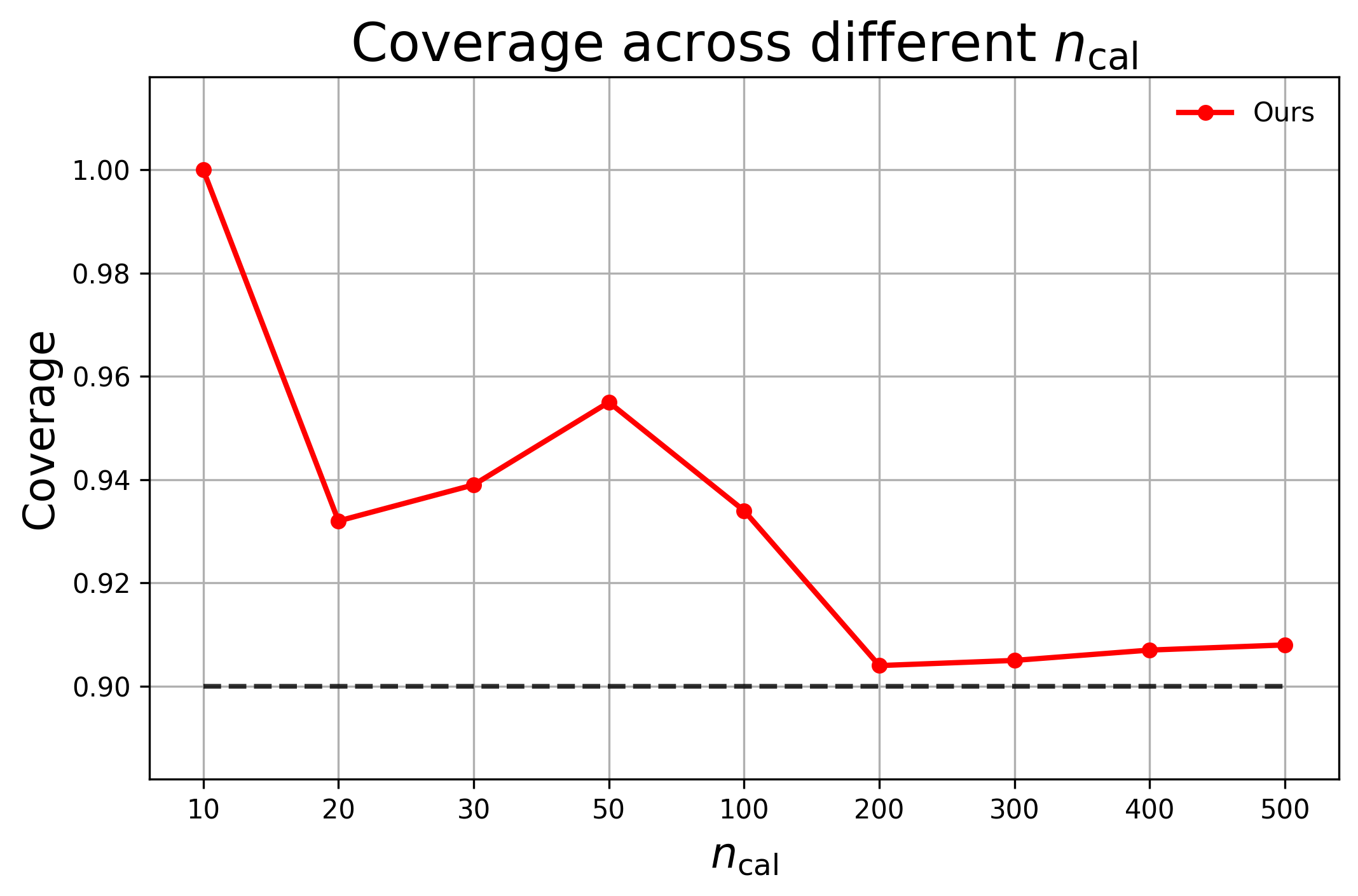}
        \subcaption{$0.8$-VaR}
    \end{minipage}
    \caption{Coverage in synthetic dataset across different calibration set size.}
    \label{fig:tight_analysis}
\end{figure}

\section{Implementation details}\label{sec:details}

% \subsection{Task details}\label{subsec:task}

% \paragraph{Tumor segmentation}

% \paragraph{Multi-label classification problem}

% \paragraph{Text emotion recognition}

\subsection{Loss function details}\label{subsec:loss} 

We use the false negative proportion (FNP) as our loss function, and we introduce FNP below. 
In the context of a multi-label classification problem, consider a model $h:\mathcal{Z}\rightarrow [0,1]^{k}$, where $k$ denotes the number of classes. 
Let the prediction set generated by the model be defined as $\mathcal{C}_\lambda(Z)=\bigl\{h(Z)_y\ge 1-\lambda\bigr\}$, with $\lambda$ serving as the tuning parameter. 
Letting $Y$ denote the set of true labels associated with $Z$, the FNP can be formulated as 
$$l_\lambda^{\mathrm{FNP}}(Z)=1-\frac{|Y \cap \mathcal{C}_\lambda(Z)|}{|Y|}.$$
Tasks such as tumor segmentation and text emotion recognition can also be formulated in a similar manner. 
Specifically, the output set for tumor segmentation comprises the pixels within a medical image, whereas the output set for text emotion recognition constitutes a set of emotion labels. 
It can be shown that the FNP defined in this way is non-increasing with respect to $\lambda$. 

\subsection{Training details}\label{subsec:training} 

All models are trained on an NVIDIA GeForce RTX 4090 GPU. 
For the tumor segmentation task, we utilized a training set comprising 1,450 images strictly disjoint from the validation set. 
We employed a combination of binary cross-entropy and weighted Intersection over Union (IoU) loss as the loss function. 
The training process spanned 20 epochs with a batch size of 16, optimized via the Adam optimizer using a learning rate of $\mathrm{1e}-4$. 

For the image multi-label classification task, we use the resnet50 model for ResNet, tf\_efficientnet\_b3 for EfficientNet, tresnet\_m for TResNet, and convnext\_base for ConvNeXt. 
We train the models on 118,287 images. 
The loss function for all these models is uniformly set to BCEWithLogitsLoss, and the optimizer is uniformly AdamW with a weight decay of 1e-4. 
The learning rate is set to 1e-4 with a CosineAnnealingLR scheduler. All models are trained for 5 epochs with a batch size of 64. 

In the text emotion recognition task, we utilized the pre-fine-tuned bert-base-go-emotion model directly from the Hugging Face repository\footnote{https://huggingface.co/bhadresh-savani/bert-base-go-emotion}. 

\subsection{Evaluation metrics}\label{subsec:metrics} 

We evaluate the performance of risk control framework using three specific metrics. 
The first metric is the coverage (\textbf{Cov}), which represents the empirical probability that the risk on the test set remains under a predefined risk level $\alpha$. 
We define this as 
$$\text{Cov}=\frac{1}{n_{\mathrm{trials}}}\sum_{i=1}^{n_{\mathrm{trials}}}\mathbf{1}_{R_{\mathrm{test}, i}\le \alpha}.$$
A method is considered valid if the coverage exceeds $1-\delta$, and invalid otherwise. 
The second metric is the average risk gap (\textbf{RiskGap}), which assesses the conservatism of the method and is defined as 
$$\text{RiskGap}=\frac{1}{n_{\mathrm{trials}}}\sum_{i=1}^{n_{\mathrm{trials}}}|\alpha-R_{\mathrm{test}, i}|.$$
Assuming that the method is valid, a larger risk gap indicates a more conservative approach, which implies that a smaller gap corresponds to tighter control. 
Finally, we utilize the average prediction set size (\textbf{AvgSize}) to measure the magnitude of the generated prediction sets, which is defined as 
$$\text{AvgSize}=\frac{1}{n_{\mathrm{trials}}}\sum_{i=1}^{n_{\mathrm{trials}}}\bigl(\frac{1}{m}\sum_{Z_{\text{test}}\in \mathcal{D}_{\mathrm{test},i}}|\mathcal{C}_{\lambda_i}(Z_{\text{test}})|\bigr).$$
Similar to the risk gap, when the method is valid, a larger average size denotes a higher level of conservatism, whereas a smaller size signifies tight control of the risk. 

\section{Asymptotic Behavior in the Large Test-Set Regime}\label{sec:asymptotic}

We now discuss the behavior of OQRC in the large test-set regime, where the number of test losses $m$ tends to infinity while the calibration set size $n$ remains fixed. 
In this setting, the empirical distribution $\widehat F_m$ of the test losses converges to the loss distribution $F_X$, and the corresponding empirical risk $R_\psi(\widehat F_m;\lambda)$ approaches the risk $R_\psi(F_X;\lambda)$. 
Therefore, the objective of OQRC changes from controlling risk on finite test set to controlling risk over the entire loss distribution. 

In this context, we draw $n$ independent uniform random variables on $[0,1]$, and sort them as $0<U_{(1)}<\cdots<U_{(n)}<1$, and we define the spacings
\[
q_0=U_{(1)},\qquad
q_i=U_{(i+1)}-U_{(i)},\quad 1\le i\le n-1,
\qquad
q_n=1-U_{(n)}.
\]
The resulting vector $q=(q_0,\ldots,q_n)$ follows the uniform distribution over a simplex, that is
\[
q\sim \mathrm{Dirichlet}(1,\ldots,1).
\]
We then define $\Phi_\infty^+(q;\lambda,X_{1:n})=\sum_{i=0}^n\int_{U_{(i)}}^{U_{(i+1)}}\psi(t)X_{(i)}dt$, where we set $U_{(n+1)}=1$. 
Consequently, in the large-test-set regime, the distribution $F^+$ of the upper risk is replaced by
\[
F_\infty^+(r;\lambda,X_{1:n})
=
\mathbb P_{q\sim \mathrm{Dirichlet}(1,\ldots,1)}
\left(
\Phi_\infty^+(q;\lambda,X_{1:n})\le r
\right).
\]
Subsequently, we define $R_\infty^+ = (F_\infty^+)^{-1}(1-\delta)$. 
As a result, $R_\infty^+$ is a UCB of $R_\psi(\widehat F_m;\lambda)$. 

For unconditional risk control, this limiting representation can be summarized as
\[
\mathbb{E}\left[R_\psi(\widehat F_m;\lambda)\right]
=
\mathbb E_{q\sim \mathrm{Dirichlet}(1,\ldots,1)}
\left[
\Phi_\infty^+(q;\lambda,X_{1:n})
\right].
\]

The Monte-Carlo implementation in this regime is correspondingly simplified. 
When $m\to\infty$, each Monte-Carlo replicate samples a continuous simplex vector
\[
q^{(b)}\sim \mathrm{Dirichlet}(1,\ldots,1),
\qquad b=1,\ldots,B,
\]
and computes $\Phi_\infty^+(q^{(b)};\lambda,X_{1:n}).$ 
The resulting empirical distribution
\[
\widehat F_{\infty,B}^+(r;\lambda,X_{1:n})
=
\frac{1}{B}
\sum_{b=1}^B
\mathbf 1
\left\{
\Phi_\infty^+(q^{(b)};\lambda,X_{1:n})\le r
\right\}
\]
approximates $F_\infty^+(r;\lambda,X_{1:n})$. The empirical $(1-\delta)$-quantile of these Monte-Carlo values gives the UCB of the quantile risk, while their empirical average,
\[
\widehat{\overline R}_{\infty,B}^+(\lambda)
=
\frac{1}{B}
\sum_{b=1}^B
\Phi_\infty^+(q^{(b)};\lambda,X_{1:n}),
\]
estimates the $\mathbb{E}\left[R_\psi(\widehat F_m;\lambda)\right]$.

\section{Additional experimental results}\label{sec:AddExp}

% \subsection{Empirical verification of the assumption in Theorem~\ref{thm:tightness}}\label{subsec:assup_empirical}

\subsection{Extended results across diverse hyperparameter configurations}\label{subsec:results_hyperparameter}

Figures~\ref{fig:parameter_analysis_polyp_0.95} to~\ref{fig:parameter_analysis_go_0.801} compare our method with the baseline approaches in terms of Cov, RiskGap, and AvgSize across three tasks under three risk measures. 
Within each figure, the top three subplots display the coverage, the middle three present the RiskGap, and the bottom three illustrate the AvgSize. 
The experimental results demonstrate that our method consistently maintains the required coverage while significantly reducing both the RiskGap and the AvgSize compared to the baselines across diverse datasets, risk measures, and hyperparameter configurations. 

\subsection{Extended results across diverse model architectures}\label{subsec:results_model}

Figure~\ref{fig:different_model_0.95} compares the performance of our method with the baselines across different model architectures under $[0.85, 0.95]$-VaR-Interval. 
Figure~\ref{fig:different_model_1} presents the comparison under $0.9$-CVaR. 
Figure~\ref{fig:different_model_0.9} presents the comparison under $0.8$-VaR. 
The hyperparameters match the configurations used for the performance evaluation experiment across model architectures in Section~\ref{sec:EXP}.  
The experimental results demonstrate that our approach significantly narrows both the RiskGap and the AvgSize compared to the baselines while achieving valid coverage. 

\begin{figure}[htbp]
    \centering
    \begin{minipage}{0.29\linewidth}
        \centering
        \includegraphics[width=\linewidth]{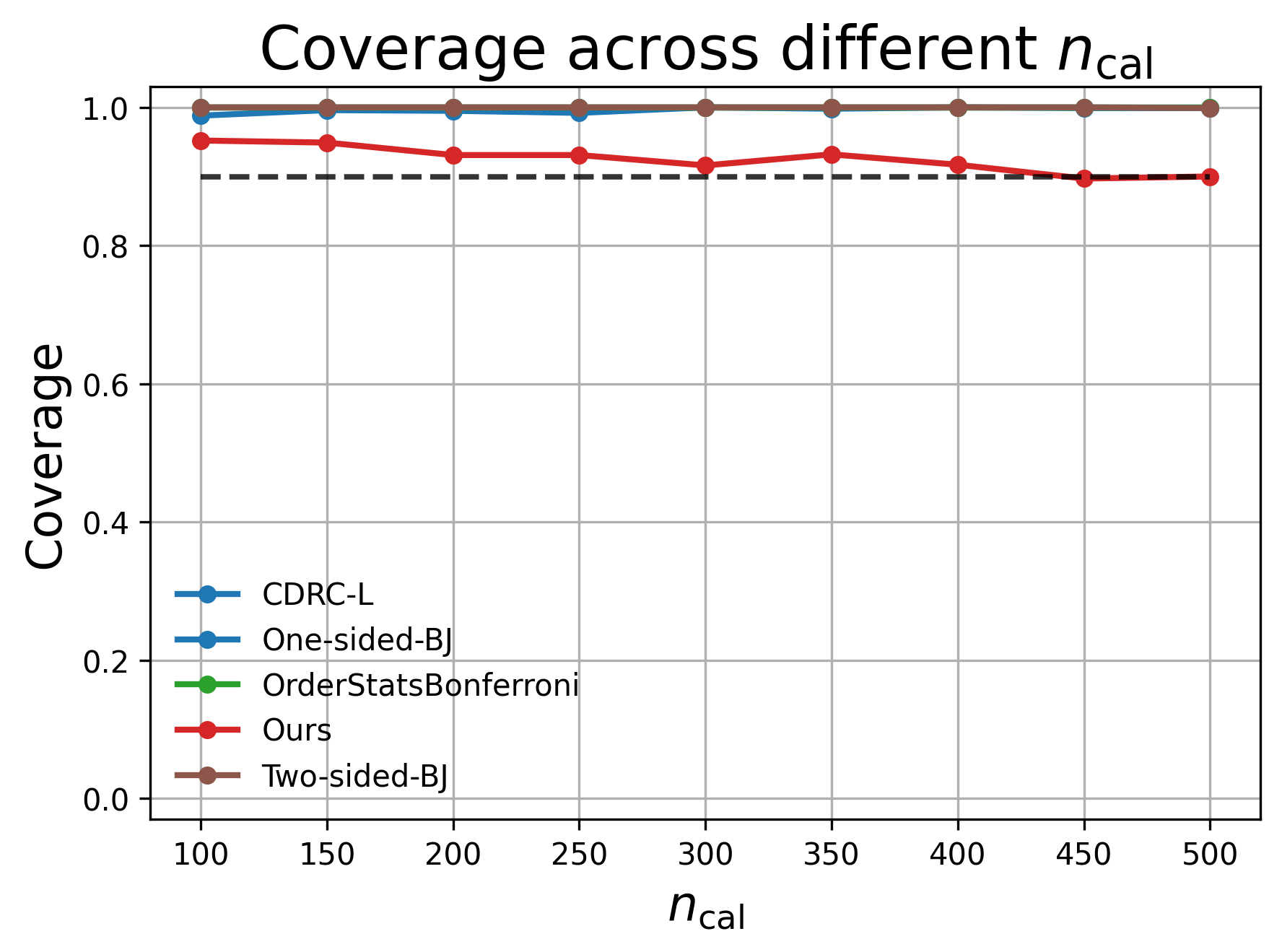}
    \end{minipage}\hfill
    \begin{minipage}{0.29\linewidth}
        \centering
        \includegraphics[width=\linewidth]{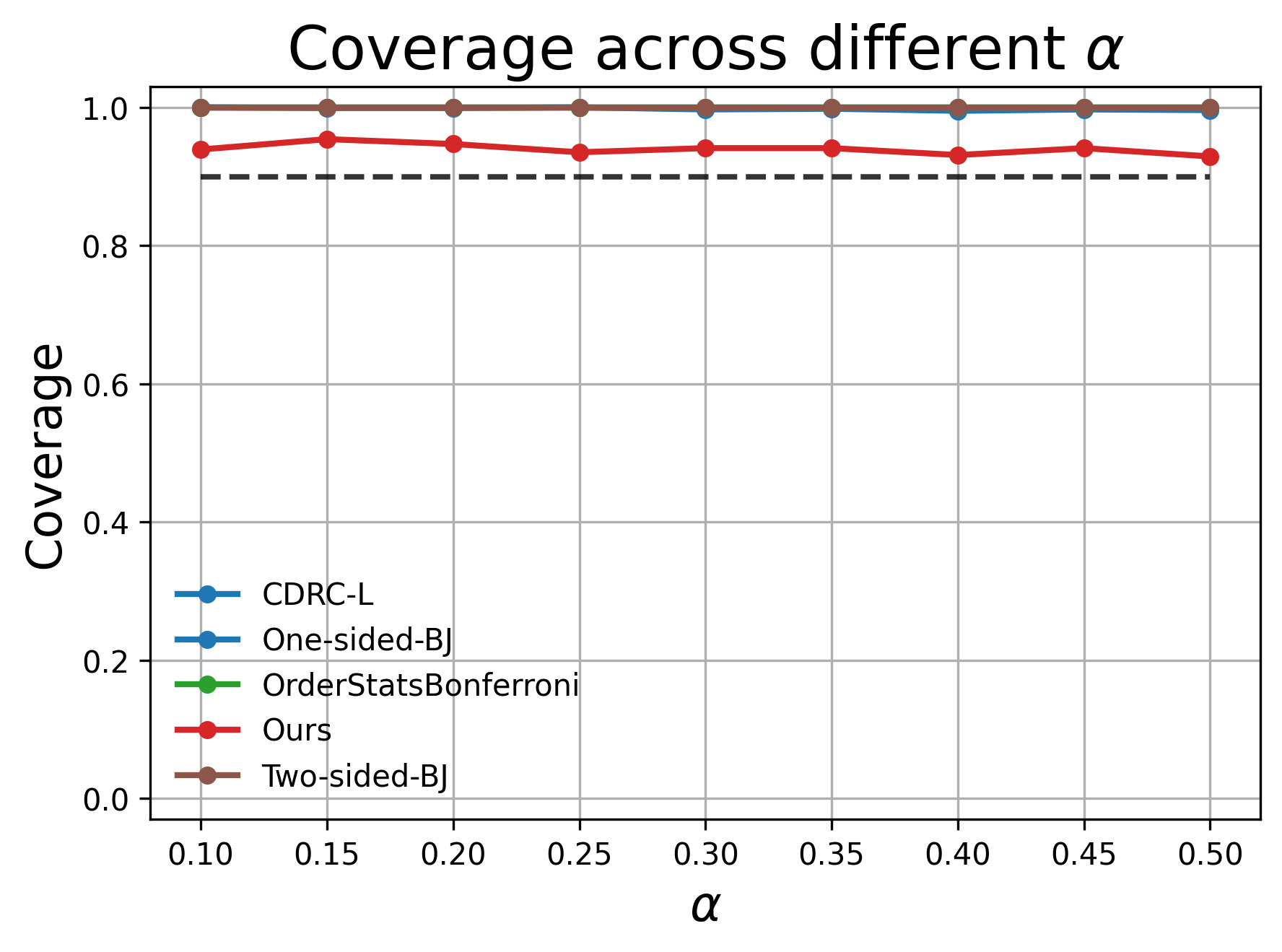}
    \end{minipage}\hfill
    \begin{minipage}{0.29\linewidth}
        \centering
        \includegraphics[width=\linewidth]{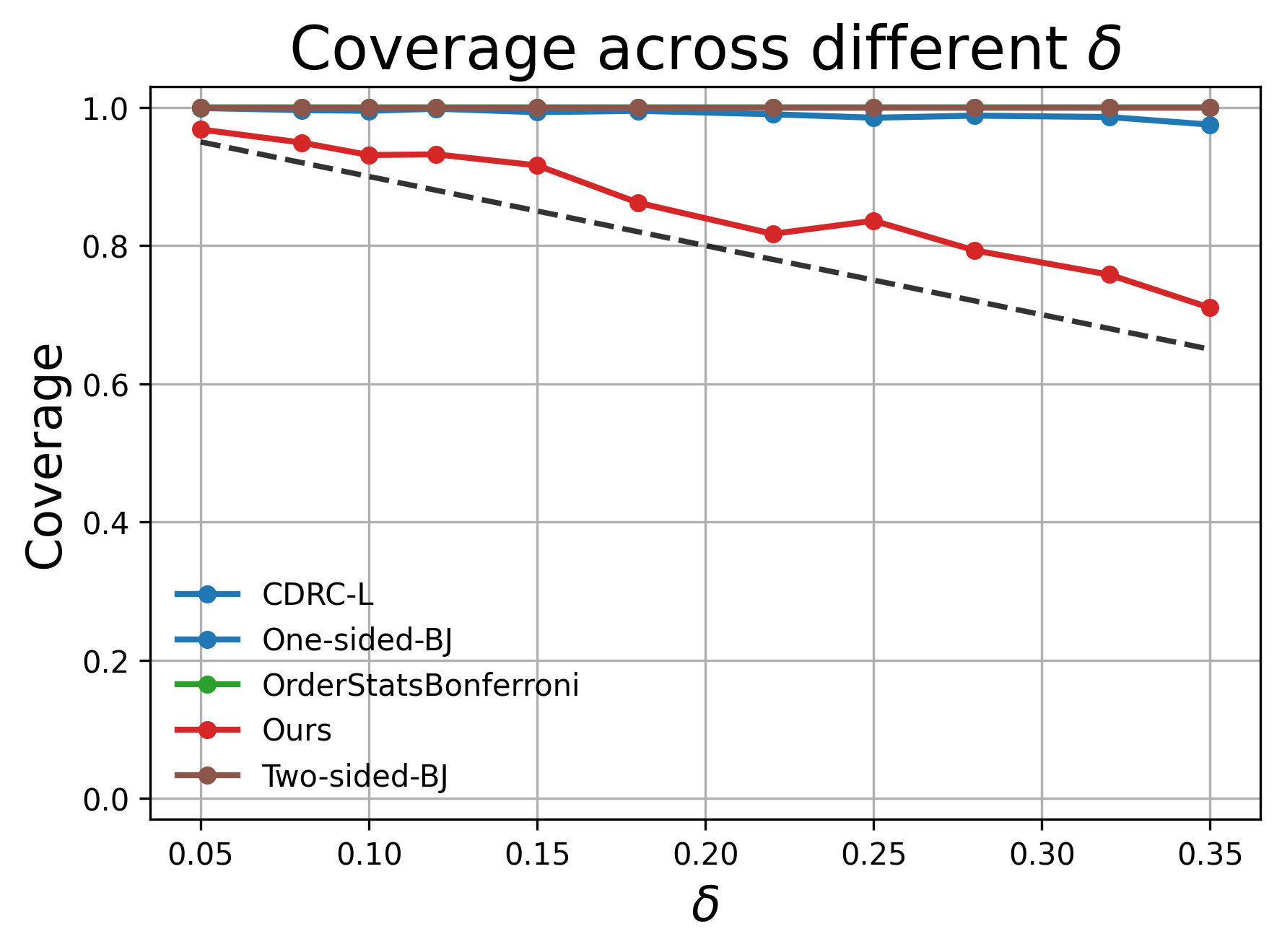}
    \end{minipage}\\
    \begin{minipage}{0.29\linewidth}
        \centering
        \includegraphics[width=\linewidth]{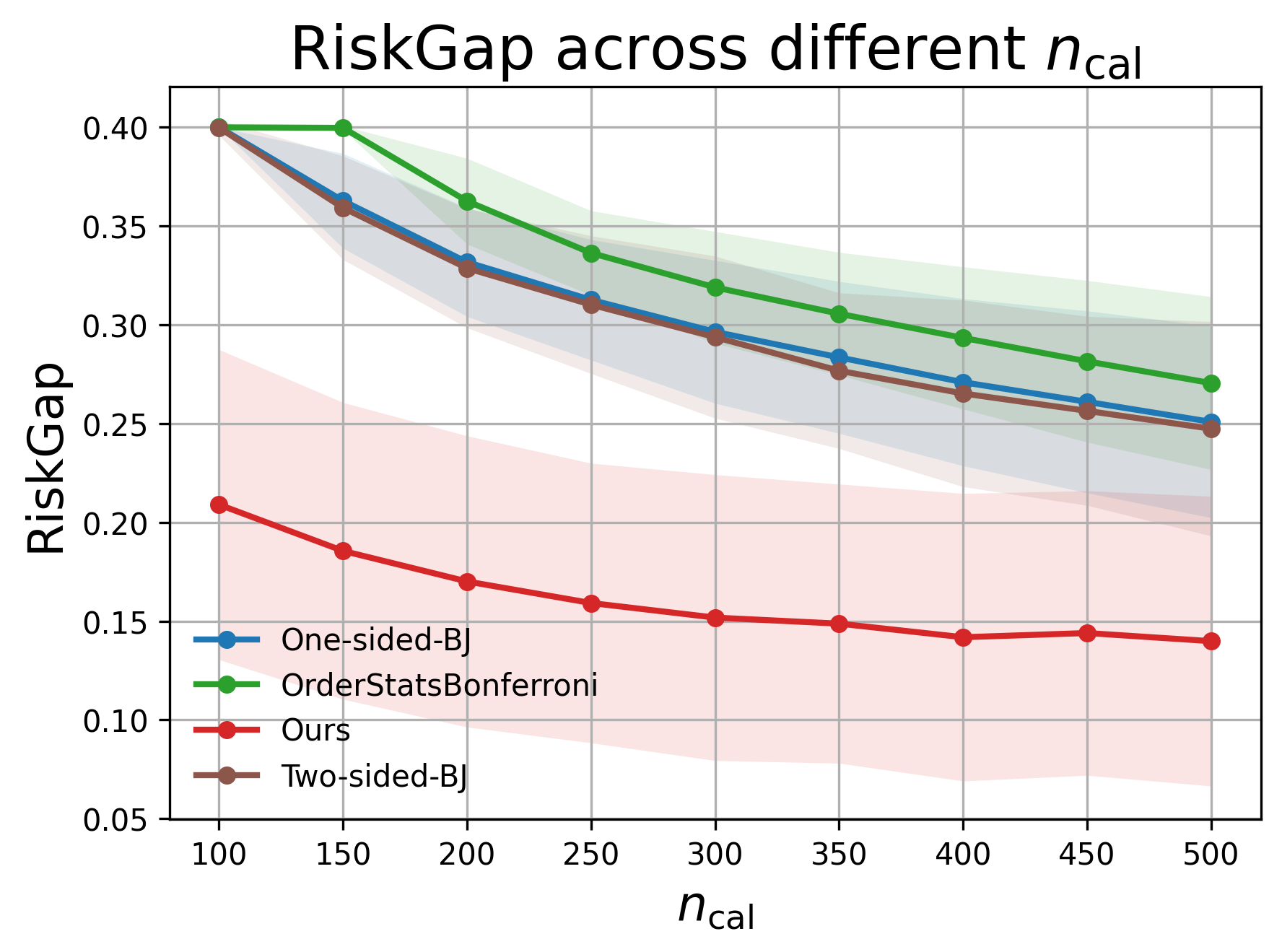}
    \end{minipage}\hfill
    \begin{minipage}{0.29\linewidth}
        \centering
        \includegraphics[width=\linewidth]{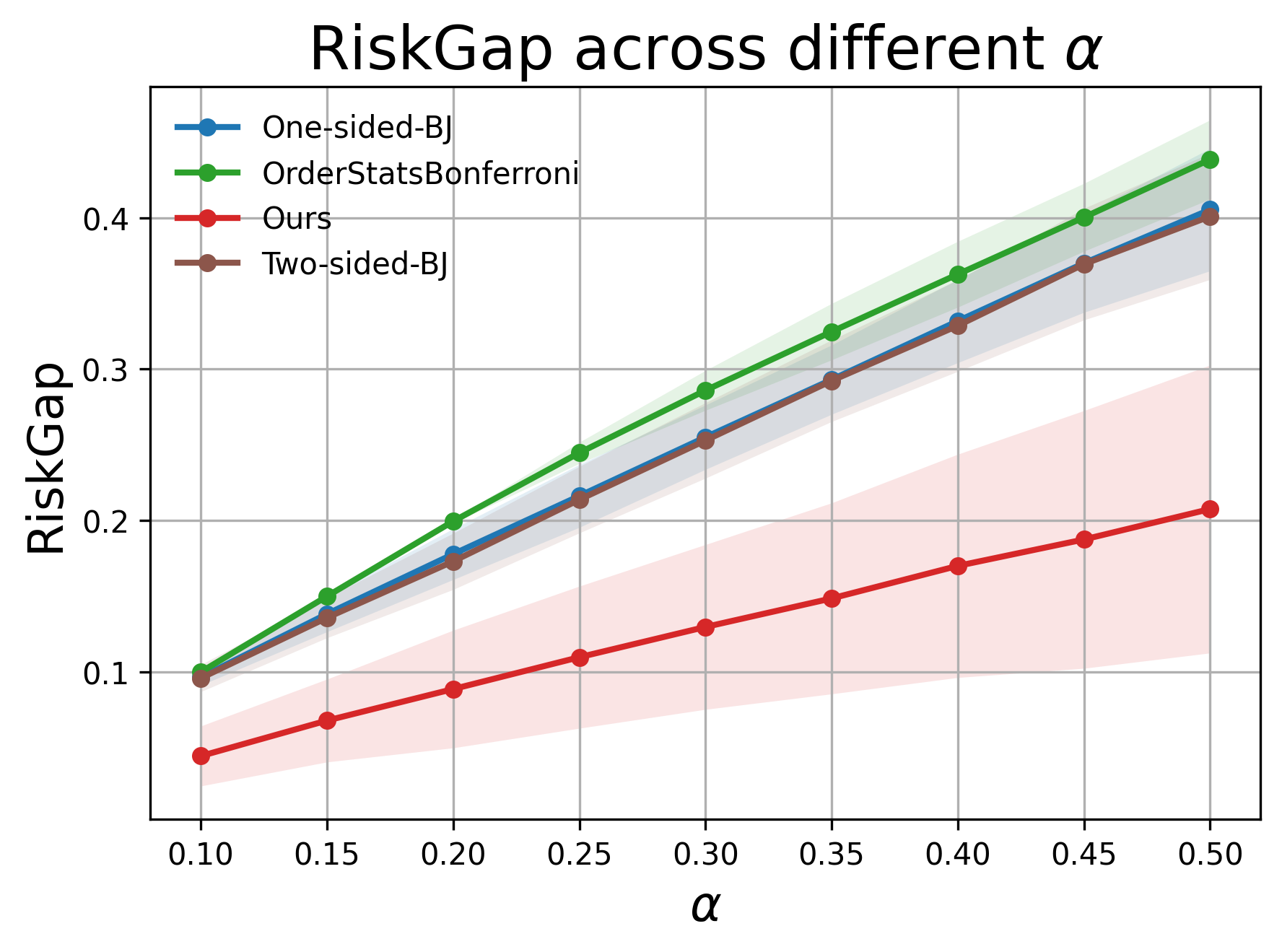}
    \end{minipage}\hfill
    \begin{minipage}{0.29\linewidth}
        \centering
        \includegraphics[width=\linewidth]{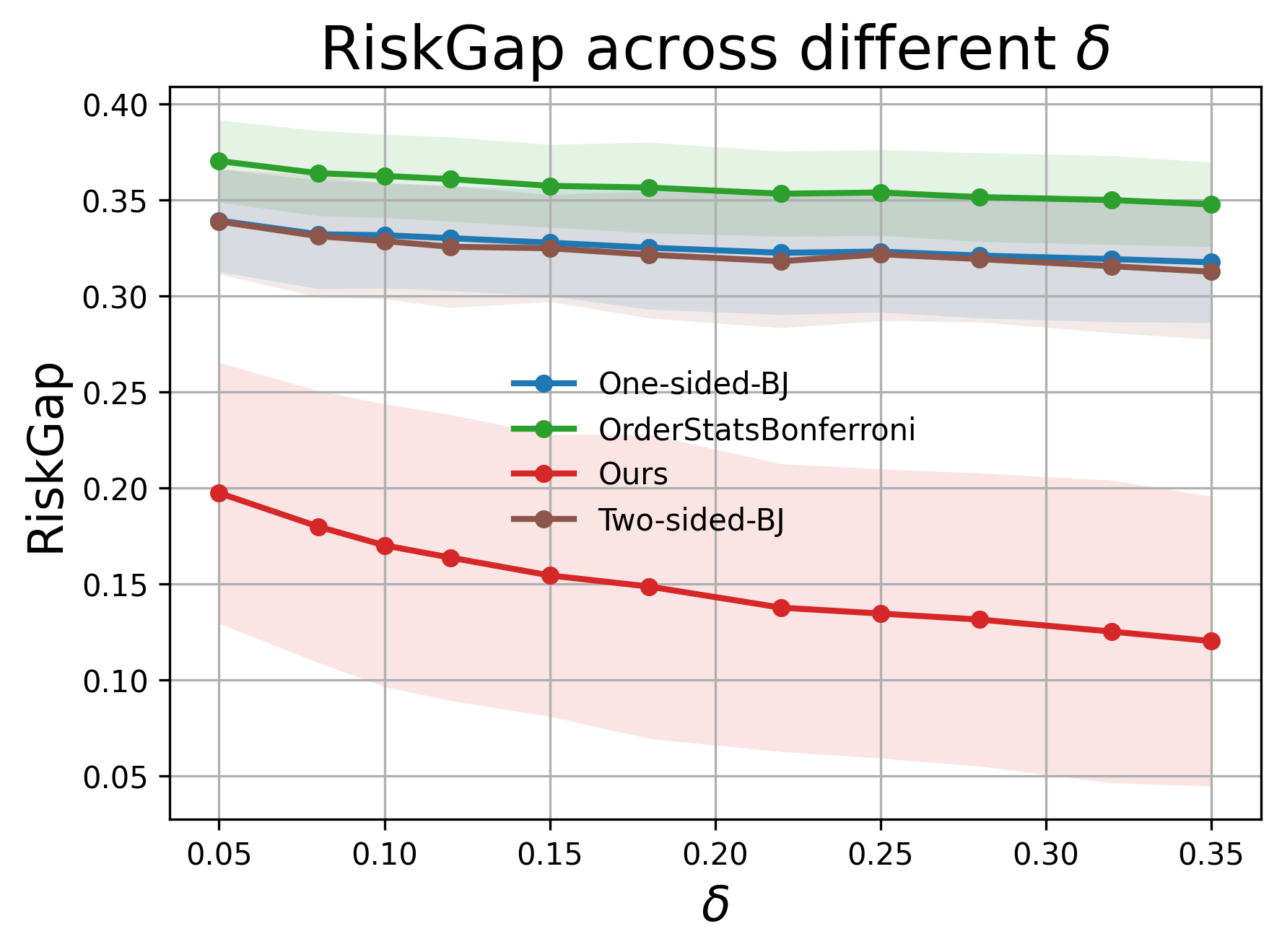}
    \end{minipage}\\
    \begin{minipage}{0.29\linewidth}
        \centering
        \includegraphics[width=\linewidth]{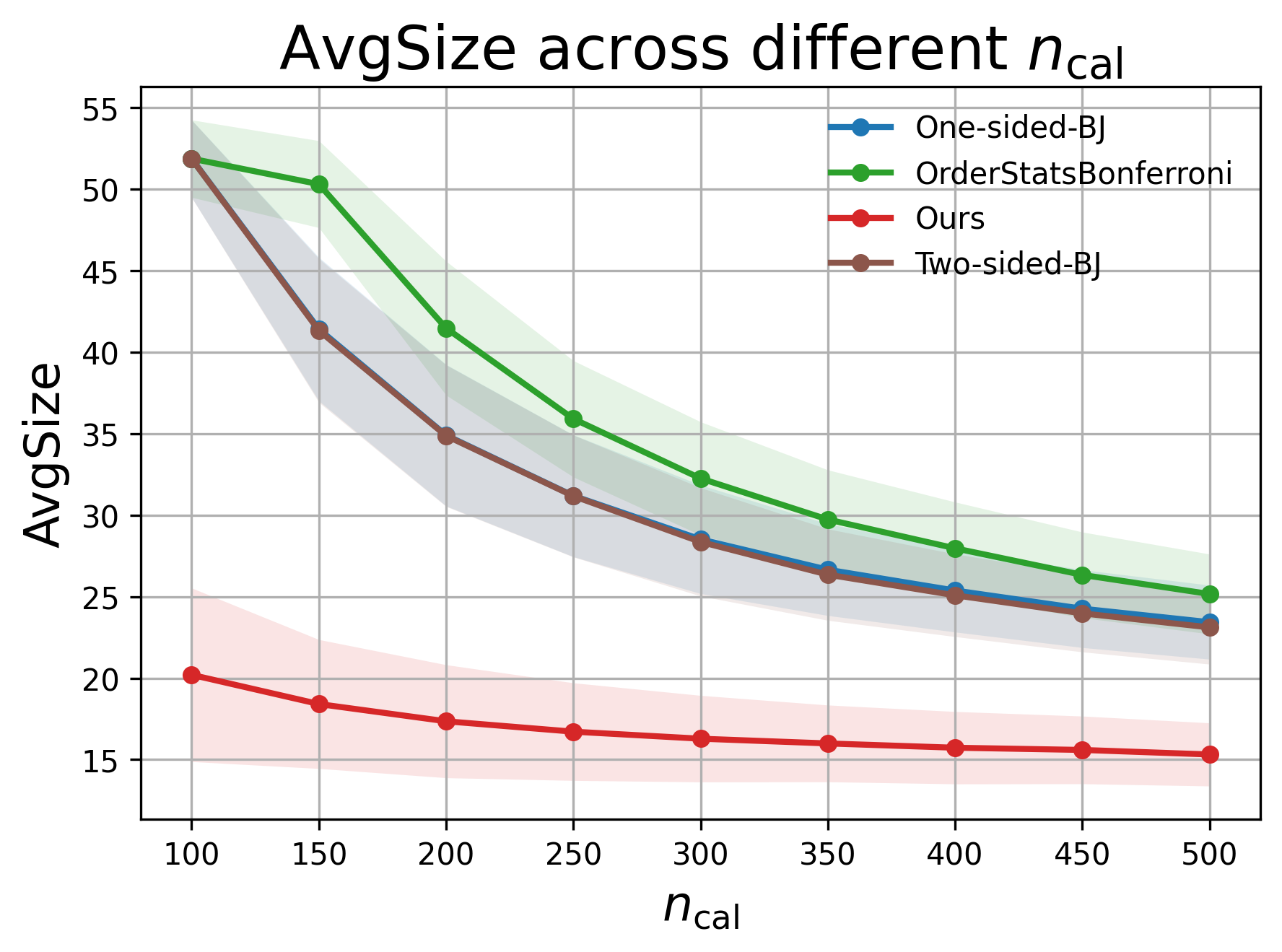}
    \end{minipage}\hfill
    \begin{minipage}{0.29\linewidth}
        \centering
        \includegraphics[width=\linewidth]{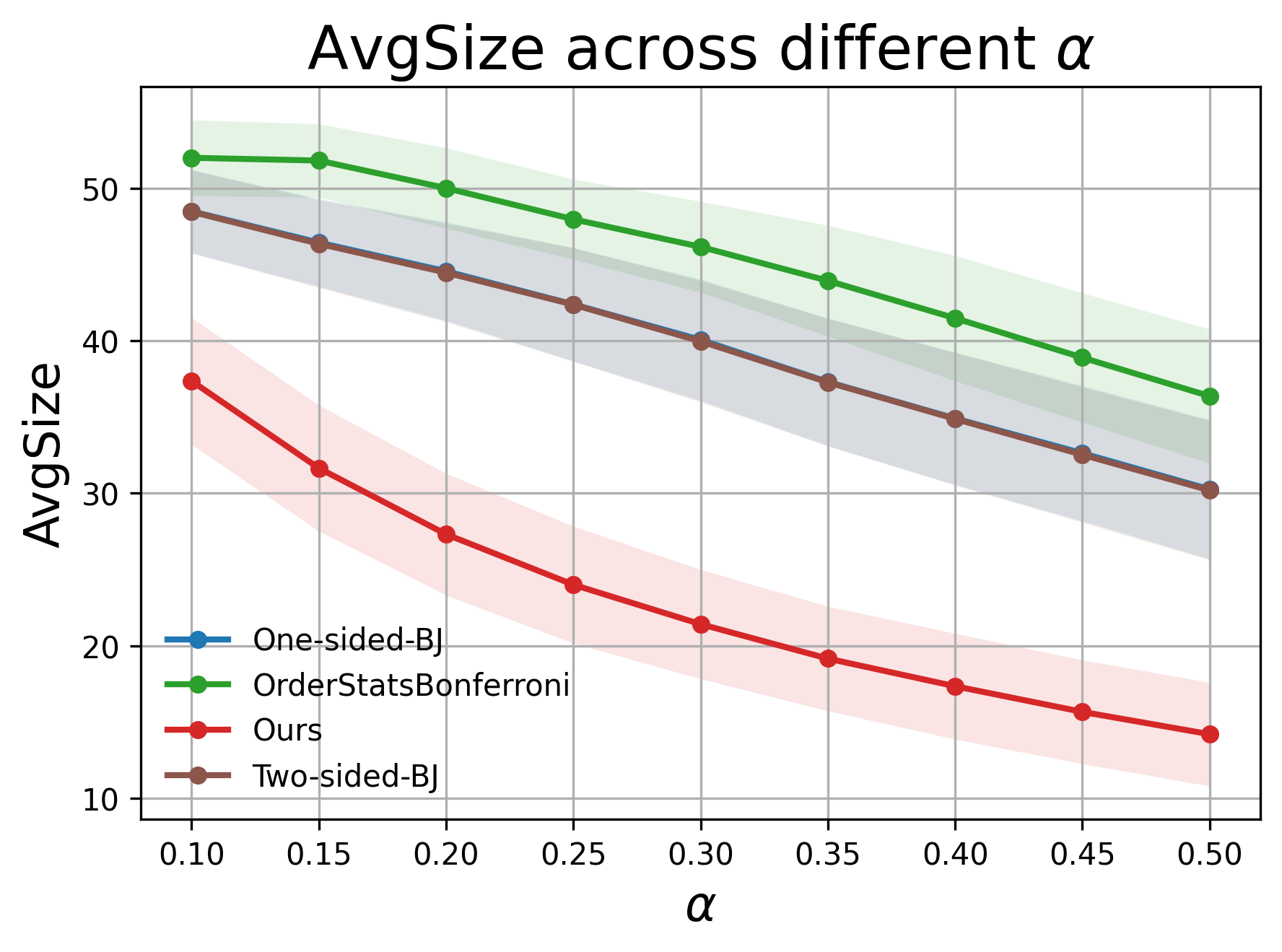}
    \end{minipage}\hfill
    \begin{minipage}{0.29\linewidth}
        \centering
        \includegraphics[width=\linewidth]{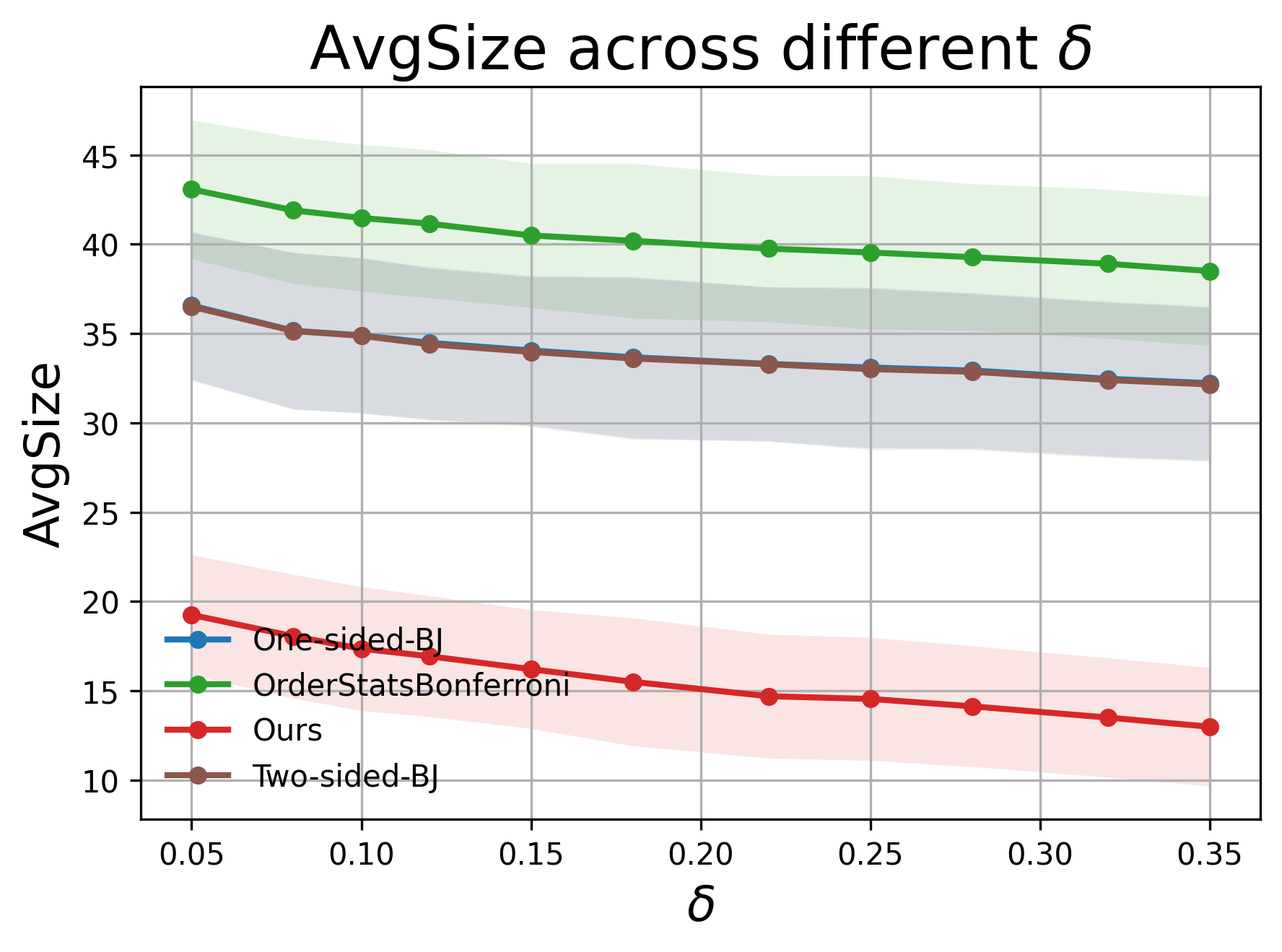}
    \end{minipage}
    \vspace{-3mm}
    \caption{\textbf{Performance comparison across various hyperparameter configurations.} The task is tumor segmentation, and the risk measure is $[0.85, 0.95]$-VaR-Interval.}
    \label{fig:parameter_analysis_polyp_0.95}
    \vspace{-5mm}
\end{figure}

\begin{figure}[htbp]
    \centering
    \begin{minipage}{0.29\linewidth}
        \centering
        \includegraphics[width=\linewidth]{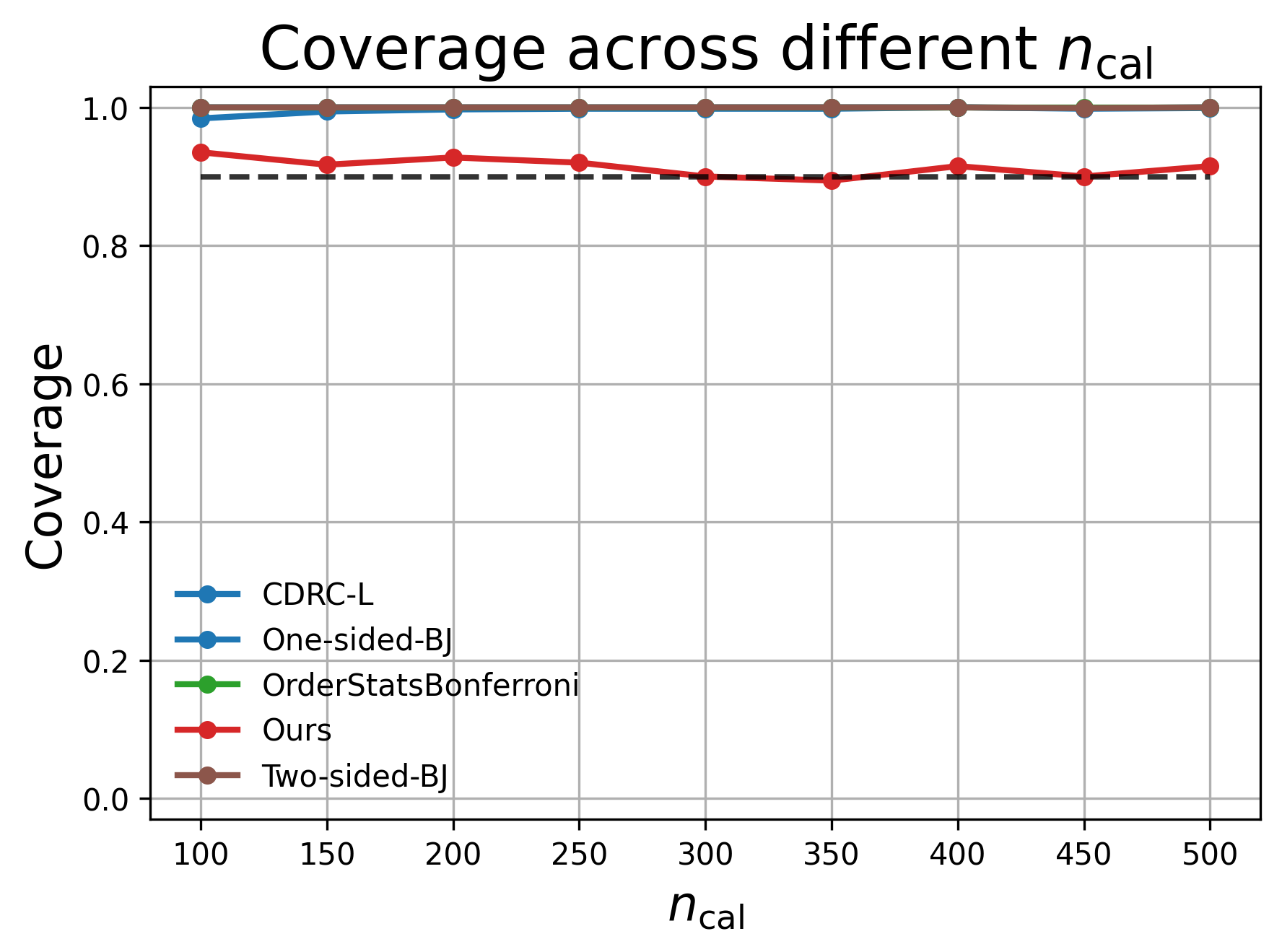}
    \end{minipage}\hfill
    \begin{minipage}{0.29\linewidth}
        \centering
        \includegraphics[width=\linewidth]{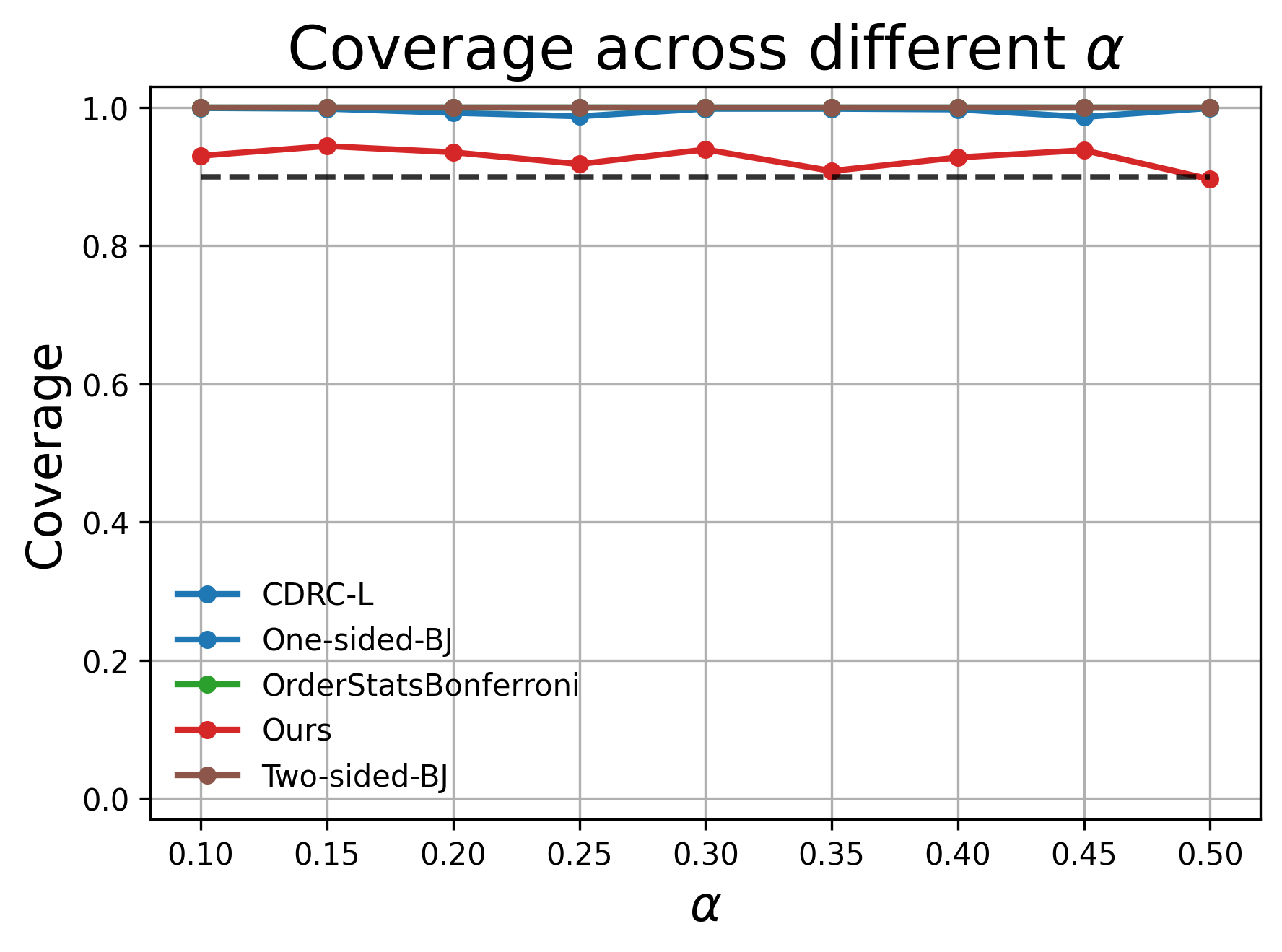}
    \end{minipage}\hfill
    \begin{minipage}{0.29\linewidth}
        \centering
        \includegraphics[width=\linewidth]{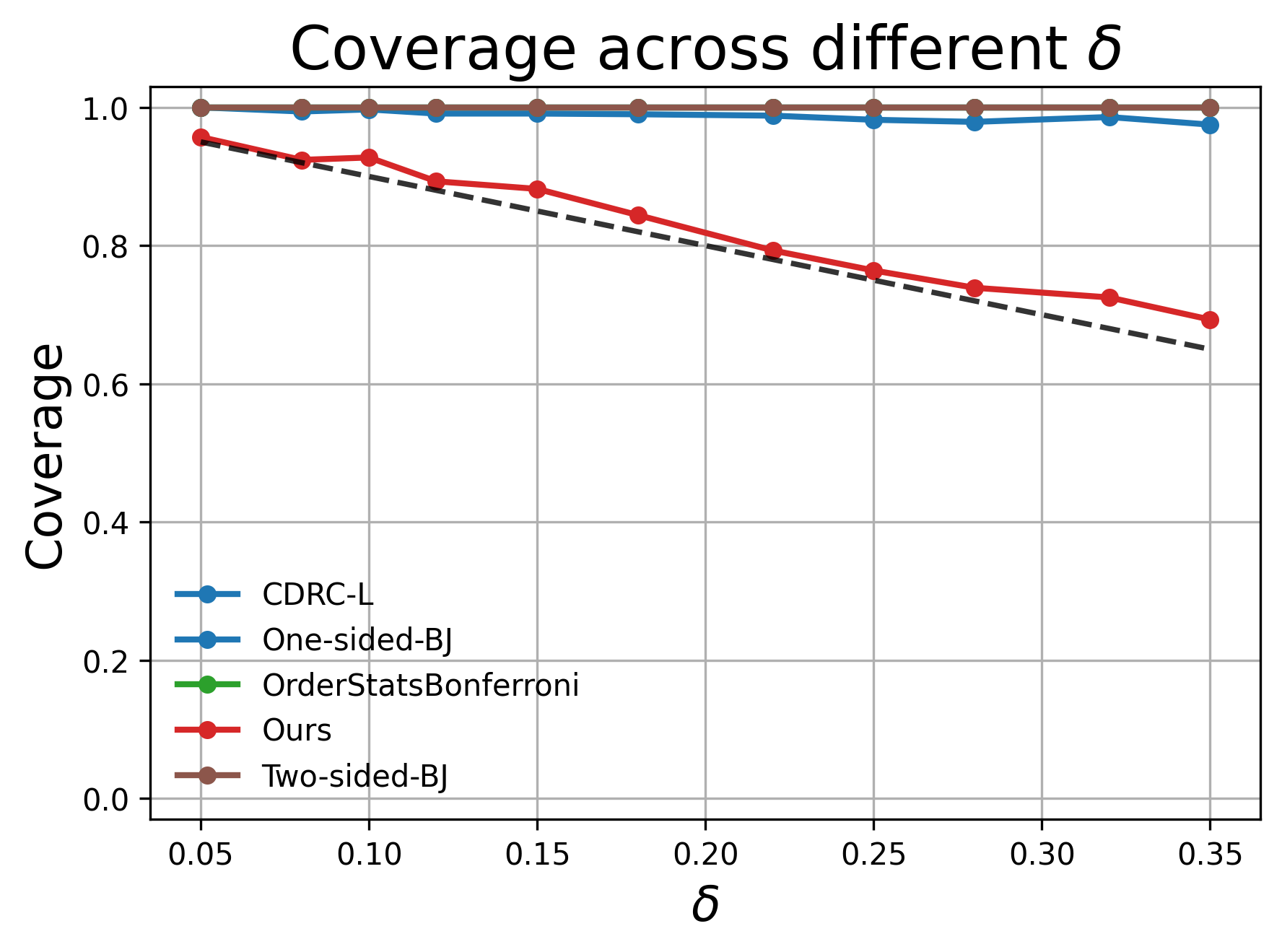}
    \end{minipage}\\
    \begin{minipage}{0.29\linewidth}
        \centering
        \includegraphics[width=\linewidth]{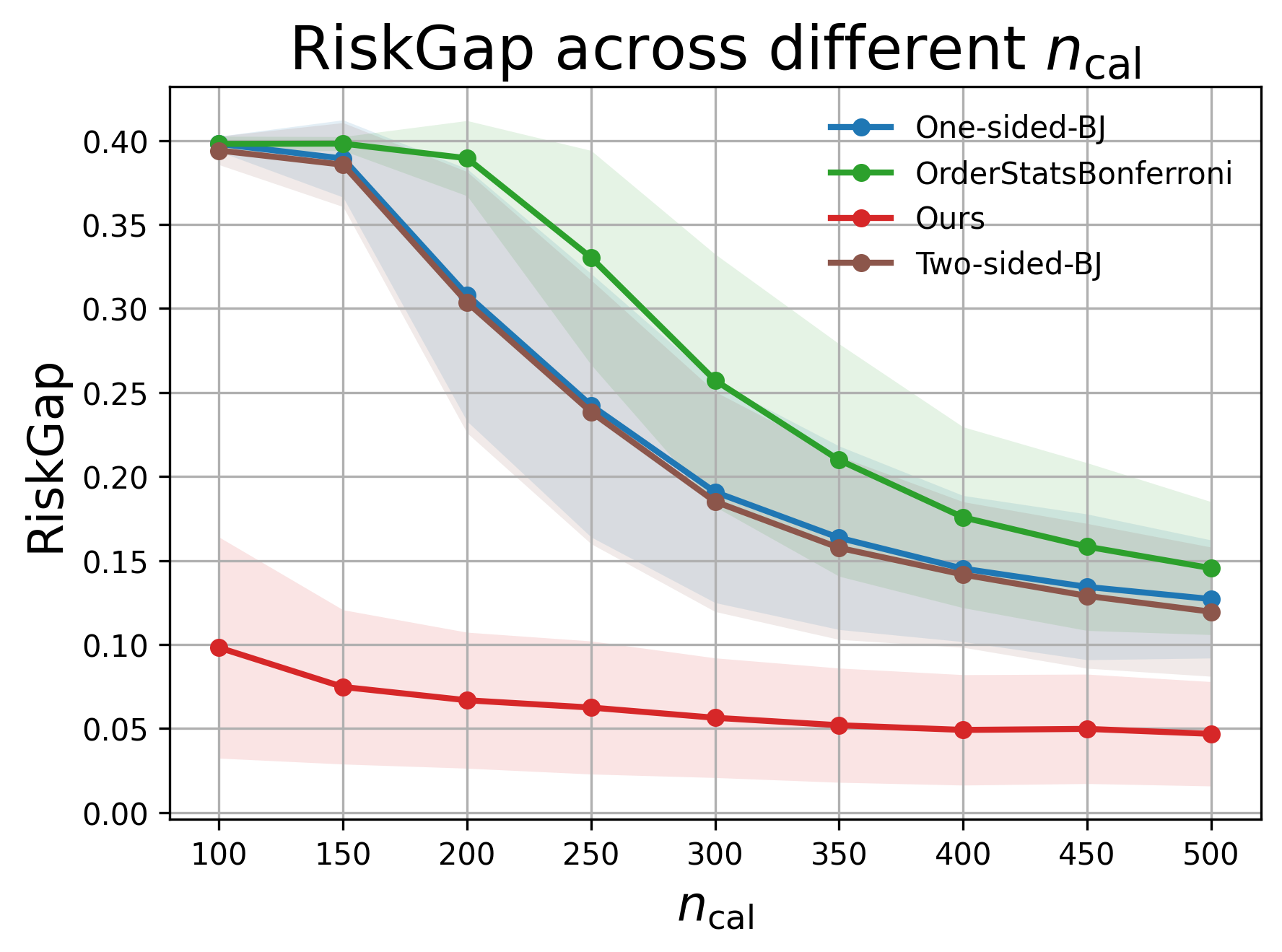}
    \end{minipage}\hfill
    \begin{minipage}{0.29\linewidth}
        \centering
        \includegraphics[width=\linewidth]{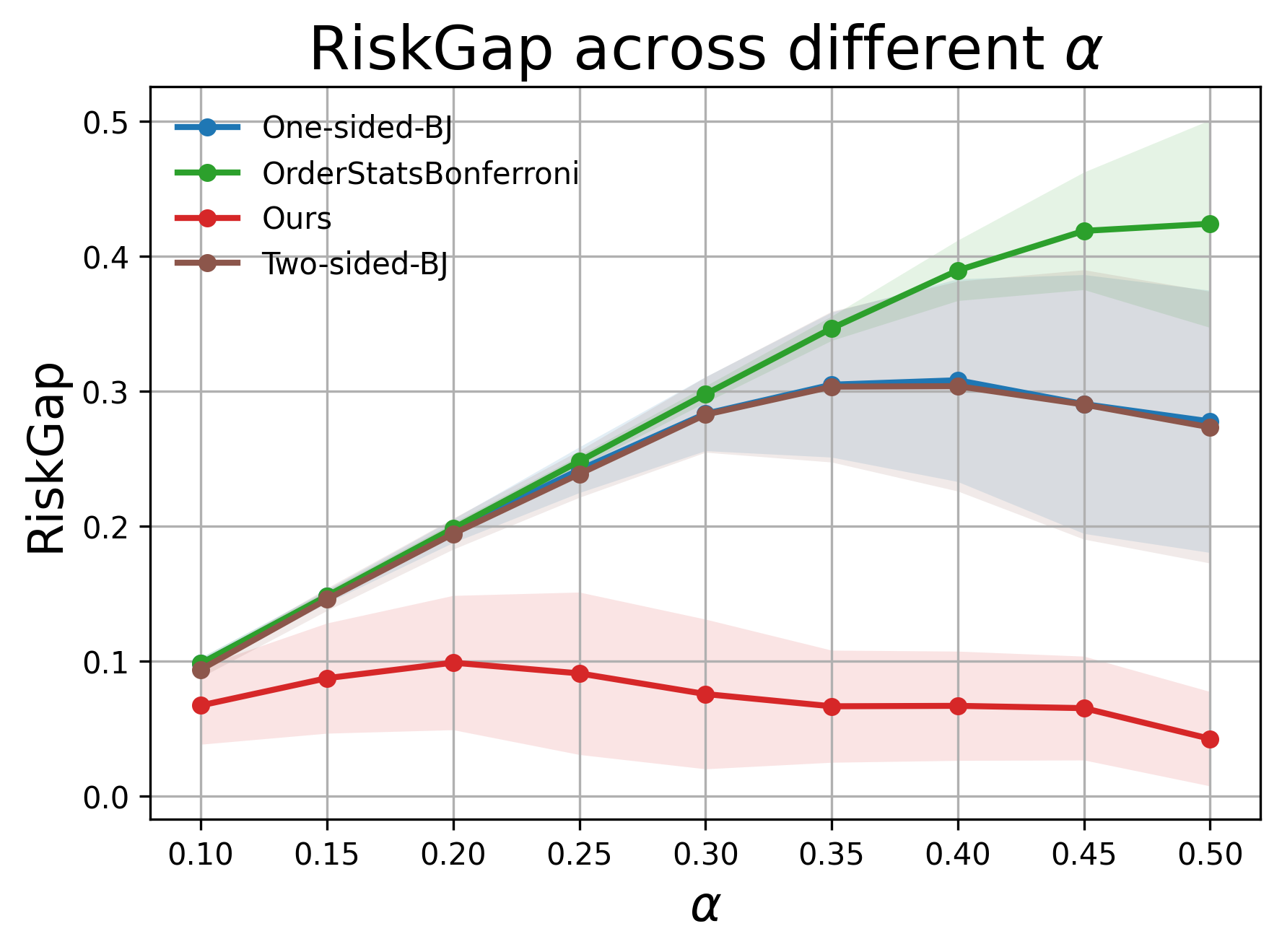}
    \end{minipage}\hfill
    \begin{minipage}{0.29\linewidth}
        \centering
        \includegraphics[width=\linewidth]{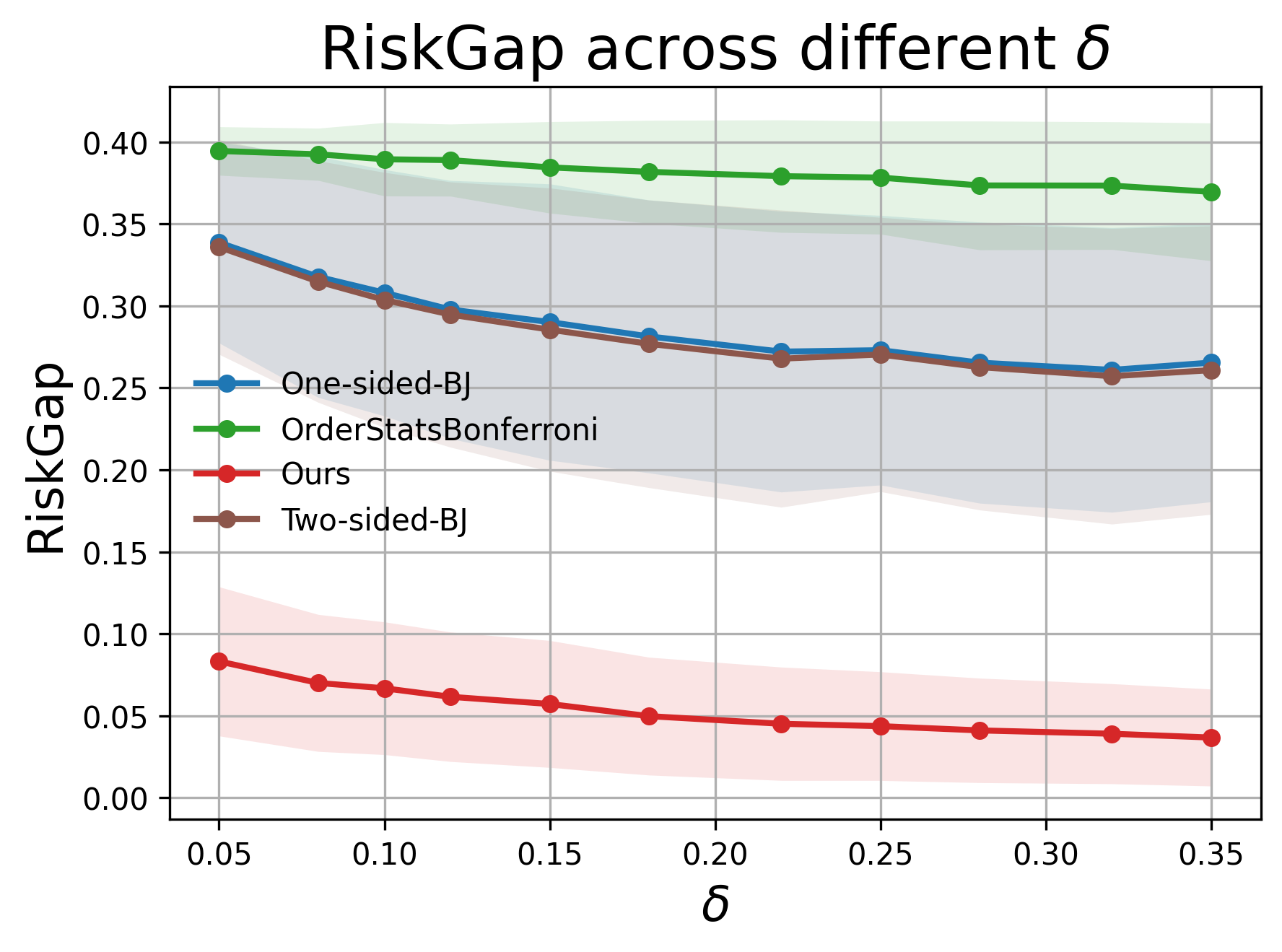}
    \end{minipage}\\
    \begin{minipage}{0.29\linewidth}
        \centering
        \includegraphics[width=\linewidth]{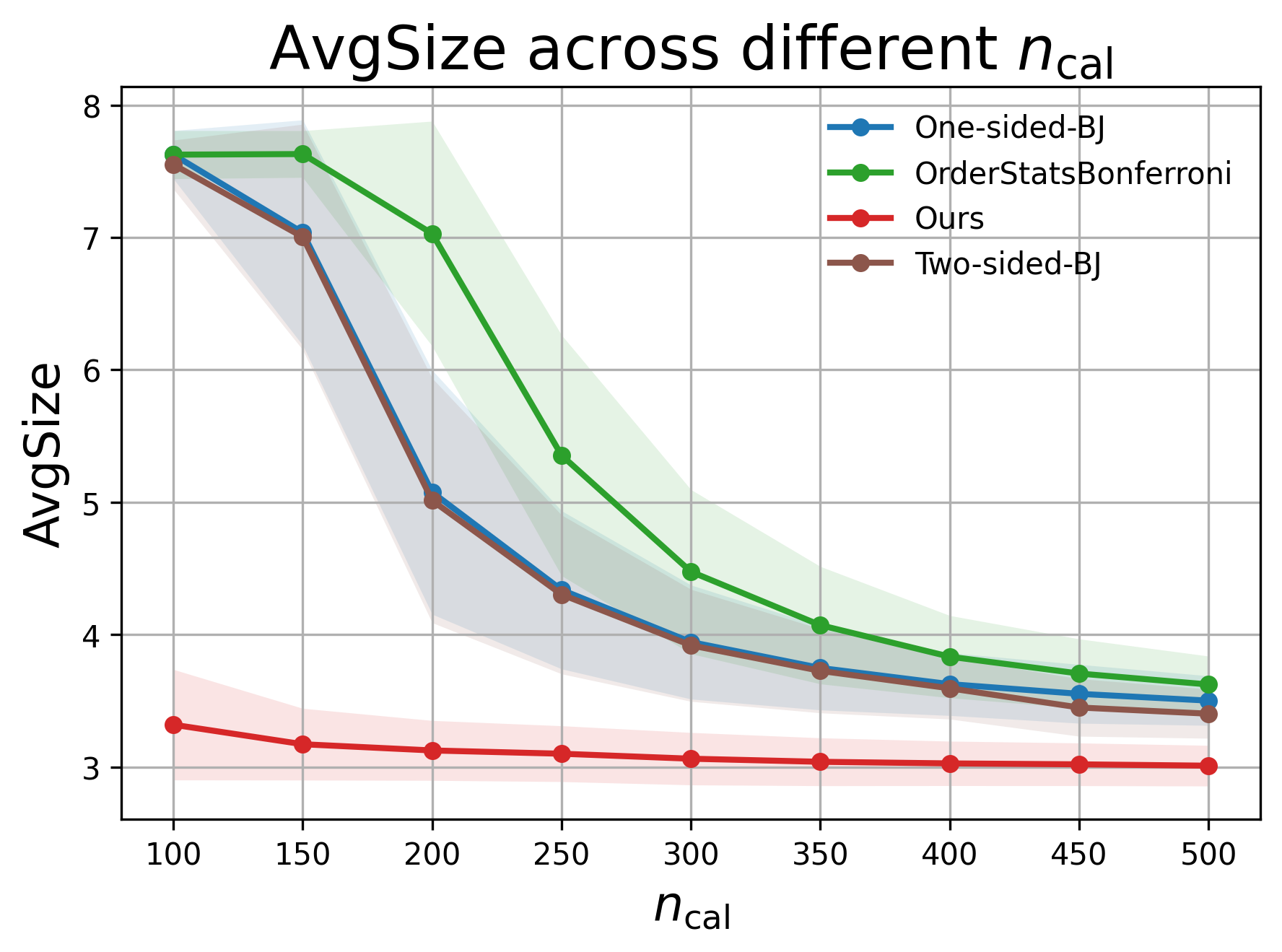}
    \end{minipage}\hfill
    \begin{minipage}{0.29\linewidth}
        \centering
        \includegraphics[width=\linewidth]{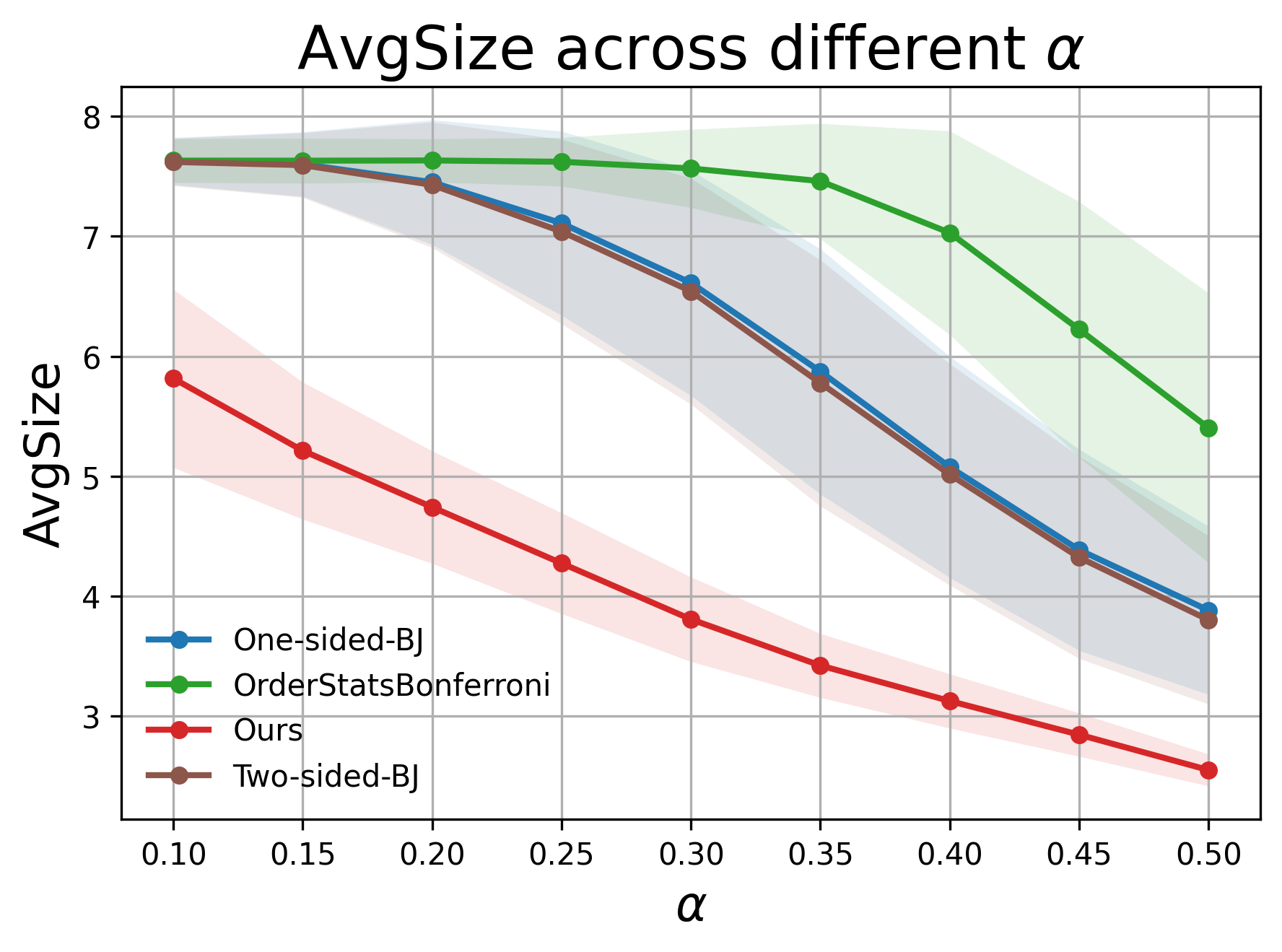}
    \end{minipage}\hfill
    \begin{minipage}{0.29\linewidth}
        \centering
        \includegraphics[width=\linewidth]{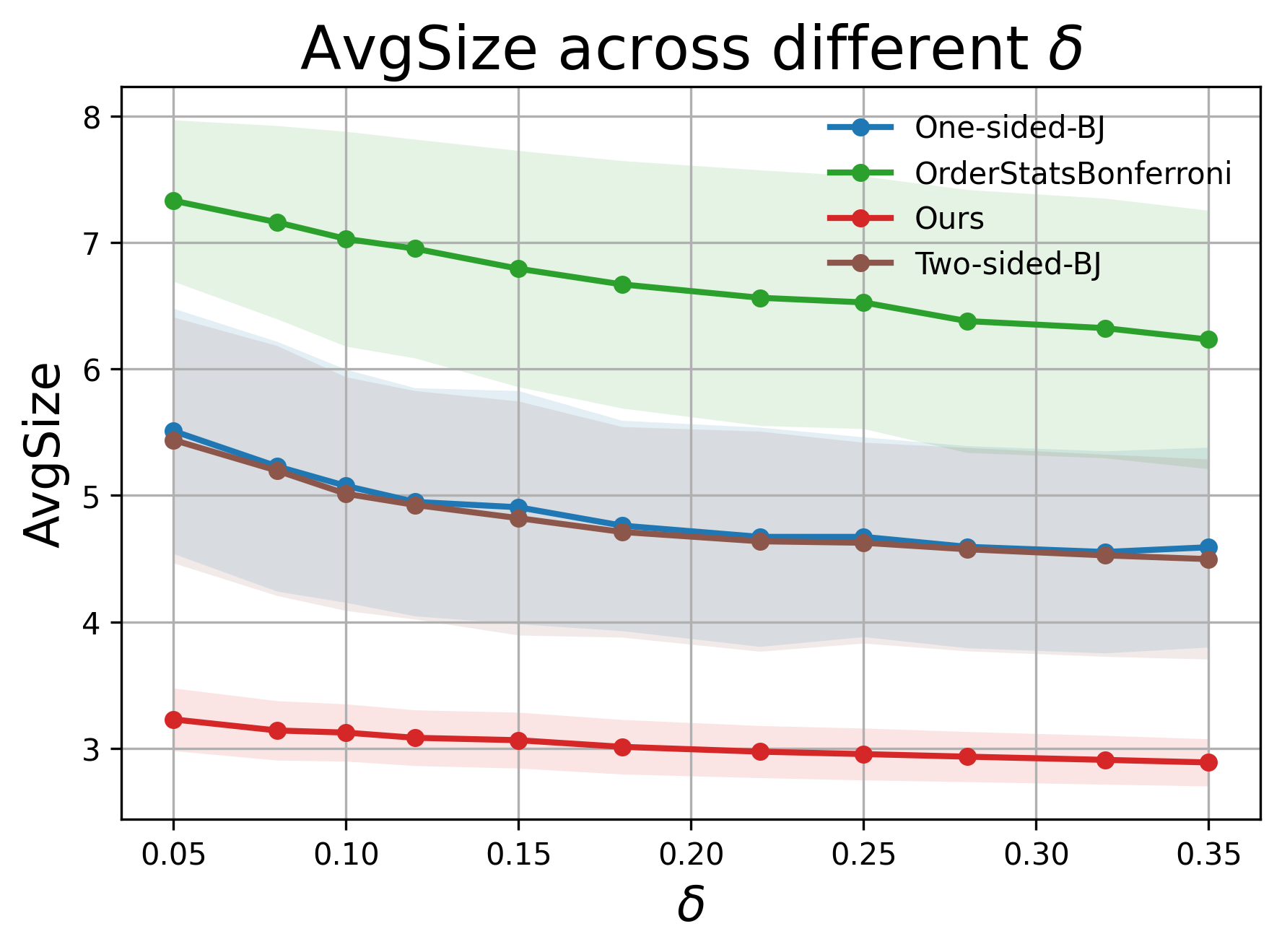}
    \end{minipage}
    \vspace{-3mm}
    \caption{\textbf{Performance comparison across various hyperparameter configurations.} The task is image multi-label classification, and the risk measure is $[0.85, 0.95]$-VaR-Interval.}
    \label{fig:parameter_analysis_coco_0.95}
    \vspace{-5mm}
\end{figure}

\begin{figure}[htbp]
    \centering
    \begin{minipage}{0.29\linewidth}
        \centering
        \includegraphics[width=\linewidth]{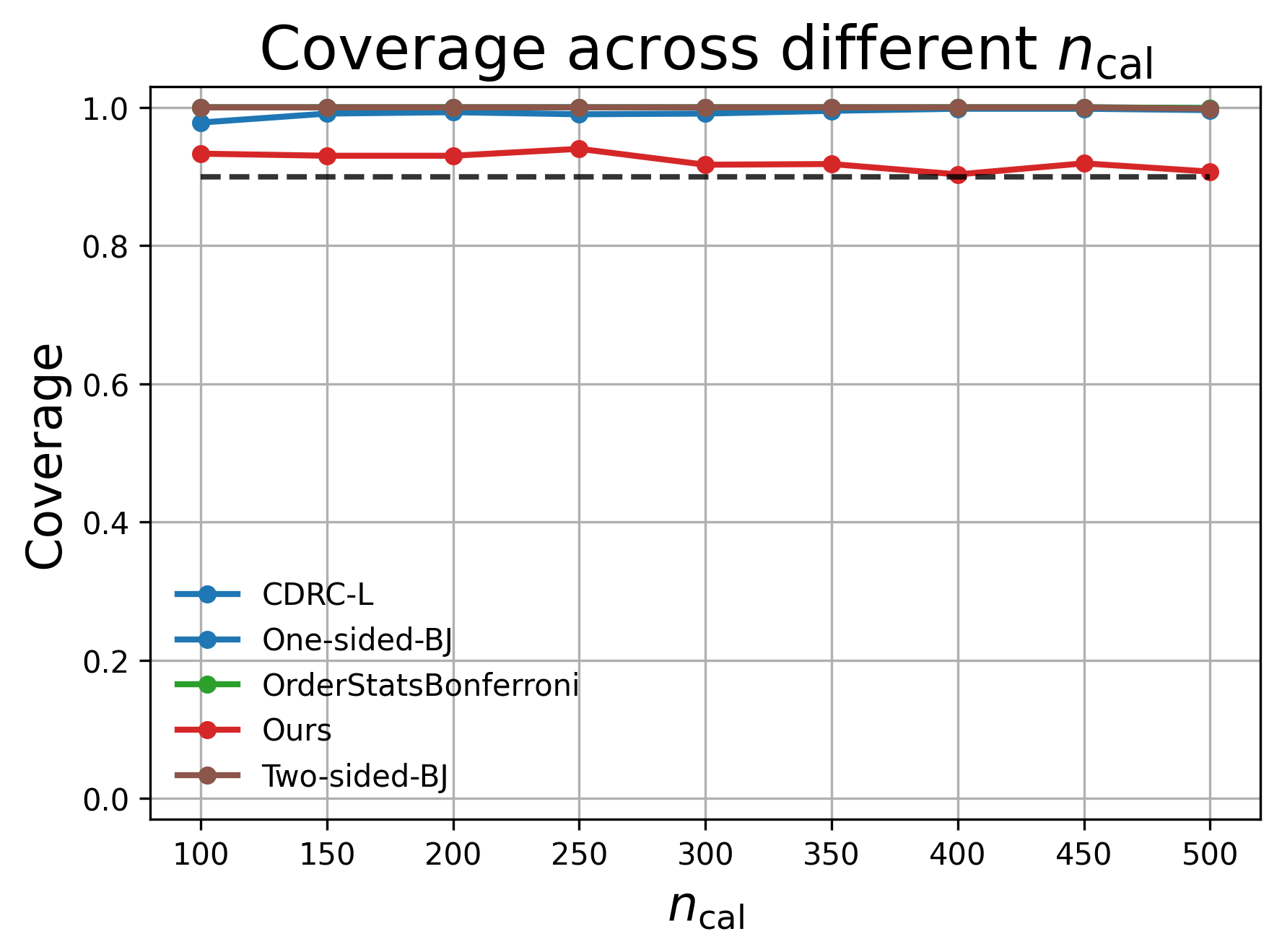}
    \end{minipage}\hfill
    \begin{minipage}{0.29\linewidth}
        \centering
        \includegraphics[width=\linewidth]{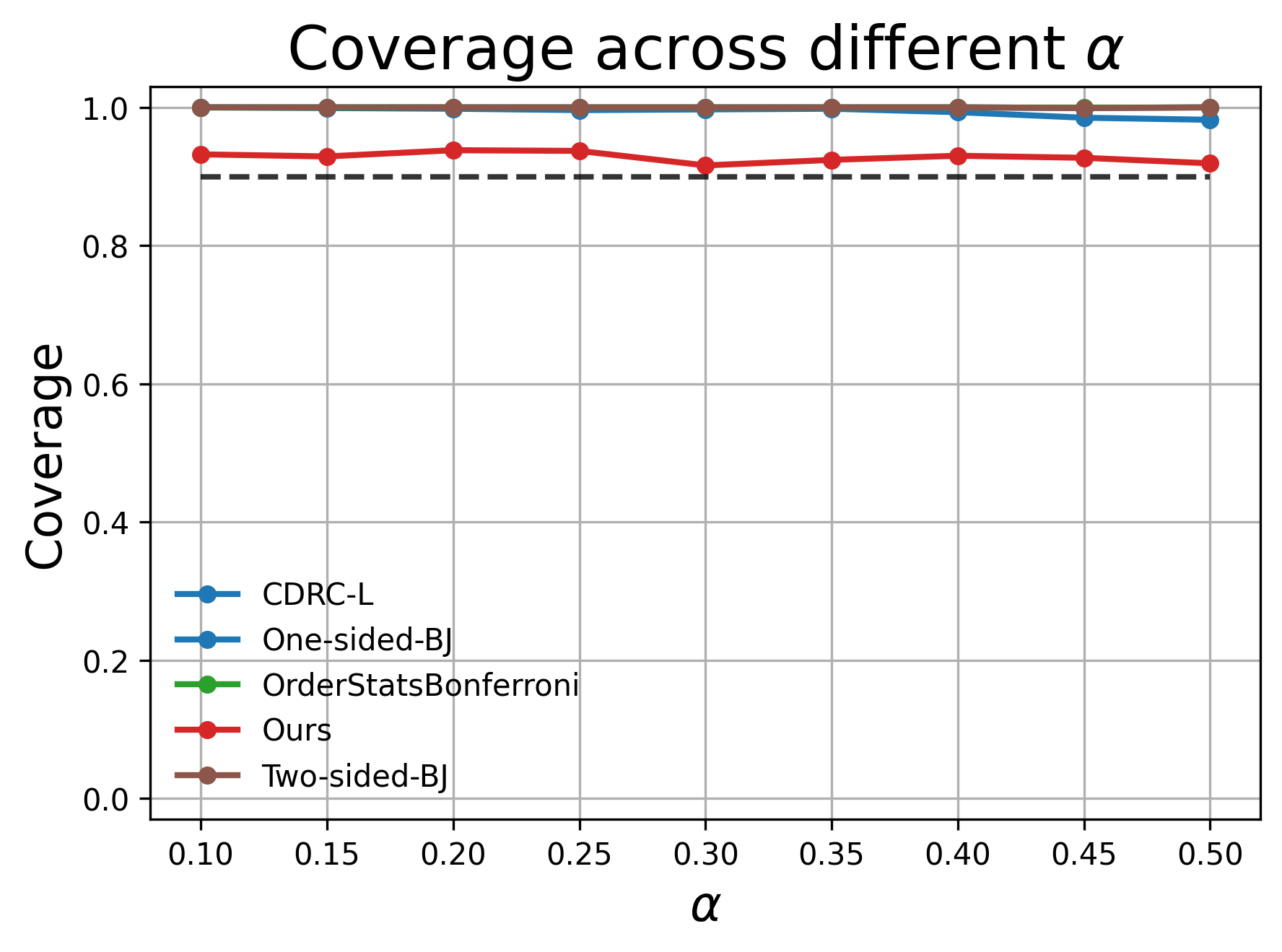}
    \end{minipage}\hfill
    \begin{minipage}{0.29\linewidth}
        \centering
        \includegraphics[width=\linewidth]{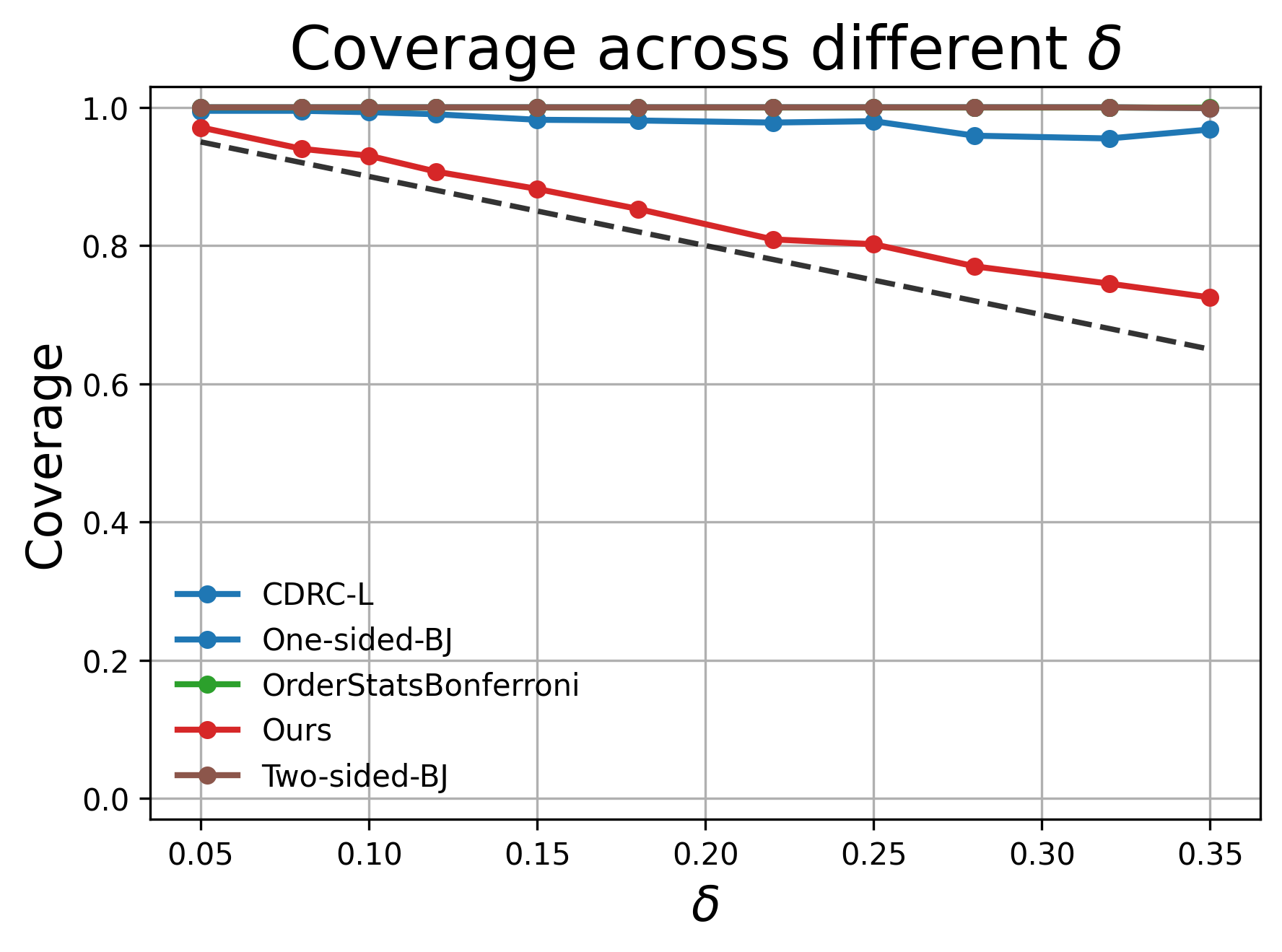}
    \end{minipage}\\
    \begin{minipage}{0.29\linewidth}
        \centering
        \includegraphics[width=\linewidth]{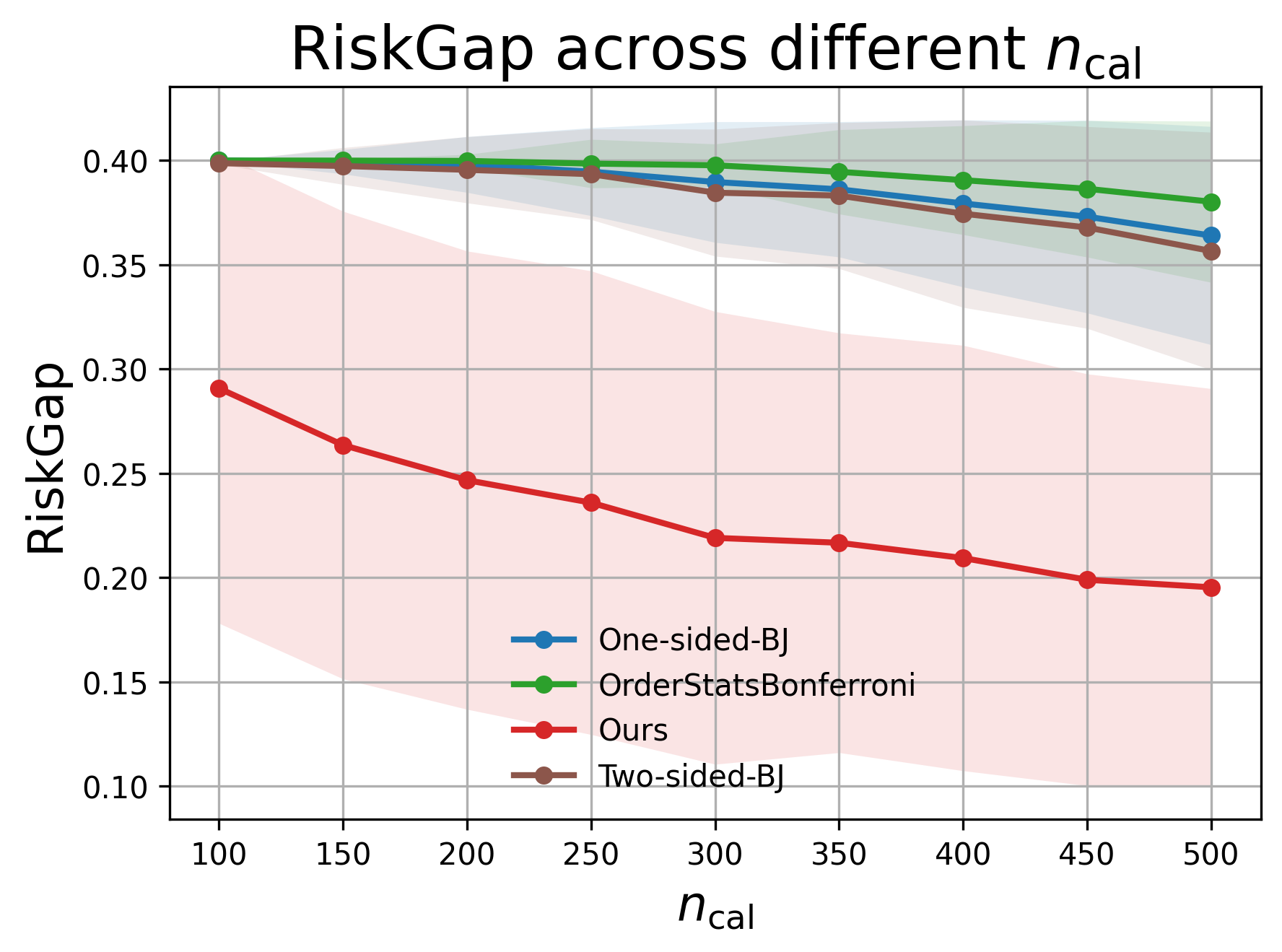}
    \end{minipage}\hfill
    \begin{minipage}{0.29\linewidth}
        \centering
        \includegraphics[width=\linewidth]{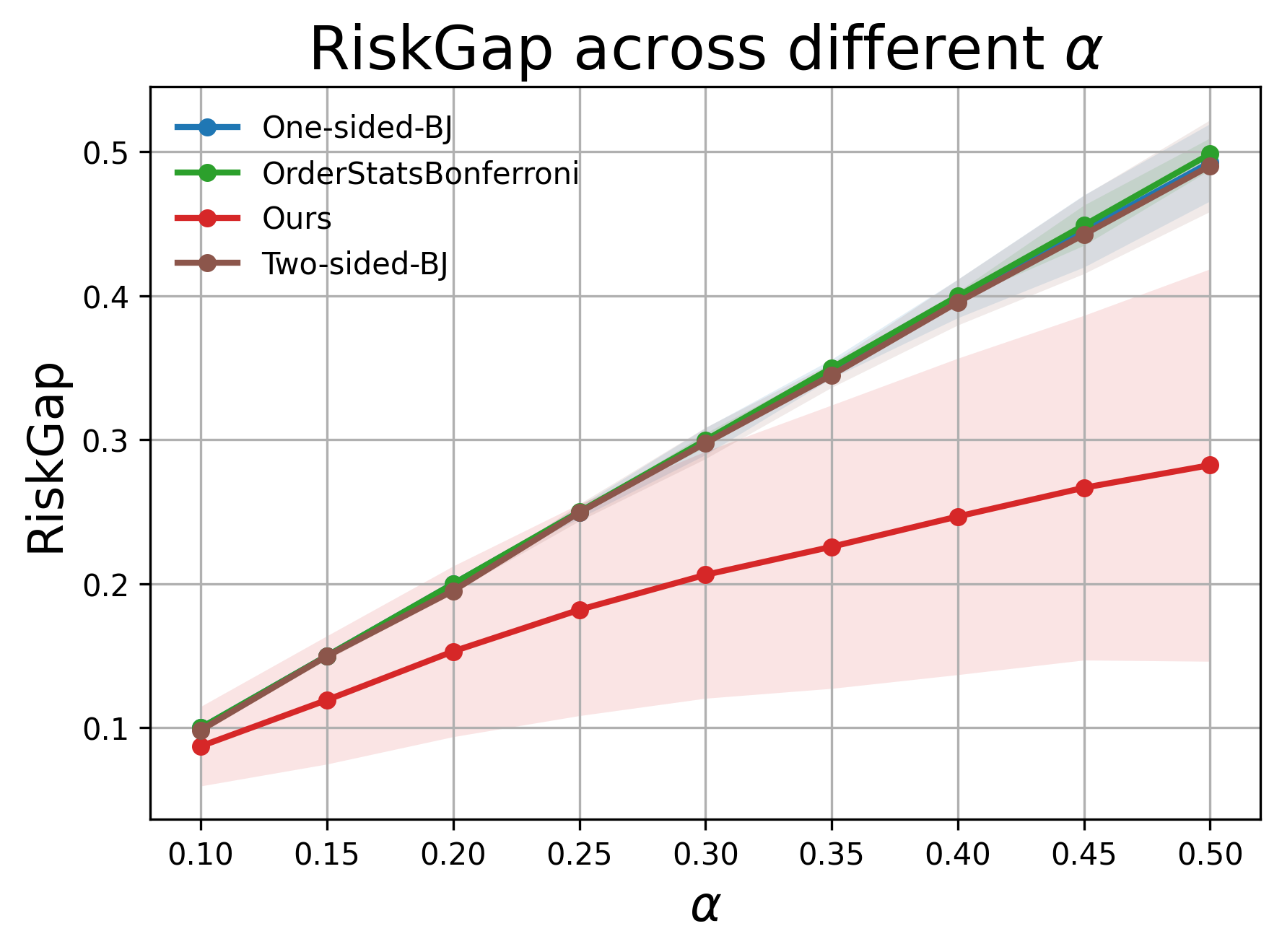}
    \end{minipage}\hfill
    \begin{minipage}{0.29\linewidth}
        \centering
        \includegraphics[width=\linewidth]{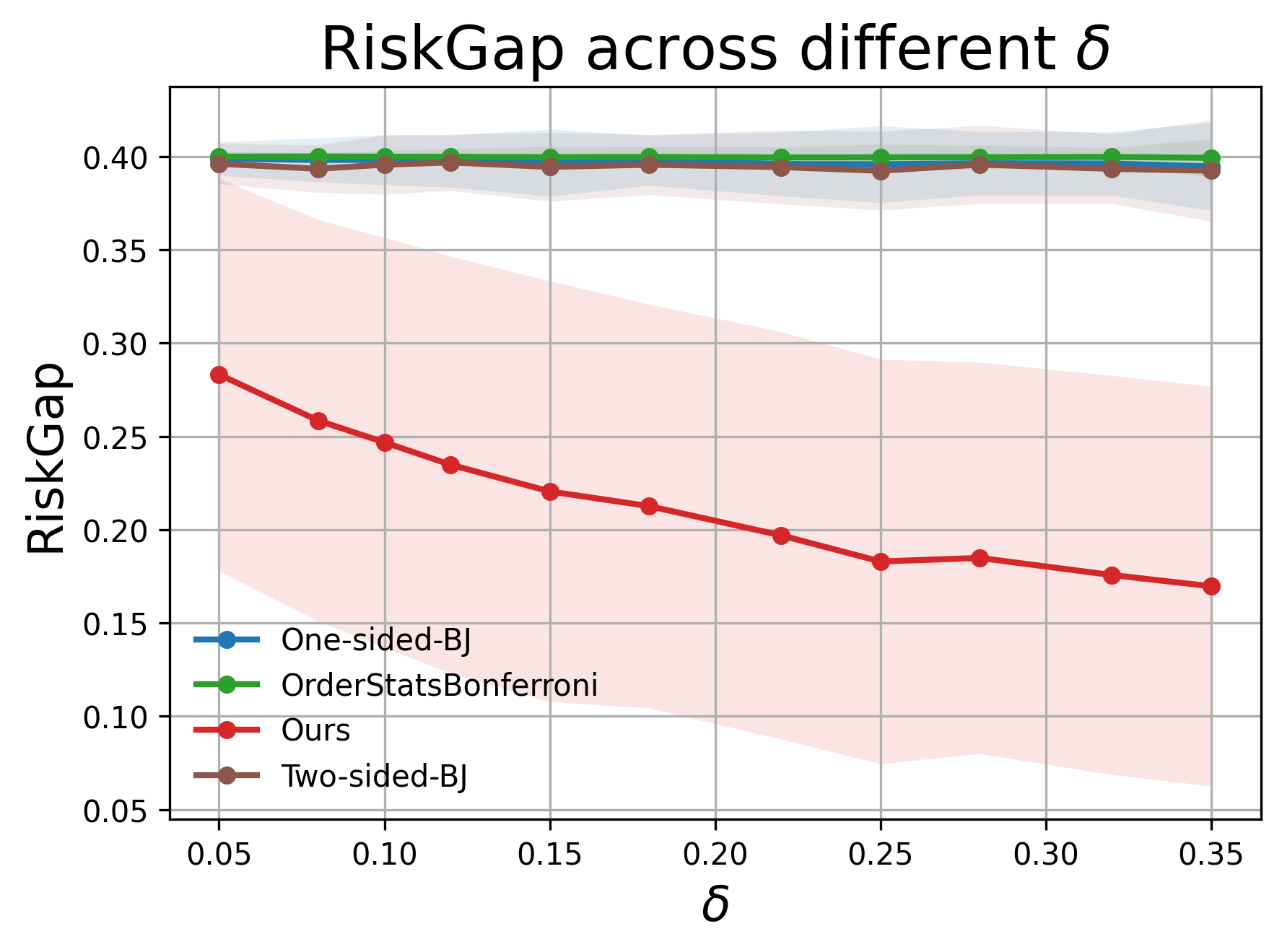}
    \end{minipage}\\
    \begin{minipage}{0.29\linewidth}
        \centering
        \includegraphics[width=\linewidth]{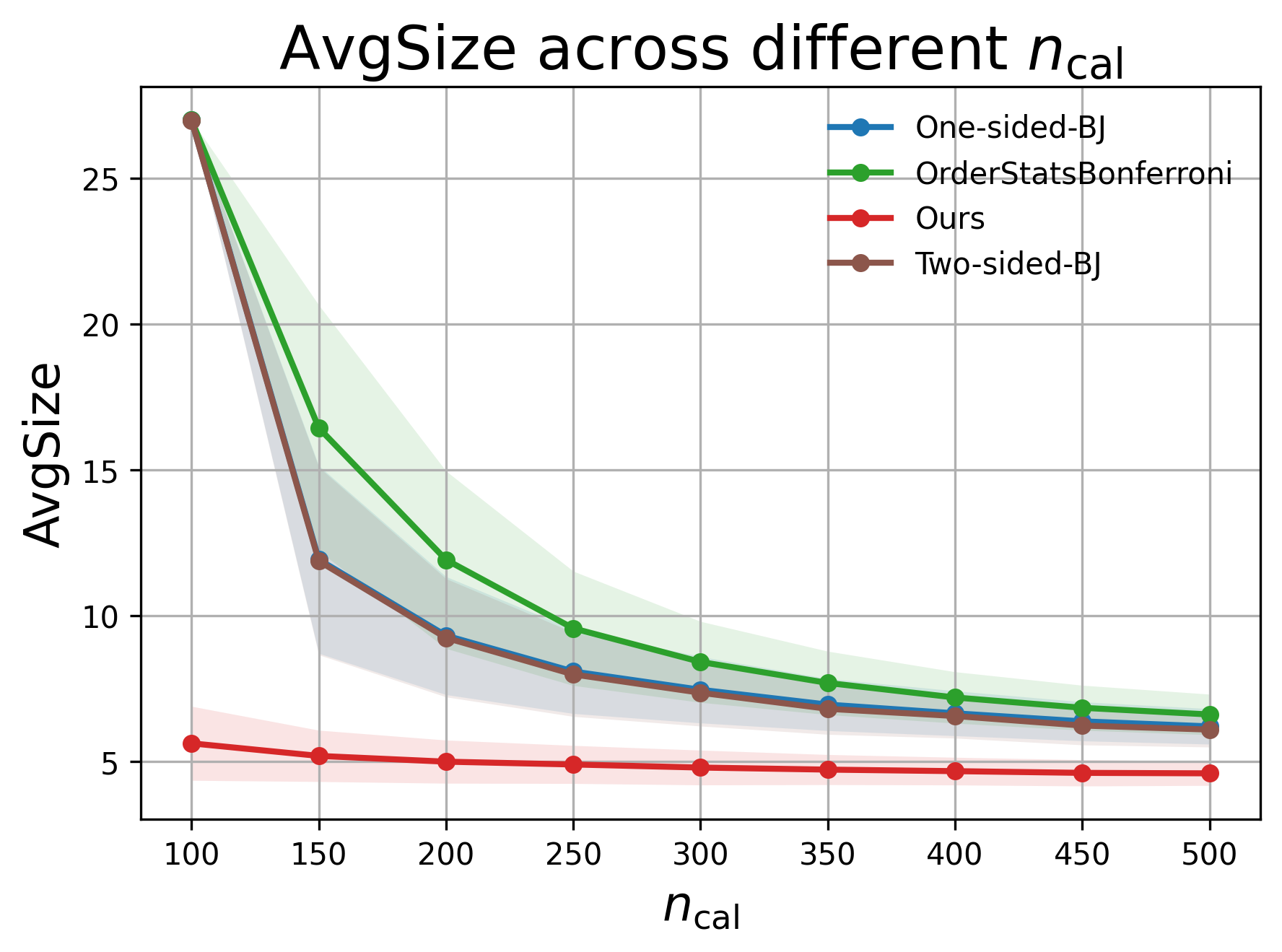}
    \end{minipage}\hfill
    \begin{minipage}{0.29\linewidth}
        \centering
        \includegraphics[width=\linewidth]{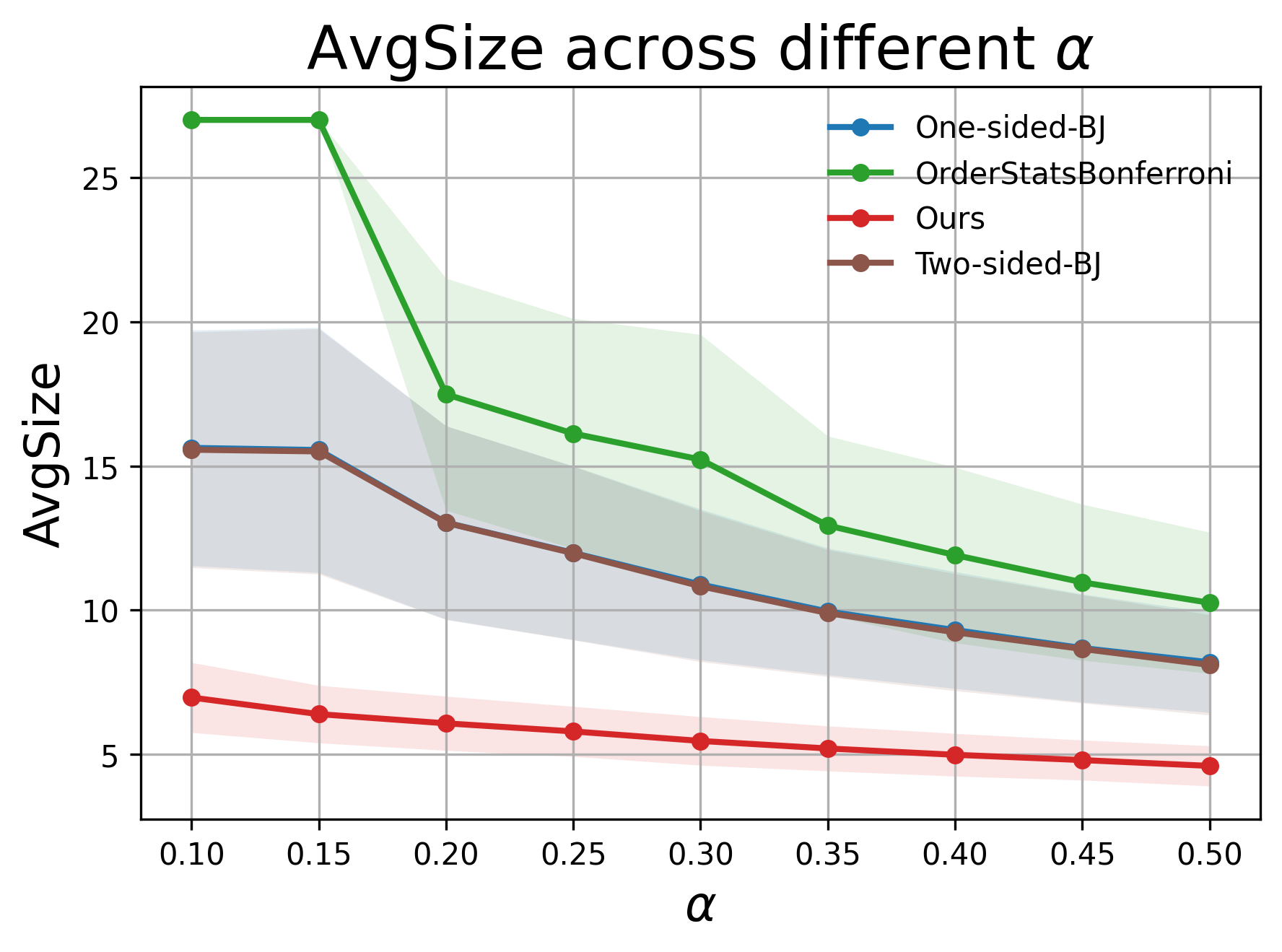}
    \end{minipage}\hfill
    \begin{minipage}{0.29\linewidth}
        \centering
        \includegraphics[width=\linewidth]{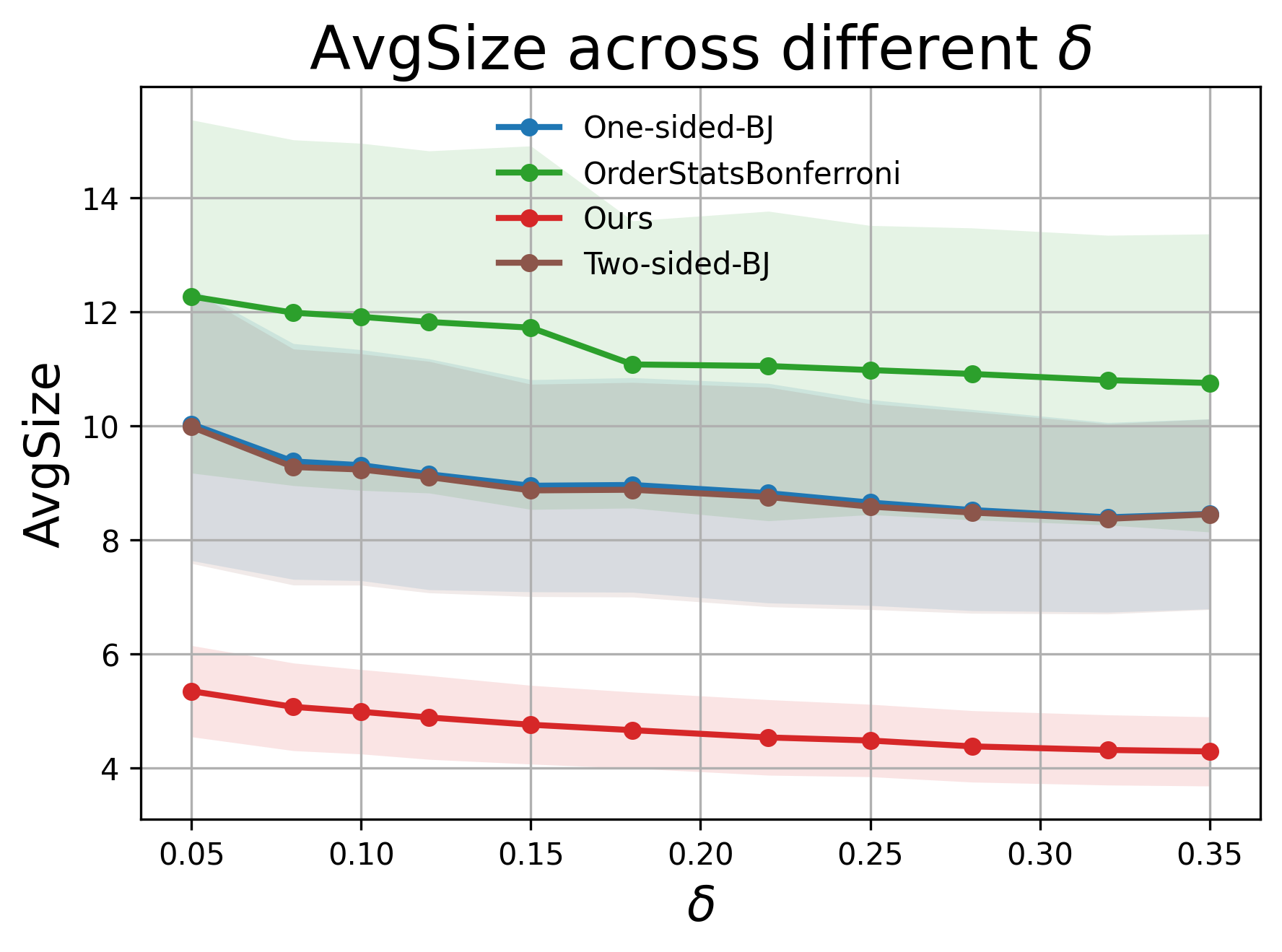}
    \end{minipage}
    \vspace{-3mm}
    \caption{\textbf{Performance comparison across various hyperparameter configurations.} The task is text emotion recognition, and the risk measure is $[0.85, 0.95]$-VaR-Interval.}
    \label{fig:parameter_analysis_go_0.95}
    \vspace{-5mm}
\end{figure}

\begin{figure}[htbp]
    \centering
    \begin{minipage}{0.29\linewidth}
        \centering
        \includegraphics[width=\linewidth]{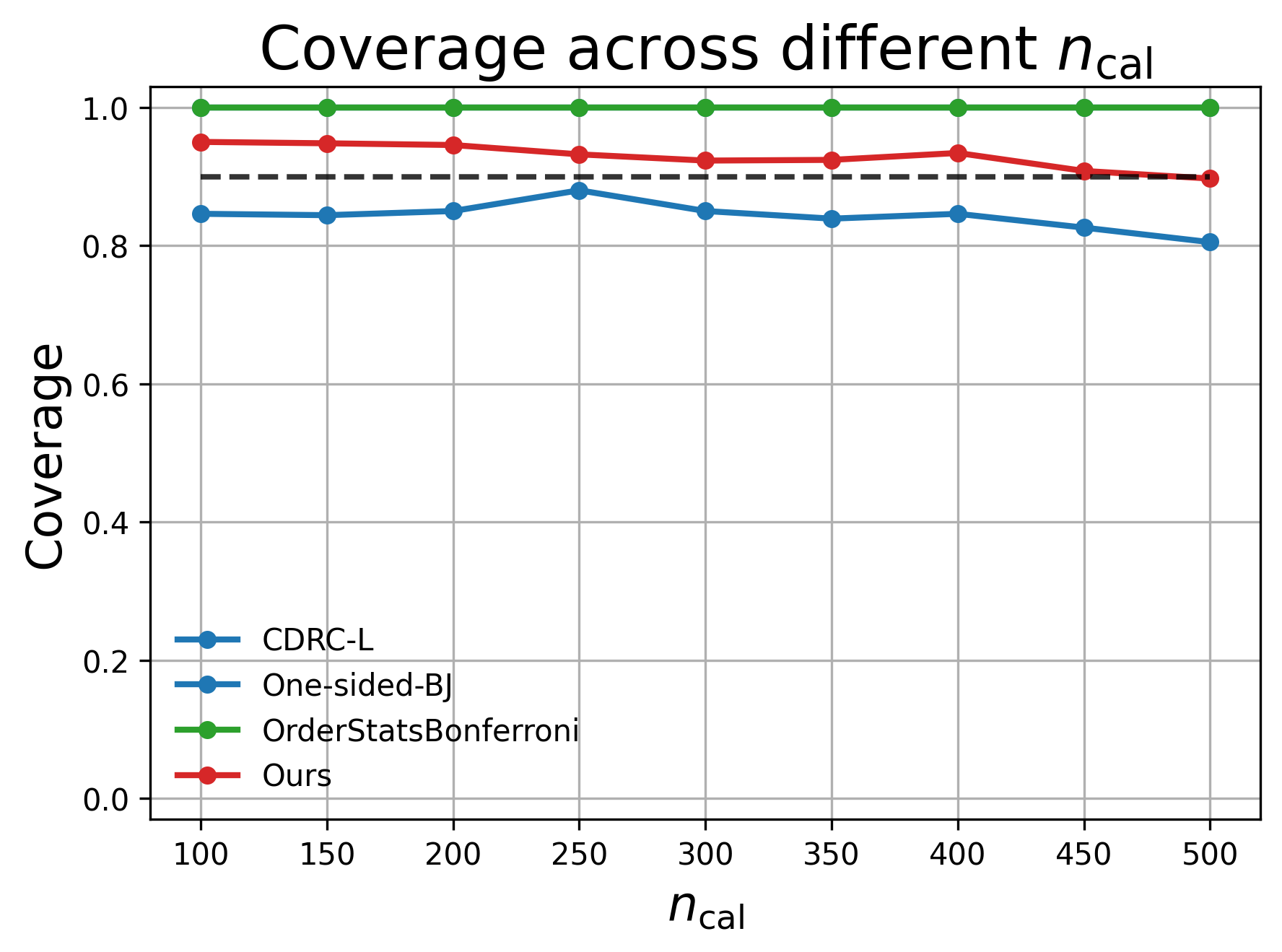}
    \end{minipage}\hfill
    \begin{minipage}{0.29\linewidth}
        \centering
        \includegraphics[width=\linewidth]{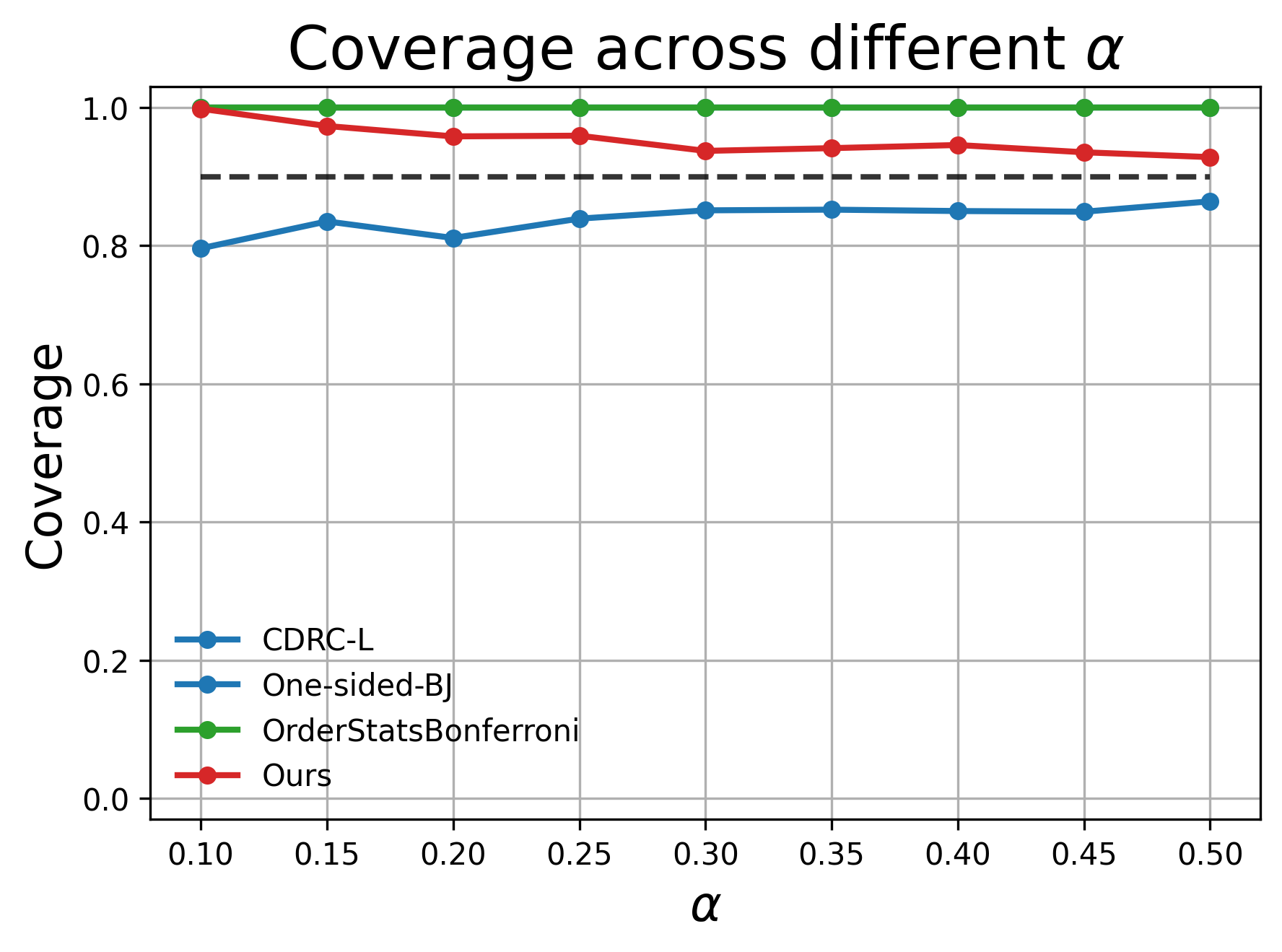}
    \end{minipage}\hfill
    \begin{minipage}{0.29\linewidth}
        \centering
        \includegraphics[width=\linewidth]{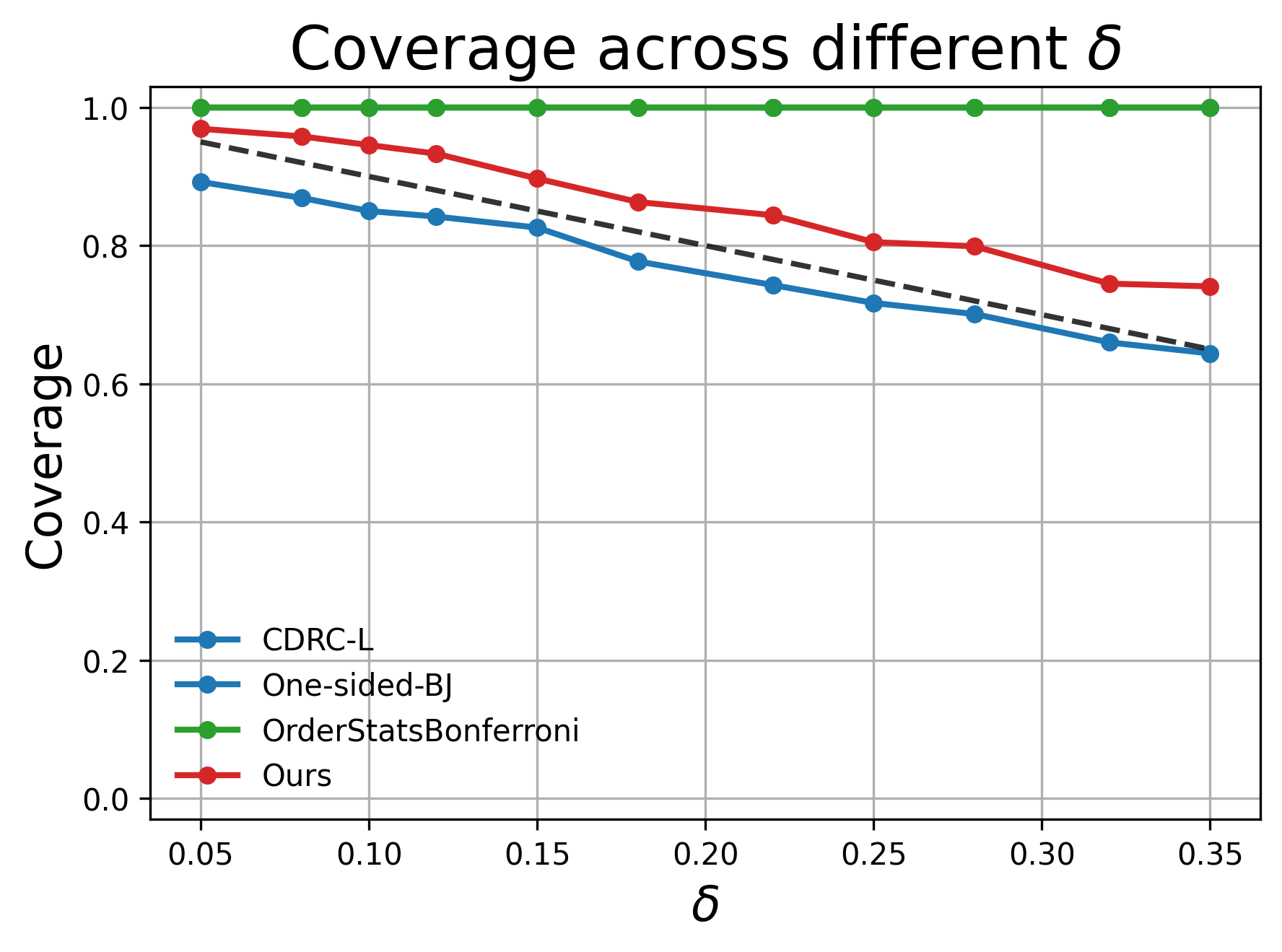}
    \end{minipage}\\
    \begin{minipage}{0.29\linewidth}
        \centering
        \includegraphics[width=\linewidth]{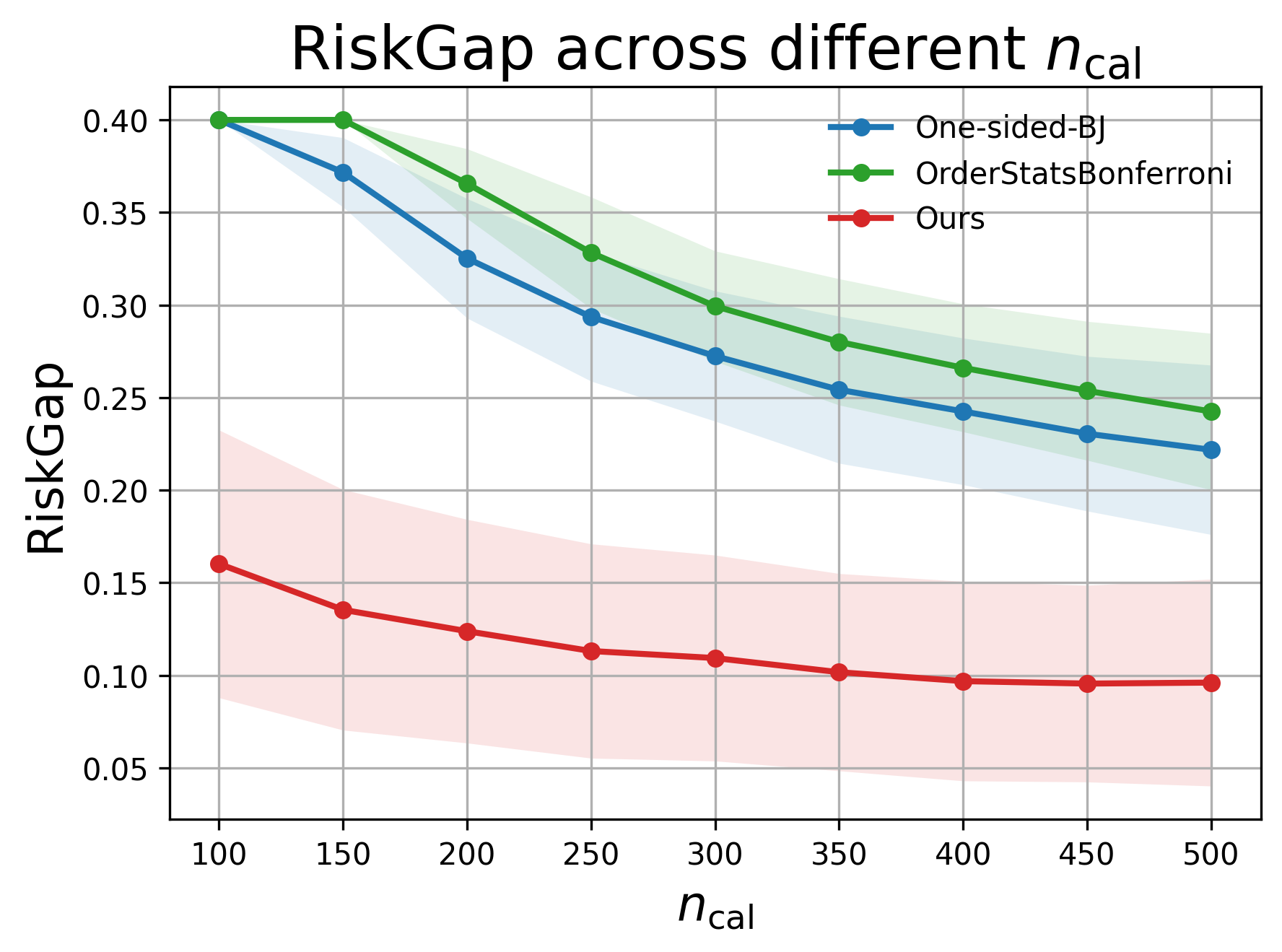}
    \end{minipage}\hfill
    \begin{minipage}{0.29\linewidth}
        \centering
        \includegraphics[width=\linewidth]{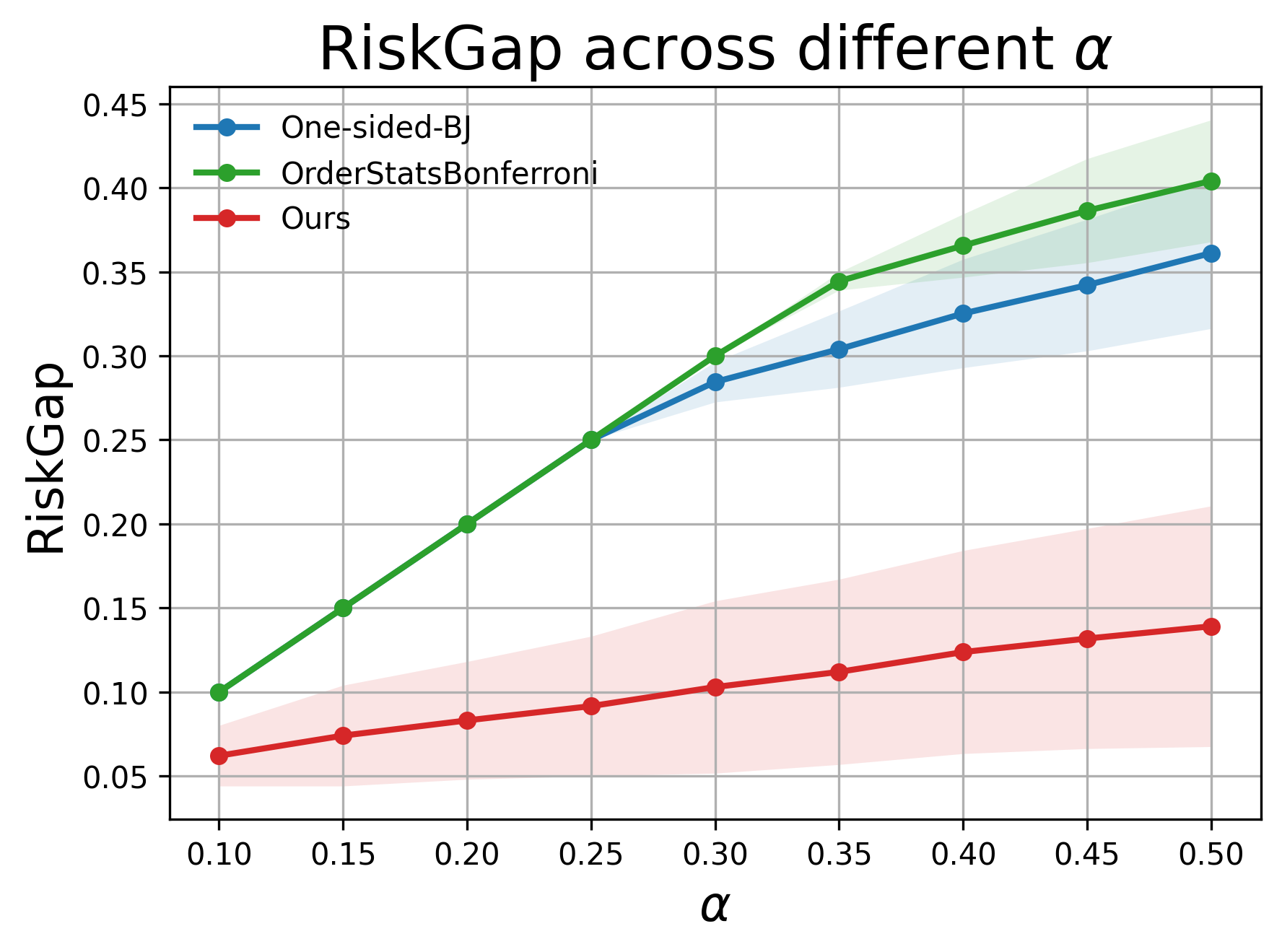}
    \end{minipage}\hfill
    \begin{minipage}{0.29\linewidth}
        \centering
        \includegraphics[width=\linewidth]{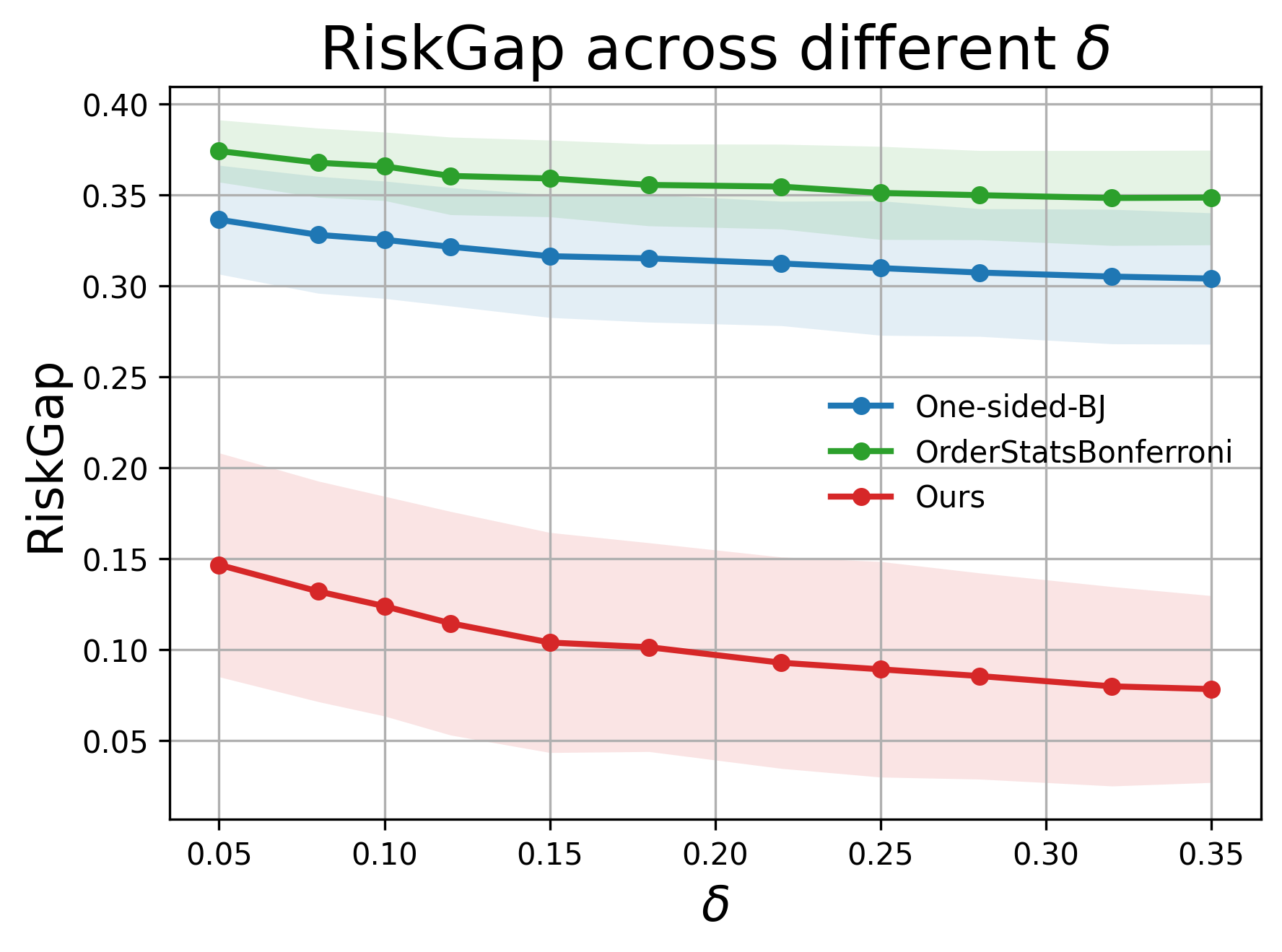}
    \end{minipage}\\
    \begin{minipage}{0.29\linewidth}
        \centering
        \includegraphics[width=\linewidth]{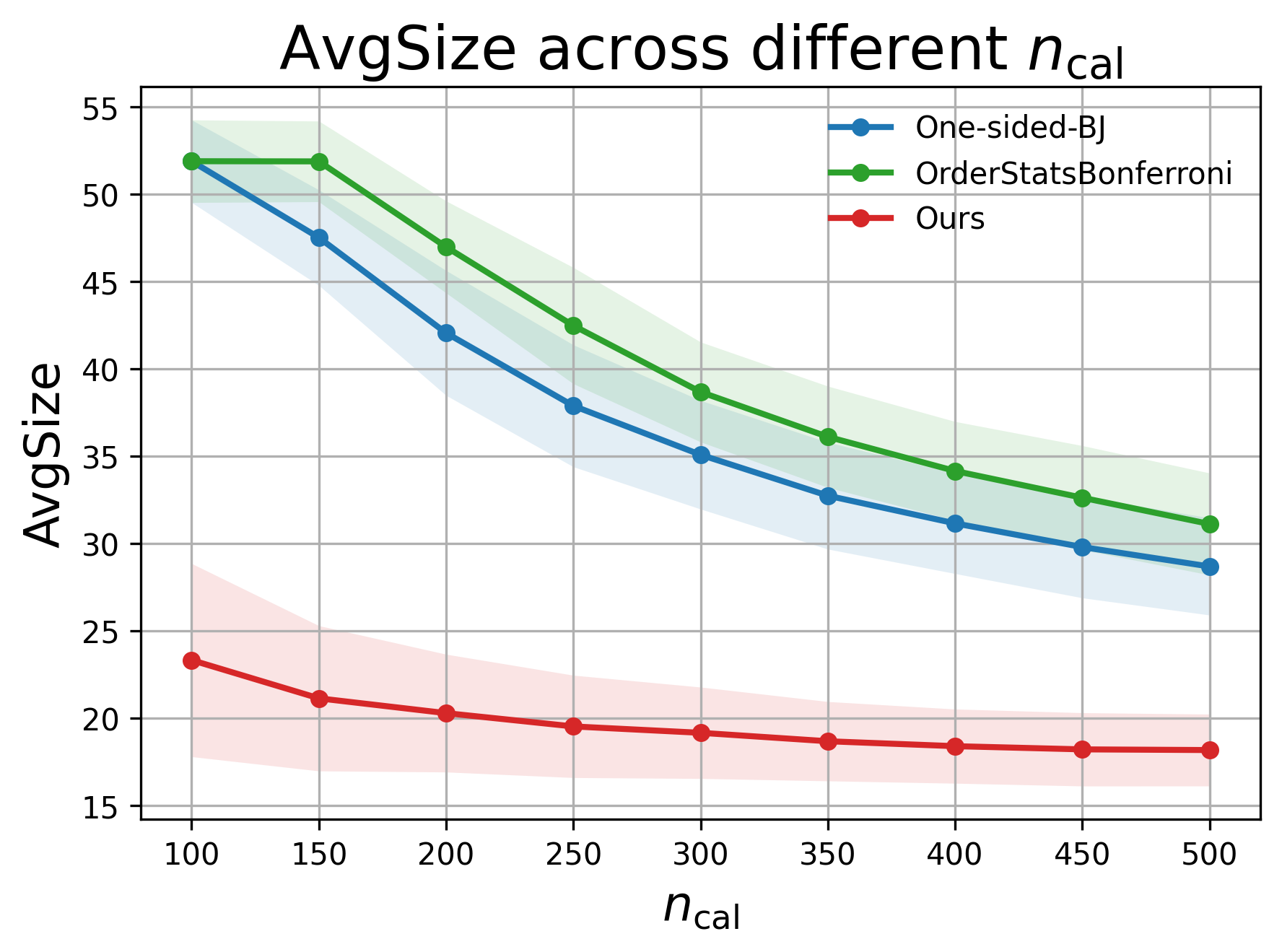}
    \end{minipage}\hfill
    \begin{minipage}{0.29\linewidth}
        \centering
        \includegraphics[width=\linewidth]{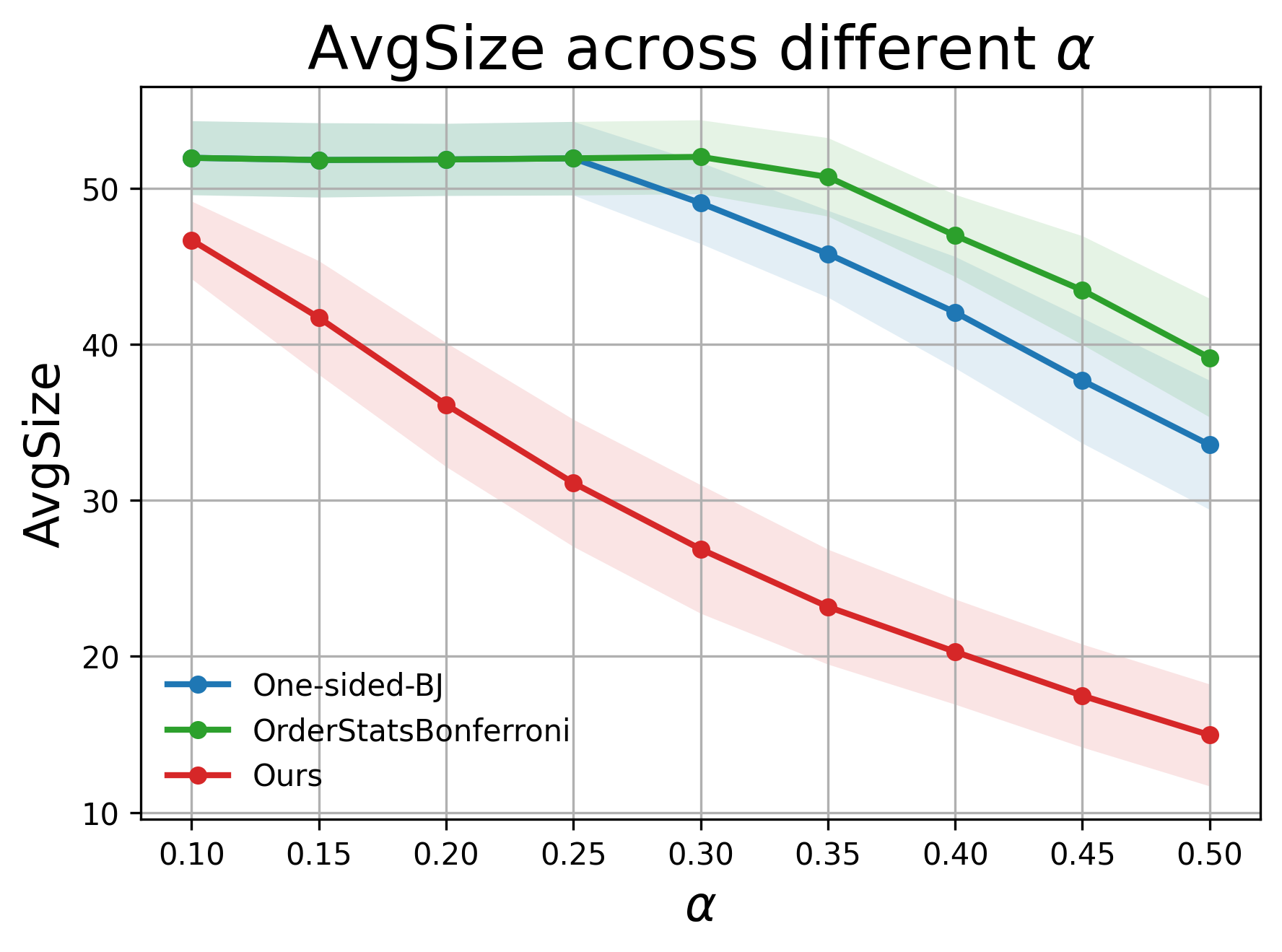}
    \end{minipage}\hfill
    \begin{minipage}{0.29\linewidth}
        \centering
        \includegraphics[width=\linewidth]{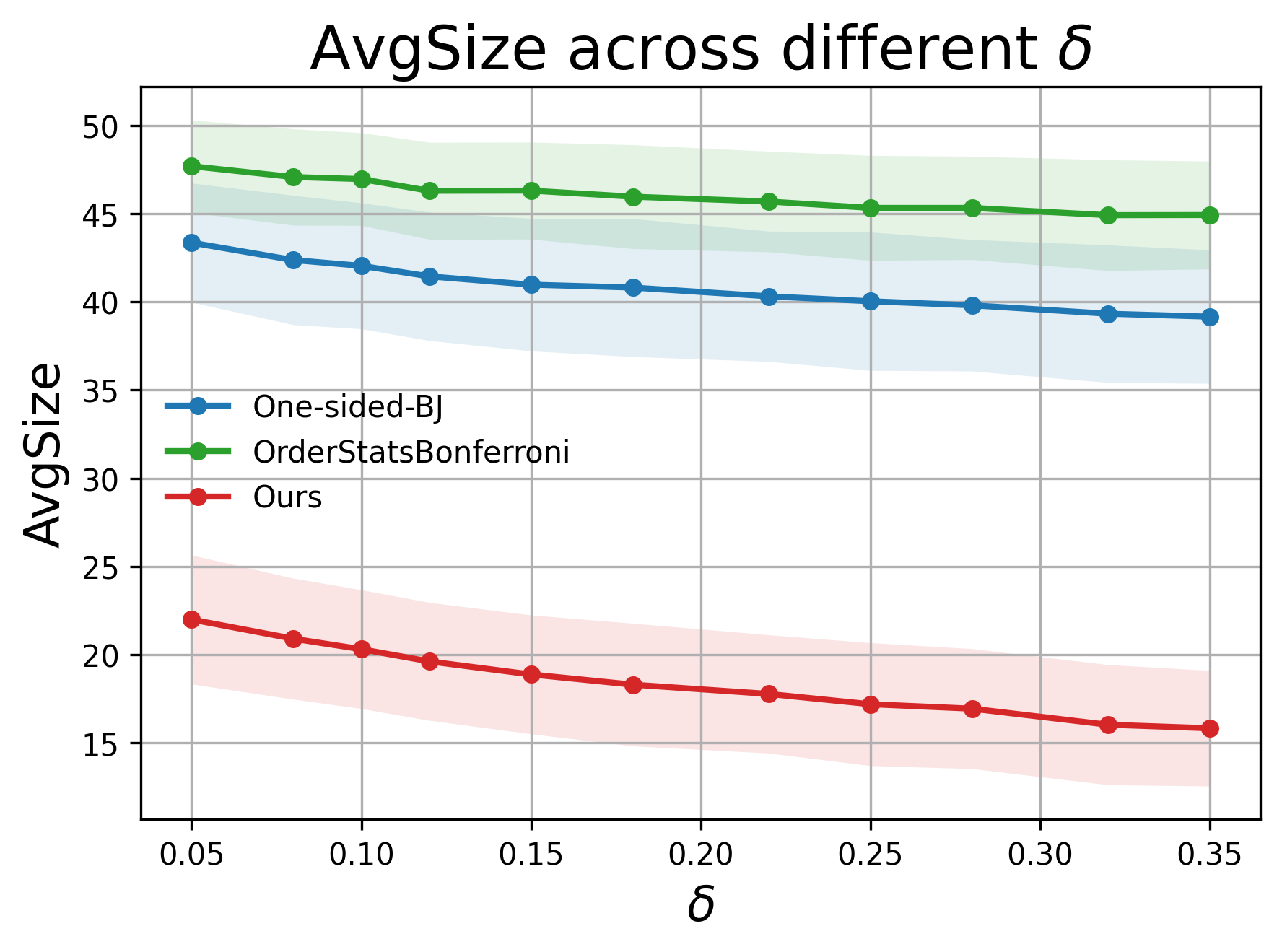}
    \end{minipage}
    \vspace{-3mm}
    \caption{\textbf{Performance comparison across various hyperparameter configurations.} The task is tumor segmentation, and the risk measure is $0.9$-CVaR.}
    \label{fig:parameter_analysis_polyp_1}
    \vspace{-5mm}
\end{figure}

\begin{figure}[htbp]
    \centering
    \begin{minipage}{0.29\linewidth}
        \centering
        \includegraphics[width=\linewidth]{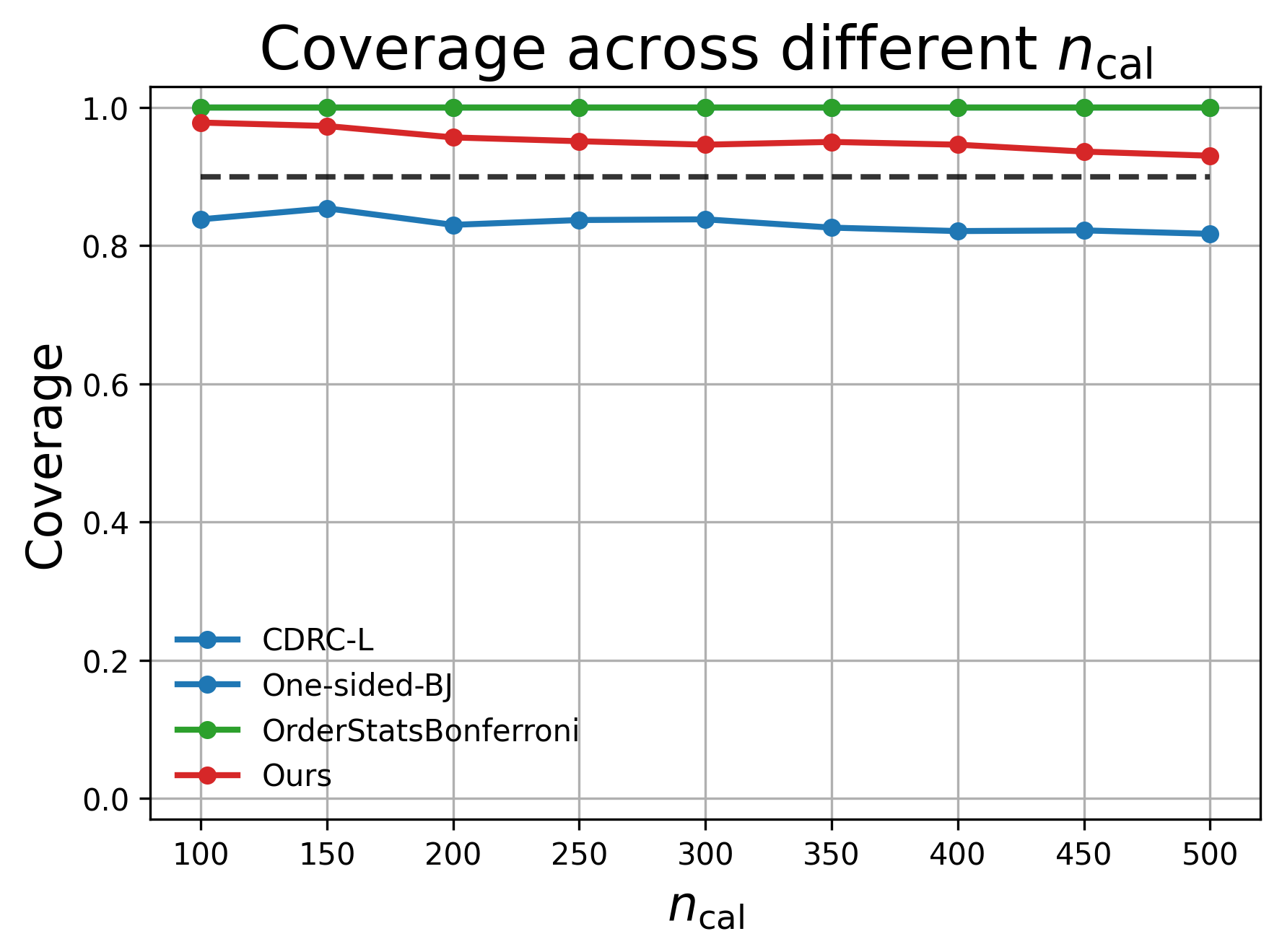}
    \end{minipage}\hfill
    \begin{minipage}{0.29\linewidth}
        \centering
        \includegraphics[width=\linewidth]{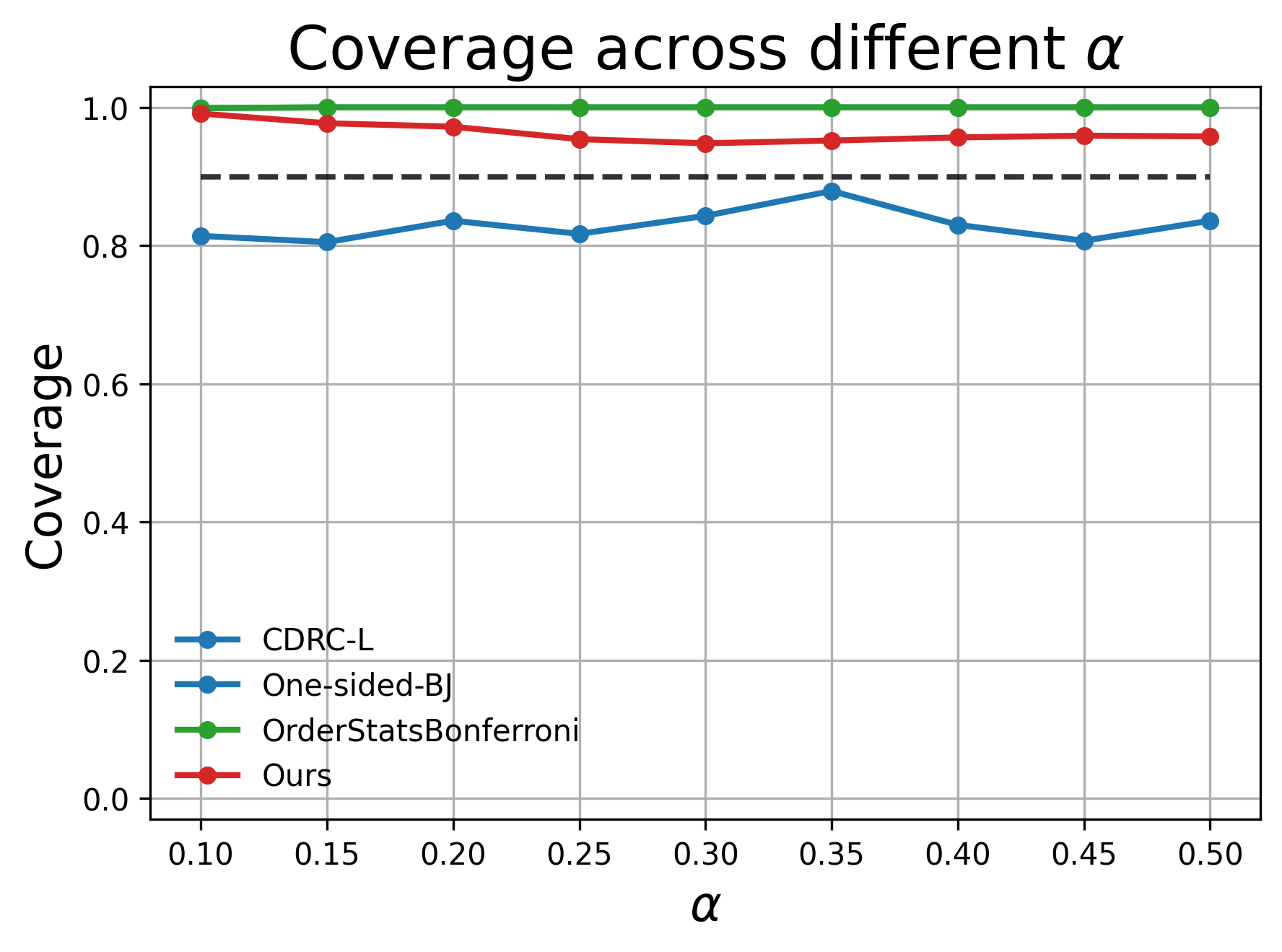}
    \end{minipage}\hfill
    \begin{minipage}{0.29\linewidth}
        \centering
        \includegraphics[width=\linewidth]{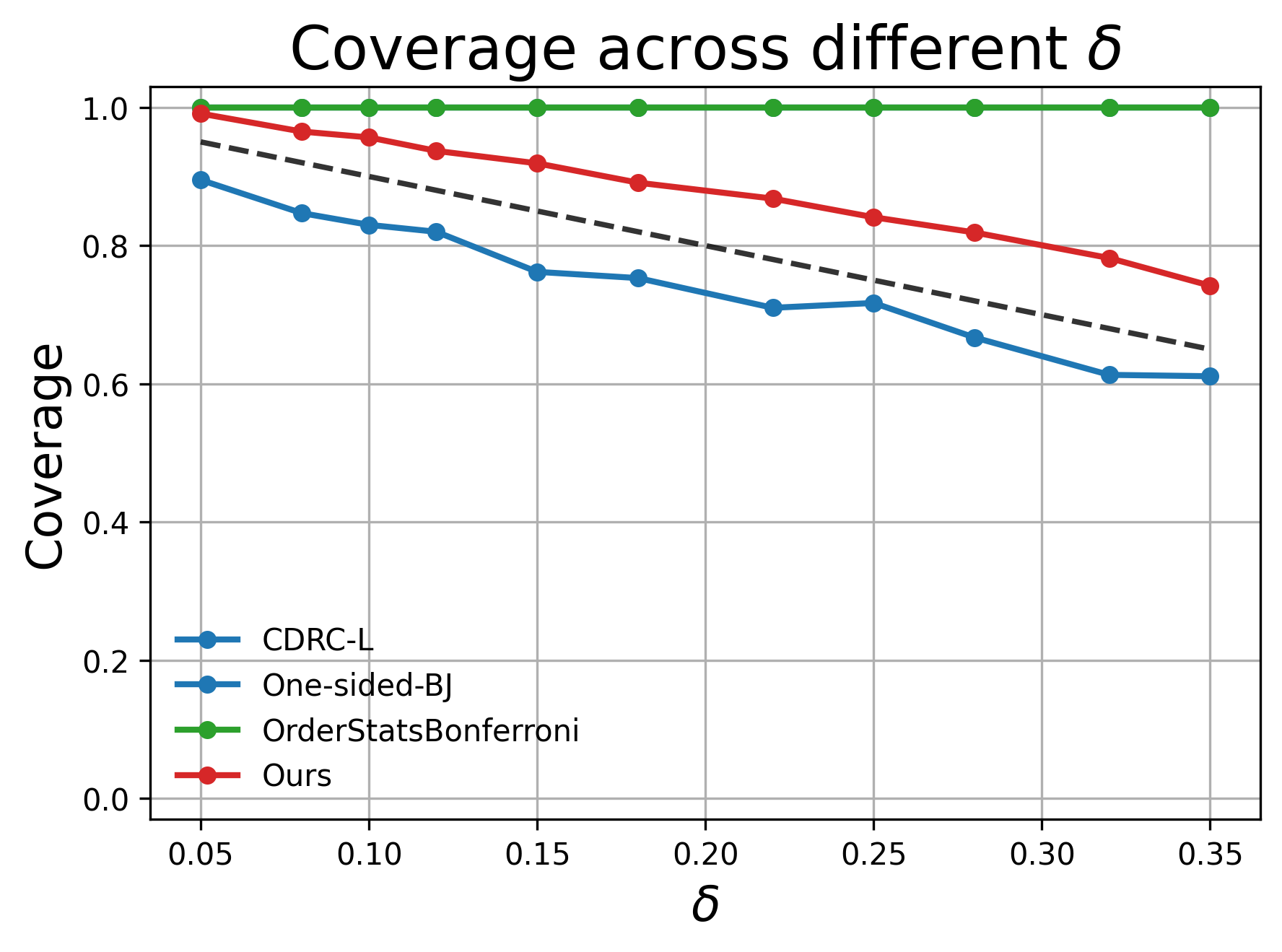}
    \end{minipage}\\
    \begin{minipage}{0.29\linewidth}
        \centering
        \includegraphics[width=\linewidth]{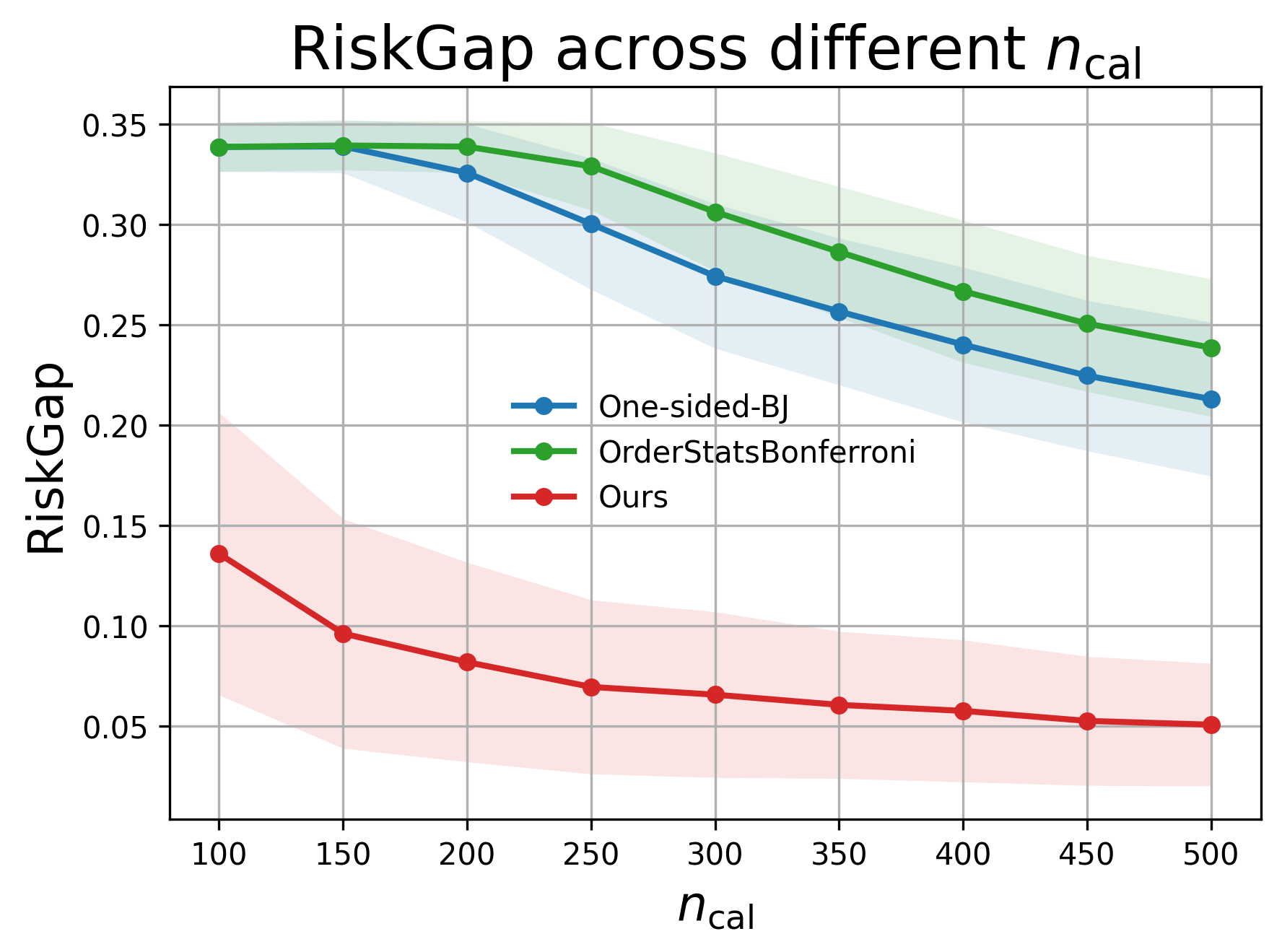}
    \end{minipage}\hfill
    \begin{minipage}{0.29\linewidth}
        \centering
        \includegraphics[width=\linewidth]{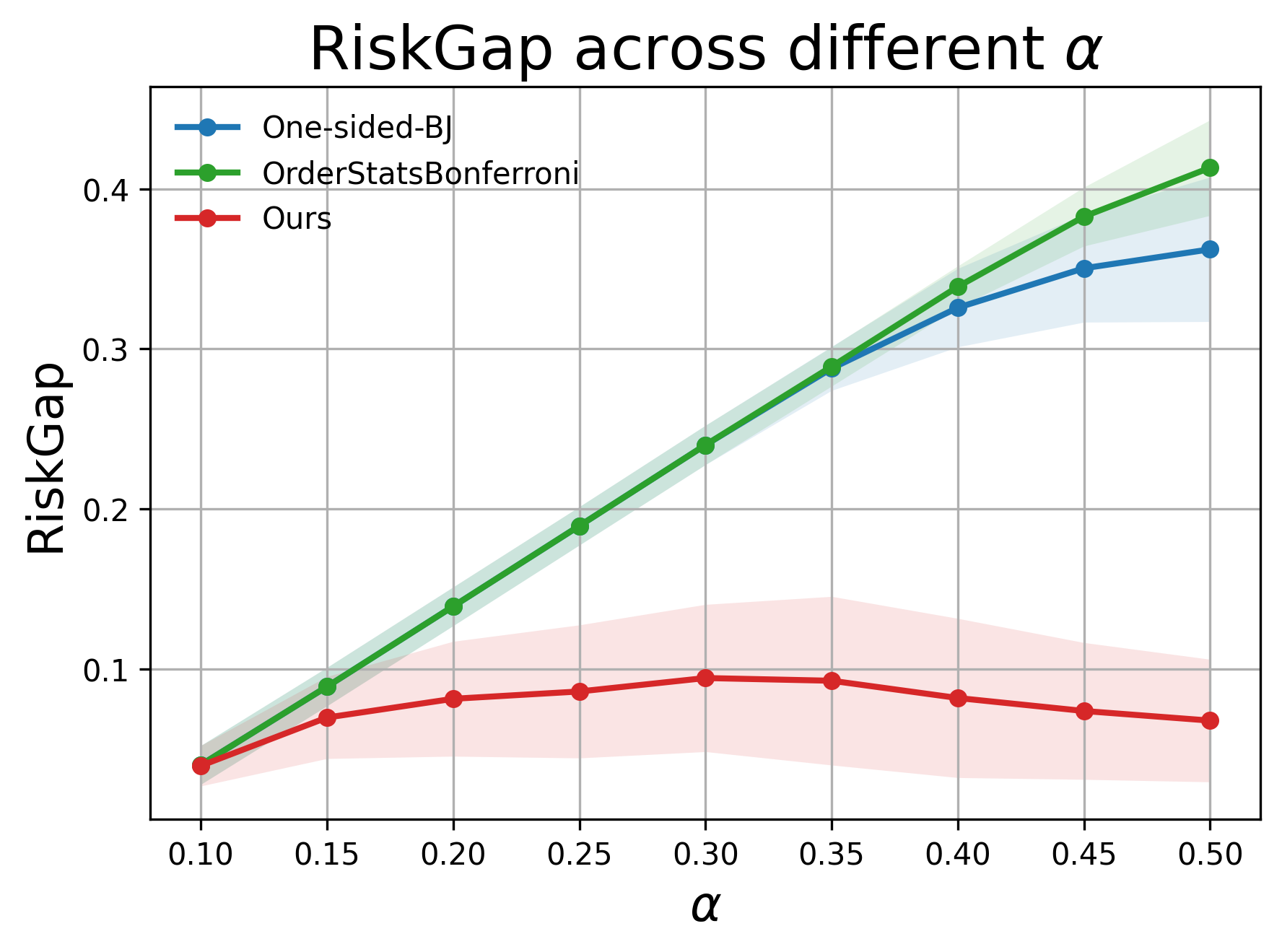}
    \end{minipage}\hfill
    \begin{minipage}{0.29\linewidth}
        \centering
        \includegraphics[width=\linewidth]{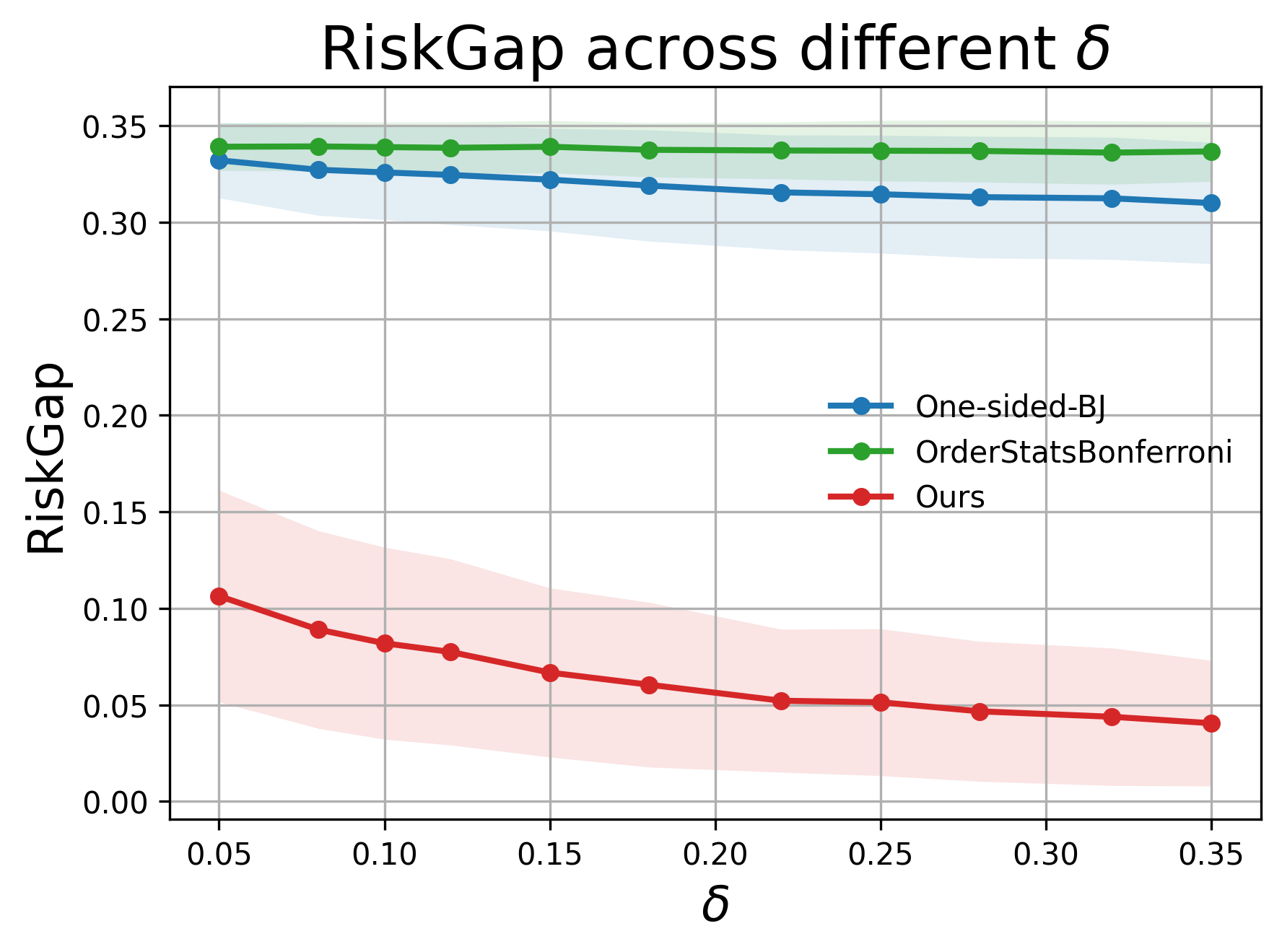}
    \end{minipage}\\
    \begin{minipage}{0.29\linewidth}
        \centering
        \includegraphics[width=\linewidth]{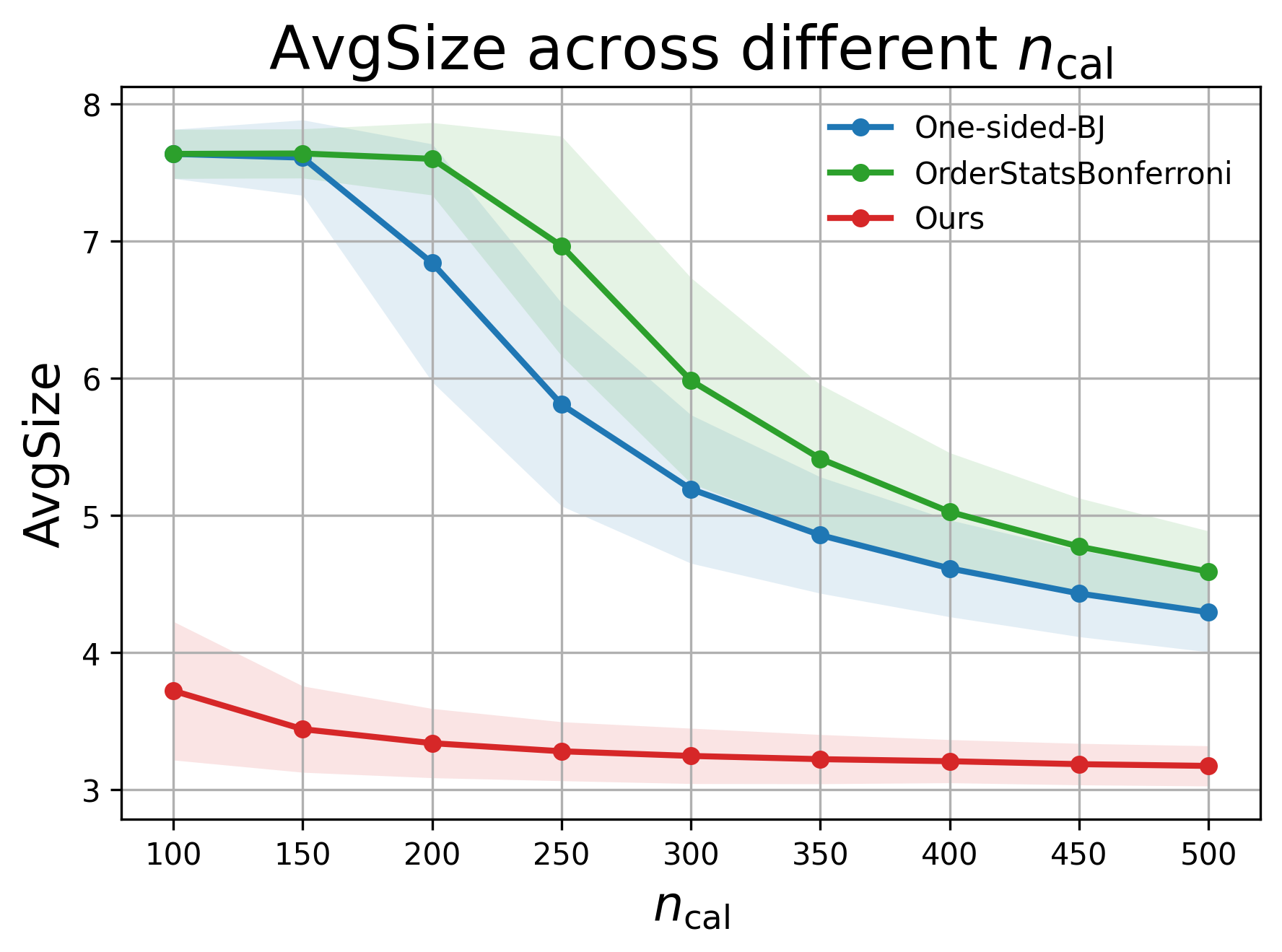}
    \end{minipage}\hfill
    \begin{minipage}{0.29\linewidth}
        \centering
        \includegraphics[width=\linewidth]{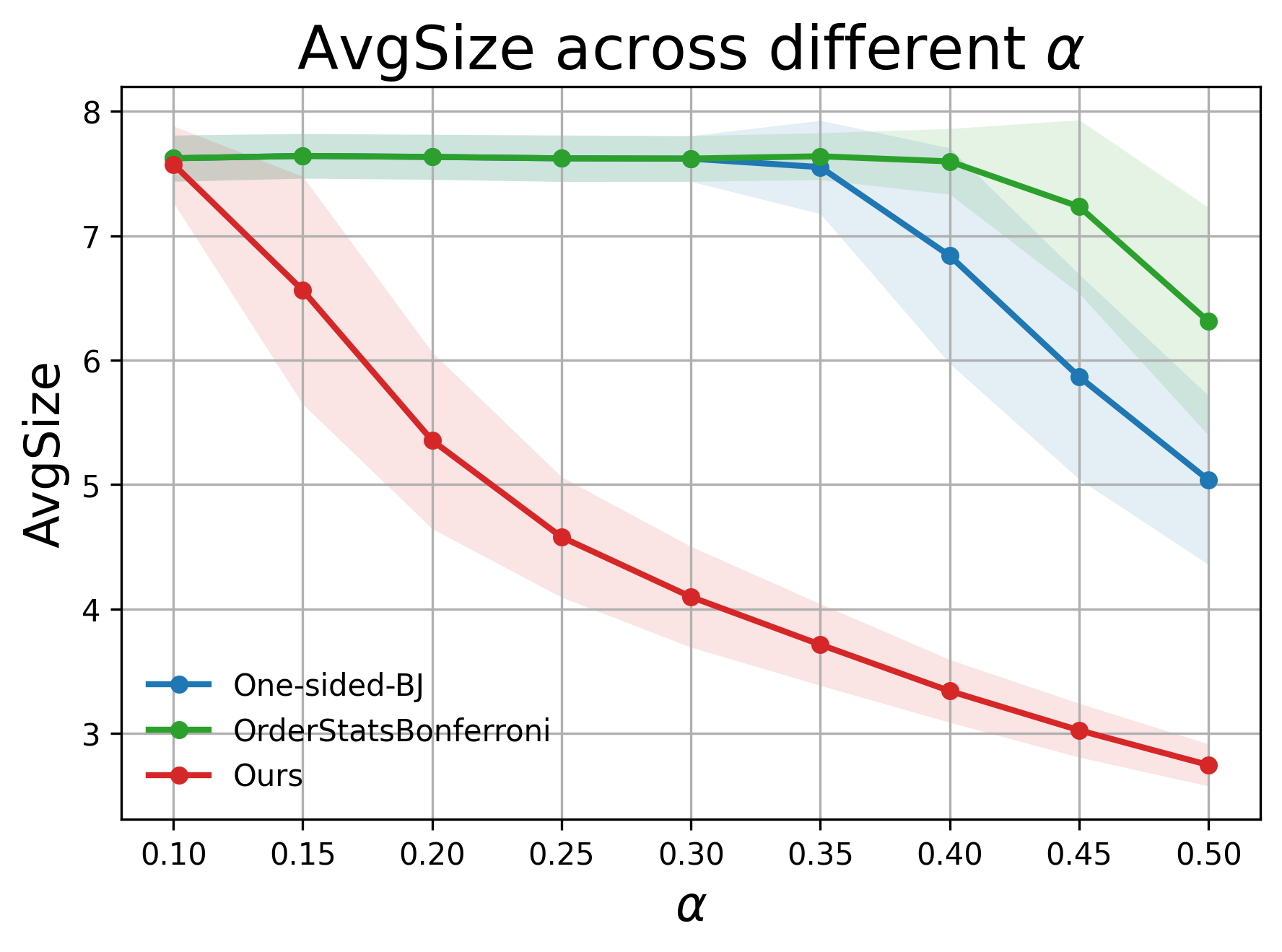}
    \end{minipage}\hfill
    \begin{minipage}{0.29\linewidth}
        \centering
        \includegraphics[width=\linewidth]{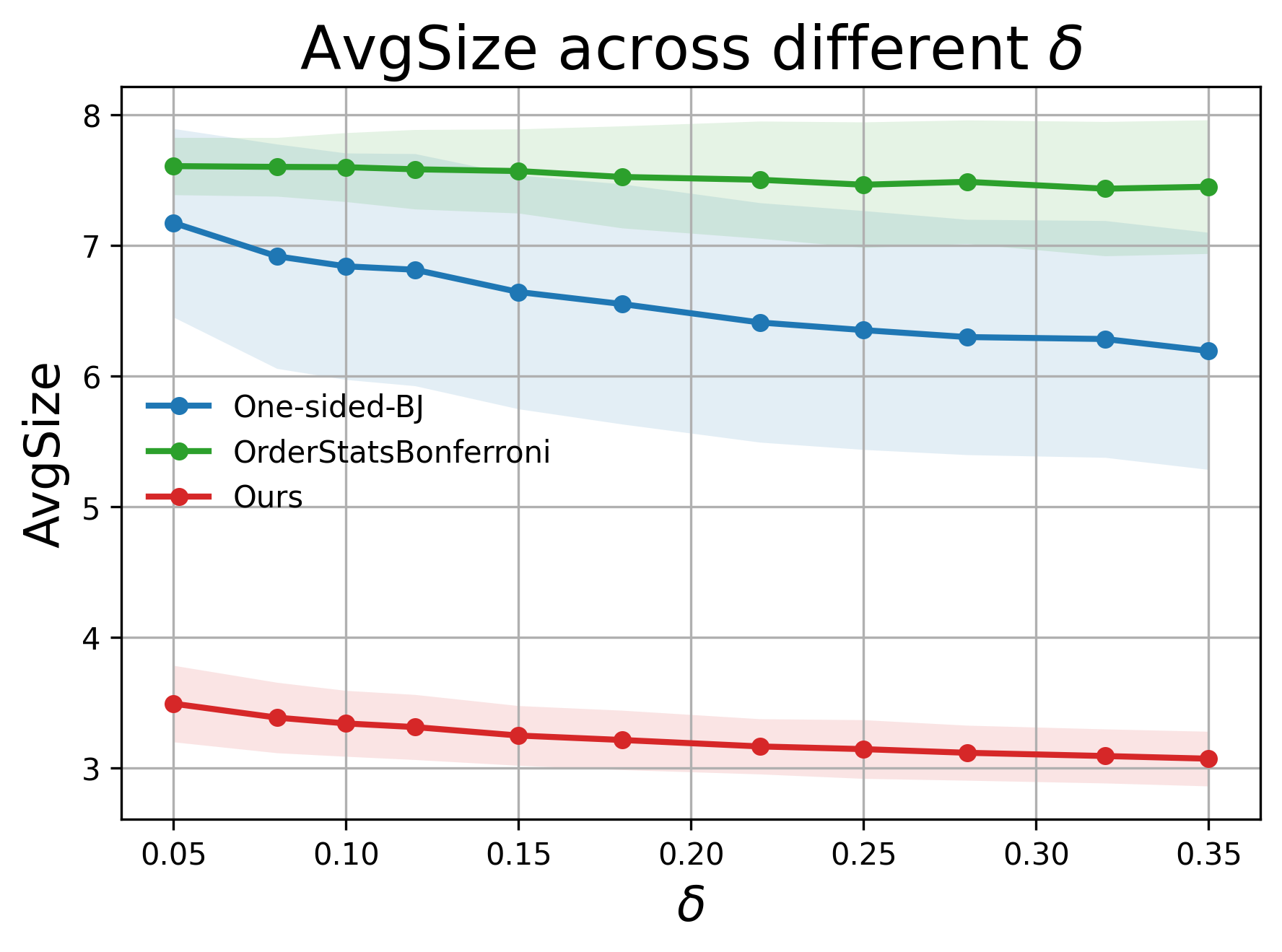}
    \end{minipage}
    \vspace{-3mm}
    \caption{\textbf{Performance comparison across various hyperparameter configurations.} The task is image multi-label classification, and the risk measure is $0.9$-CVaR.}
    \label{fig:parameter_analysis_coco_1}
    \vspace{-5mm}
\end{figure}

\begin{figure}[htbp]
    \centering
    \begin{minipage}{0.29\linewidth}
        \centering
        \includegraphics[width=\linewidth]{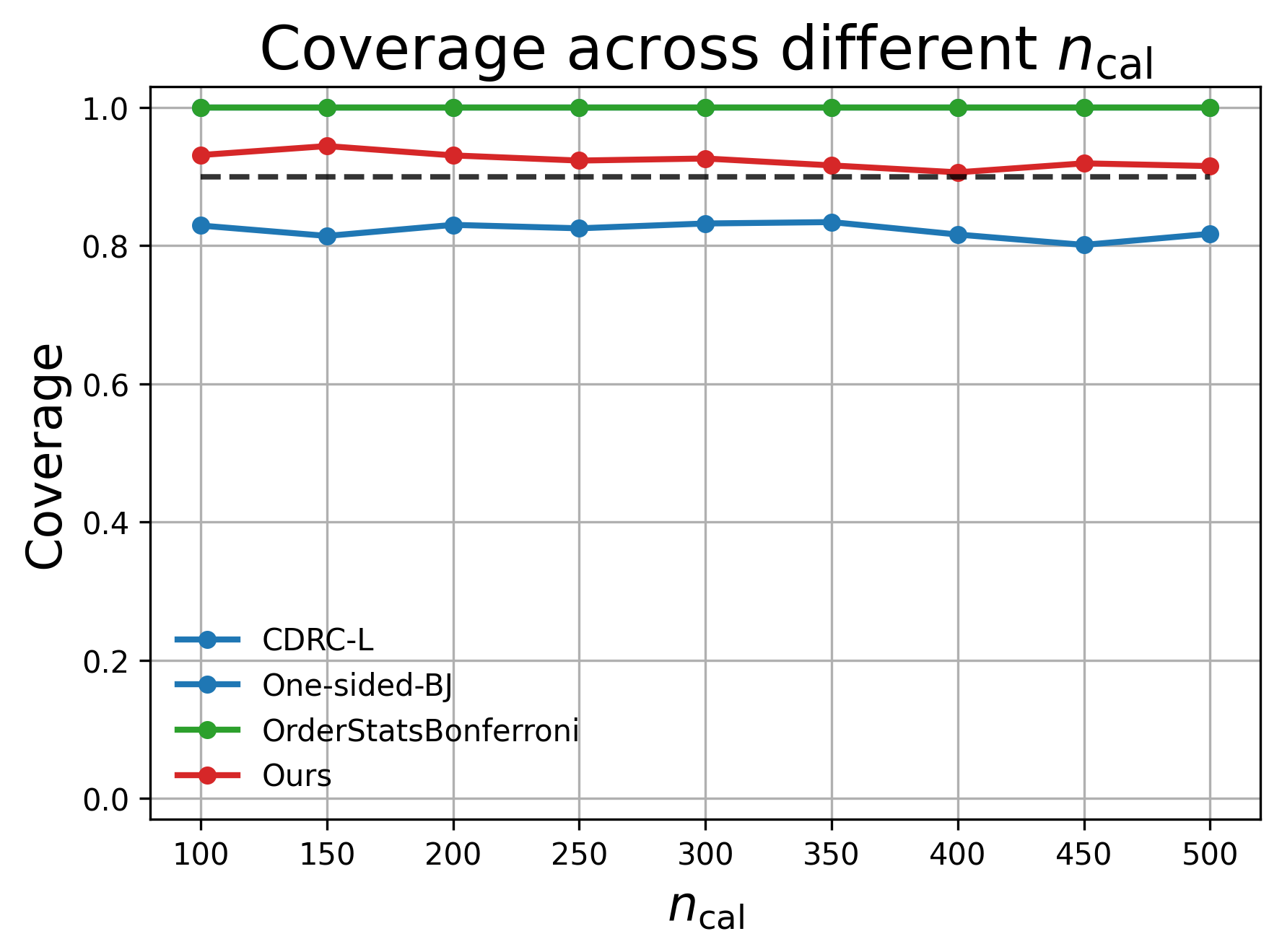}
    \end{minipage}\hfill
    \begin{minipage}{0.29\linewidth}
        \centering
        \includegraphics[width=\linewidth]{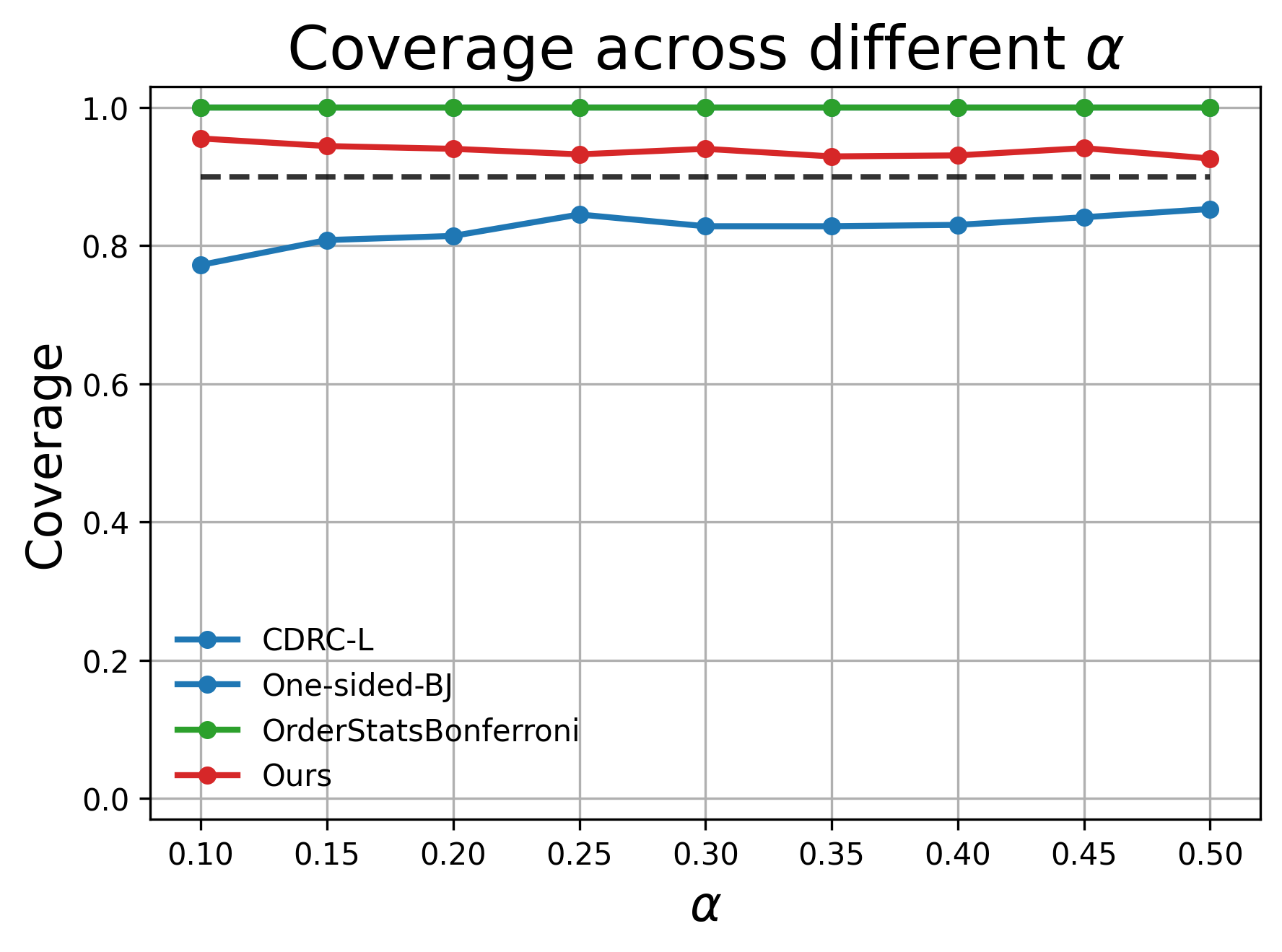}
    \end{minipage}\hfill
    \begin{minipage}{0.29\linewidth}
        \centering
        \includegraphics[width=\linewidth]{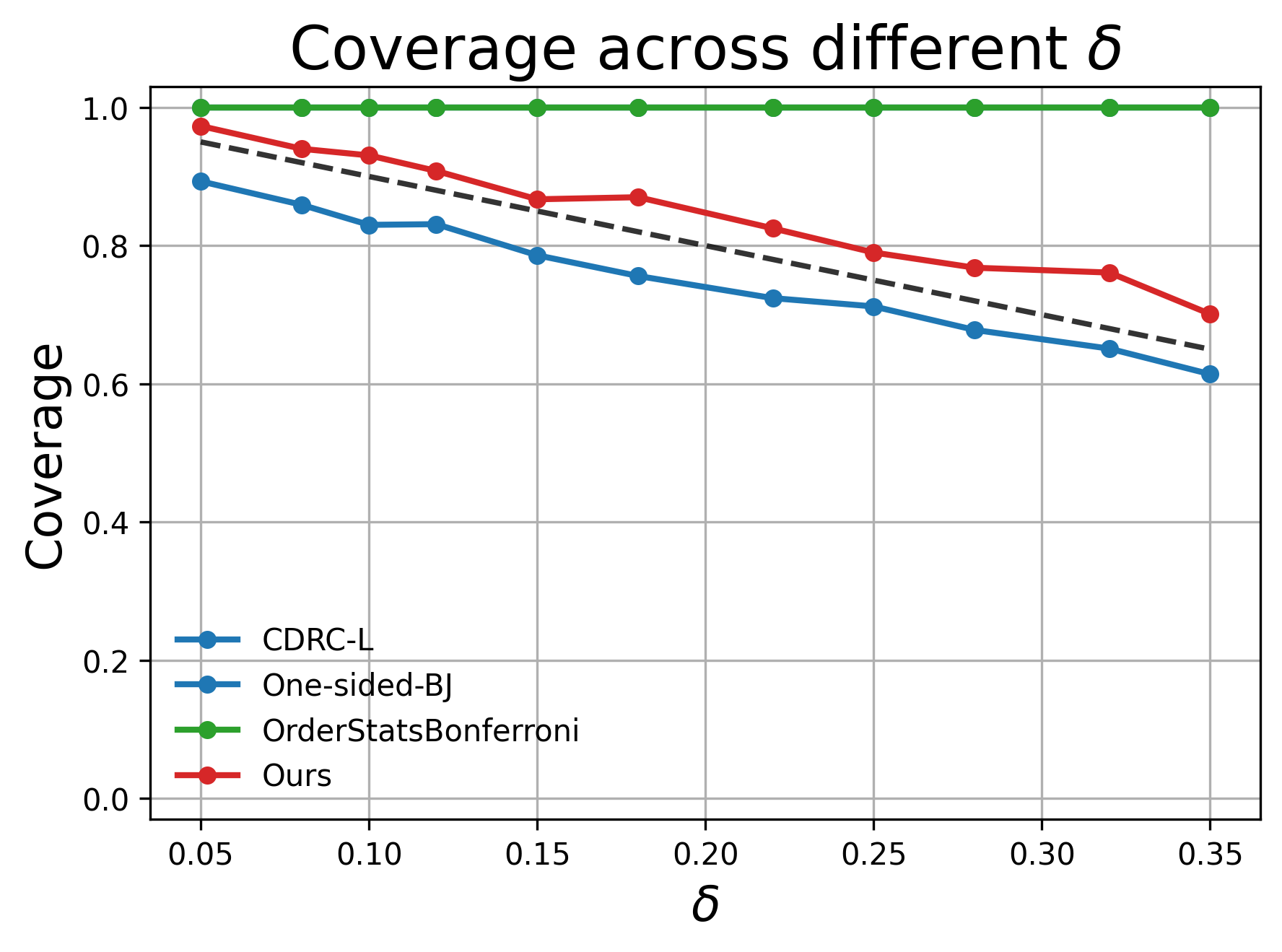}
    \end{minipage}\\
    \begin{minipage}{0.29\linewidth}
        \centering
        \includegraphics[width=\linewidth]{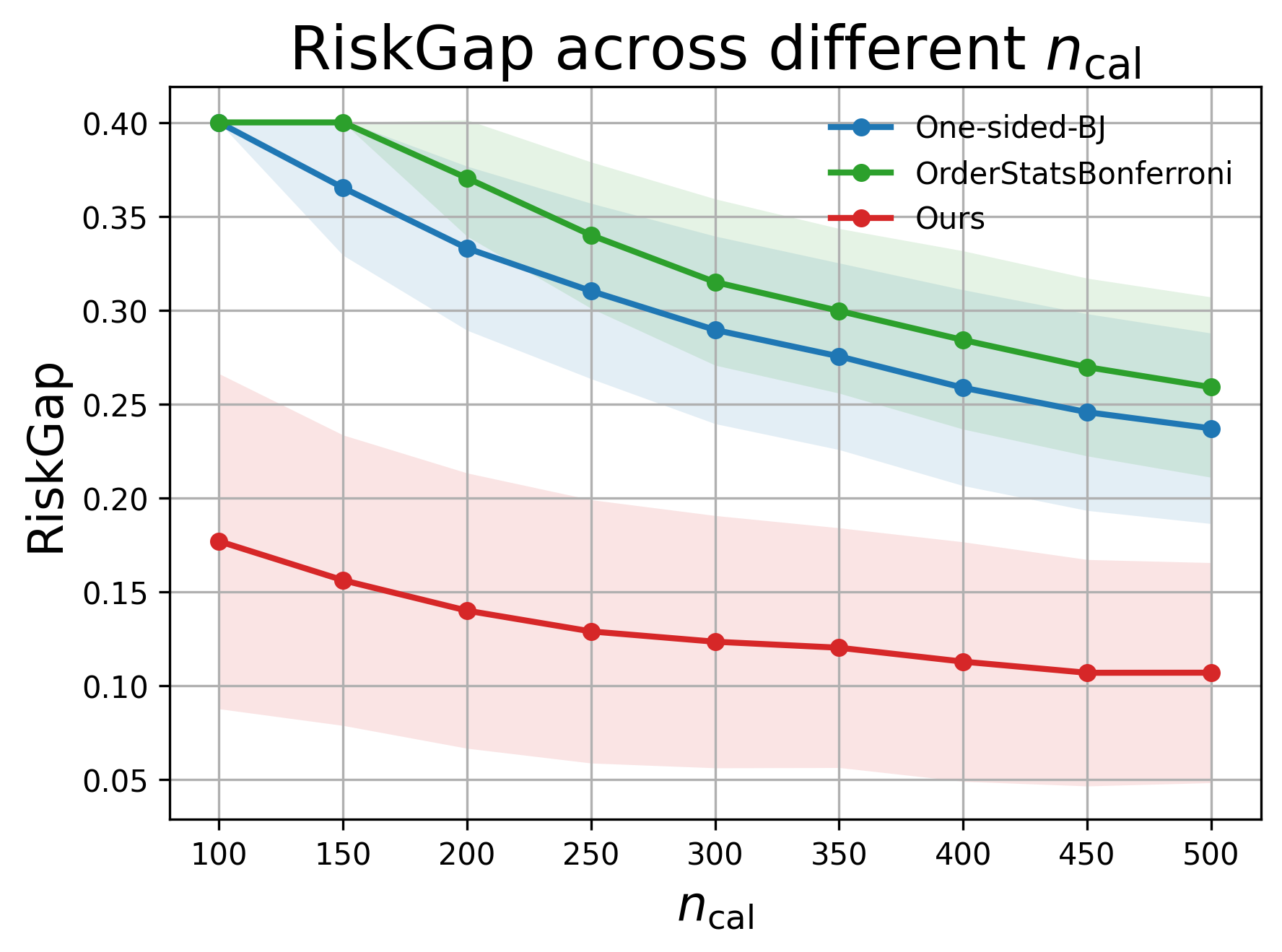}
    \end{minipage}\hfill
    \begin{minipage}{0.29\linewidth}
        \centering
        \includegraphics[width=\linewidth]{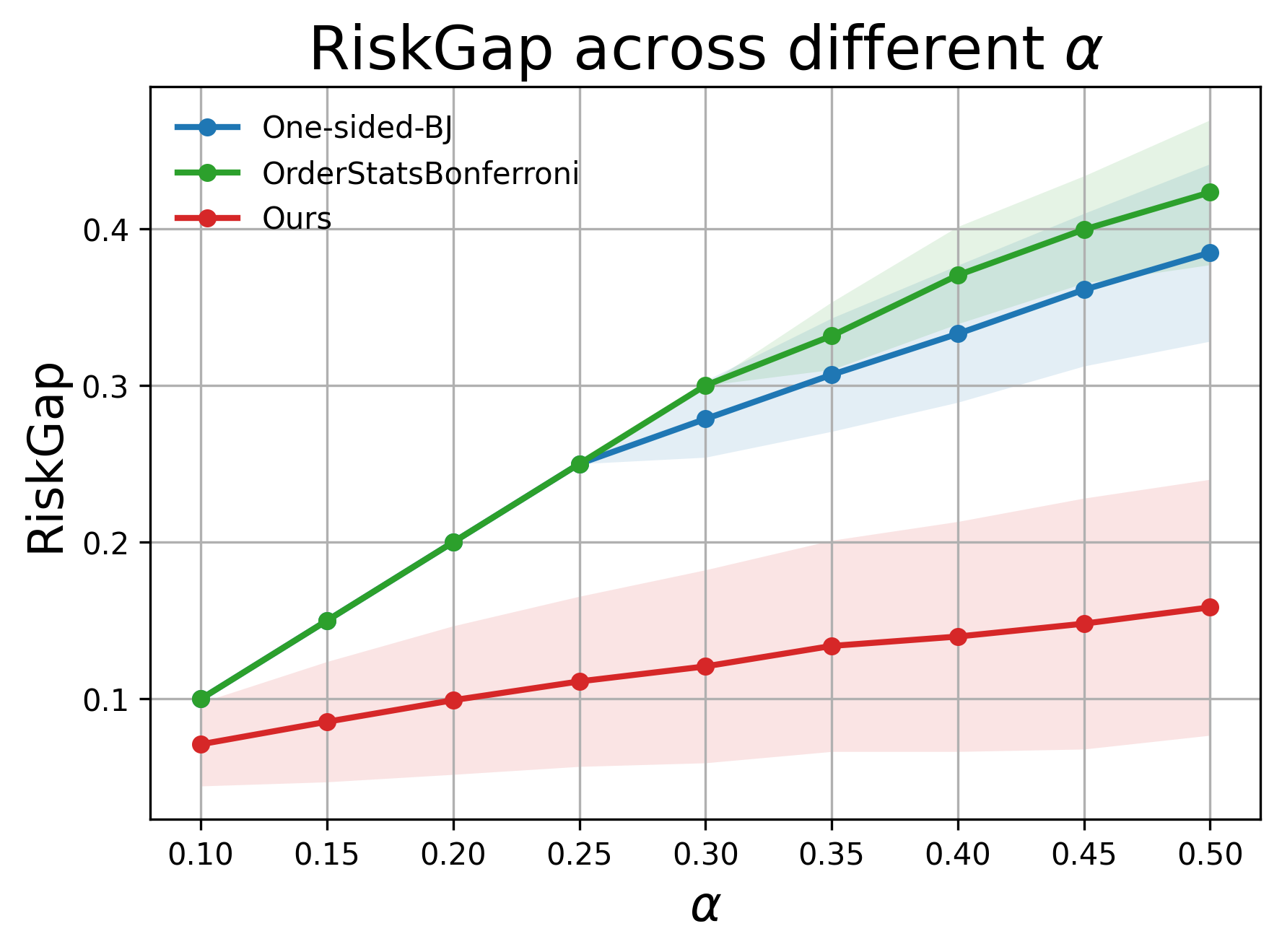}
    \end{minipage}\hfill
    \begin{minipage}{0.29\linewidth}
        \centering
        \includegraphics[width=\linewidth]{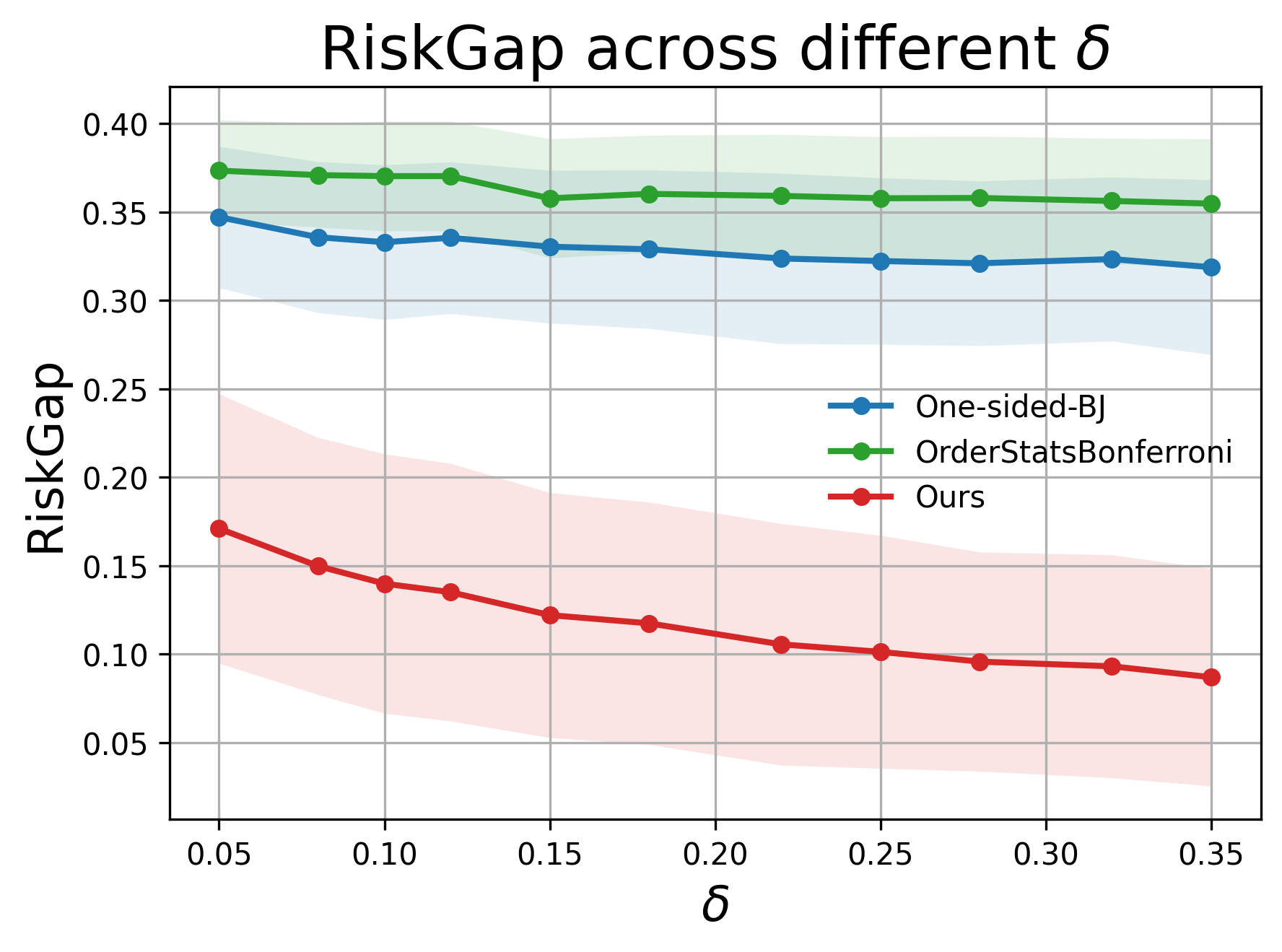}
    \end{minipage}\\
    \begin{minipage}{0.29\linewidth}
        \centering
        \includegraphics[width=\linewidth]{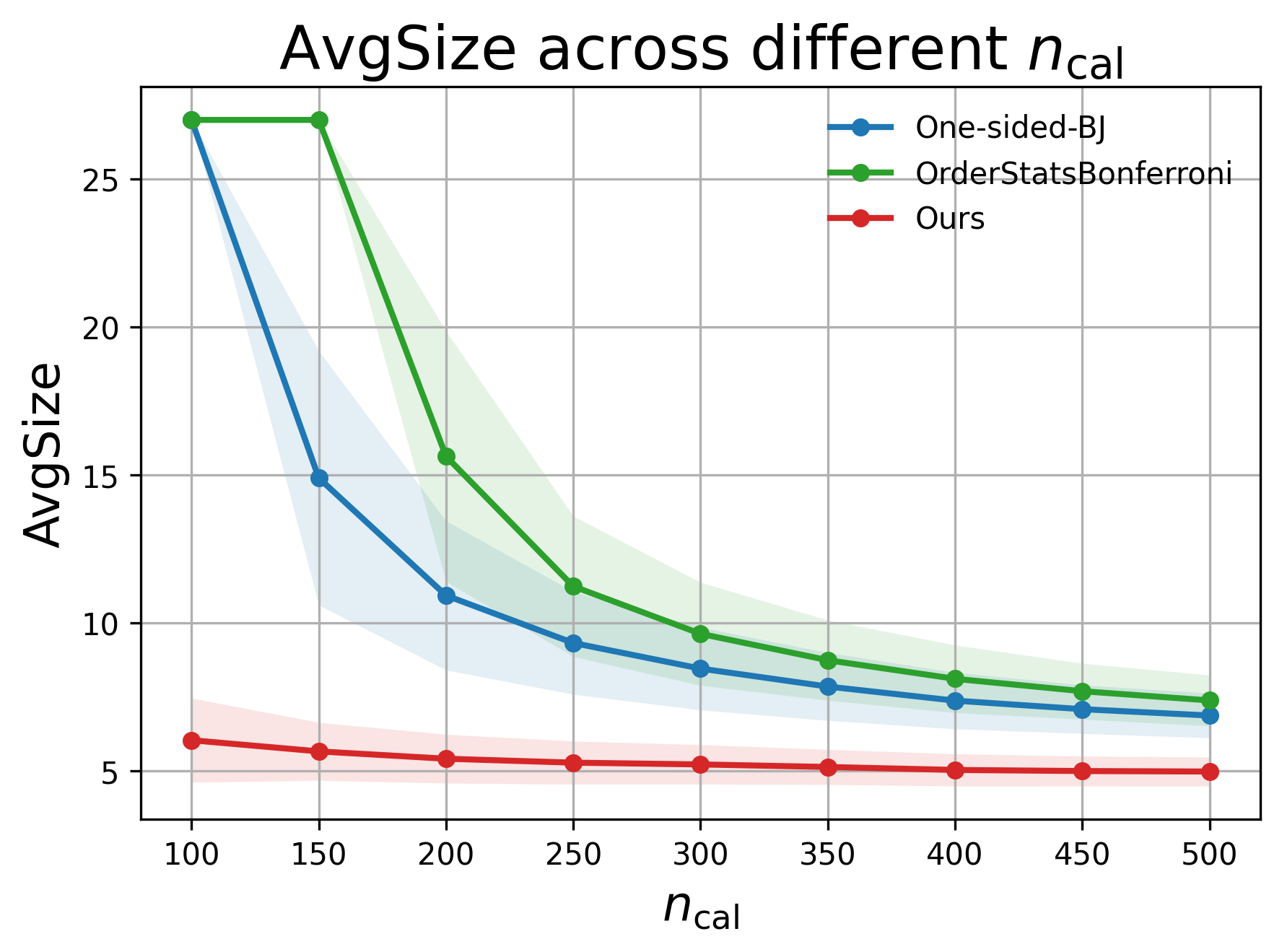}
    \end{minipage}\hfill
    \begin{minipage}{0.29\linewidth}
        \centering
        \includegraphics[width=\linewidth]{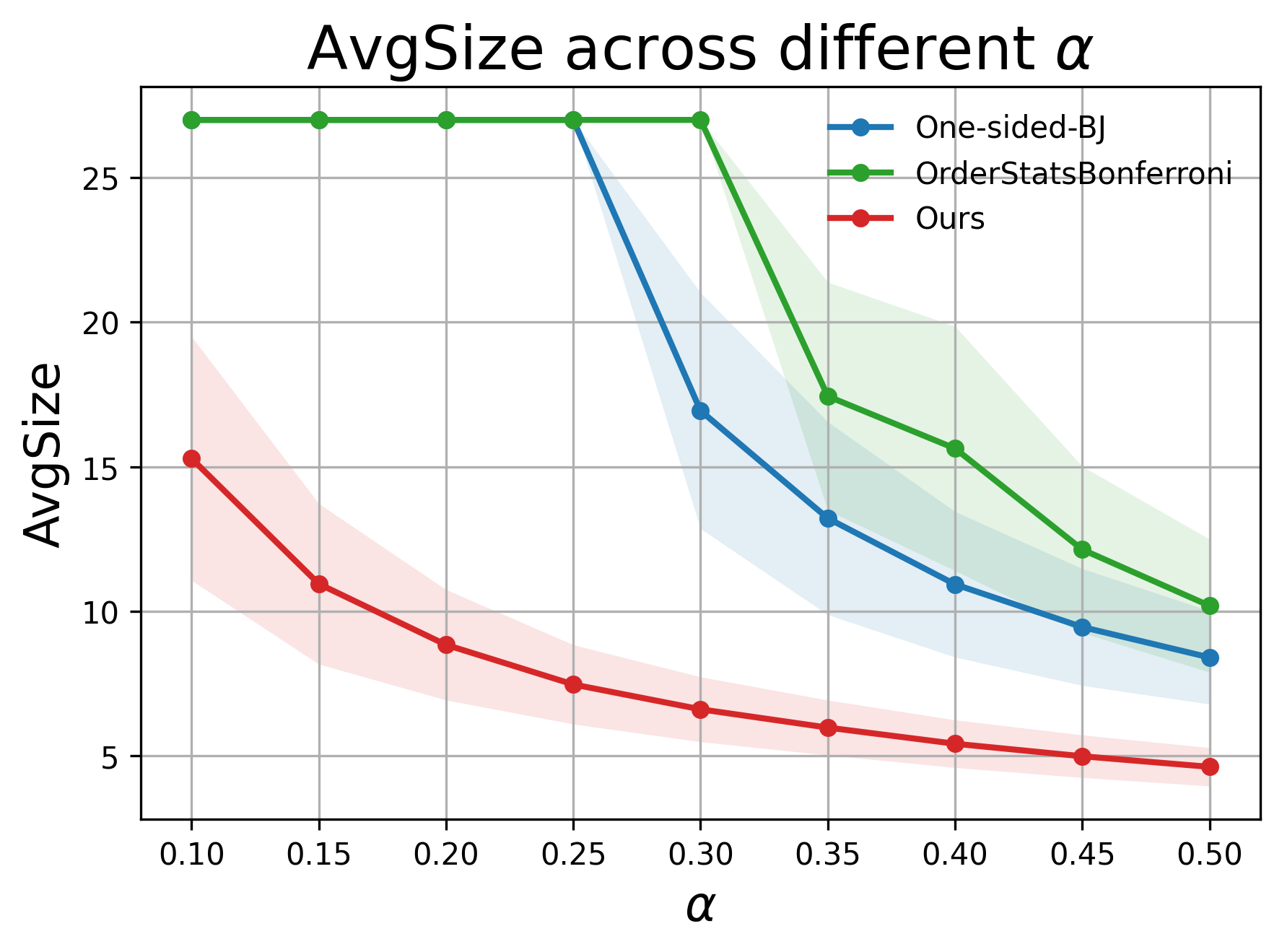}
    \end{minipage}\hfill
    \begin{minipage}{0.29\linewidth}
        \centering
        \includegraphics[width=\linewidth]{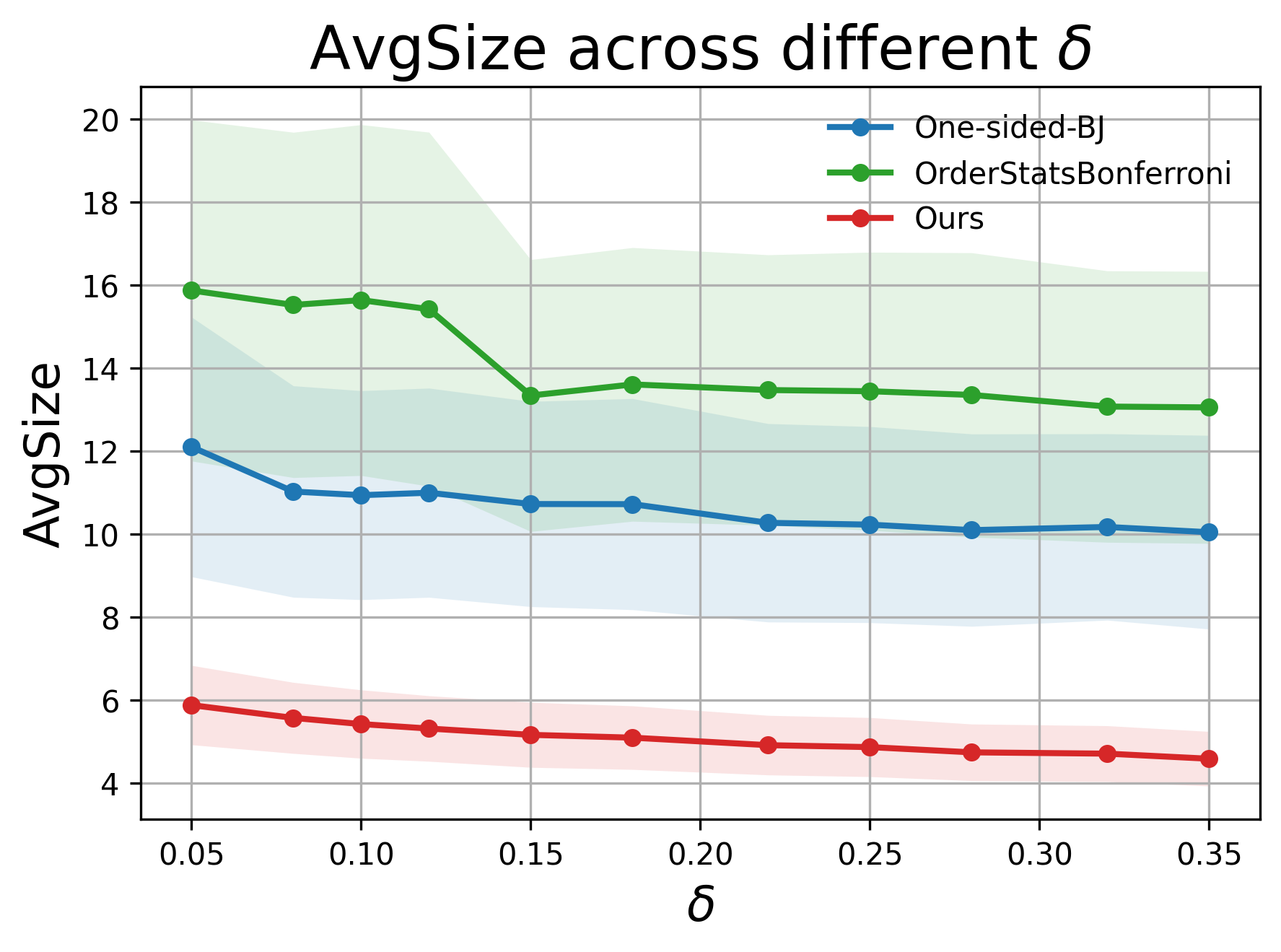}
    \end{minipage}
    \vspace{-3mm}
    \caption{\textbf{Performance comparison across various hyperparameter configurations.} The task is text emotion recognition, and the risk measure is $0.9$-CVaR.}
    \label{fig:parameter_analysis_go_1}
    \vspace{-5mm}
\end{figure}

\begin{figure}[htbp]
    \centering
    \begin{minipage}{0.29\linewidth}
        \centering
        \includegraphics[width=\linewidth]{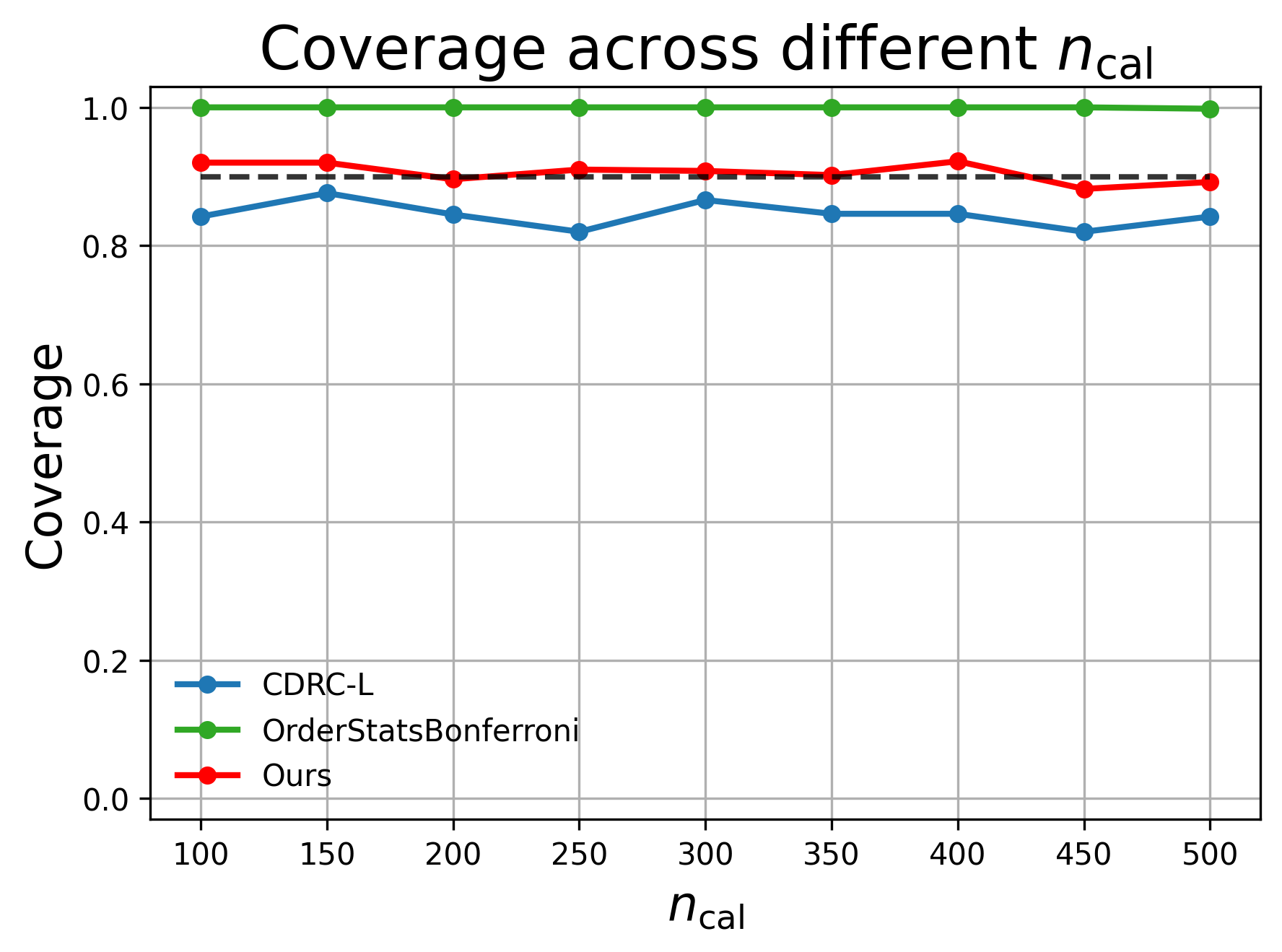}
    \end{minipage}\hfill
    \begin{minipage}{0.29\linewidth}
        \centering
        \includegraphics[width=\linewidth]{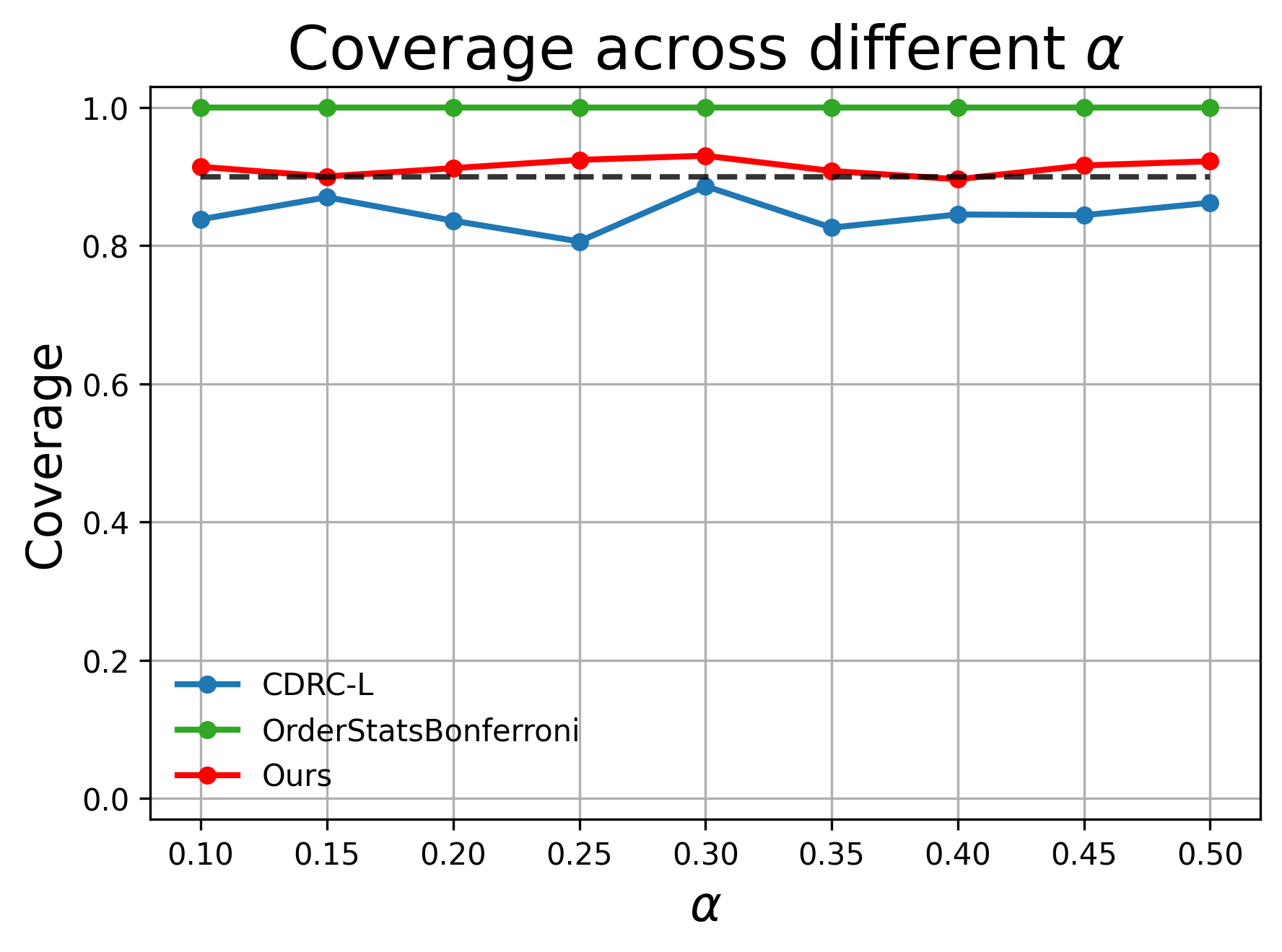}
    \end{minipage}\hfill
    \begin{minipage}{0.29\linewidth}
        \centering
        \includegraphics[width=\linewidth]{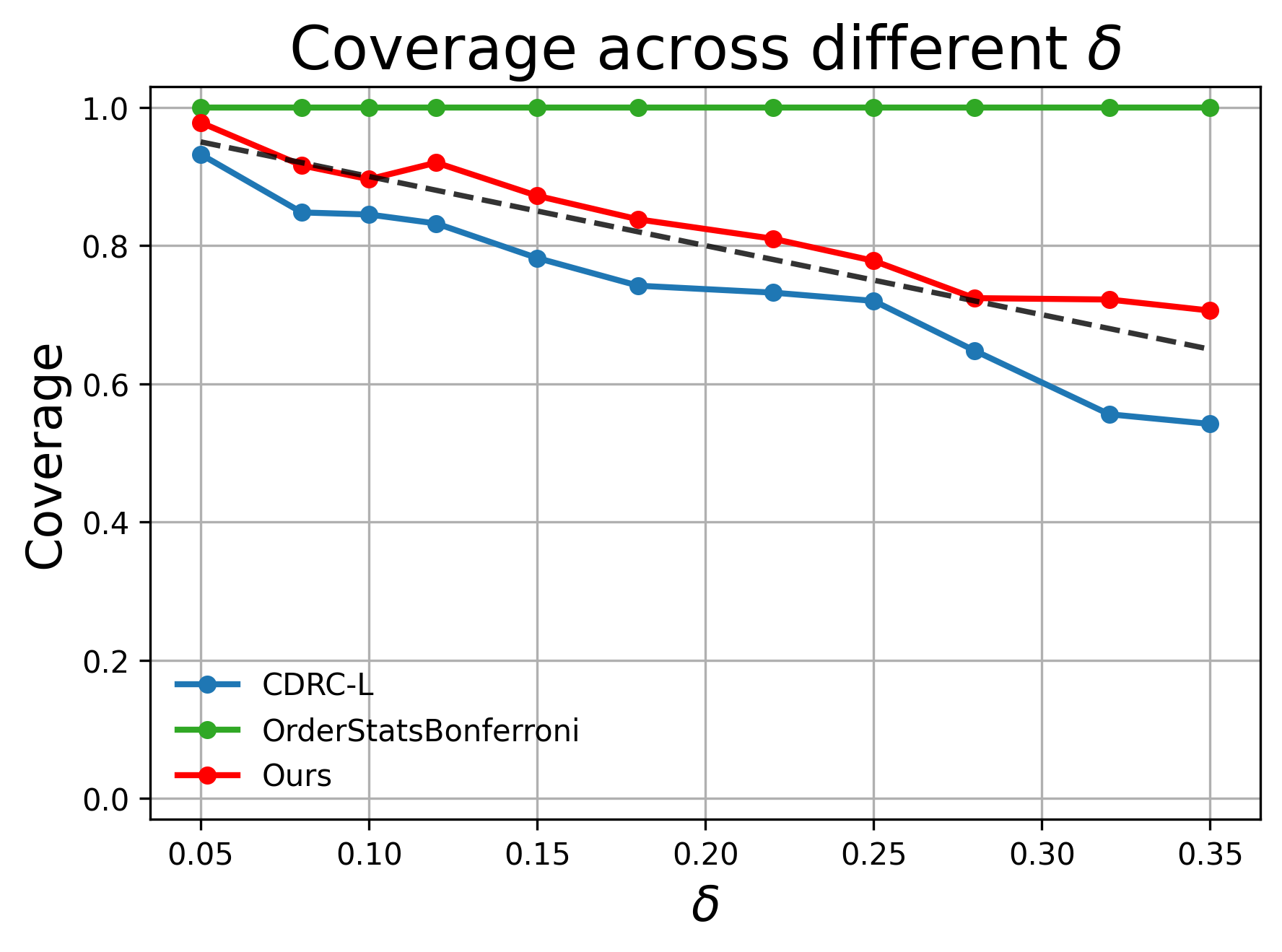}
    \end{minipage}\\
    \begin{minipage}{0.29\linewidth}
        \centering
        \includegraphics[width=\linewidth]{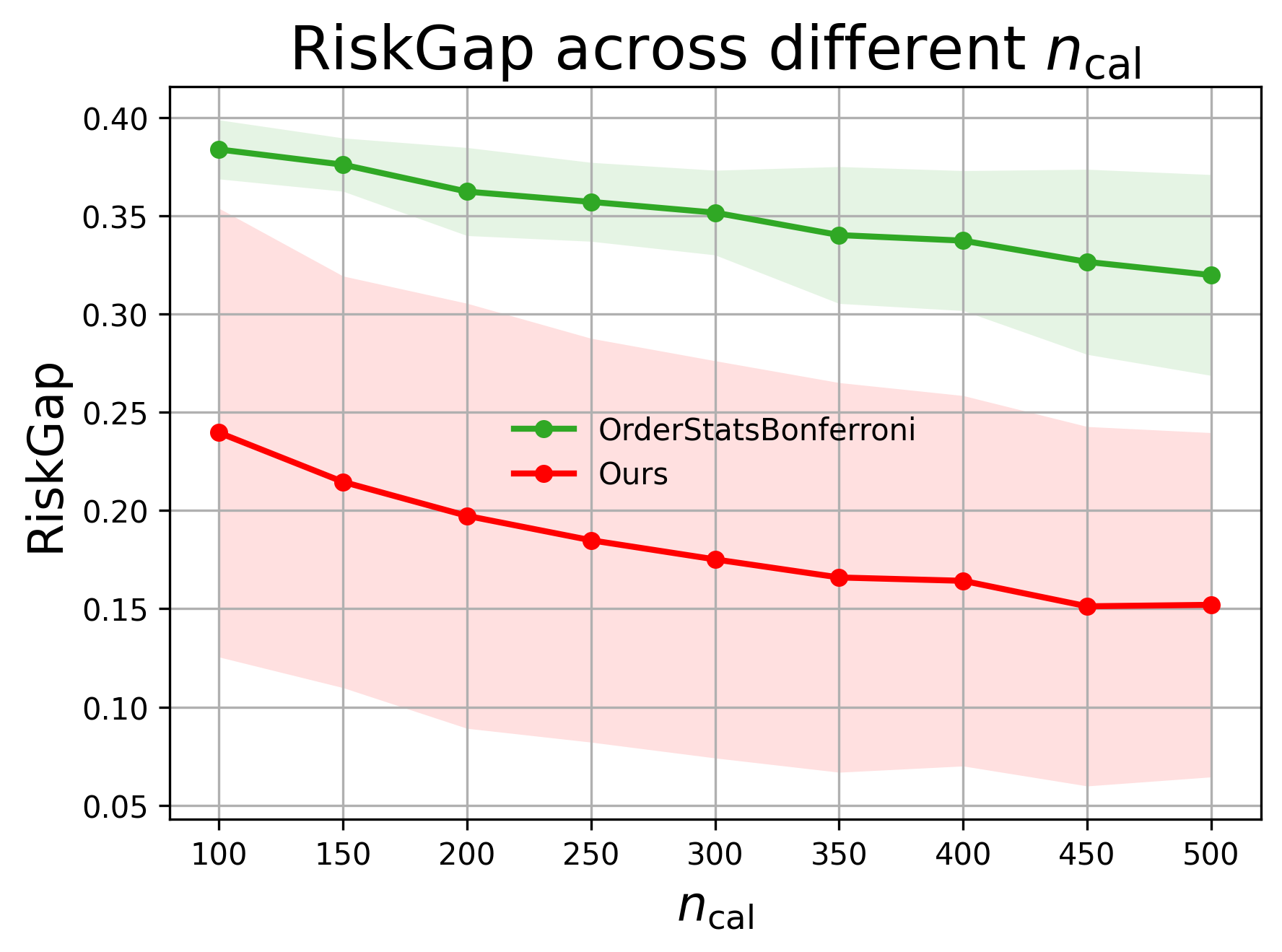}
    \end{minipage}\hfill
    \begin{minipage}{0.29\linewidth}
        \centering
        \includegraphics[width=\linewidth]{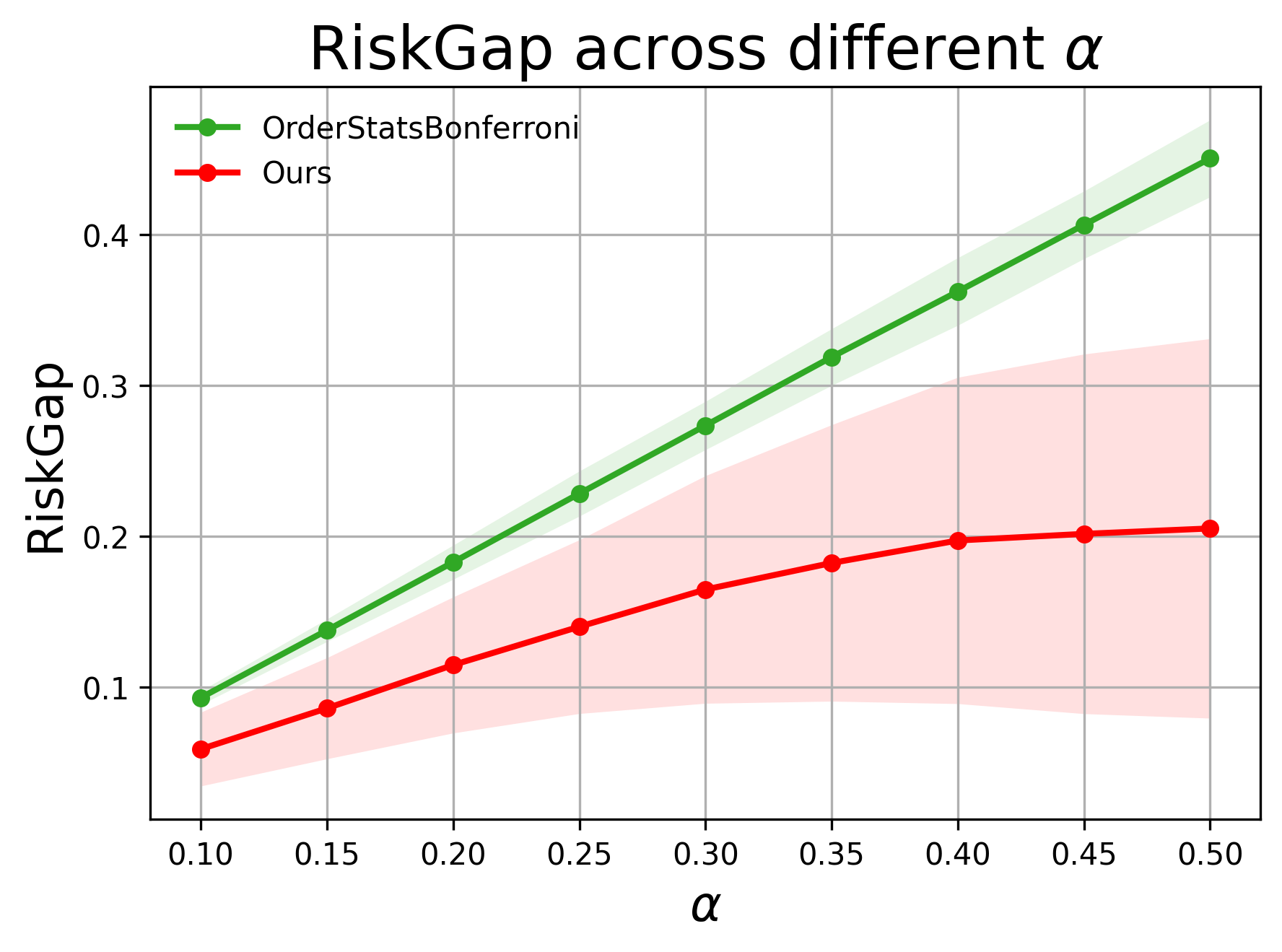}
    \end{minipage}\hfill
    \begin{minipage}{0.29\linewidth}
        \centering
        \includegraphics[width=\linewidth]{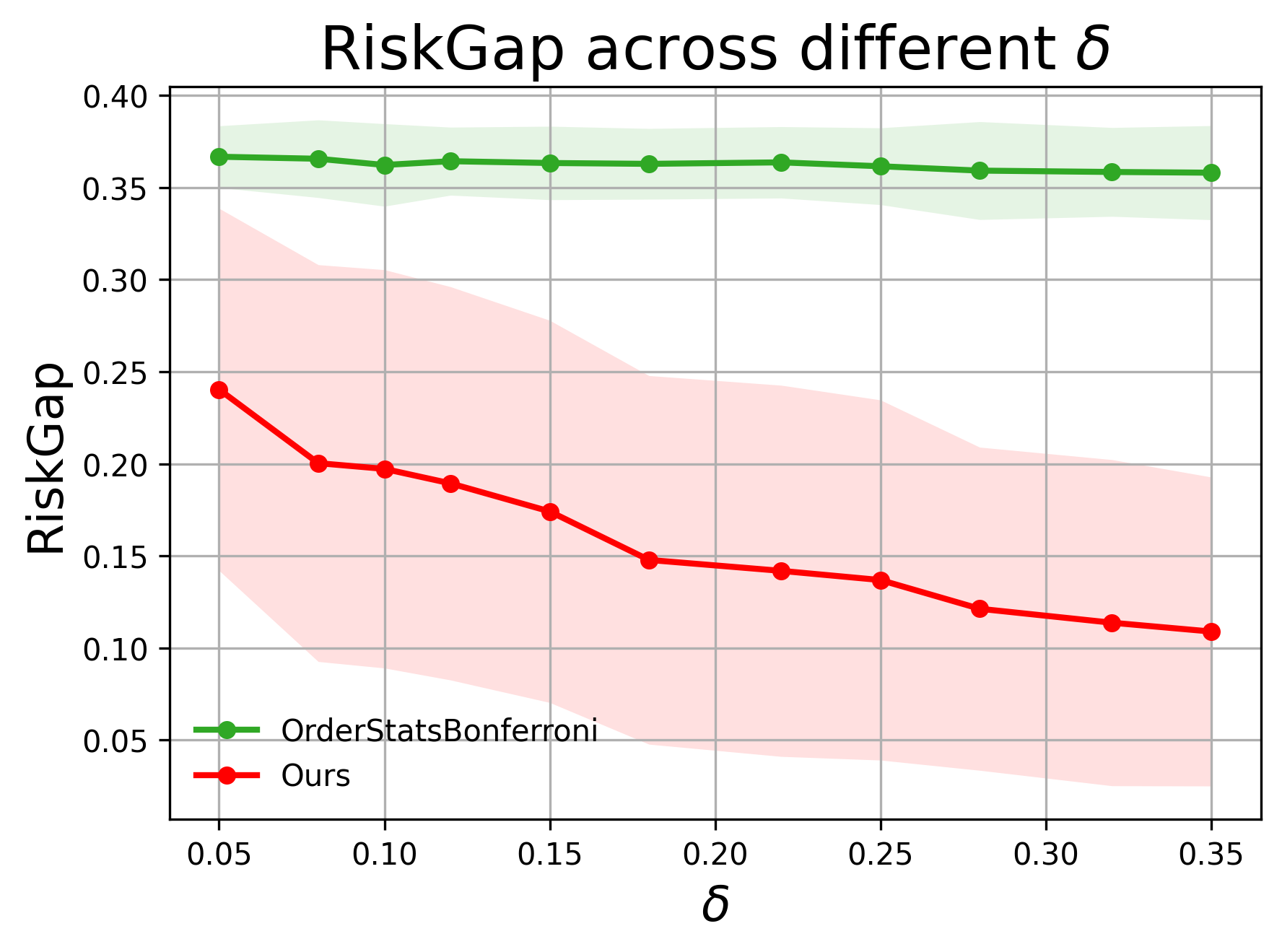}
    \end{minipage}\\
    \begin{minipage}{0.29\linewidth}
        \centering
        \includegraphics[width=\linewidth]{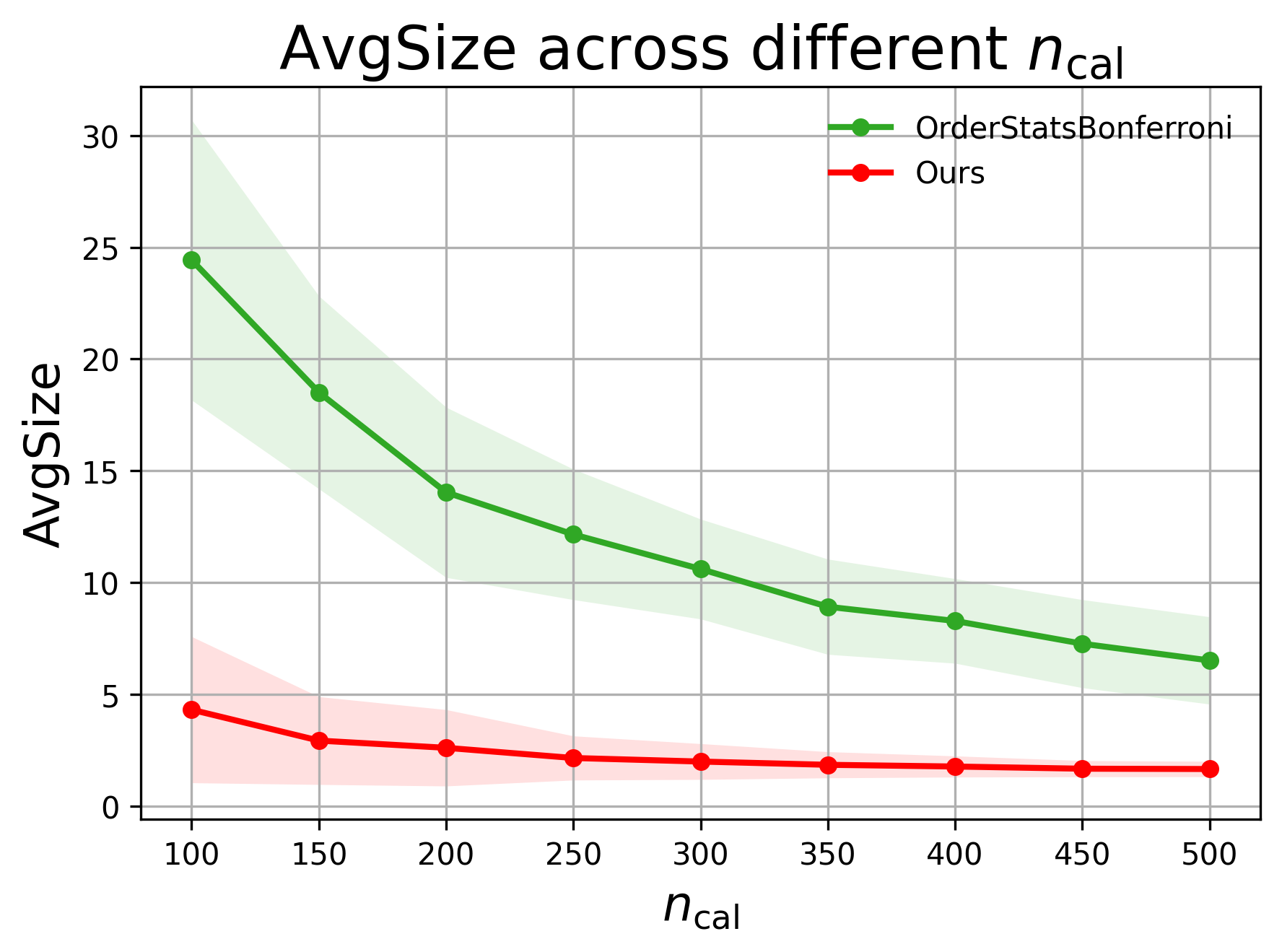}
    \end{minipage}\hfill
    \begin{minipage}{0.29\linewidth}
        \centering
        \includegraphics[width=\linewidth]{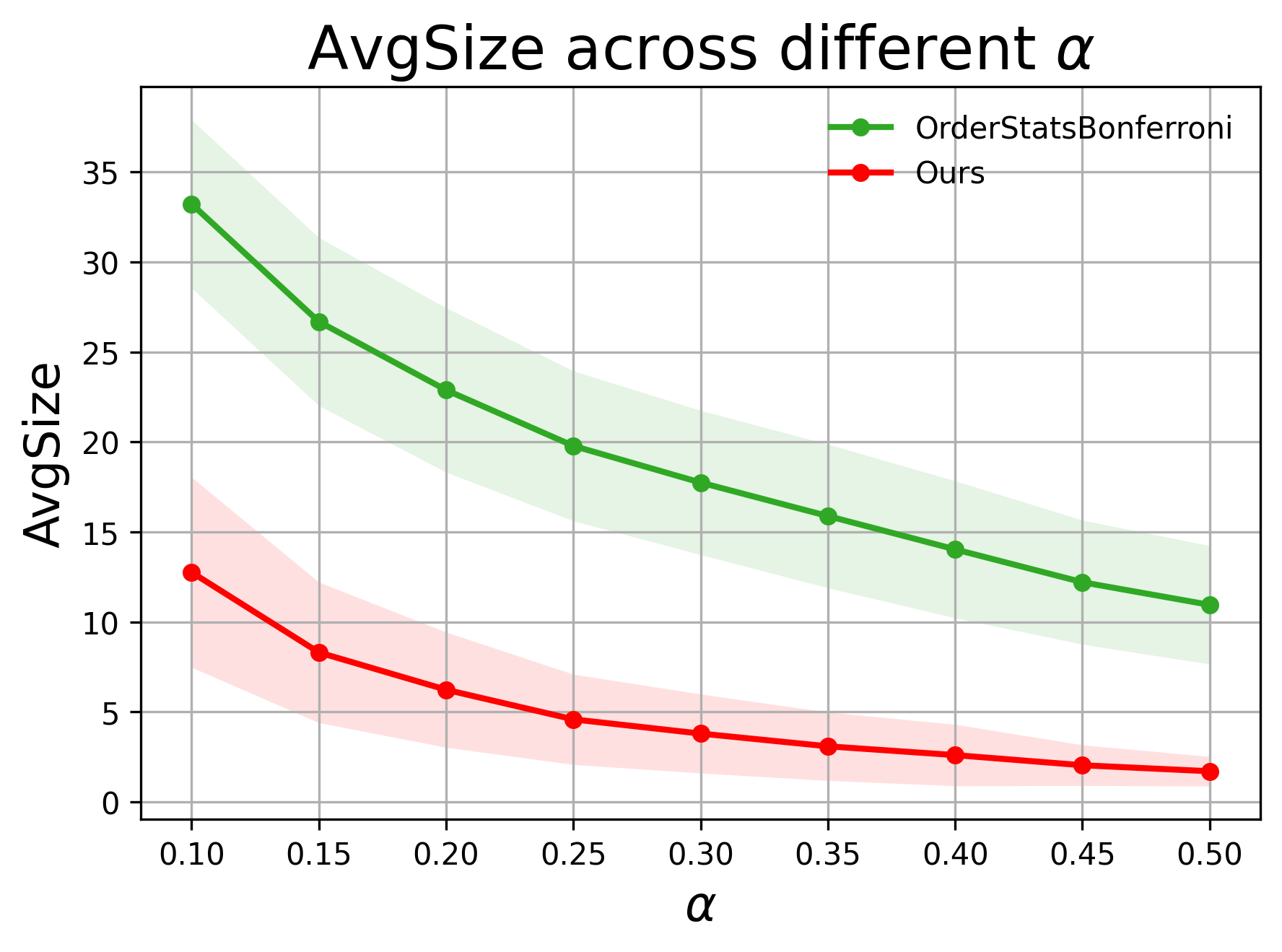}
    \end{minipage}\hfill
    \begin{minipage}{0.29\linewidth}
        \centering
        \includegraphics[width=\linewidth]{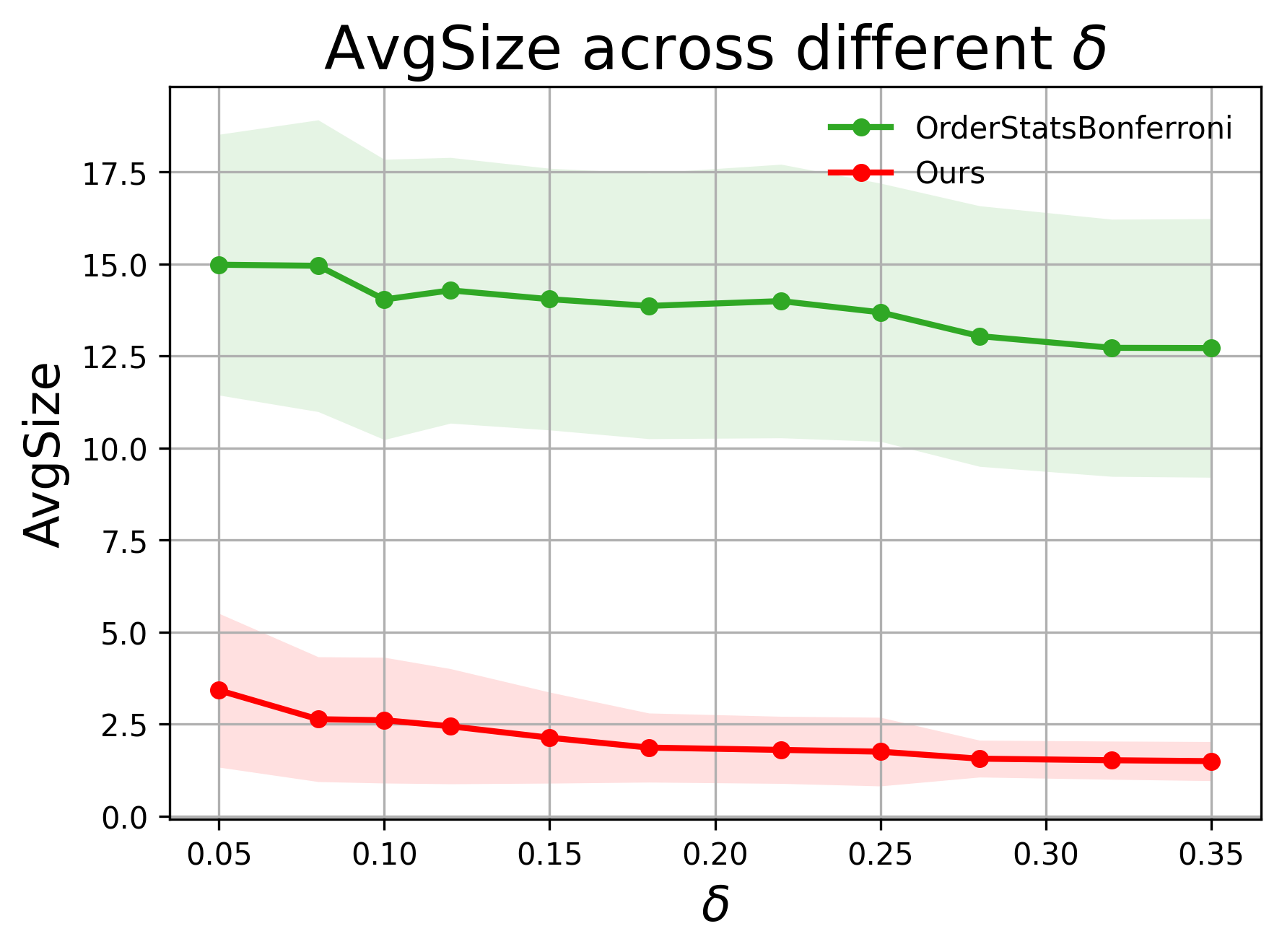}
    \end{minipage}
    \vspace{-3mm}
    \caption{\textbf{Performance comparison across various hyperparameter configurations.} The task is tumor segmentation, and the risk measure is $0.8$-VaR.}
    \label{fig:parameter_analysis_polyp_0.801}
    \vspace{-5mm}
\end{figure}

\begin{figure}[htbp]
    \centering
    \begin{minipage}{0.29\linewidth}
        \centering
        \includegraphics[width=\linewidth]{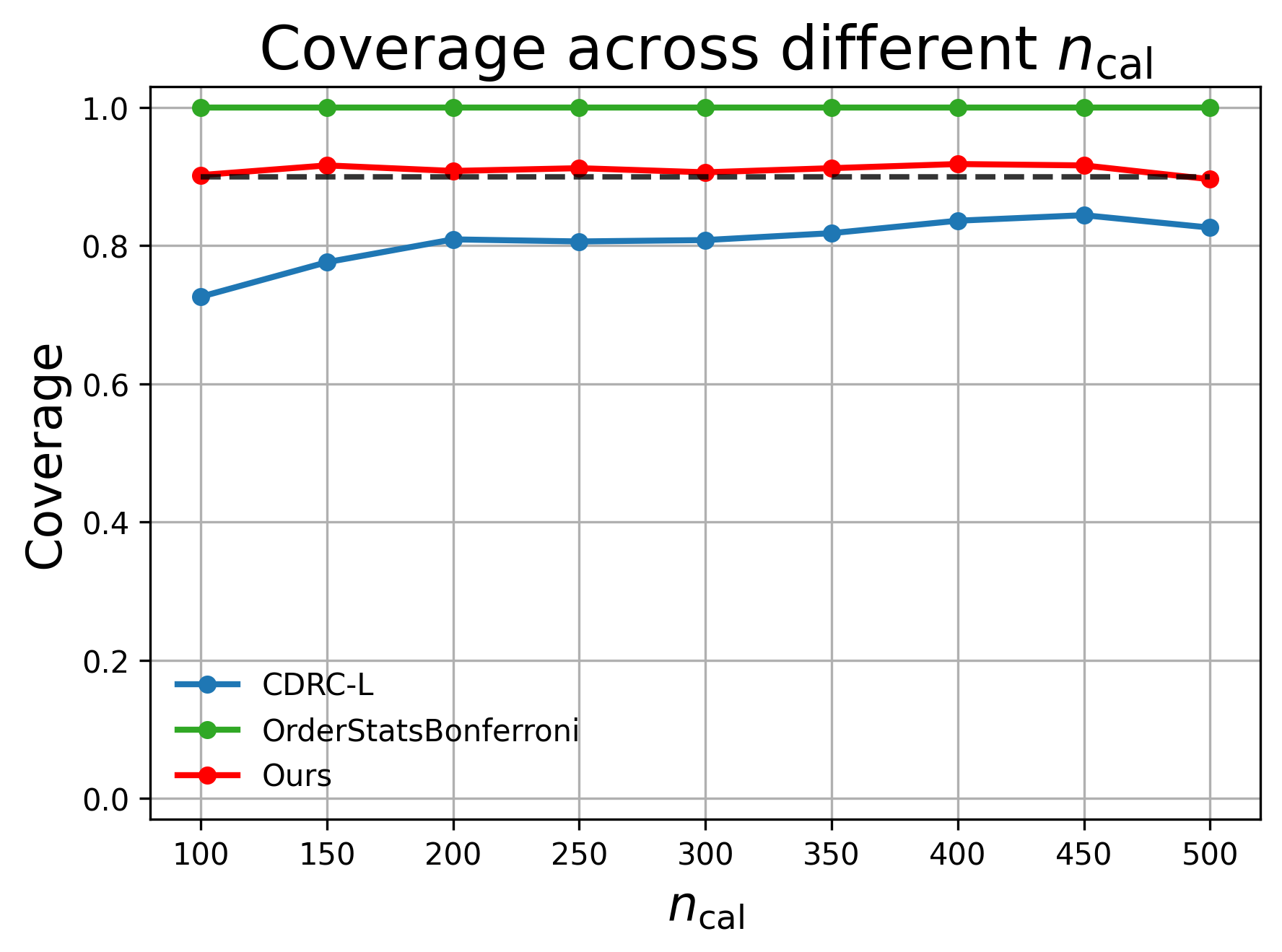}
    \end{minipage}\hfill
    \begin{minipage}{0.29\linewidth}
        \centering
        \includegraphics[width=\linewidth]{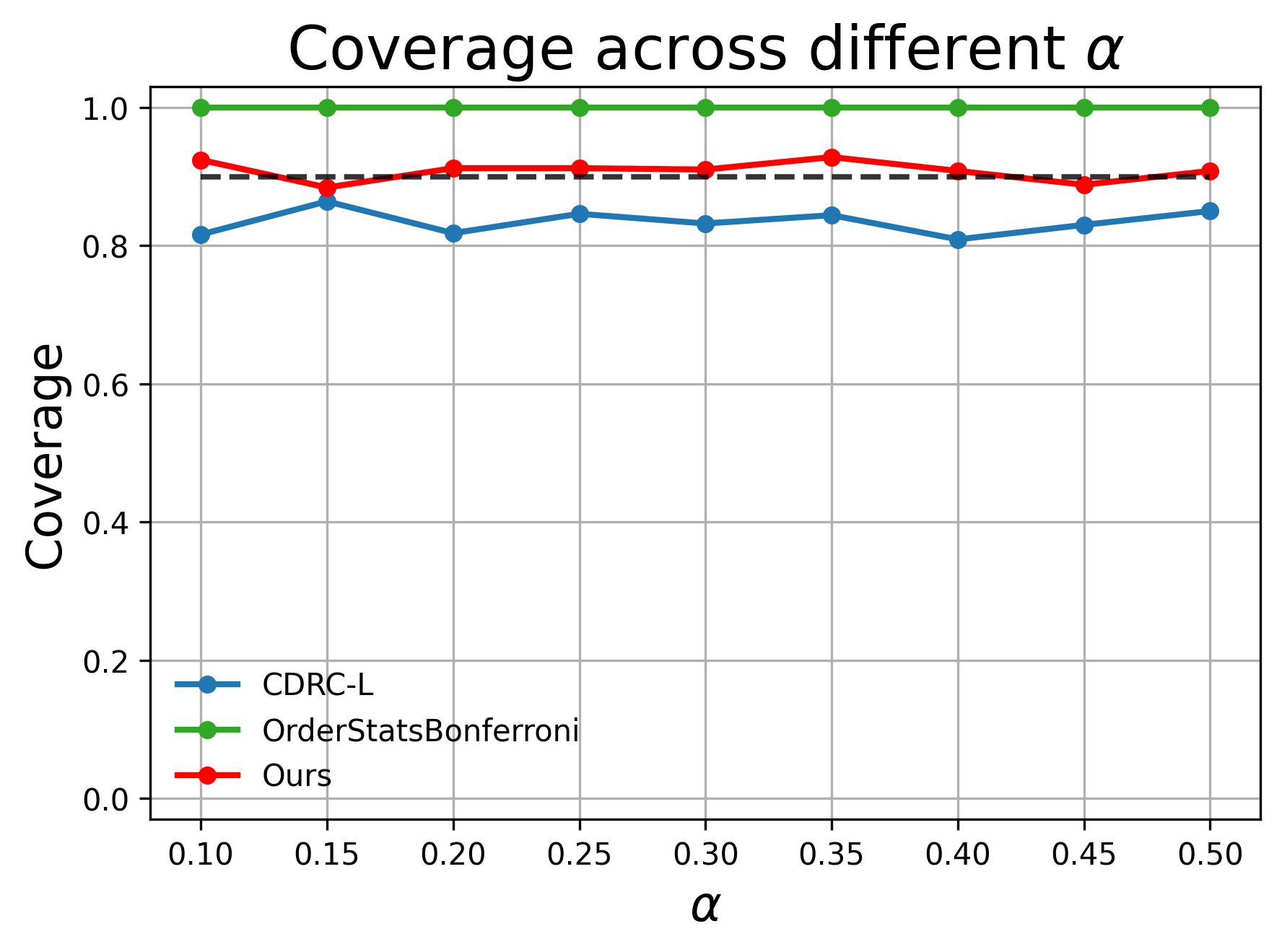}
    \end{minipage}\hfill
    \begin{minipage}{0.29\linewidth}
        \centering
        \includegraphics[width=\linewidth]{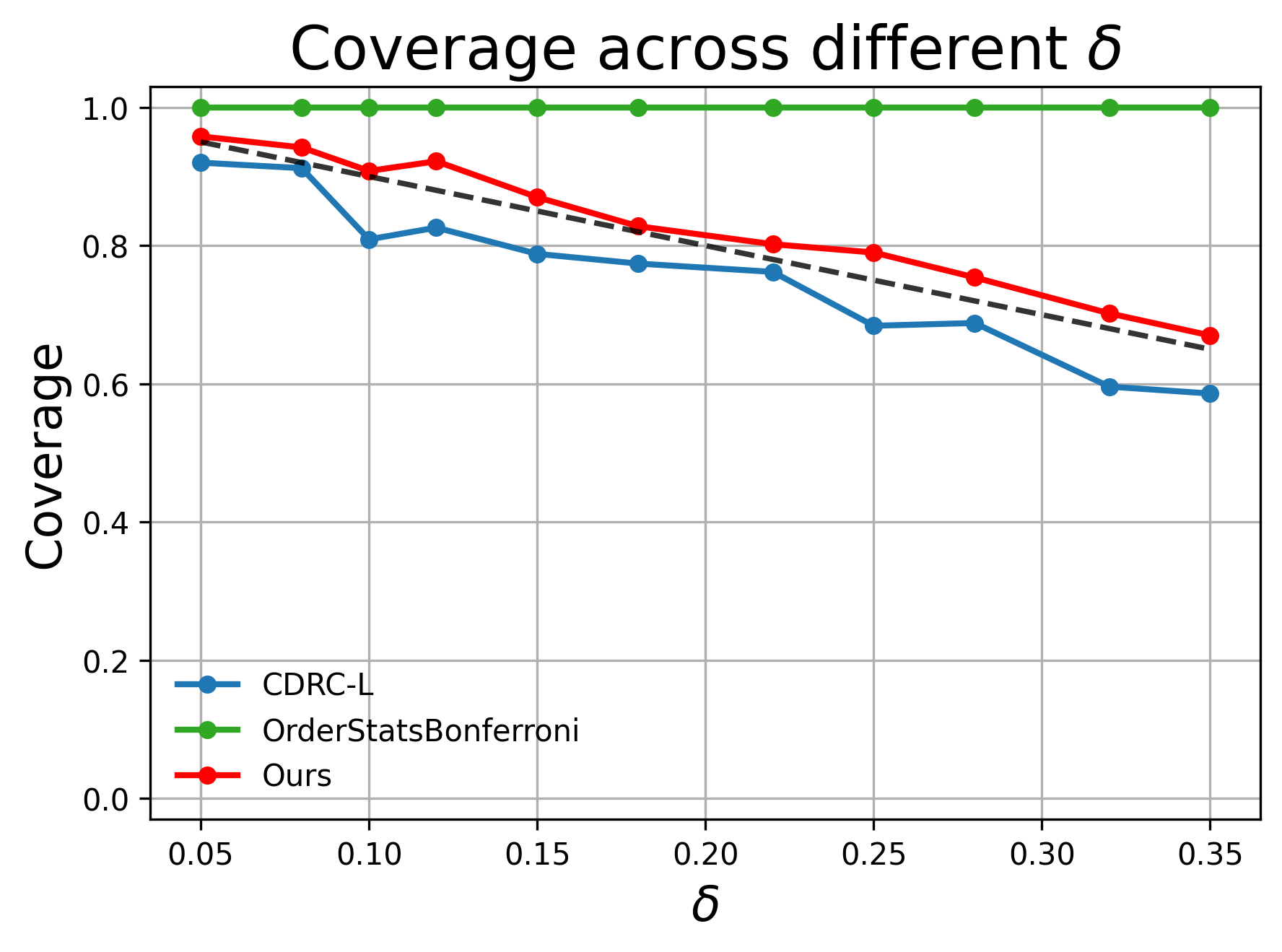}
    \end{minipage}\\
    \begin{minipage}{0.29\linewidth}
        \centering
        \includegraphics[width=\linewidth]{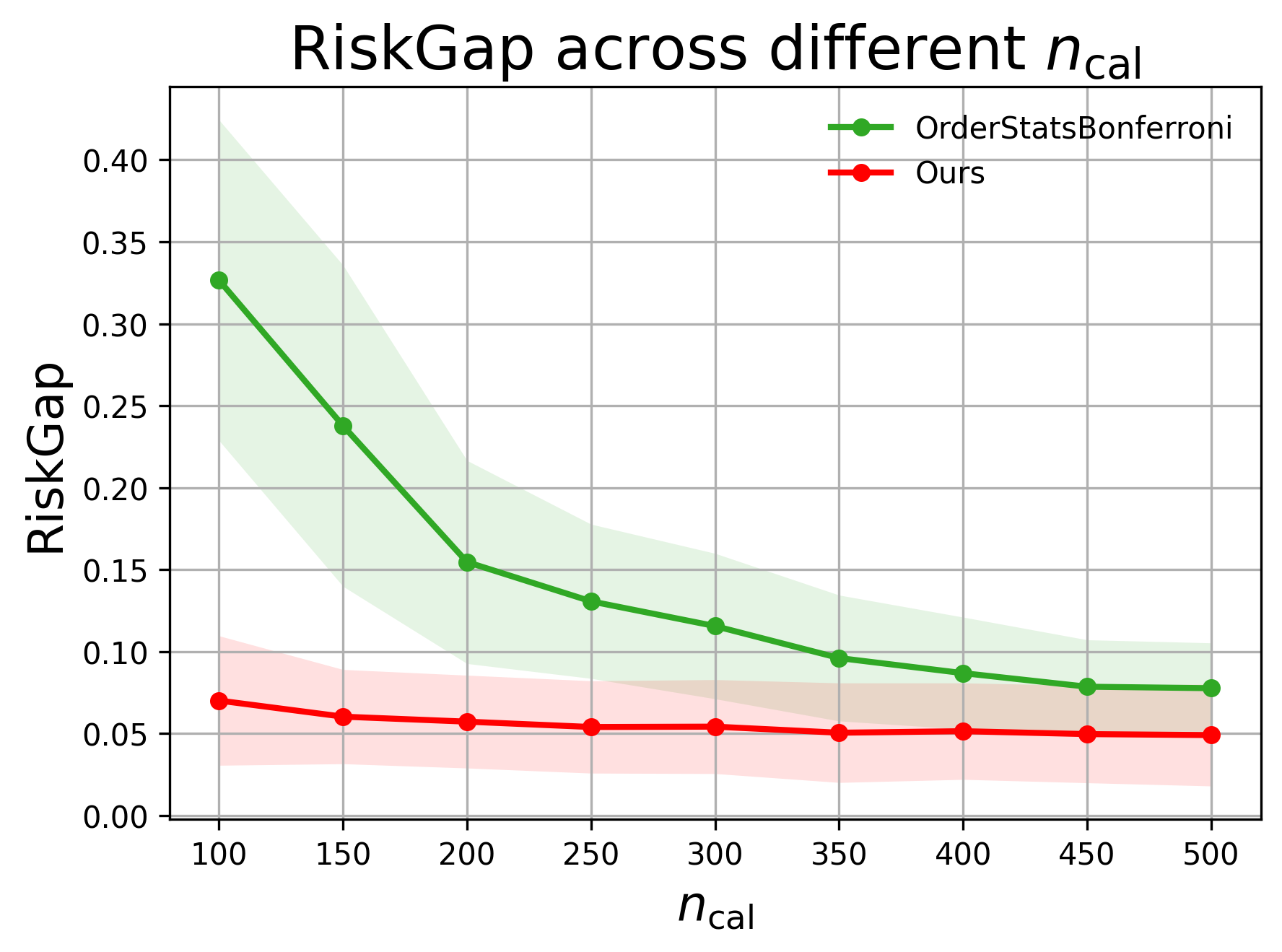}
    \end{minipage}\hfill
    \begin{minipage}{0.29\linewidth}
        \centering
        \includegraphics[width=\linewidth]{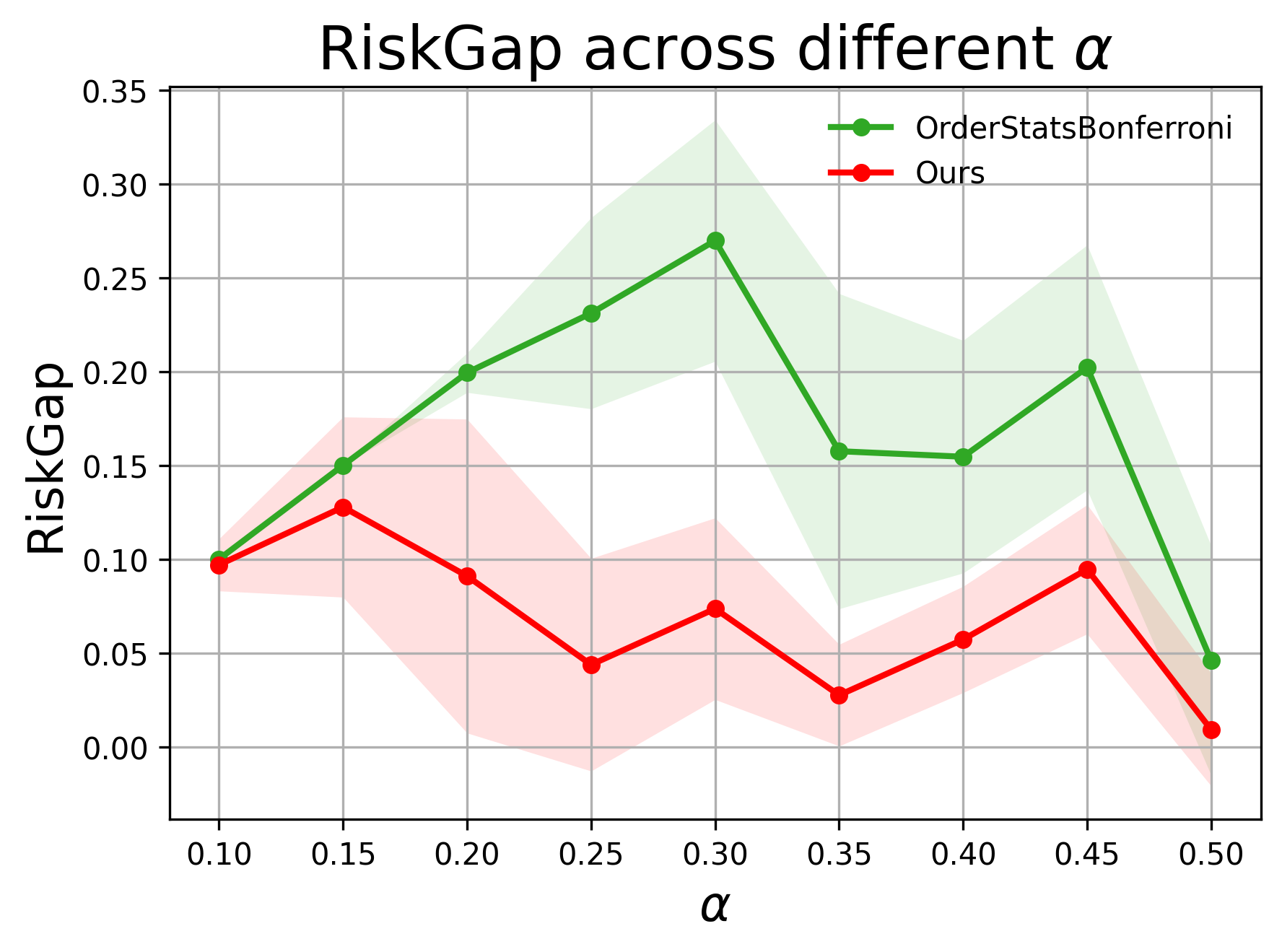}
    \end{minipage}\hfill
    \begin{minipage}{0.29\linewidth}
        \centering
        \includegraphics[width=\linewidth]{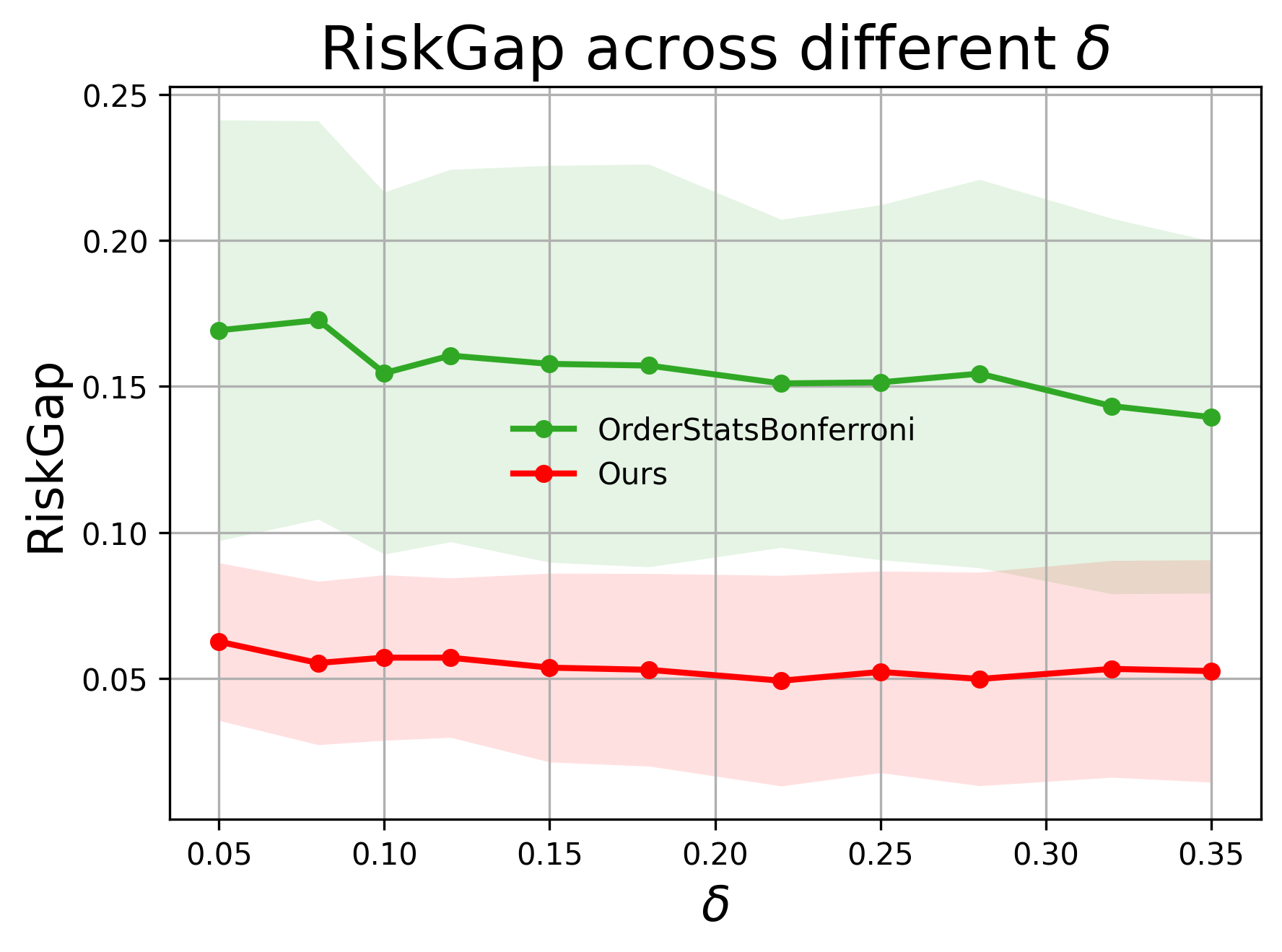}
    \end{minipage}\\
    \begin{minipage}{0.29\linewidth}
        \centering
        \includegraphics[width=\linewidth]{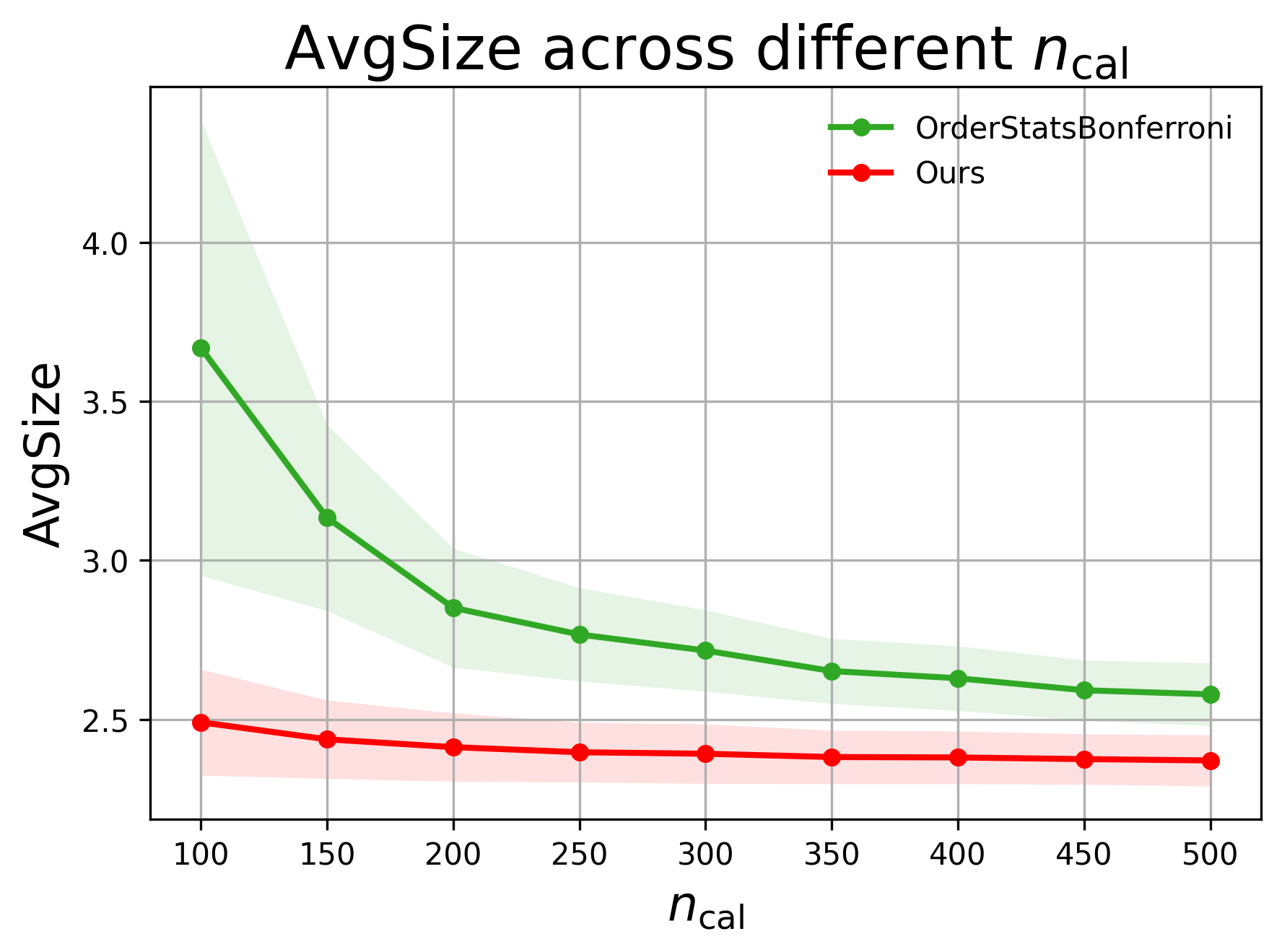}
    \end{minipage}\hfill
    \begin{minipage}{0.29\linewidth}
        \centering
        \includegraphics[width=\linewidth]{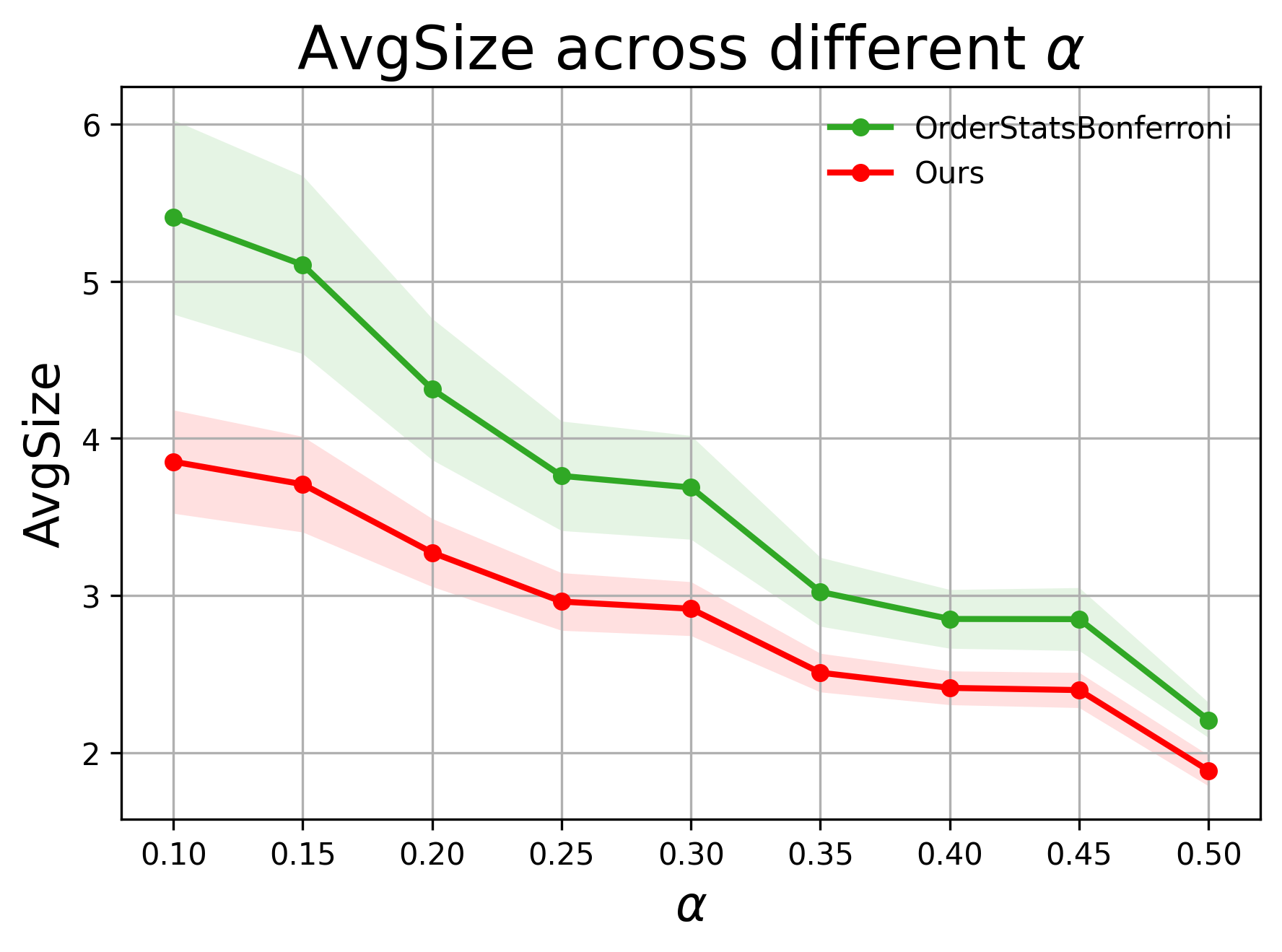}
    \end{minipage}\hfill
    \begin{minipage}{0.29\linewidth}
        \centering
        \includegraphics[width=\linewidth]{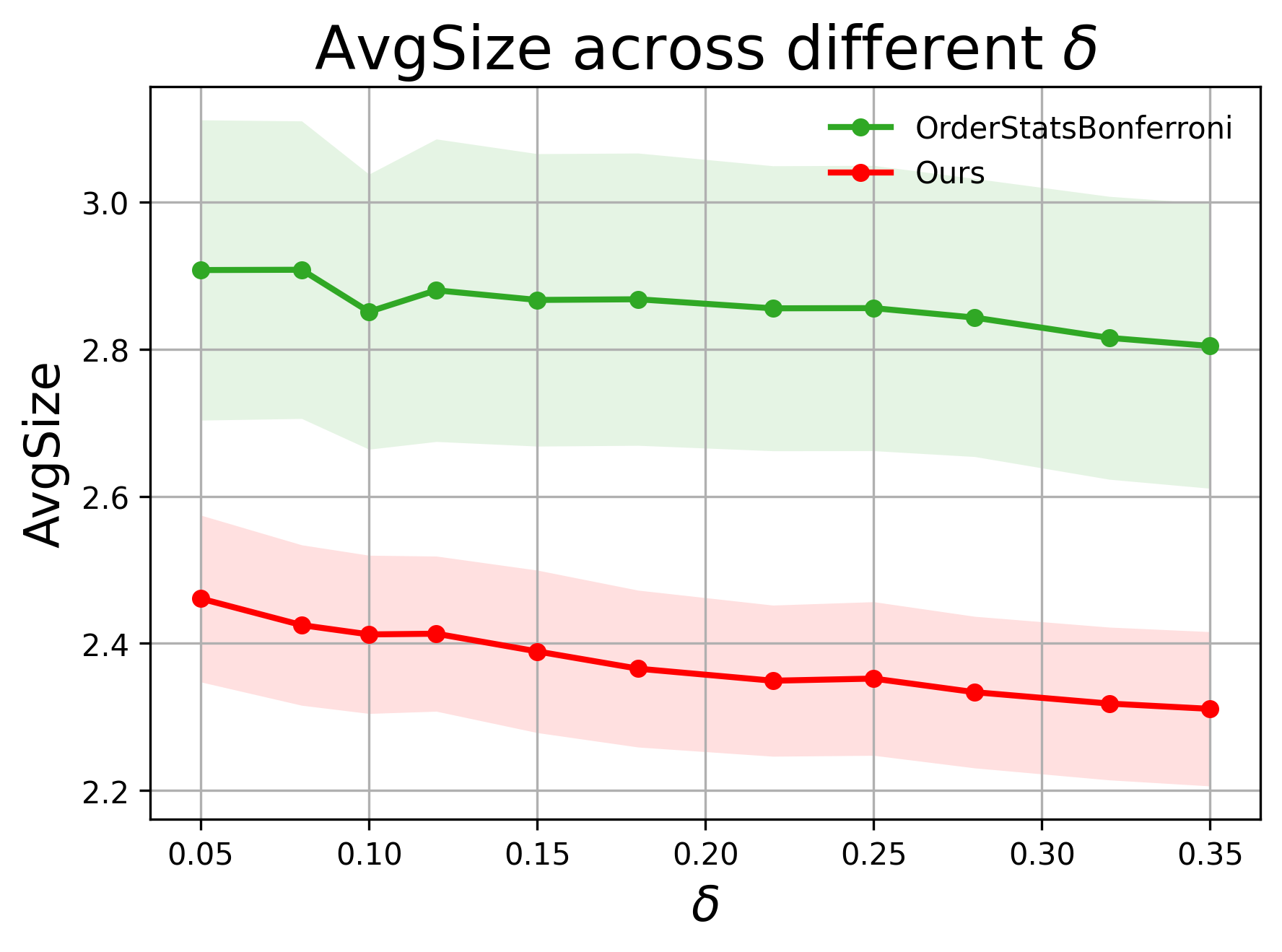}
    \end{minipage}
    \vspace{-3mm}
    \caption{\textbf{Performance comparison across various hyperparameter configurations.} The task is image multi-label classification, and the risk measure is $0.8$-VaR.}
    \label{fig:parameter_analysis_coco_0.801}
    \vspace{-5mm}
\end{figure}

\begin{figure}[htbp]
    \centering
    \begin{minipage}{0.29\linewidth}
        \centering
        \includegraphics[width=\linewidth]{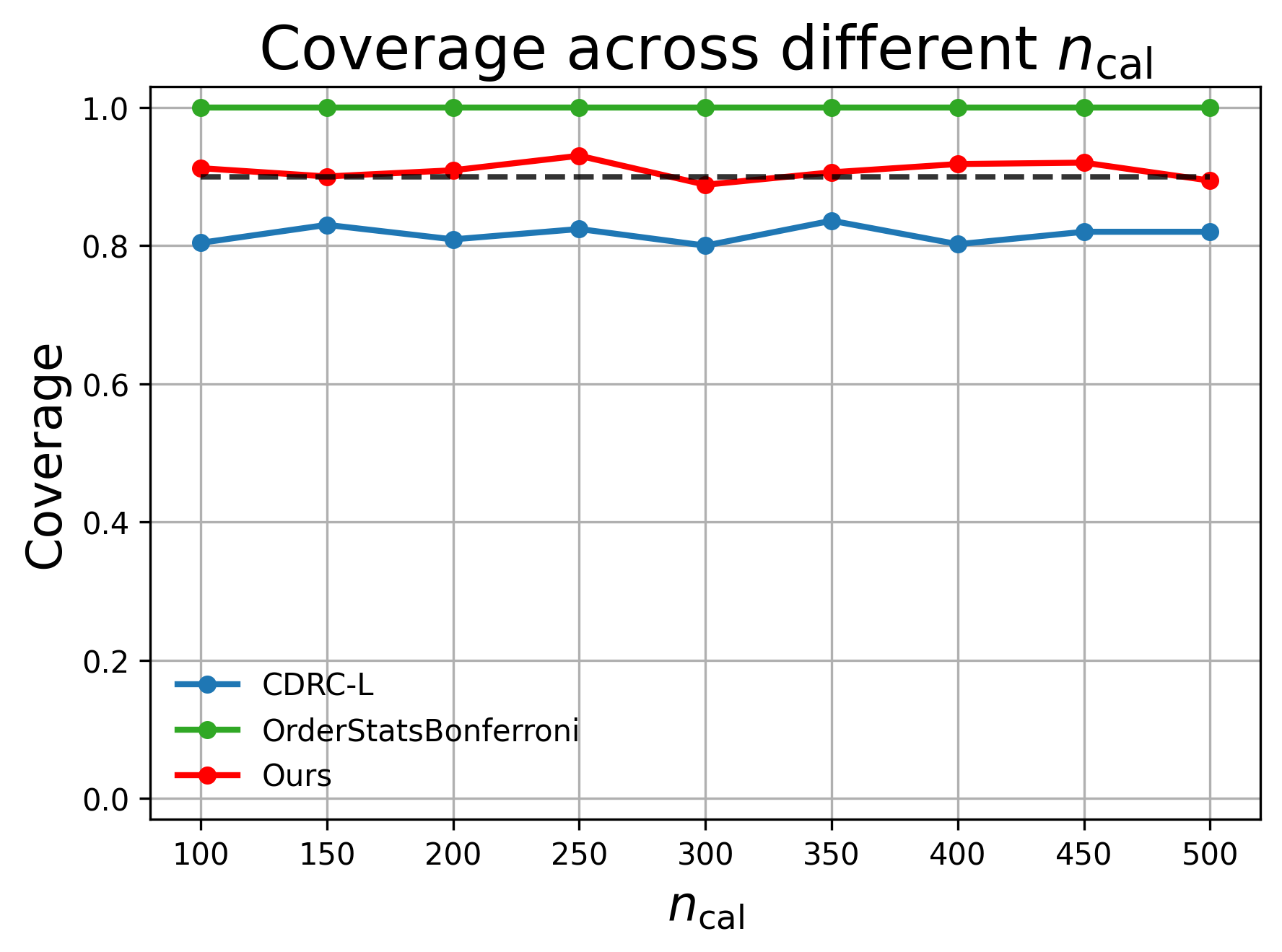}
    \end{minipage}\hfill
    \begin{minipage}{0.29\linewidth}
        \centering
        \includegraphics[width=\linewidth]{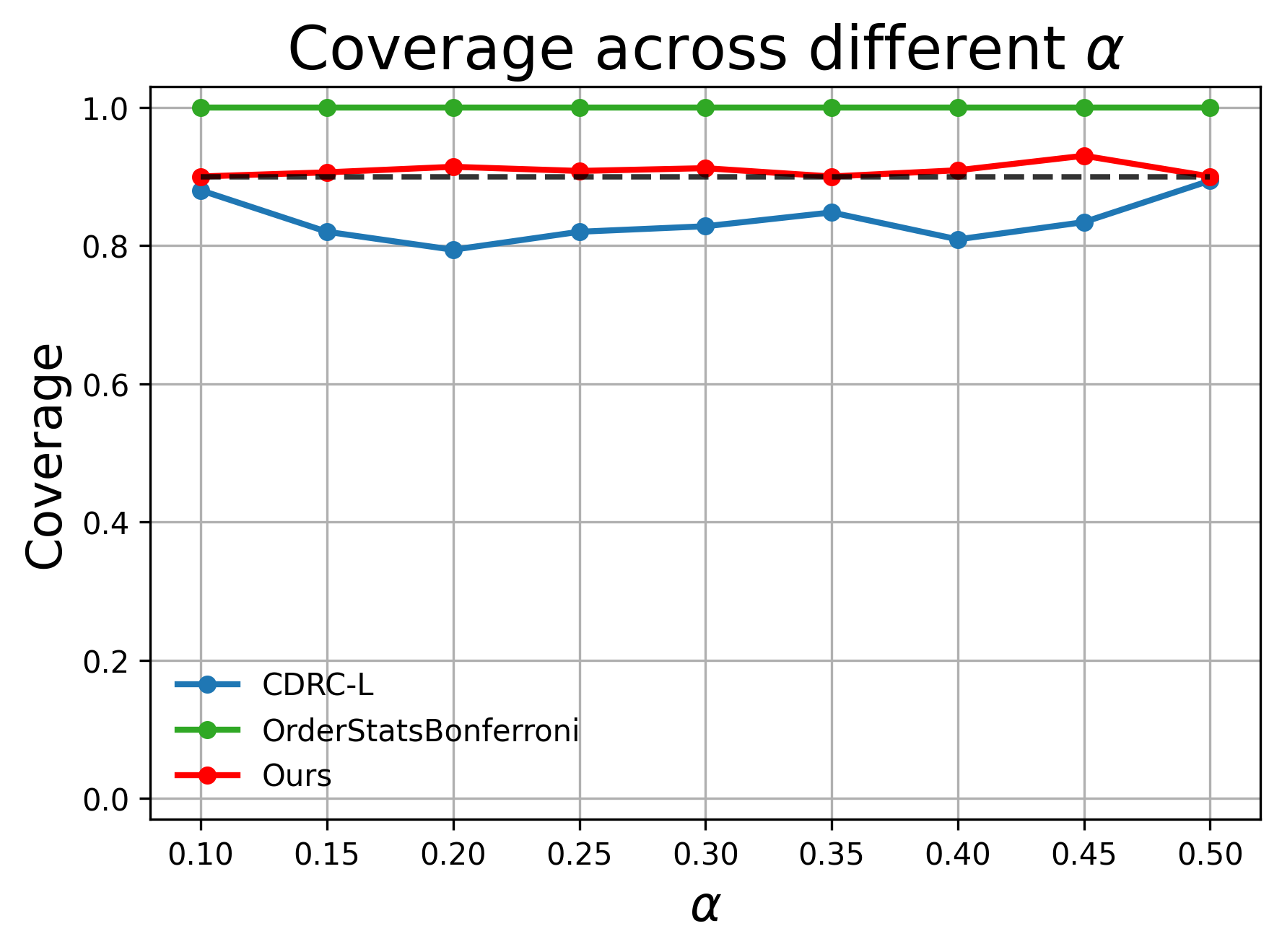}
    \end{minipage}\hfill
    \begin{minipage}{0.29\linewidth}
        \centering
        \includegraphics[width=\linewidth]{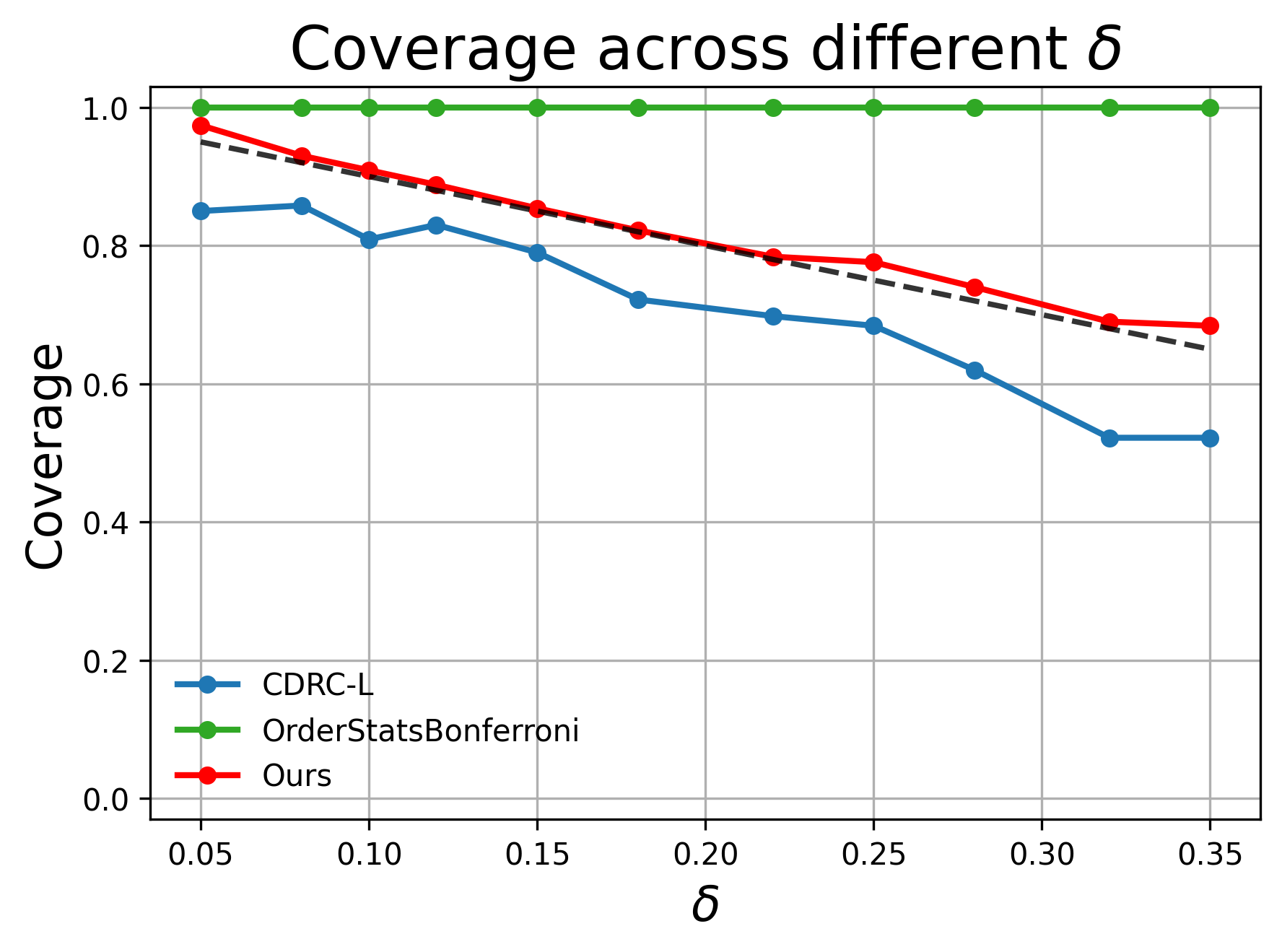}
    \end{minipage}\\
    \begin{minipage}{0.29\linewidth}
        \centering
        \includegraphics[width=\linewidth]{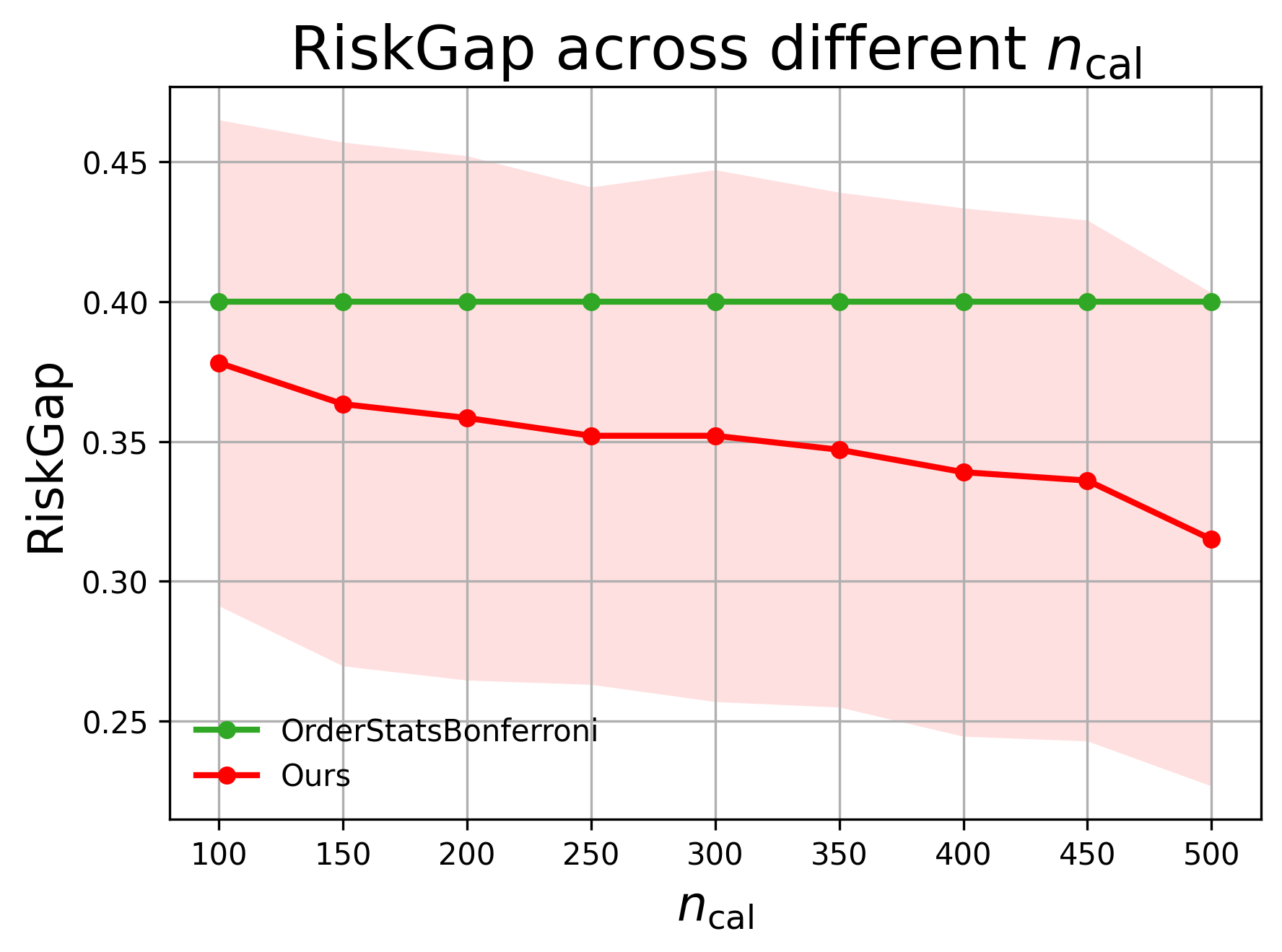}
    \end{minipage}\hfill
    \begin{minipage}{0.29\linewidth}
        \centering
        \includegraphics[width=\linewidth]{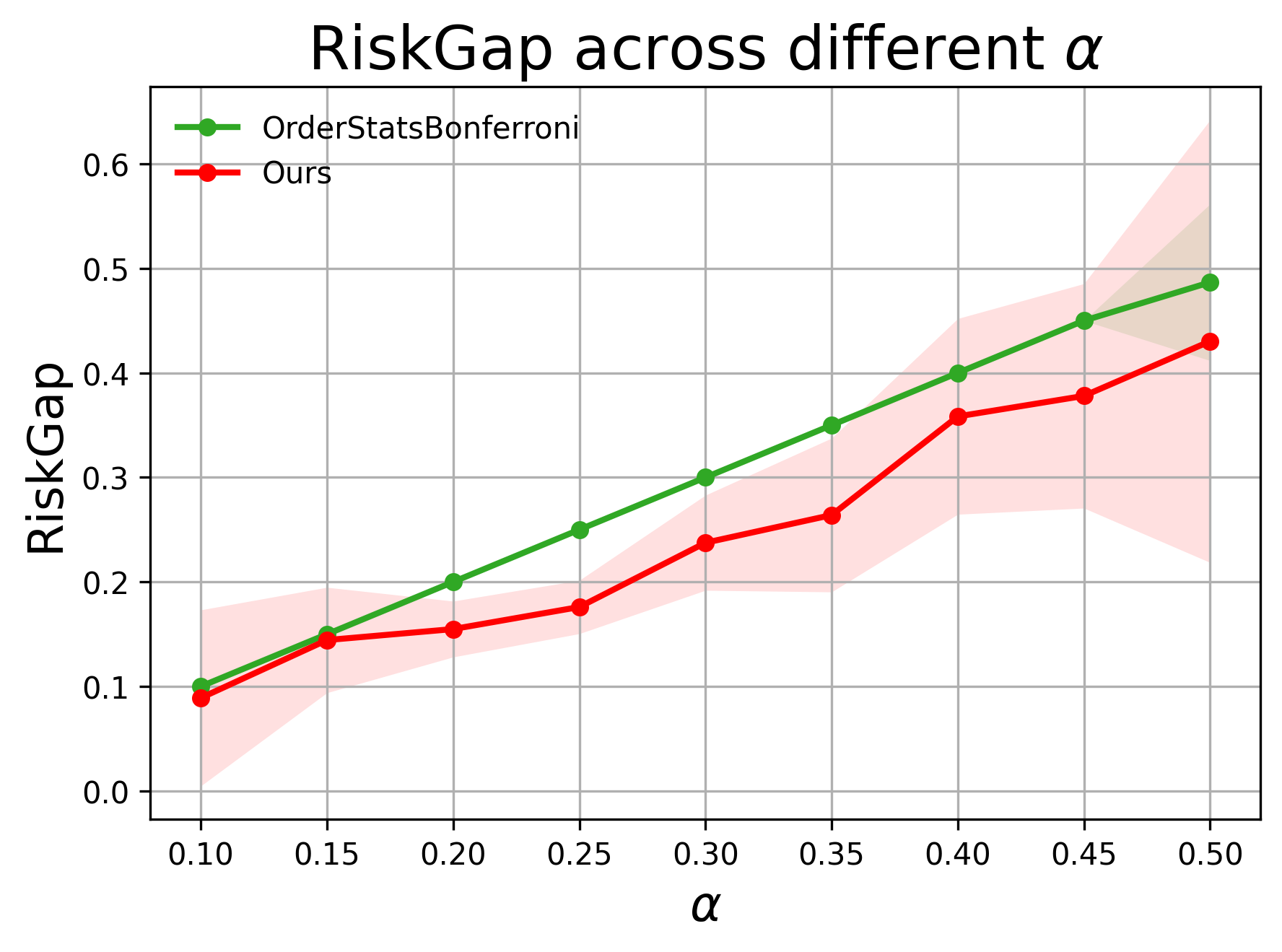}
    \end{minipage}\hfill
    \begin{minipage}{0.29\linewidth}
        \centering
        \includegraphics[width=\linewidth]{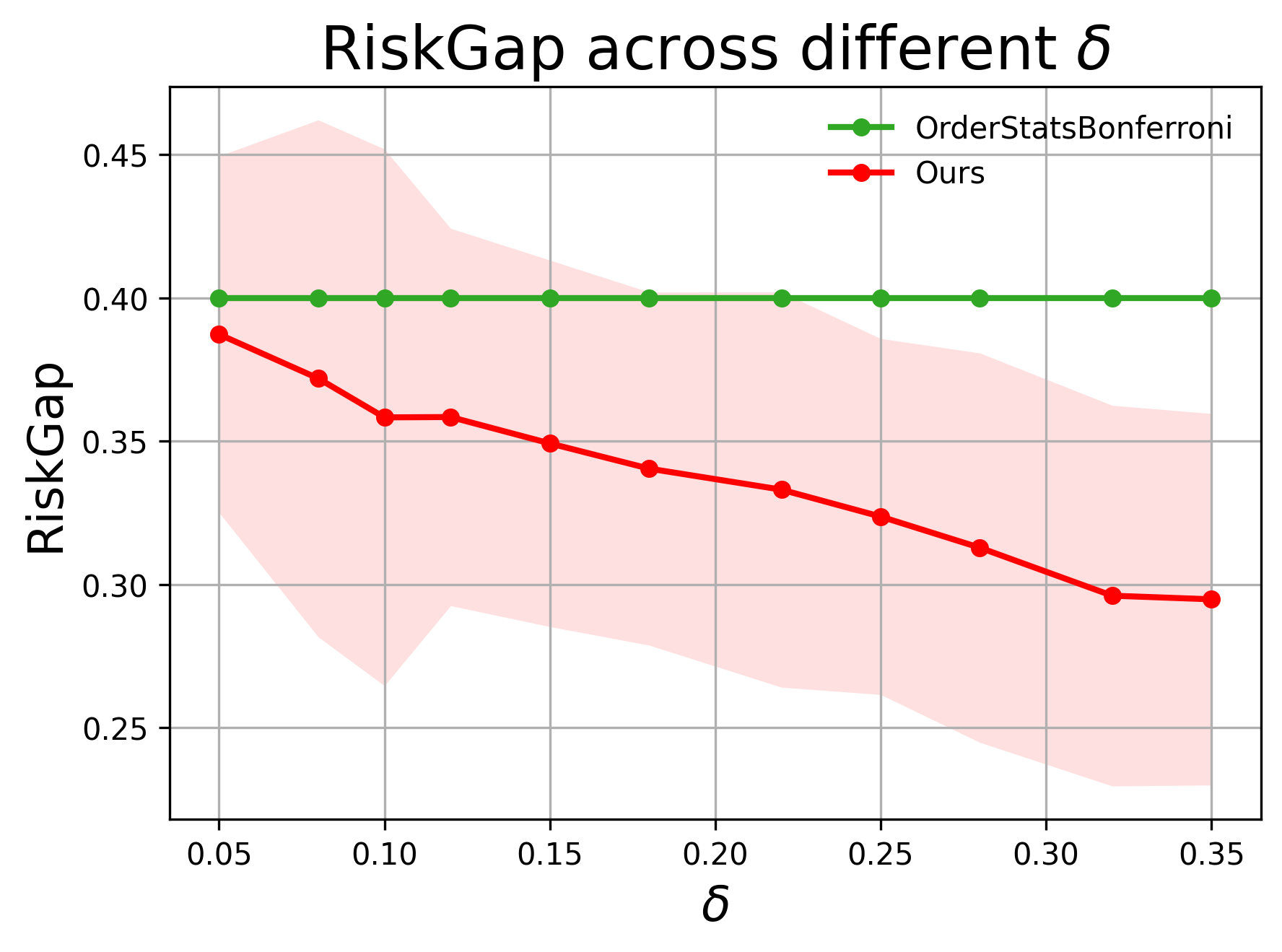}
    \end{minipage}\\
    \begin{minipage}{0.29\linewidth}
        \centering
        \includegraphics[width=\linewidth]{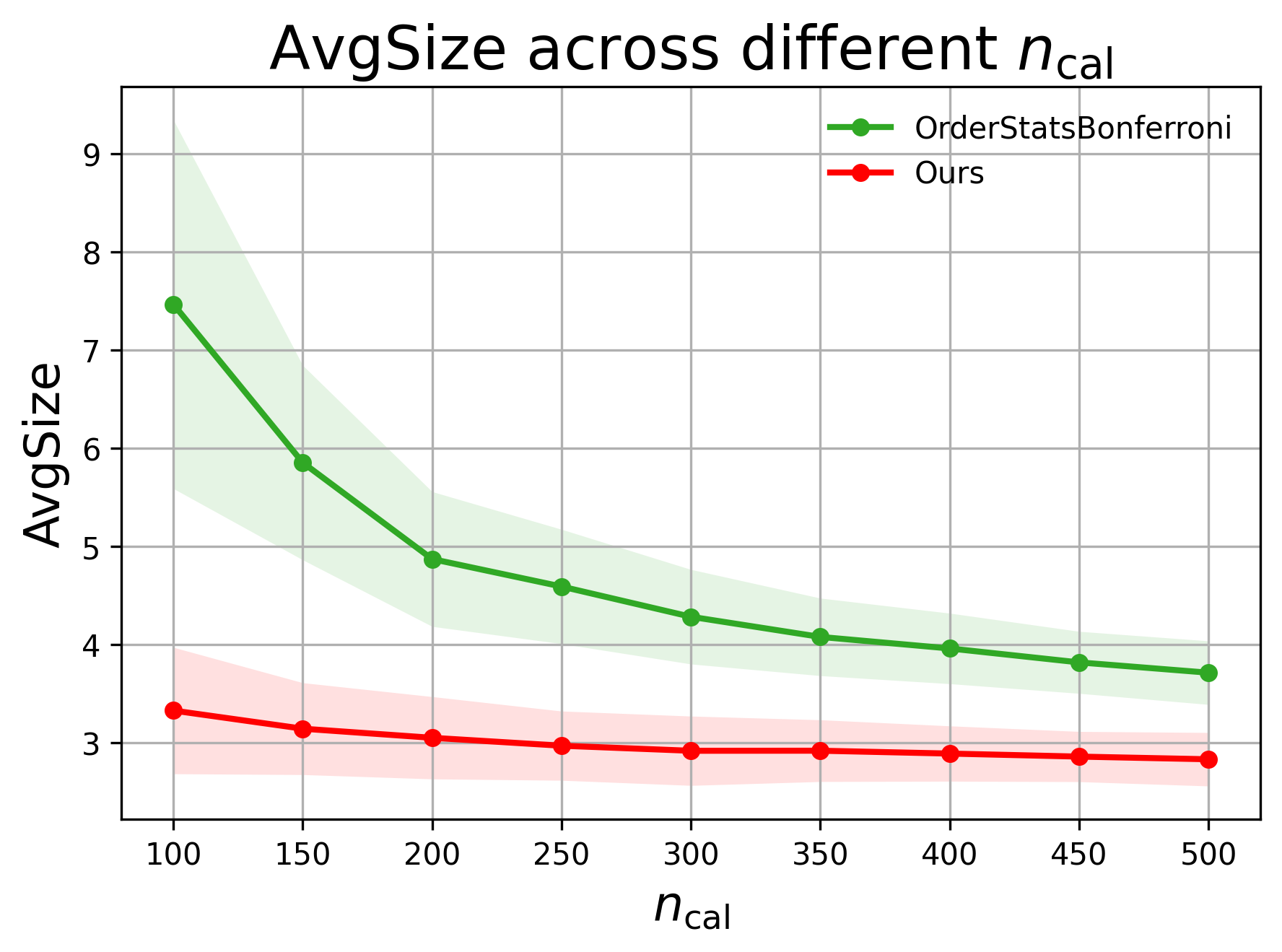}
    \end{minipage}\hfill
    \begin{minipage}{0.29\linewidth}
        \centering
        \includegraphics[width=\linewidth]{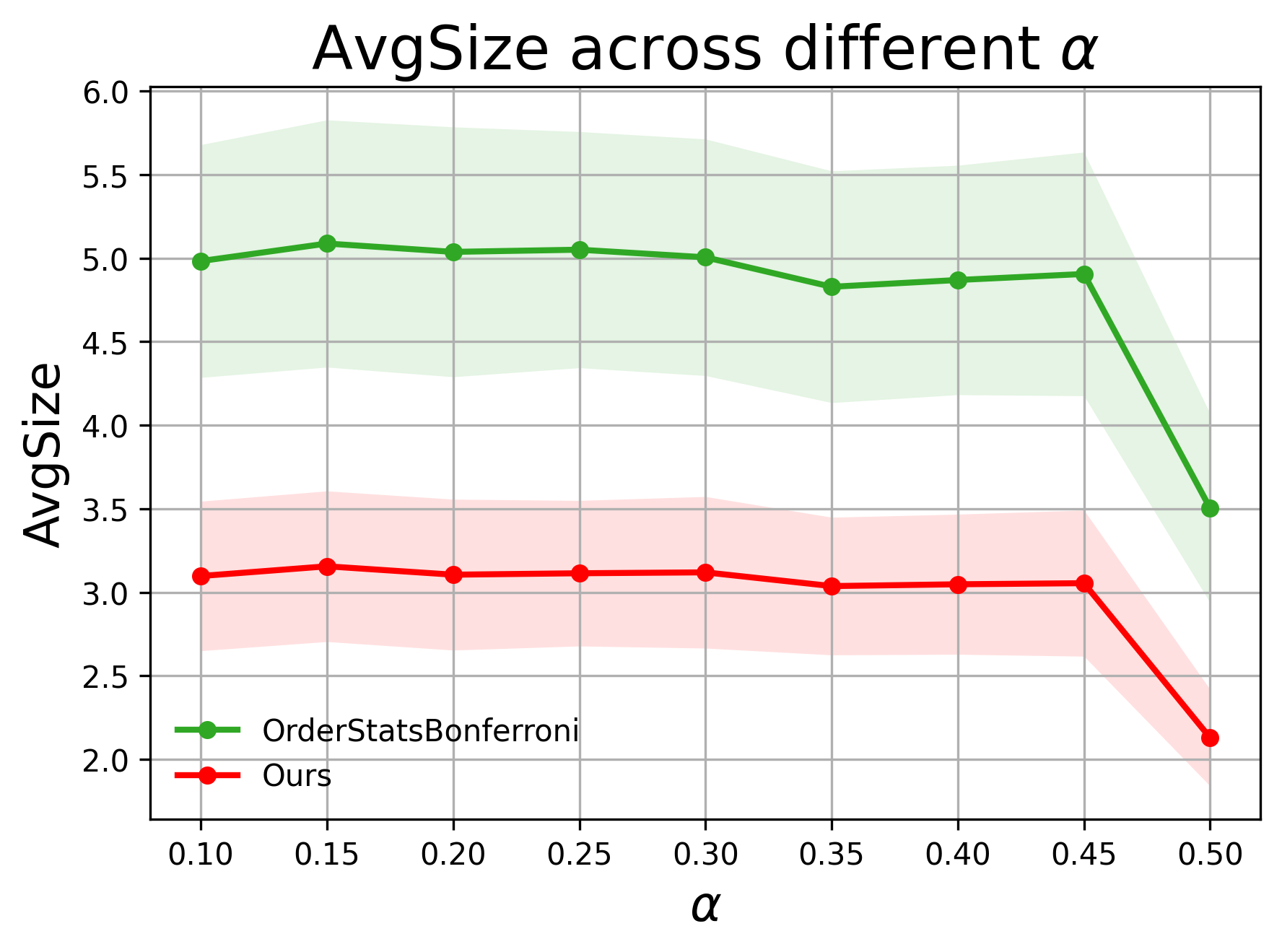}
    \end{minipage}\hfill
    \begin{minipage}{0.29\linewidth}
        \centering
        \includegraphics[width=\linewidth]{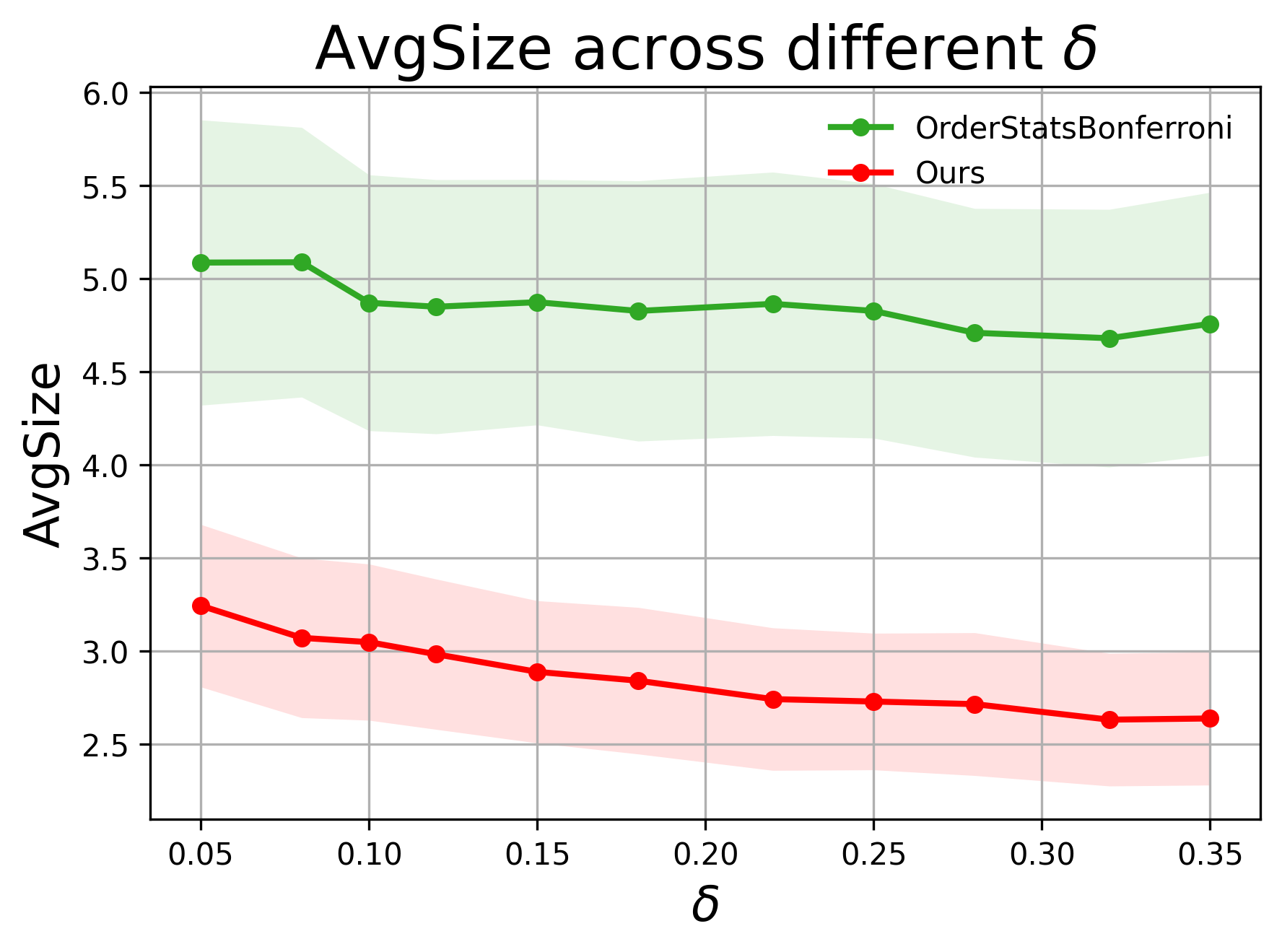}
    \end{minipage}
    \vspace{-3mm}
    \caption{\textbf{Performance comparison across various hyperparameter configurations.} The task is text emotion recognition, and the risk measure is $0.8$-VaR.}
    \label{fig:parameter_analysis_go_0.801}
    \vspace{-5mm}
\end{figure}

\begin{figure}[htbp]
    \centering
    \includegraphics[width=\linewidth]{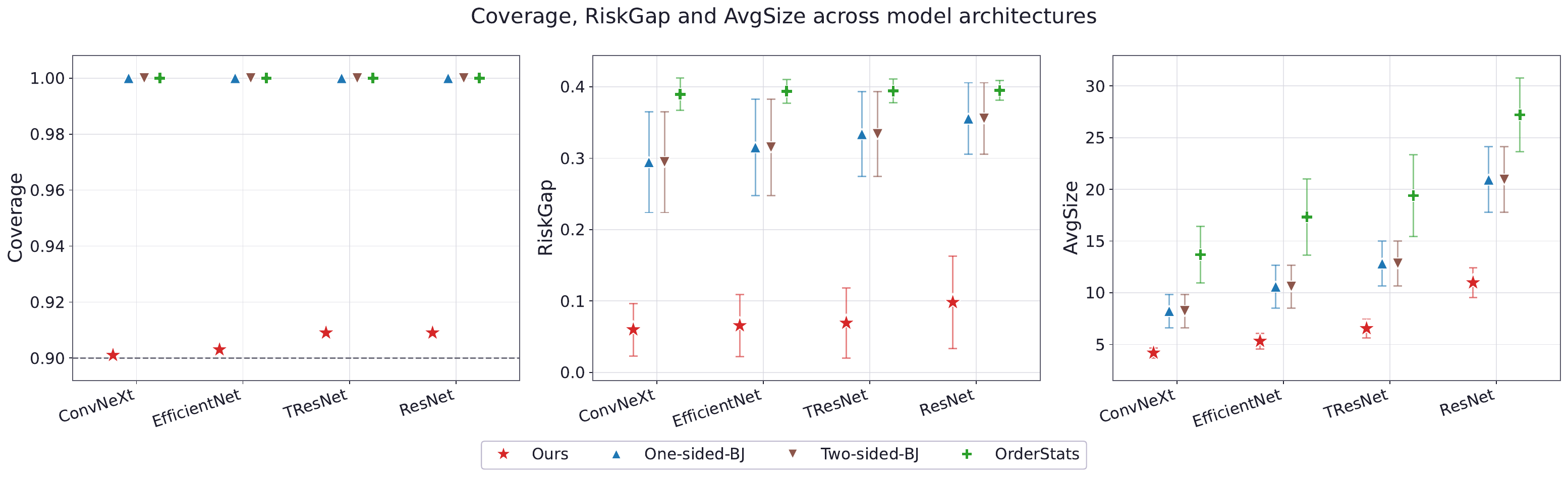}
    \caption{Performance comparison across different model architectures under VaR-Interval.}
    \label{fig:different_model_0.95}
    \vspace{-2mm}
\end{figure}

\begin{figure}[htbp]
    \centering
    \includegraphics[width=\linewidth]{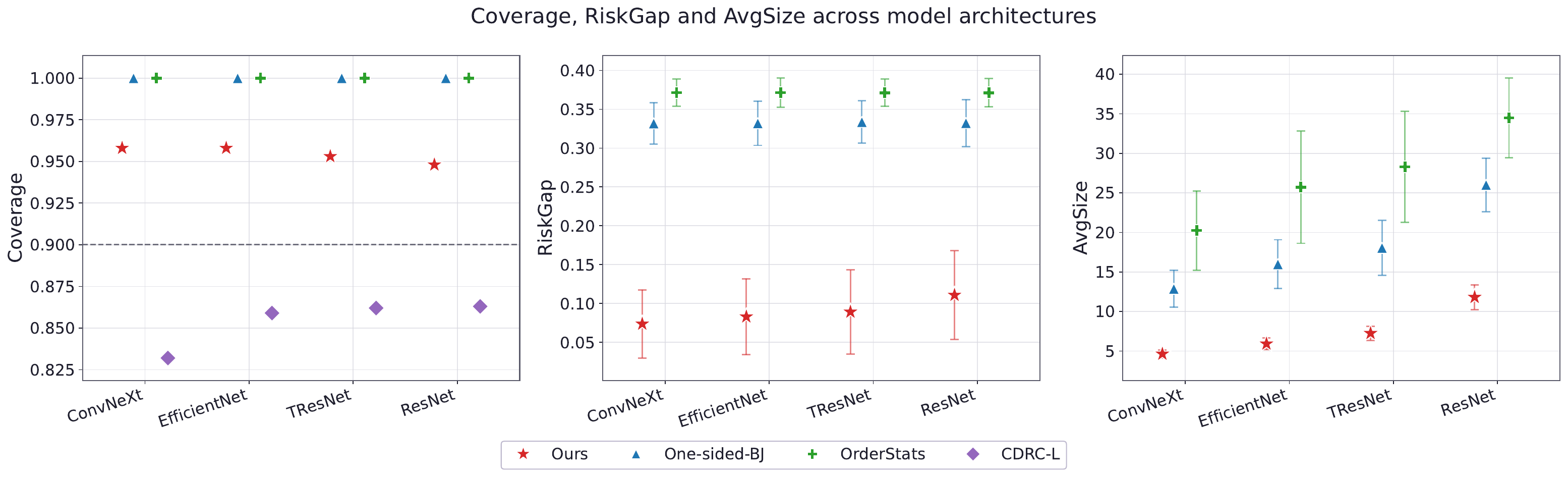}
    \caption{Performance comparison across different model architectures under $0.8$-CVaR.}
    \label{fig:different_model_1}
    \vspace{-2mm}
\end{figure}

\begin{figure}[htbp]
    \centering
    \includegraphics[width=\linewidth]{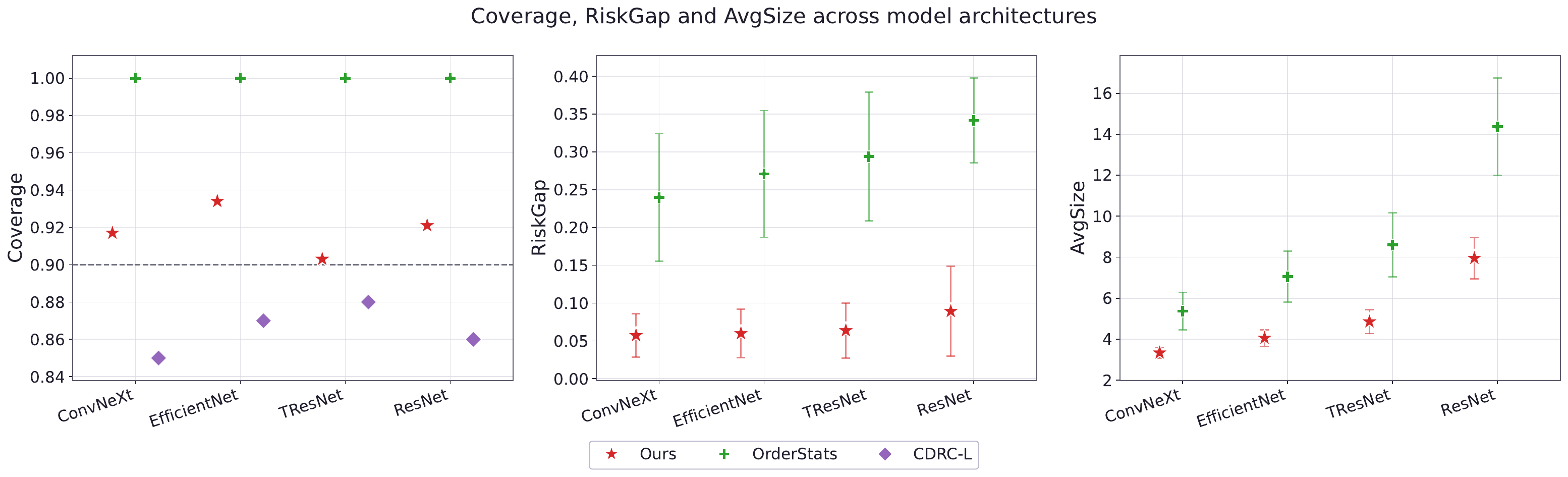}
    \caption{Performance comparison across different model architectures under $0.8$-VaR.}
    \label{fig:different_model_0.9}
    \vspace{-2mm}
\end{figure}

\newpage
\section*{NeurIPS Paper Checklist}

%%% BEGIN INSTRUCTIONS %%%
The checklist is designed to encourage best practices for responsible machine learning research, addressing issues of reproducibility, transparency, research ethics, and societal impact. Do not remove the checklist: {\bf The papers not including the checklist will be desk rejected.} The checklist should follow the references and follow the (optional) supplemental material.  The checklist does NOT count towards the page
limit. 

Please read the checklist guidelines carefully for information on how to answer these questions. For each question in the checklist:
\begin{itemize}
    \item You should answer \answerYes{}, \answerNo{}, or \answerNA{}.
    \item \answerNA{} means either that the question is Not Applicable for that particular paper or the relevant information is Not Available.
    \item Please provide a short (1--2 sentence) justification right after your answer (even for \answerNA). 
   % \item {\bf The papers not including the checklist will be desk rejected.}
\end{itemize}

{\bf The checklist answers are an integral part of your paper submission.} They are visible to the reviewers, area chairs, senior area chairs, and ethics reviewers. You will also be asked to include it (after eventual revisions) with the final version of your paper, and its final version will be published with the paper.

The reviewers of your paper will be asked to use the checklist as one of the factors in their evaluation. While \answerYes{} is generally preferable to \answerNo{}, it is perfectly acceptable to answer \answerNo{} provided a proper justification is given (e.g., error bars are not reported because it would be too computationally expensive'' or ``we were unable to find the license for the dataset we used''). In general, answering \answerNo{} or \answerNA{} is not grounds for rejection. While the questions are phrased in a binary way, we acknowledge that the true answer is often more nuanced, so please just use your best judgment and write a justification to elaborate. All supporting evidence can appear either in the main paper or the supplemental material, provided in appendix. If you answer \answerYes{} to a question, in the justification please point to the section(s) where related material for the question can be found.

IMPORTANT, please:
\begin{itemize}
    \item {\bf Delete this instruction block, but keep the section heading ``NeurIPS Paper Checklist"},
    \item  {\bf Keep the checklist subsection headings, questions/answers and guidelines below.}
    \item {\bf Do not modify the questions and only use the provided macros for your answers}.
\end{itemize}

%%% END INSTRUCTIONS %%%

\begin{enumerate}

\item {\bf Claims}
    \item[] Question: Do the main claims made in the abstract and introduction accurately reflect the paper's contributions and scope?
    \item[] Answer: \answerYes{} % Replace by \answerYes{}, \answerNo{}, or \answerNA{}.
    \item[] Justification: Our theoretical and experimental results consistently support the claims presented in the abstract and introduction. 
    \item[] Guidelines:
    \begin{itemize}
        \item The answer \answerNA{} means that the abstract and introduction do not include the claims made in the paper.
        \item The abstract and/or introduction should clearly state the claims made, including the contributions made in the paper and important assumptions and limitations. A \answerNo{} or \answerNA{} answer to this question will not be perceived well by the reviewers. 
        \item The claims made should match theoretical and experimental results, and reflect how much the results can be expected to generalize to other settings. 
        \item It is fine to include aspirational goals as motivation as long as it is clear that these goals are not attained by the paper. 
    \end{itemize}

\item {\bf Limitations}
    \item[] Question: Does the paper discuss the limitations of the work performed by the authors?
    \item[] Answer: \answerYes{} % Replace by \answerYes{}, \answerNo{}, or \answerNA{}.
    \item[] Justification: We discuss the limitation of this work in the conclusion section~\ref{sec:conclusion}. 
    \item[] Guidelines:
    \begin{itemize}
        \item The answer \answerNA{} means that the paper has no limitation while the answer \answerNo{} means that the paper has limitations, but those are not discussed in the paper. 
        \item The authors are encouraged to create a separate ``Limitations'' section in their paper.
        \item The paper should point out any strong assumptions and how robust the results are to violations of these assumptions (e.g., independence assumptions, noiseless settings, model well-specification, asymptotic approximations only holding locally). The authors should reflect on how these assumptions might be violated in practice and what the implications would be.
        \item The authors should reflect on the scope of the claims made, e.g., if the approach was only tested on a few datasets or with a few runs. In general, empirical results often depend on implicit assumptions, which should be articulated.
        \item The authors should reflect on the factors that influence the performance of the approach. For example, a facial recognition algorithm may perform poorly when image resolution is low or images are taken in low lighting. Or a speech-to-text system might not be used reliably to provide closed captions for online lectures because it fails to handle technical jargon.
        \item The authors should discuss the computational efficiency of the proposed algorithms and how they scale with dataset size.
        \item If applicable, the authors should discuss possible limitations of their approach to address problems of privacy and fairness.
        \item While the authors might fear that complete honesty about limitations might be used by reviewers as grounds for rejection, a worse outcome might be that reviewers discover limitations that aren't acknowledged in the paper. The authors should use their best judgment and recognize that individual actions in favor of transparency play an important role in developing norms that preserve the integrity of the community. Reviewers will be specifically instructed to not penalize honesty concerning limitations.
    \end{itemize}

\item {\bf Theory assumptions and proofs}
    \item[] Question: For each theoretical result, does the paper provide the full set of assumptions and a complete (and correct) proof?
    \item[] Answer: \answerYes{} % Replace by \answerYes{}, \answerNo{}, or \answerNA{}.
    \item[] Justification: We provide the proof of our theoretical result in the Appendix~\ref{sec:proofs}. 
    \item[] Guidelines:
    \begin{itemize}
        \item The answer \answerNA{} means that the paper does not include theoretical results. 
        \item All the theorems, formulas, and proofs in the paper should be numbered and cross-referenced.
        \item All assumptions should be clearly stated or referenced in the statement of any theorems.
        \item The proofs can either appear in the main paper or the supplemental material, but if they appear in the supplemental material, the authors are encouraged to provide a short proof sketch to provide intuition. 
        \item Inversely, any informal proof provided in the core of the paper should be complemented by formal proofs provided in appendix or supplemental material.
        \item Theorems and Lemmas that the proof relies upon should be properly referenced. 
    \end{itemize}

    \item {\bf Experimental result reproducibility}
    \item[] Question: Does the paper fully disclose all the information needed to reproduce the main experimental results of the paper to the extent that it affects the main claims and/or conclusions of the paper (regardless of whether the code and data are provided or not)?
    \item[] Answer: \answerYes{} % Replace by \answerYes{}, \answerNo{}, or \answerNA{}.
    \item[] Justification: We provide the detailed implementation details to reproduce our results.
    \item[] Guidelines:
    \begin{itemize}
        \item The answer \answerNA{} means that the paper does not include experiments.
        \item If the paper includes experiments, a \answerNo{} answer to this question will not be perceived well by the reviewers: Making the paper reproducible is important, regardless of whether the code and data are provided or not.
        \item If the contribution is a dataset and\slash or model, the authors should describe the steps taken to make their results reproducible or verifiable. 
        \item Depending on the contribution, reproducibility can be accomplished in various ways. For example, if the contribution is a novel architecture, describing the architecture fully might suffice, or if the contribution is a specific model and empirical evaluation, it may be necessary to either make it possible for others to replicate the model with the same dataset, or provide access to the model. In general. releasing code and data is often one good way to accomplish this, but reproducibility can also be provided via detailed instructions for how to replicate the results, access to a hosted model (e.g., in the case of a large language model), releasing of a model checkpoint, or other means that are appropriate to the research performed.
        \item While NeurIPS does not require releasing code, the conference does require all submissions to provide some reasonable avenue for reproducibility, which may depend on the nature of the contribution. For example
        \begin{enumerate}
            \item If the contribution is primarily a new algorithm, the paper should make it clear how to reproduce that algorithm.
            \item If the contribution is primarily a new model architecture, the paper should describe the architecture clearly and fully.
            \item If the contribution is a new model (e.g., a large language model), then there should either be a way to access this model for reproducing the results or a way to reproduce the model (e.g., with an open-source dataset or instructions for how to construct the dataset).
            \item We recognize that reproducibility may be tricky in some cases, in which case authors are welcome to describe the particular way they provide for reproducibility. In the case of closed-source models, it may be that access to the model is limited in some way (e.g., to registered users), but it should be possible for other researchers to have some path to reproducing or verifying the results.
        \end{enumerate}
    \end{itemize}

\item {\bf Open access to data and code}
    \item[] Question: Does the paper provide open access to the data and code, with sufficient instructions to faithfully reproduce the main experimental results, as described in supplemental material?
    \item[] Answer: \answerYes{} % Replace by \answerYes{}, \answerNo{}, or \answerNA{}.
    \item[] Justification: We will release the code of our work. 
    \item[] Guidelines:
    \begin{itemize}
        \item The answer \answerNA{} means that paper does not include experiments requiring code.
        \item Please see the NeurIPS code and data submission guidelines (\url{https://neurips.cc/public/guides/CodeSubmissionPolicy}) for more details.
        \item While we encourage the release of code and data, we understand that this might not be possible, so \answerNo{} is an acceptable answer. Papers cannot be rejected simply for not including code, unless this is central to the contribution (e.g., for a new open-source benchmark).
        \item The instructions should contain the exact command and environment needed to run to reproduce the results. See the NeurIPS code and data submission guidelines (\url{https://neurips.cc/public/guides/CodeSubmissionPolicy}) for more details.
        \item The authors should provide instructions on data access and preparation, including how to access the raw data, preprocessed data, intermediate data, and generated data, etc.
        \item The authors should provide scripts to reproduce all experimental results for the new proposed method and baselines. If only a subset of experiments are reproducible, they should state which ones are omitted from the script and why.
        \item At submission time, to preserve anonymity, the authors should release anonymized versions (if applicable).
        \item Providing as much information as possible in supplemental material (appended to the paper) is recommended, but including URLs to data and code is permitted.
    \end{itemize}

\item {\bf Experimental setting/details}
    \item[] Question: Does the paper specify all the training and test details (e.g., data splits, hyperparameters, how they were chosen, type of optimizer) necessary to understand the results?
    \item[] Answer: \answerYes{} % Replace by \answerYes{}, \answerNo{}, or \answerNA{}.
    \item[] Justification: We provide the information in Appendix~\ref{sec:details}.
    \item[] Guidelines:
    \begin{itemize}
        \item The answer \answerNA{} means that the paper does not include experiments.
        \item The experimental setting should be presented in the core of the paper to a level of detail that is necessary to appreciate the results and make sense of them.
        \item The full details can be provided either with the code, in appendix, or as supplemental material.
    \end{itemize}

\item {\bf Experiment statistical significance}
    \item[] Question: Does the paper report error bars suitably and correctly defined or other appropriate information about the statistical significance of the experiments?
    \item[] Answer: \answerYes{} % Replace by \answerYes{}, \answerNo{}, or \answerNA{}.
    \item[] Justification: Our results are provided in mean and standard error format.
    \item[] Guidelines:
    \begin{itemize}
        \item The answer \answerNA{} means that the paper does not include experiments.
        \item The authors should answer \answerYes{} if the results are accompanied by error bars, confidence intervals, or statistical significance tests, at least for the experiments that support the main claims of the paper.
        \item The factors of variability that the error bars are capturing should be clearly stated (for example, train/test split, initialization, random drawing of some parameter, or overall run with given experimental conditions).
        \item The method for calculating the error bars should be explained (closed form formula, call to a library function, bootstrap, etc.)
        \item The assumptions made should be given (e.g., Normally distributed errors).
        \item It should be clear whether the error bar is the standard deviation or the standard error of the mean.
        \item It is OK to report 1-sigma error bars, but one should state it. The authors should preferably report a 2-sigma error bar than state that they have a 96\% CI, if the hypothesis of Normality of errors is not verified.
        \item For asymmetric distributions, the authors should be careful not to show in tables or figures symmetric error bars that would yield results that are out of range (e.g., negative error rates).
        \item If error bars are reported in tables or plots, the authors should explain in the text how they were calculated and reference the corresponding figures or tables in the text.
    \end{itemize}

\item {\bf Experiments compute resources}
    \item[] Question: For each experiment, does the paper provide sufficient information on the computer resources (type of compute workers, memory, time of execution) needed to reproduce the experiments?
    \item[] Answer: \answerYes{} % Replace by \answerYes{}, \answerNo{}, or \answerNA{}.
    \item[] Justification: The compute resources are provided in Appendix~\ref{sec:details}.
    \item[] Guidelines:
    \begin{itemize}
        \item The answer \answerNA{} means that the paper does not include experiments.
        \item The paper should indicate the type of compute workers CPU or GPU, internal cluster, or cloud provider, including relevant memory and storage.
        \item The paper should provide the amount of compute required for each of the individual experimental runs as well as estimate the total compute. 
        \item The paper should disclose whether the full research project required more compute than the experiments reported in the paper (e.g., preliminary or failed experiments that didn't make it into the paper). 
    \end{itemize}
    
\item {\bf Code of ethics}
    \item[] Question: Does the research conducted in the paper conform, in every respect, with the NeurIPS Code of Ethics \url{https://neurips.cc/public/EthicsGuidelines}?
    \item[] Answer: \answerYes{} % Replace by \answerYes{}, \answerNo{}, or \answerNA{}.
    \item[] Justification: Yes, we affirm that our research adheres to the NeurIPS Code of Ethics in all aspects, including considerations related to data usage, transparency, and potential societal impact. 
    \item[] Guidelines:
    \begin{itemize}
        \item The answer \answerNA{} means that the authors have not reviewed the NeurIPS Code of Ethics.
        \item If the authors answer \answerNo, they should explain the special circumstances that require a deviation from the Code of Ethics.
        \item The authors should make sure to preserve anonymity (e.g., if there is a special consideration due to laws or regulations in their jurisdiction).
    \end{itemize}

\item {\bf Broader impacts}
    \item[] Question: Does the paper discuss both potential positive societal impacts and negative societal impacts of the work performed?
    \item[] Answer: \answerYes{} % Replace by \answerYes{}, \answerNo{}, or \answerNA{}.
    \item[] Justification: We discuss about the societal impacts of risk control in the introduction. 
    \item[] Guidelines:
    \begin{itemize}
        \item The answer \answerNA{} means that there is no societal impact of the work performed.
        \item If the authors answer \answerNA{} or \answerNo, they should explain why their work has no societal impact or why the paper does not address societal impact.
        \item Examples of negative societal impacts include potential malicious or unintended uses (e.g., disinformation, generating fake profiles, surveillance), fairness considerations (e.g., deployment of technologies that could make decisions that unfairly impact specific groups), privacy considerations, and security considerations.
        \item The conference expects that many papers will be foundational research and not tied to particular applications, let alone deployments. However, if there is a direct path to any negative applications, the authors should point it out. For example, it is legitimate to point out that an improvement in the quality of generative models could be used to generate Deepfakes for disinformation. On the other hand, it is not needed to point out that a generic algorithm for optimizing neural networks could enable people to train models that generate Deepfakes faster.
        \item The authors should consider possible harms that could arise when the technology is being used as intended and functioning correctly, harms that could arise when the technology is being used as intended but gives incorrect results, and harms following from (intentional or unintentional) misuse of the technology.
        \item If there are negative societal impacts, the authors could also discuss possible mitigation strategies (e.g., gated release of models, providing defenses in addition to attacks, mechanisms for monitoring misuse, mechanisms to monitor how a system learns from feedback over time, improving the efficiency and accessibility of ML).
    \end{itemize}
    
\item {\bf Safeguards}
    \item[] Question: Does the paper describe safeguards that have been put in place for responsible release of data or models that have a high risk for misuse (e.g., pre-trained language models, image generators, or scraped datasets)?
    \item[] Answer: \answerNA{} % Replace by \answerYes{}, \answerNo{}, or \answerNA{}.
    \item[] Justification: Our work does not involve the release of model and dataset.
    \item[] Guidelines:
    \begin{itemize}
        \item The answer \answerNA{} means that the paper poses no such risks.
        \item Released models that have a high risk for misuse or dual-use should be released with necessary safeguards to allow for controlled use of the model, for example by requiring that users adhere to usage guidelines or restrictions to access the model or implementing safety filters. 
        \item Datasets that have been scraped from the Internet could pose safety risks. The authors should describe how they avoided releasing unsafe images.
        \item We recognize that providing effective safeguards is challenging, and many papers do not require this, but we encourage authors to take this into account and make a best faith effort.
    \end{itemize}

\item {\bf Licenses for existing assets}
    \item[] Question: Are the creators or original owners of assets (e.g., code, data, models), used in the paper, properly credited and are the license and terms of use explicitly mentioned and properly respected?
    \item[] Answer: \answerYes{} % Replace by \answerYes{}, \answerNo{}, or \answerNA{}.
    \item[] Justification: The dataset and model we use are both open-sourced. Our code is based on PyTorch. 
    \item[] Guidelines:
    \begin{itemize}
        \item The answer \answerNA{} means that the paper does not use existing assets.
        \item The authors should cite the original paper that produced the code package or dataset.
        \item The authors should state which version of the asset is used and, if possible, include a URL.
        \item The name of the license (e.g., CC-BY 4.0) should be included for each asset.
        \item For scraped data from a particular source (e.g., website), the copyright and terms of service of that source should be provided.
        \item If assets are released, the license, copyright information, and terms of use in the package should be provided. For popular datasets, \url{paperswithcode.com/datasets} has curated licenses for some datasets. Their licensing guide can help determine the license of a dataset.
        \item For existing datasets that are re-packaged, both the original license and the license of the derived asset (if it has changed) should be provided.
        \item If this information is not available online, the authors are encouraged to reach out to the asset's creators.
    \end{itemize}

\item {\bf New assets}
    \item[] Question: Are new assets introduced in the paper well documented and is the documentation provided alongside the assets?
    \item[] Answer: \answerYes{} % Replace by \answerYes{}, \answerNo{}, or \answerNA{}.
    \item[] Justification: We will upload our code in the supplement material.
    \item[] Guidelines:
    \begin{itemize}
        \item The answer \answerNA{} means that the paper does not release new assets.
        \item Researchers should communicate the details of the dataset\slash code\slash model as part of their submissions via structured templates. This includes details about training, license, limitations, etc. 
        \item The paper should discuss whether and how consent was obtained from people whose asset is used.
        \item At submission time, remember to anonymize your assets (if applicable). You can either create an anonymized URL or include an anonymized zip file.
    \end{itemize}

\item {\bf Crowdsourcing and research with human subjects}
    \item[] Question: For crowdsourcing experiments and research with human subjects, does the paper include the full text of instructions given to participants and screenshots, if applicable, as well as details about compensation (if any)? 
    \item[] Answer: \answerNA{} % Replace by \answerYes{}, \answerNo{}, or \answerNA{}.
    \item[] Justification: Our work does not involve crowdsourcing nor research with human subjects. 
    \item[] Guidelines:
    \begin{itemize}
        \item The answer \answerNA{} means that the paper does not involve crowdsourcing nor research with human subjects.
        \item Including this information in the supplemental material is fine, but if the main contribution of the paper involves human subjects, then as much detail as possible should be included in the main paper. 
        \item According to the NeurIPS Code of Ethics, workers involved in data collection, curation, or other labor should be paid at least the minimum wage in the country of the data collector. 
    \end{itemize}

\item {\bf Institutional review board (IRB) approvals or equivalent for research with human subjects}
    \item[] Question: Does the paper describe potential risks incurred by study participants, whether such risks were disclosed to the subjects, and whether Institutional Review Board (IRB) approvals (or an equivalent approval/review based on the requirements of your country or institution) were obtained?
    \item[] Answer: \answerNA{} % Replace by \answerYes{}, \answerNo{}, or \answerNA{}.
    \item[] Justification: Our work does not involve crowdsourcing nor research with human subjects.
    \item[] Guidelines:
    \begin{itemize}
        \item The answer \answerNA{} means that the paper does not involve crowdsourcing nor research with human subjects.
        \item Depending on the country in which research is conducted, IRB approval (or equivalent) may be required for any human subjects research. If you obtained IRB approval, you should clearly state this in the paper. 
        \item We recognize that the procedures for this may vary significantly between institutions and locations, and we expect authors to adhere to the NeurIPS Code of Ethics and the guidelines for their institution. 
        \item For initial submissions, do not include any information that would break anonymity (if applicable), such as the institution conducting the review.
    \end{itemize}

\item {\bf Declaration of LLM usage}
    \item[] Question: Does the paper describe the usage of LLMs if it is an important, original, or non-standard component of the core methods in this research? Note that if the LLM is used only for writing, editing, or formatting purposes and does \emph{not} impact the core methodology, scientific rigor, or originality of the research, declaration is not required.
    %this research? 
    \item[] Answer: \answerNA{} % Replace by \answerYes{}, \answerNo{}, or \answerNA{}.
    \item[] Justification: The core method development in this research does not involve LLMs as any important, original, or non-standard components. 
    \item[] Guidelines:
    \begin{itemize}
        \item The answer \answerNA{} means that the core method development in this research does not involve LLMs as any important, original, or non-standard components.
        \item Please refer to our LLM policy in the NeurIPS handbook for what should or should not be described.
    \end{itemize}

\end{enumerate}

\end{document}